\let\save@mathaccent\mathaccent
\newcommand*\if@single[3]{%
  \setbox0\hbox{${\mathaccent"0362{#1}}^H$}%
  \setbox2\hbox{${\mathaccent"0362{\kern0pt#1}}^H$}%
  \ifdim\ht0=\ht2 #3\else #2\fi
  }
\newcommand*\rel@kern[1]{\kern#1\dimexpr\macc@kerna}
\newcommand*\widebar[1]{\@ifnextchar^{{\wide@bar{#1}{0}}}{\wide@bar{#1}{1}}}
\newcommand*\wide@bar[2]{\if@single{#1}{\wide@bar@{#1}{#2}{1}}{\wide@bar@{#1}{#2}{2}}}
\newcommand*\wide@bar@[3]{%
  \begingroup
  \def\mathaccent##1##2{%
    \let\mathaccent\save@mathaccent
    \if#32 \let\macc@nucleus\first@char \fi
    \setbox\z@\hbox{$\macc@style{\macc@nucleus}_{}$}%
    \setbox\tw@\hbox{$\macc@style{\macc@nucleus}{}_{}$}%
    \dimen@\wd\tw@
    \advance\dimen@-\wd\z@
    \divide\dimen@ 3
    \@tempdima\wd\tw@
    \advance\@tempdima-\scriptspace
    \divide\@tempdima 10
    \advance\dimen@-\@tempdima
    \ifdim\dimen@>\z@ \dimen@0pt\fi
    \rel@kern{0.6}\kern-\dimen@
    \if#31
      \overline{\rel@kern{-0.6}\kern\dimen@\macc@nucleus\rel@kern{0.4}\kern\dimen@}%
      \advance\dimen@0.4\dimexpr\macc@kerna
      \let\final@kern#2%
      \ifdim\dimen@<\z@ \let\final@kern1\fi
      \if\final@kern1 \kern-\dimen@\fi
    \else
      \overline{\rel@kern{-0.6}\kern\dimen@#1}%
    \fi
  }%
  \macc@depth\@ne
  \let\math@bgroup\@empty \let\math@egroup\macc@set@skewchar
  \mathsurround\z@ \frozen@everymath{\mathgroup\macc@group\relax}%
  \macc@set@skewchar\relax
  \let\mathaccentV\macc@nested@a
  \if#31
    \macc@nested@a\relax111{#1}%
  \else
    \def\gobble@till@marker##1\endmarker{}%
    \futurelet\first@char\gobble@till@marker#1\endmarker
    \ifcat\noexpand\first@char A\else
      \def\first@char{}%
    \fi
    \macc@nested@a\relax111{\first@char}%
  \fi
  \endgroup
}
\spnewtheorem{fact}{Fact}{\bfseries}{\itshape}
\spnewtheorem{condition}{Condition}{\bfseries}{\itshape}
\newcommand{\Dich}{\mathrm{Dich}}
\newcommand{\Hypo}{\mathcal{H}}
\newcommand{\Hrep}{\widetilde{\mathcal{H}}}
\newcommand{\EHypo}{\widehat{\mathcal{H}}}
\newcommand{\T}{\mathcal{T}}
\newcommand{\A}{\mathcal{A}}
\newcommand{\M}{\mathbf{M}}
\newcommand{\X}{\mathcal{X}}
\newcommand{\D}{\mathcal{D}}
\newcommand{\Ncal}{\mathcal{N}}
\newcommand{\R}{\mathbb{R}}
\newcommand{\Q}{\mathbb{Q}}
\newcommand{\N}{\mathbb{N}}
\newcommand{\Nbf}{\mathbf{N}}
\newcommand{\Mcal}{\mathcal{M}}
\newcommand{\Ecal}{\mathcal{E}}
\newcommand{\Ucal}{\mathcal{U}}
\newcommand{\Wcal}{\mathcal{W}}
\newcommand{\Ccal}{\mathcal{C}}
\newcommand{\THat}{\widehat{T}}
\newcommand{\err}{\mathrm{err}}
\newcommand{\Err}{\mathrm{Err}}
\newcommand{\AdaSel}{\mathrm{AdaSelect}}
\newcommand{\margin}{\mathrm{margin}}
\newcommand{\sign}[1]{\mathrm{sign}\left( #1 \right) }
\newcommand{\loss}{\mathrm{loss}}
\newcommand{\indicator}[1]{\mathbbm{1}{\left[ {#1} \right] }}
\newcommand{\E}[1]{\mathbf{E}\left[ #1\right] }
\newcommand{\Pbf}{\mathbf{P}}
\newcommand{\Int}[1]{\mathrm{Int}{\left( {#1} \right)}}
\newcommand{\approxpi}{\widetilde{\pi}}
\newcommand{\approxphi}{\widetilde{\phi}}
\newcommand{\approxDelta}{\widetilde{\Delta}}
\newcommand{\approxOmega}{\widetilde{\Omega}}
\newcommand{\approxA}{\widetilde{\A}}
\newcommand{\Proj}{\mathrm{Proj}}
\newcommand{\Tavg}{\mathrm{Tavg}}
\newcommand{\Savg}{\mathrm{Savg}}
\DeclareMathOperator*{\argmin}{arg\,min}
\DeclareMathOperator*{\argmax}{arg\,max}
\newcommand{\closure}[1]{\overline{#1}}
\newcommand{\DichSets}{\text{\sc Dich}}
\newcommand{\HypoRep}{\text{\sc Hypo}}
\newcommand{\Tmin}{T_{\mathrm{min}}}
\newcommand{\VC}[1]{\mathrm{VC}\left( #1 \right) }
\journalname{Machine Learning}
\begin{document}

\title{On the Convergence Properties of Optimal AdaBoost}



\author{Joshua Belanich \and Luis E. Ortiz}

\institute{J. Belanich \at Google New York, New York, NY, USA\\
\email{joshuabelanich@google.com}\\
\and 
       L. E. Ortiz \at
       Department of Computer and Information Science,
       University of Michigan - Dearborn,
       4901 Evergreen Rd. Room 129 CIS Bldg.,
Dearborn, MI 48128, USA\\
       Tel.: 313-593-5239\\
       Fax: 313-632-4256\\
 \email{leortiz@umich.edu}
}

\date{January 4, 2023}

\maketitle

\begin{abstract}
AdaBoost is one of the most popular machine-learning (ML) algorithms.  It
is simple to implement and often found very effective by
practitioners, while still being mathematically elegant and
theoretically sound.
AdaBoost's interesting behavior in practice still puzzles the
ML community.  
We address the
algorithm's \emph{stability} and establish multiple convergence properties of ``Optimal AdaBoost,'' a
term coined by Rudin, Daubechies, and Schapire in 2004.  
We 
prove, in a reasonably strong computational sense, the
almost universal existence of time
averages, and with that, the convergence
of the classifier itself, 
its generalization error, and its resulting margins, among many other objects, for fixed data
sets under arguably reasonable conditions.
Specifically, we frame Optimal AdaBoost as a dynamical system 
and, employing tools from ergodic theory, prove that, under a condition that Optimal AdaBoost does not have ties for best weak classifier eventually, a condition for which we provide empirical evidence from high-dimensional real-world datasets, the algorithm's update behaves like a continuous map. 
We provide constructive proofs of several arbitrarily accurate
approximations of Optimal AdaBoost; prove that they exhibit
certain cycling behavior in finite time, and that the resulting
dynamical system is ergodic; and establish sufficient conditions for the
same to hold for the actual Optimal-AdaBoost update.
We believe that
our results provide reasonably strong 
evidence for the affirmative answer to two open conjectures, at least
from a broad computational-theory perspective: AdaBoost always cycles
and is an ergodic dynamical system. 
We
present 
empirical evidence that cycles are hard to detect while time averages
stabilize quickly.
Our results 
ground future convergence-rate analysis 
and may help optimize
generalization ability and alleviate 
a practitioner's burden of deciding how long to run the algorithm.
\keywords{AdaBoost \and  boosting \and convergence \and classifier \and generalization error \and margins}
\end{abstract}

\vspace{2\baselineskip}

{\hfill \em LEO: Dedicated to the memory of Patrick Henry Winston}

\vspace{2\baselineskip}

\hspace{1in} 
\scalebox{0.82}{
\begin{minipage}{0.8\textwidth}
\framebox[1.1\textwidth]{
\begin{minipage}{\textwidth}
\begin{center}
{\em THE MYSTERY THICKENS}
\end{center}
AdaBoost created a big splash in machine learning and led to hundreds, perhaps thousands of papers. It was the most accurate classification algorithm available at the time.

\vspace{\baselineskip}

It differs significantly from bagging. Bagging uses the biggest trees possible as the weak learners to reduce bias.

\vspace{\baselineskip}

AdaBoost uses small trees as the weak learners, often being effective using trees formed by a single split (stumps).

\vspace{\baselineskip}

There is empirical evidence that it reduces bias as well as variance.

\vspace{\baselineskip}

It seemed to converge with the test set error gradually decreasing as hundreds or thousands of trees were added.

\vspace{\baselineskip}

On simulated data its error rate is close to the Bayes rate.

\vspace{\baselineskip}

But why it worked so well was a mystery that bothered me. For the last five years I have characterized the understanding of Adaboost as the most important open problem in machine learning.

\vspace{\baselineskip}

\end{minipage}
}

\vspace{0.5\baselineskip}

\hfill \begin{minipage}{0.88\textwidth}
{\em Leo Breiman, Machine Learning. Wald Lecture 1 speech presented at the 277th meeting of the Institute of Mathematical Statistics, held in Banff, Alberta, Canada in July 2002. Slide 29}~\footnote{\url{http://www.stat.berkeley.edu/~breiman/wald2002-1.pdf}}
\end{minipage}
\end{minipage}
}

\section{Introduction}

If one wants to place the broad impact and overall significance of
AdaBoost in perspective, the following quote is hard to beat.  It forms part of the statement from the ACM-SIGACT Awarding Committee for the 2003 G\"{o}del Prize,~\footnote{\url{http://www.sigact.org/Prizes/Godel/2003.html}} presented to Yoav Freund and Robert Schapire, the creators of AdaBoost, for their original paper in which they introduced the algorithm and established some of its theoretical foundations and properties~\citep{Freund97adecision-theoretic}:
\begin{quote}
{\em ``The algorithm demonstrated novel possibilities in analysing data and is a permanent contribution to science even beyond computer science. Because of a combination of features, including its elegance, the simplicity of its implementation, its wide applicability, and its striking success in reducing errors in benchmark applications even while its theoretical assumptions are not known to hold, the algorithm set off an explosion of research in the fields of statistics, artificial intelligence, experimental machine learning, and data mining. The algorithm is now widely used in practice.''}
\end{quote}
The last two sentences have been shown to be clear understatements
over the last two decades since the award was bestowed.

The late, eminent statistician Leo Breiman (1928-2005) once called
AdaBoost the best off-the-shelf classifier for a wide variety of
datasets~\citep{Breiman:1999:PGA:334369.334370}.  Two decades 
later, AdaBoost is still widely used because of its simplicity, speed,
and theoretical guarantees for good performance, reinforcing the
essence of the quote above.  However, despite its overwhelming
popularity, 
some mystery still surrounds its generalization
per\-for\-mance~\citep{MeaseEvidenceContrary}.  
As stated in the slide presented before the introduction here, Breiman ``characterized the understanding of Adaboost as the most important open problem in machine learning.''

In this paper we concentrate on the convergence properties of Optimal
AdaBoost. We construct several (almost-everywhere uniform)
approximations of Optimal AdaBoost and formally establish several key
theoretical 
properties:
(1) they are arbitrarily accurate; (2) they exhibit 
certain cycling behavior in finite time; and (3) their resulting dynamical system is
ergodic. We also formally establish sufficient conditions (e.g.,
non-expansion and/or no ties
eventually) for the
same to hold for the actual Optimal-AdaBoost update. We believe that our results provide a reasonably strong
evidence in favor of the affirmative answer to two open conjectures,
at least from a computational perspective. One is the so-called
``AdaBoost Always Cycles''
Conjecture~\citep{DBLP:journals/jmlr/RudinSD12}, an important open problem in computational learning
theory; that is, that the convergence of the sequence of example-weights
that the algorithm generates at every round to a cycle, or the
like, as it is often
 stated. The other is the so-called ``AdaBoost
is Ergodic'' Conjecture, which~\citet{BreimanForests} championed.
Yet, despite the theoretical guarantee of cycling behavior, we provide some
empirical evidence that suggests the following: if cycling occurs, it
may take a
long time to reach it, or the cycle may be quite long and thus hard to detect in
high-dimensional real-world datasets.  Indeed, there is no evidence of
cycling in the experiments we ran using such datasets, even within
what we believe is a
considerably large number of rounds. 

Time averages, on the other hand, do stabilize relatively quickly. Hence our emphasis in this paper: the convergence of important
objects such as the AdaBoost classifier and other quantities such as its
generalization error.  More generally, our emphasis is the empirical (time)
average, over the number of rounds of AdaBoost, of functions 
of the example weights generated at every round.  For instance, we
study the averaging behavior of some important quantities such as (1) the weak-classifier weights,
(2) the edge-weight value of any example, (3) the 
example-weight distribution/histogram, and (4) the weighted
error of the weak-classifiers selected, to name a few.~\footnote{We refer the
reader to Section~\ref{sec:adaboost_conv_prelim} for formal
definitions and description of what we mean by ``the Convergence of the AdaBoost
Classifier,'' as well as the convergence of other related objects and
important quantities.}

\begin{figure}[t]
\begin{algorithmic}

\STATE {\bf Input} training dataset of $m$ (binary) labeled examples $D = \{(x^{(1)},y^{(1)}),\ldots, (x^{(m)},y^{(m)})\}$

\STATE {\bf Initialize} $w_{1}(i) \leftarrow 1/m$ for all $i = 1,\ldots,m$

\FOR{$t=1,\ldots,T$}

\STATE $h_t \leftarrow \text{\bf WeakLearn}(D,w_t)$

\STATE $\epsilon_t \leftarrow \err(h_t ; D, w_t) \equiv \sum_{i=1, h_t(x^{(i)}) \neq y^{(i)}}^m w_t(i)$

\FOR{$i = 1,\ldots,m$}
\STATE $w_{t+1}(i) \leftarrow \frac{1}{2} \times w_t(i) \times 
\begin{cases} 
\frac{1}{\epsilon_t}, & \text{if $h_t(x^{(i)}) \neq y^{(i)}$,}\\
\frac{1}{1 - \epsilon_t}, & \text{if $h_t(x^{(i)}) = y^{(i)}$.}
\end{cases}$
\ENDFOR

\STATE $\alpha_t \leftarrow \frac 12 \ln\left(\frac{1-\epsilon_t}{\epsilon_t}\right)$

\ENDFOR

\STATE {\bf Output} final classifier: $H_T(x) \equiv \sign{F_T(x)}$ where $F_T(x) \equiv \sum_{t=1}^T \alpha_t \,  h_t(x)$.
\end{algorithmic}
\caption{\bf The AdaBoost Algorithm.}
\label{fig:adabo}
\end{figure}

\subsection{A brief introduction to AdaBoost}

AdaBoost, which stands for ``Adaptive Boosting,'' is a meta-learning algorithm
that works in rounds, sequentially combining 
\emph{``base'' or ``weak'' classifiers} at each
round~\citep{boosting_book}.  
Fig.~\ref{fig:adabo} presents an algorithmic description of
AdaBoost for \emph{binary classification}, which is the focus of this
paper.  Note that $\sign{.}$ is the
standard $\mathrm{sign}$ function: i.e., $\sign{z} = 1$ if $z > 0$;
and $-1$ otherwise.  
Implicit in the function $\text{\bf WeakLearn}(D,w_t)$ is a
\emph{weak-hypothesis class $\Hypo$} of functions from input features to binary outputs, in which
$h_t$ belongs.  In the case that the \emph{``weak'' hypothesis} $h_t$ that $\text{\bf
  WeakLearn}(D,w_t)$ returns achieves \emph{minimum
weighted-error} with respect to $D$ and $w_t$ among all hypothesis in $\Hypo$, 
the resulting
algorithm becomes the so called \emph{Optimal AdaBoost}, a term coined
by~\citet{RudinDynamics}.  That is, in Optimal AdaBoost, we have
$\err(h_t ; D, w_t) = \min_{h \in \Hypo} \err(h ; D, w_t)$, where $\err(h
; D, w_t)$ is analogous to the definition given for $\err(h_t ; D ,
w_t)$ in the description of the algorithm
in Fig.~\ref{fig:adabo}.~\footnote{In what follows, for simplicity, we
often refer to ``Optimal AdaBoost'' simply as ``AdaBoost,'' unless
stated otherwise.}

In what remains of the Introduction we discuss AdaBoost ``puzzling''
behavior (Section~\ref{sec:puzzling}), along with attempts to explain
it (Sections~\ref{sec:margin} and~\ref{sec:Breiman}).  We state our views
on the subject (Section~\ref{sec:ourview}), present a high-level summary of our contributions
(Section~\ref{sec:contr}), and provide an overview of the upcoming
sections of the paper (Section~\ref{sec:overview}).

\subsection{AdaBoost behavior in practice is ``puzzling'' }
\label{sec:puzzling}

\begin{figure}[t]
\begin{tabular}{cc}
\includegraphics[width=0.48\textwidth]
    {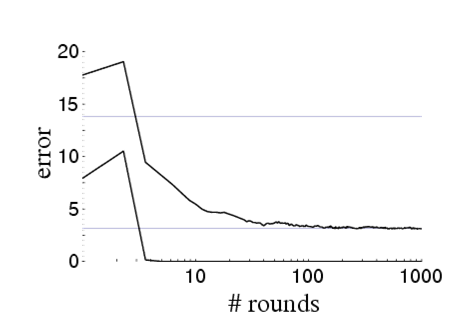} &
\includegraphics[width=0.48\textwidth]
    {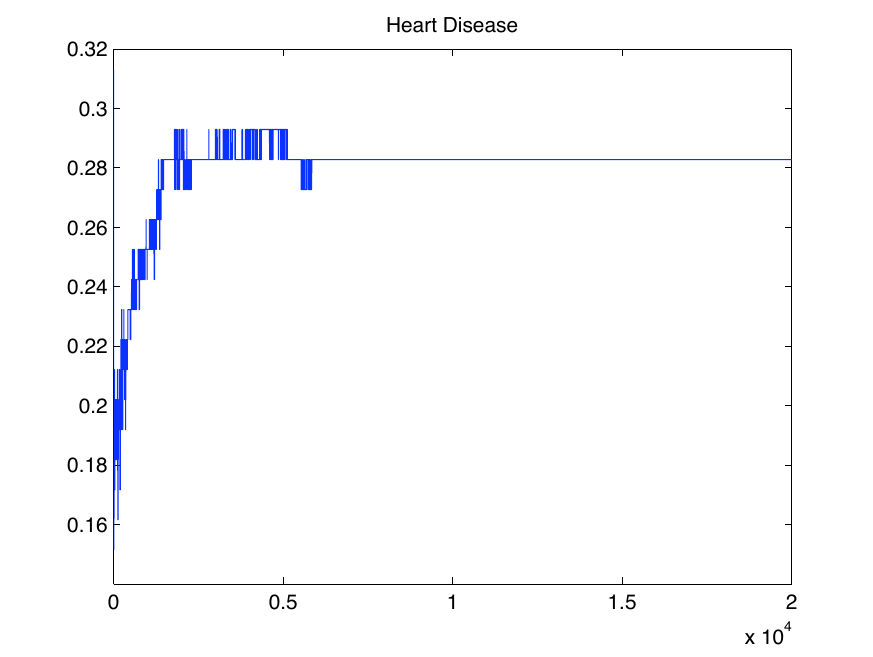} \\
(a) & (b) 
\end{tabular}
  \caption{\label{fig:c4.5} (a) {\bf Training and Test Error (y-axis) for
    AdaBoosting C4.5 on the Letter Dataset for up to 1000 Rounds
    (x-axis, in log-scale).} This plot, which originally appeared
    in~\citet{Schapire97boostingthe}, is still featured in many
    tutorials and talks, but without any definitive formal explanation.
    We refer the reader to~\citet{Breiman:1999:PGA:334369.334370}
    and~\citet{Grove:1998:BLM:295240.295766} for experimental evidence
    against the ``max-margin theory'' originally put forward as an
    explanation for this behavior by~\citet{Schapire97boostingthe}.
    We believe our contribution on the convergence properties of
    AdaBoost, discussed later, formally provides a potential
    explanation for this behavior. (b) {\bf Heart-Disease Dataset Test Error.} Test error (y-axis) of
    AdaBoosting decision-stumps on the Heart-Disease
    dataset~\citep{UCIHeart} for up to 20,000 rounds (x-axis). Note
    the same converging behavior exhibited for the Letter Dataset in
    (a). This converging behavior is typically
    observed in empirical studies of AdaBoost. Note however that this
    time, unlike the previous/canonical figure, AdaBoost seems to be
    overfitting. }
\end{figure}

As shown in Fig.~\ref{fig:adabo}, on each round, AdaBoost adds a
hypothesis, generated by the weak learning algorithm, to a
running linear combination of hypotheses.  Our common machine-learning
(ML) intuition would suggest that the complexity of this
combination of hypotheses would increase the longer the
algorithm runs.  Meanwhile, in practice,  the generalization performance of this
ensemble tends to improve or remain stationary after a large number of
iterations.  
Such behavior goes against our general theoretically-inspired
intuition and accumulated knowledge in ML.  We expect that as the complexity of the model increases, as it appears to
be the case at least on the surface for the AdaBoost classifier, the generalization error also
increases.

While this behavior does not contradict standard theoretical bounds based on the
VC-dimension of AdaBoost classifiers,
it does suggest that it seems futile to attempt to apply the standard
view to this context.  \emph{In some cases, the generalization error
  continues to decrease long after the training error of the
  corresponding AdaBoost classifier has reached
  zero~\citep{Schapire97boostingthe}.}   VC-Dimension-based bounds
cannot really explain this
behavior~\citep{BreimanArcingClassifiers,drucker:boosting,QuinlanBagging}. 
 In fact, that behavior seems generally inconsistent with the
fundamental nature of such bounds and other insights we have gained
from computational learning theory.  A common graph depicting this behavior is Fig.~\ref{fig:c4.5}(a).  Remarkably, the complicated combination of $1000$ trees generalizes better than the simpler combination of $10$.

\subsubsection{The margin theory has its limitations}
\label{sec:margin}

\hspace{1in} 
\scalebox{0.82}{
\begin{minipage}{0.8\textwidth}
\framebox[1.1\textwidth]{
\begin{minipage}{\textwidth}
\begin{center}
{\em IDEAS ABOUT WHY IT WORKS}
\end{center}
A. Adaboost raises the weights on cases previously misclassified, so
it focusses on the hard cases $i$th the easy cases just carried
along. [\emph{sic}]

\vspace{\baselineskip}

wrong: empirical results showed that Adaboost tried to equalize the misclassification rate over all cases.

\vspace{\baselineskip}

B. The margin explanation: An ingenious work by Shapire, et.al. derived an upper bound on the error rate of a convex combination of predictors in terms of the VC dimension of each predictor in the ensemble and the margin distribution.

\vspace{\baselineskip}

The margin for the $i$th case is the vote in the ensemble for the correct class minus the largest vote for any of the other classes.

\vspace{\baselineskip}

The authors conjectured that Adaboost was so poweful because it produced high margin distributions.

\vspace{\baselineskip}

I devised and published an algorithm that produced uniformly higher margin disrbutions than Adaboost, and yet was less accurate.

\vspace{\baselineskip}

So much for margins.

\vspace{\baselineskip}

\end{minipage}
}

\vspace{0.5\baselineskip}

\hfill \begin{minipage}{0.9\textwidth}
{\em Leo Breiman, Machine Learning. 2002 Wald Lecture. Slide 30}
\end{minipage}
\end{minipage}
}

\vspace{\baselineskip}

Solving the apparent paradox roughly introduced above has been a driving force behind boosting
research, and various explanations have been proposed~\citep{boosting_book}.  By far the
most popular among them is the theory of margins~\citep{Schapire97boostingthe}.  The generalization
error of any convex combination of functions can be bounded by a
function of their \emph{margins} on the training examples, independent
of the number of classifiers in the ensemble.  AdaBoost provably
produces reasonably large margins, and tends to continue to improve
the margins even after the training error has reached zero~\citep{Schapire97boostingthe}.  

The margin theory is effective at explaining AdaBoost's generalization
performance at a high level.  But it still has its downsides, as
Breiman's presentation slide indicates.  There
is evidence both for and against the power of the margin theory to
predict the quality of the generalization
performance~\citep{Breiman:1999:PGA:334369.334370,RudinDynamics,RudinPreciseStatements,rudin2007,Reyzin:2006:BMB:1143844.1143939}.  
But the most striking problem is that the margin bound is very loose:
It does not explain the precise behavior of the error.  For example,
when looking at Fig.~\ref{fig:c4.5}(a) a couple of questions arise.
  Why is the test error not fluctuating wildly underneath the bound
induced by the margin?  Or even, why is the test error not approaching the bound?  Remarkably, the error does neither of these things, and seems to converge to a stationary value.

This phenomenon is not unique to this dataset.  This convergence can
be seen on many different datasets, both natural and synthetic.  Even
in cases where AdaBoost seems to be overfitting, \emph{the generalization
performance tends to stabilize.}  Take for example
Fig.~\ref{fig:c4.5}(b).  For the first 5000 rounds it appears that the algorithm is overfitting.  Afterwards, its generalization error stabilizes.

\subsubsection{Breiman attempts to theoretically explain AdaBoost's puzzling
  behavior in practice}
\label{sec:Breiman}

\hspace{1in} 
\scalebox{0.82}{\begin{minipage}{0.8\textwidth}
\framebox[1.1\textwidth]{
\begin{minipage}{\textwidth}
\begin{center}
{\em NAGGING QUESTIONS}
\end{center}
The classification by the $T$th ensemble is defined by~\footnote{It is unfortunate that Breiman's notation is not consistent with that traditionally used in the presentation of AdaBoost in the ML community. For example, we use $m$ for the number of samples in the training datatset $D$, while he used $T$ to denote the training dataset and $|T|$ for the number of samples. Similarly, we use $T$ for the number of rounds of AdaBoost, while he used $m$. Here we are quoting his slide using \emph{our} notation in the context of our paper.}
\begin{center}
$\sign{F^D_T(x)}$
\end{center}
The most important question I chewed on
\begin{center}
\underline{Is Adaboost consistent}?
\end{center}
Does $P(Y \neq \sign{F^D_T(X)})$ converge to the Bayes
risk as $T \to \infty$ and then $m \to \infty$?

\vspace{\baselineskip}

I am not a fan of endless asymptotics, but I believe that we need to know whether predictors are consistent or inconsistent

\vspace{\baselineskip}

For five years I have been bugging my theoretical colleagues with these questions.

\vspace{\baselineskip}

For a long time I thought the answer was yes.

\vspace{\baselineskip}

There was a paper 3 years ago which claimed that Adaboost overfit after 100,000 iterations, but I ascribed that to numerical roundoff error

\vspace{\baselineskip}

\end{minipage}
}

\vspace{0.5\baselineskip}

\hfill \begin{minipage}{0.9\textwidth}
{\em Leo Breiman, Machine Learning. 2002 Wald Lecture. Slide 34}
\end{minipage}
\end{minipage}
}

\vspace{\baselineskip}

\citet{BreimanForests} conjectured that AdaBoost was an \emph{ergodic dynamical system.}  He argued that if this was the case, then the dynamics of the weights over the examples behave like selecting from some probability distribution.  Therefore, AdaBoost can be treated as a random forest.  Using the strong law of large numbers, it follows that the generalization error of AdaBoost converges for certain weak learners.

\hspace{1in} 
\scalebox{0.82}{\begin{minipage}{0.8\textwidth}
\framebox[1.1\textwidth]{
\begin{minipage}{\textwidth}
\begin{center}
{\em THE BEGINNING OF THE END}
\end{center}
In 2000, I looked at the analog of Adaboost in population space, i.e. using the Gauss-Southwell approach, minimize
\begin{center}
$EY,\mathbf{X} \exp( - Y F(\mathbf{X}))$
\end{center}
The weak classifiers were the set of all trees with
a fixed number (large enough) of terminal nodes.

\vspace{\baselineskip}

Under some compactness and continuity conditions I proved that:
\[
F_T \to F \text{  in  } L_2(P)
\]
\[
P(Y \neq \sign{F(\mathbf{X}}) = \text{Bayes Risk}
\]

\vspace{3\baselineskip}

{\em But there was a fly in the ointment}

\vspace{\baselineskip}

\end{minipage}
}

\vspace{0.5\baselineskip}

\hfill \begin{minipage}{0.9\textwidth}
{\em Leo Breiman, Machine Learning. 2002 Wald Lecture. Slide 35}
\end{minipage}
\end{minipage}
}

\hspace{1in} 
\scalebox{0.82}{\begin{minipage}{0.8\textwidth}
\framebox[1.1\textwidth]{
\begin{minipage}{\textwidth}
\begin{center}
{\em THE FLY}
\end{center}
Recall the notation~\footnote{Please read footnote about notation differences in a previously quoted slide.}
\[
F_T(x) = \sum_{t=1}^T \alpha_t h_t(x)
\]
An essential part of the proof in the population case was showing that:
\[
\sum_{t=1}^{\infty} \alpha_t^2 < \infty
\]
But in the $m$-sample case, one can show that
\[
\alpha_t \geq 2/m
\]

\vspace{3\baselineskip}

{\em So there was an essential difference between the population case and the finite sample case no matter how large $m$}

\vspace{\baselineskip}

\end{minipage}
}

\vspace{0.5\baselineskip}

\hfill \begin{minipage}{0.9\textwidth}
{\em Leo Breiman, Machine Learning. 2002 Wald Lecture. Slide 36}
\end{minipage}
\end{minipage}
}

\vspace{\baselineskip}

The following quote is from~\citet{BreimanInfinite}, Section 9
(``Discussion''), Sub-Section 9.1 (``AdaBoost'').~\footnote{A
  substantial amount of the same text appears
  in~\citet{BreimanPopTheory}, which is the article version of his
  2002 Wald Memorial Lecture.  Yet, that specific section of the technical report does not appear in the article version.}
  In that technical report, mostly superseded
  by~\citet{BreimanPopTheory}, he proved convergence of the Optimal
  AdaBoost classifier itself, and convergence to the Bayes-error risk
  (i.e., that Optimal Adaboost is Bayes-consistent, to put it in
  statistical terms), \emph{in $L_2(P)$, the class of measurable
    functions in $L_2$ with respect to probability measure $P$.}  But he did so
  under the condition of \emph{infinite amount of data.}  Unfortunately, as Breiman himself stated in that manuscript (and in his 2002 Wald Lecture), there was a \emph{fundamental flaw}, or as he put it, a ``fly in the ointment,'' in trying to transfer the result to \emph{finite-size datasets.}~\footnote{Note that, to our best interpretation,
  Breiman's description is that of Optimal AdaBoost, as considered
  here; except that in our experiments, because we can consider arbitrary
  initial example weights, we initialize them by drawing
  uniformly at random from the probability $m$-simplex. Note also that we attempted to make his notation consistent with the more traditional notation in the ML literature, as we use here.}
\begin{quote}
{\em ``The theoretical results indicate that as the sample size goes
  to infinity, the generalization error of Adaboost will converge to
  the Bayes risk. But on most data sets I have run, Adaboost does not
  converge. Instead it's behavior resembles an ergodic dynamical
  system. The mechanism producing this behavior is not understood.''
}
\end{quote}
In contrast, all our convergence results are in the case of \emph{finite}-size datasets.

Breiman continued, now informally stating his ``AdaBoost is Always Ergodic'' Conjecture:
\begin{quote}
{\em ``I also conjecture that it is this equalization property that gives Adaboost its ergodicity.	Consider a finite number of classifiers $\{h_t\}$, each one having an associated misclassification set $Q_t$ . At each iteration the $Q_t$ having the lowest weight (using the current normed weights) has its weight increased to $1/2$ while instances in the complement have their weights decreased.	Thus, the $Q_t$ selected moves to the top of the weight heap while other $Q_t$ move down until they reach the bottom of the heap, when they are bounced to the top. It is this cycling among the $Q_t$ that produces the ergodic behavior.\\
  \mbox{ } \\
However, I do not understand the connection between the finite sample size equalization and what goes on in the infinity case. Why equalization combined with ergodicity produces low generalization error is a major unsolved problem in Machine Learning.''
}
\end{quote}
We did not really see any empirical evidence in favor of
Breiman's observation in our experiments, despite our theoretical
results strongly suggesting that AdaBoost \emph{is} an ergodic dynamical
system, consistent with Breiman's Conjecture. The
\emph{logarithmic} growth on the number of decision stumps we have
empirically observed,
as partially presented in the center column of Fig.~\ref{fig:all} on pg.~\pageref{fig:all}
in Appendix~\ref{app:pac_bnds},
suggests that Breiman's observation does not 
manifest itself in real-world datasets, at least within a reasonably large
number of rounds of AdaBoost.~\footnote{Granted, Breiman's observation
  may still be consistent with our experimental results if the
  steady-state of the dynamical system induces a distribution over the
  indiviual decision stumps that, when ordered, decays exponentially.} In this paper, we show that several
approximations of Optimal AdaBoost of arbitrary accuracy are ergodic
dynamical systems, and sufficient conditions under which the same
holds for the actual Optimal-AdaBoost update. 

We suspect that we could have obtained Breiman's result about the
convergence in $L_2$ by employing a different tool from ergodic
theory, the Mean Ergodic Theorem due to~\citet{vonNeumann1932}, and that we could
have done so in the case of finite $m$. We did not pursue a formal
proof of that result in our work.
The Mean Ergodic Theorem
establishes so called ``convergence in the mean,'' a weaker form of
convergence than the one we obtain here, which is ``pointwise
convergence.'' That is, in ``convergence in the mean'' convergence is
only guarantee for the ``average point,'' and may not occur for any
``specific point.''  Indeed, this is what distinguishes the Birkhoff Ergodic
 Theorem~\citep{Birkhoff1931} from von Neumann's Mean Ergodic
 Theorem.~\footnote{As a concluding aside on this topic,
these two theorems were published almost simultaneously and share a
fascinating history in their development before publication, as~\citet{Moore2015} nicely chronicles in a recent article.}

\subsubsection{Our own view on AdaBoost's behavior: stability seems
  to be a more consistent property than resistance to over-fitting}
\label{sec:ourview}

One of our original objectives in this work was to explain the well publicized AdaBoost resistance to
over-fitting the training data.  Instead, experimental evidence suggests that the
\emph{``stability''} of the test error, and multiple other quantities, is the most \emph{consistent} behavior we see in
practice.  Hence, stability or convergence seems to be a more \emph{universal} characteristic of AdaBoost than resistance to overfitting. 
That the convergence of the AdaBoost classifier itself 
turns out to be a rather universal
characteristic, was 
surprising to us.  We note that this
universal characteristic seems independent, or at least it is not
explicitly directly dependent, of the hypothesis class that the weak
learner uses, as long as it satisfies some minimal, basic properties.

Yes, AdaBoost exhibits a tendency to resist overfitting in many
datasets.  But we and others have found that it
does overfit in several others real-world datasets (see, e.g.,
\citeauthor{Grove:1998:BLM:295240.295766},\citeyear{Grove:1998:BLM:295240.295766}). The
test error of the
Heart-Disease dataset in
Fig.~\ref{fig:c4.5}(b) is an example.~\footnote{The train-test error plot for the Parkinson
dataset presented in the second row and first column in 
Fig.~\ref{fig:all},
on pg.~\pageref{fig:all} in Appendix~\ref{app:pac_bnds}, 
provides another
example of overfitting behavior in a real-world data set.} Regardless, we still see stability of the
test error.  While there is large empirical evidence of this ``stable''
behavior,
very little theoretical understanding of that stability existed.

We believe that ``stability'' may also help explain the cases where
AdaBoost does resist overfitting.  That is, AdaBoost will appear
resistant to overfitting if it converges to stable behavior in a
relatively small number of iterations.

\subsection{Our contributions}
\label{sec:contr}

In this paper, we follow a similar general approach to that pioneered
by~\citet{RudinDynamics} in the sense that we frame AdaBoost as a
dynamical system.  However, we apply \emph{different techniques} and
mathematical tools. Also, we address \emph{different
  problems}. \citet{RudinDynamics} is primarily concerned with the
convergence to \emph{maximum margins.} Here we are primarily concerned
with convergence in a more general sense, and not margin
maximization. Using those mathematical tools from real-analysis
and measure theory (e.g., \emph{Krylov-Bogolyubov Theorem}), we
establish sufficient conditions for an invariant measure on the
dynamical system when defined only on its set of attractors.  
We do not require this measure to be ergodic, which is weaker than Breiman's requirement.  Then, using tools from ergodic theory (i.e., \emph{Birkhoff Ergodic
  Theorem}), we show that such a measure implies the convergence of
the time/per-round average of \emph{any} measurable 
function, in $L_1$,
of the weights over the examples, but only when started on its set of attractors. We then extend this convergence result to hold starting from almost any initial weight but only for a class of continous-like functions. In doing so, we provide reasonably strong evidence in favor of the 
affirmative answer to the ``AdaBoost Always Cycle'' Conjecture and the
``AdaBoost is an Ergodic Dynamical System'' Conjecture, two open
conjectures in machine learning.  We provide constructive proofs of several arbitrarily-accurate
approximations of Optimal AdaBoost, prove that the resulting
approximations exhibit
certain cycling behavior in finite time and that the resulting dynamical system is ergodic, and establish sufficient conditions for the
same to hold for the actual Optimal-AdaBoost update. 
We also prove the (global) convergence of time averages of any ``locally
continuous'' function of the example weights for all the arbitrarily-accurate
approximations of Optimal AdaBoost, and for the actual
Optimal-AdaBoost update under the sufficient condition, starting from
any $w_1$. 
We use the last result 
to formally prove that, under the respective condition, (a) the margin for every example
converges, under the respective condition (note that the last statement is not about
maximizing margins); (b) the AdaBoost classifier itself is converging; and (c) 
the generalization error is asymptotically stable.

\subsection{Overview}
\label{sec:overview}

Section~\ref{S:BackNot} begins to introduce the mathematical
preliminaries needed to state and prove our convergence
results. Section~\ref{sec:ds} formulates AdaBoost as a dynamical
system. Section~\ref{S:classifier-convergence} formally provides what
in our view are reasonably
strong affirmative answers to the ``AdaBoost Always Cycle'' Conjecture and the
``AdaBoost is an Ergodic Dynamical System'' Conjecture,
and establishes the finite-time convergence of 
time/per-round averages of functions of the example weights that Optimal AdaBoost generates,
along with many other quantities such as the margins and the
generalization error, under mild conditions.
Section~\ref{S:Exp} presents our empirical results.
Section~\ref{sec:close}
provides closing remarks, including a summary discussion of the
results and open problems.

We have already briefly presented, differentiated, and discussed the
most important and closest work to ours~\citep{RudinDynamics}. Hence, we refer the reader
to Appendix~\ref{sec:rw} for additional discussion of other work. Without further delay,
we now move on to the presentation of the main technical components
of our work.

\section{Technical preliminaries, background, and notation}
\label{S:BackNot}

In this section we 
introduce the most basic ML concepts we use throughout the article. We
refer the reader to Appendix~\ref{app:math} for a brief overview of
other, more general mathematical concepts we use for our technical
results. To help the reader keep track of the notation used in the
remaining of the article, Table~\ref{tab:not} provides a summary
legend of the general notation, while Table~\ref{tab:notab} provides a summary
legend of the notation most closely related to AdaBoost.

Let $\X$ denote the \emph{feature space} (i.e., the set of all inputs) and 
$\{-1,+1\}$ be the set of \emph{(binary) output labels}. We make the
standard assumption that $\X$ is endowed with a \emph{$\sigma$-algebra $\Sigma_\X$ over
$\X$(i.e., the set of all measurable subsets of $\X$)}. For example,
$\Sigma_\X$ may be the $\sigma$-field generated by $\X$ . Let 
$(\X,\Sigma_\X)$ be the corresponding \emph{measurable space} for the
feature space. To simplify notation, let 
$\D \equiv \X \times \{-1,+1\}$ 
be the set of possible input-output pairs. We want to learn from a given, fixed dataset of $m$ training examples $D \equiv \{(x^{(1)},y^{(1)}),(x^{(2)},y^{(2)}),\ldots,(x^{(m)},y^{(m)})\}$, where each input-output pair $(x^{(i)},y^{(i)}) \in \D$, for all examples $i = 1,\ldots,m$.~\footnote{Note that \emph{all} datasets are technically \emph{multisets} by definition because members of a dataset may appear more than once.} We make the standard assumption that each example 
in $D$ comes from a \emph{probability space} $(\D,\Sigma,P)$, where
$\D$ is the \emph{outcome space}, $\Sigma \equiv \Sigma_\X \times 2^{\{-1,+1\}}$ is the \emph{($\sigma$-algebra) set of possible events} with respect to $\D$ (i.e., subsets of $\D$), and $P$ the \emph{probability measure} mapping $\Sigma \to \R$.
For convenience, we denote the \emph{dataset of input examples in the
  training dataset $D$} by $S \equiv
\{x^{(1)},x^{(2)},\ldots,x^{(m)}\}$. Also for convenience, we denote
the \emph{set of (unique) members of (multiset) $S$} by $U \equiv
\bigcup_{i=1}^m \{x^{(i)}\} \subset \X$.

\begin{table}
\begin{center}
\begin{tabular}{|l|l|}
\hline
\emph{Symbol} & \emph{Brief Description}\\
\hline
\hline 
$\indicator{.}$ & indicator function\\
\hline
$\sign{.}$ & sign function\\
\hline 
$\X$ & feature space\\
\hline
$x$ & an element in $\X$ (an input value)\\
\hline
$j$ & index the input dimension/feature \\
& (i.e., the components of $x$)\\
\hline
$x_j$ & the $j$th component of $x$\\
\hline
$y$ & an element in $\{-1,+1\}$ (an output value)\\
\hline
$\D$ & space of input-output pairs  $\X \times \{-1,+1\}$\\
\hline
$\Sigma$ & ($\sigma$-algebra) set of possible events with respect to
           $\D$\\
\hline
$\R$ & the set of real numbers\\
\hline
$P$ & probability measure/law for measure space $(\D,\Sigma)$\\
\hline
$(\D,\Sigma,P)$ & probability measure space over possible input-output pairs\\
\hline
$D$ & training dataset\\
\hline
$m$ & number of training examples in $D$\\
\hline
$i$ & index to training examples in $D$\\
\hline 
$x^{(i)} \in \X$ & input (feature values) of $i$th training example in $D$\\
\hline
$y^{(i)} \in \{-1,+1\}$ & output (label) of $i$th training example in $D$\\
\hline 
$(x^{(i)},y^{(i)}) \in \D$ & $i$th input-output-pair training example in $D$\\
\hline
$S$ & dataset of input (feature values) examples in $D$\\
\hline
$U \subset \X$ & set of unique members of $S$\\
\hline
$\Delta_m$ & probability $m$-simplex\\
\hline
$\Delta_m^{\circ}$ & interior of $\Delta_m$\\
\hline
$w$ & example weights (probability distribution) over the indexes \\
& of the training examples in $D$ (an element of $\Delta_m$ or any of
                                                                       its subsets)\\
\hline
$w(i)$ & $i$th component of $w$\\
\hline
$\Hypo$ & the weak learner's hypothesis class\\
\hline
$h$ & weak hypothesis in $\Hypo$, of type $\X \to \{-1,+1\}$\\
\hline
$-h$ & the negative of $h$ (i.e., $-h(x)$ for all $x \in \X$)\\
\hline
$\Dich(\Hypo,S)$ & finite set of label dichotomies induced by $\Hypo$
                   on $S$\\
\hline
$\Mcal$ & finite set of error or mistake dichotomies induced by $\Hypo$
                   on $D$\\
\hline
$n$ & number of elements in $\Dich(\Hypo,S)$,\\ 
 & (also equals the
      number of elements in $\Mcal$)\\
\hline
$l$ & index to an element of $\Dich(\Hypo,S)$\\
\hline
$o \in \{-1,+1\}^m$ & a label dichotomy\\
\hline
$o^{(l)}$ & the $l$th element of $\Dich(\Hypo,S)$\\
& (equals
            $(h(x^{(1)}),\ldots,h(x^{(m)})) \in \{-1,1\}^m$ for some
            $h \in \Hypo$)\\
\hline
$\eta,\eta'$ & mistake dichotomies\\
& (equals $\left(\indicator{y^{(1)} \neq
  o(1)},\ldots,\indicator{y^{(m)} \neq o(m)}\right) \in \{0,1\}^m$\\
& for
  some $o \in \Dich(\Hypo,S)$)\\
\hline
$h^0$ & representative hypothesis in $\Hypo$ for label dichotomy $o$\\ 
\hline
$h^\eta$ & representative hypothesis in $\Hypo$ for mistake dichotomy $\eta$\\ 
\hline
\end{tabular}
\end{center}
\caption{{\bf Notation Legend.} The table summarizes some of the
  general mathematical notation used in this paper.}
\label{tab:not}
\end{table}

\begin{table}
\begin{center}
\begin{tabular}{|l|l|}
\hline
\emph{Symbol} & \emph{Brief Description}\\
\hline
\hline 
$T$ & maximum number of rounds of AdaBoost\\
\hline
$w_t$ & example weights for AdaBoost's round $t$\\
\hline 
$h_t$ & weak hypothesis that AdaBoost selects at round $t$\\
& (i.e., with respect to $w_t$ and $D$)\\
\hline
$\eta_t$ & mistake dichotomy that $h_t$ produces for $D$\\
\hline
$\err(h; D,w)$ & weighted error of weak hypothesis $h$\\
& with respect to $D$ and $w_t$\\
\hline
$\epsilon_t$ & (weighted) error of $h_t$ with respect to $w_t$\\
\hline
$\alpha_t$ & weight assigned to $h_t$ in the AdaBoost classifier\\
\hline
$F_T$ & AdaBoost classifier (proxy) function\\
\hline
$H_T$ & AdaBoost classifier\\
\hline
$\T$ & dynamical-system version of the ``hypothetical'' AdaBoost update\\
\hline
$\A$ & dynamical-system version of the ``actual'' AdaBoost update\\
\hline
\end{tabular}
\end{center}
\caption{{\bf AdaBoost Notation Legend.} The table summarizes the notation used
  in this paper that is most closely related to AdaBoost.}
\label{tab:notab}
\end{table}

Let us use an instance of a typical ``classroom example'' presented in
Fig.~\ref{fig:clex} to instantiate and help with the notation. For
that example, we have the following: $\X = \R \times \R = \R^2$, $\D = \R^2
\times \{-1,1\}$,
$\Sigma$ is the $\sigma$-algebra over the joint-space $\D$ (e.g.,
standard Borel $\sigma$-algebras over $\R^2$ for each output value in $\{-1,1\}$), $P$
is some probability measure over the measurable space $(\D,\Sigma)$
defining the distribution over input-output pairs, which in turn defines the
probability measure space $(\D,\Sigma,P)$; $m = 6$,
\[
\begin{array}{llllll}
x^{(1)} = (1,5), & x^{(2)} = (3,11), & x^{(3)} = (5,1), & x^{(4)} = (7,3),
  & x^{(5)} = (9,7), & x^{(6)} = (11,9),\\
y^{(1)} = 1, & y^{(2)} = -1, & y^{(3)} = 1, & y^{(4)} = -1, & y^{(5)} = 1, & y^{(6)} = -1,
\end{array}
\]
and
\begin{align*}
D &=
    \{((1,5),1),((3,11),-1),((5,1),1),((7,3),-1),((9,7),1),((11,9),-1)\},\\
S &= \{(1,5),(3,11),(5,1),(7,3),(9,7),(11,9)\},\\
U &= S \; .
\end{align*}

\begin{figure}
\begin{center}
\includegraphics[width=\textwidth]{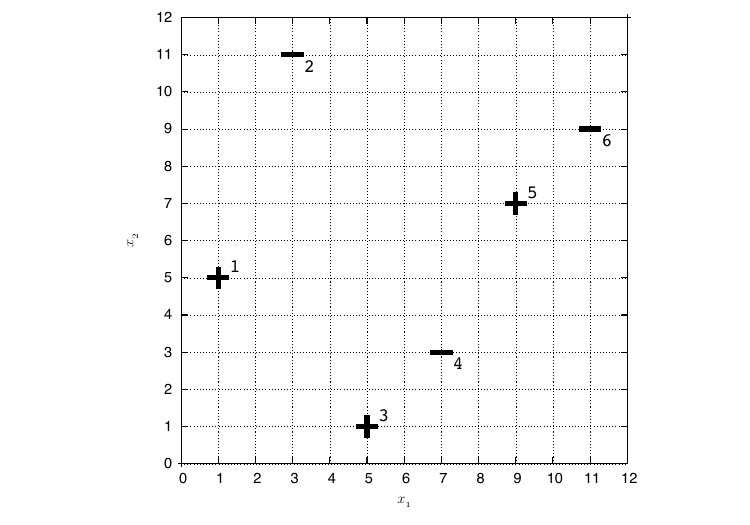}
\end{center}
\caption{{\bf A Classroom Example.} This figure shows a simple
  binary-classification classroom-like example to illustrate the
  notation and some basic concepts.}
\label{fig:clex}
\end{figure}

Denote by
\[
\Delta_m \equiv \left\{w\in\mathbb{R}^m \left| \sum_{i=1}^m w(i) = 1 \text{ and for all } i \text{, } w(i)\geq 0 \right. \right\}
\]
the \emph{standard  $m$-simplex.} Recall that $\Delta_m$ is a compact set. Denote by 
\[
\Delta_m^{\circ} \equiv \Int{\Delta_m} = \left\{w\in\mathbb{R}^m \left| \sum_{i=1}^m w(i) = 1 \text{ and for all } i \text{, } w(i) >  0 \right. \right\} \subset \Delta_m
\]
its \emph{interior} set (i.e., all positive probabilities). We will often denote elements of $\Delta_m$ or any of its subsets as $w$.  

We denote the set of \emph{hypotheses} that the \emph{weak learner} in
AdaBoost uses by $\Hypo$, often referred to within the boosting
context as the \emph{weak-hypothesis class}, and its elements as
\emph{weak hypotheses}, where each such \emph{weak hypothsesis} may or may not be selected during the
execution of the AdaBoost algorithm.~\footnote{In what follows
  we will use the terms ``weak hypothesis,'' ``weak classifier,'' and
  ``base classifier'' interchangeably. Similarly, we will use the
  terms ``weak learner'' and ``base learner'' interchangeably.}  
For instance, within the context of the example in
Fig.~\ref{fig:clex}, a simple, natural, and often-used choice for
$\Hypo$ is the set of all axis-parallel decision stumps conditioned on
every value of each of the two dimensions in Euclidean space: letting
\[
\Hypo_{\mathrm{basic}} \equiv \bigcup_{j=1}^2 \{ h : \X \to \{-1,1\} \, \mid \,  h(x_1,x_2) =
\sign{x_j - v}, \text{ for all } (x_1,x_2) \in \R^2, v \in \R \} \; ,
\]
set
\begin{align}
\nonumber
\Hypo = \bigcup_{y \in \{+1,-1\}} & \left\{ h : \X \to \{-1,1\} \, \mid
  \, h(x_1,x_2) = y, \text{ for all } (x_1,x_2) \in \R^2 \right\} \cup \\
\label{eqn:exhypo}
& \left\{ h : \X \to \{-1,1\} \, \mid \,
  h = y h_{\mathrm{basic}} \text{ for some }
  h_{\mathrm{basic}} \in \Hypo_{\mathrm{basic}} \right\} \; .
\end{align}
Finally, denote the standard \emph{indicator} by $\indicator{.}$: i.e.,
$\indicator{\text{c}} = 1$ if $c=\text{true}$; and 
$= 0$ if $c=\text{false}$. Another way of saying that $h$ makes a mistake on
example $(x,y)$ is to write $\indicator{h(x) \neq y} = 1$.

We will impose the following natural conditions on $\Hypo$. 
These conditions will prove useful in very specific parts of the analysis regarding the continuity of certain functions related to Optimal AdaBoost, most important of which is the example-weight update performed at each round of the algorithm.
The condition's main role is to avoid dealing with discontinuities at
probability distributions $w$ on the $m$-simplex for which the
weighted error of some hypothesis in $\Hypo$ is
zero. Theorem~\ref{T:lower-bound+}, stated later (in Section~\ref{sec:invmeas}),
shows that in the implementation of Optimal AdaBoost we use, which is
consistent with standard implementations, the example-weights update
stays away from such discontinuities. That theorem establishes a lower
bound on the weighted error $\epsilon_t$ generated by the algorithm
under the additional condition that the weak-learner \emph{always}
does better than random guessing, a natural condition in this context. 
\begin{condition}
({\bf Natural Weak-Hypothesis Class})
\label{cond:nathypo}
\begin{enumerate}
\item {\bf $\Hypo$ contains the constant, all-positive hypothesis}: the hypothesis $h$, such that, for all $x \in \X$, $h(x)=1$, is in $\Hypo$.
\item {\bf $\Hypo$ is closed under negation:} if $h \in \Hypo$, then $-h \in \Hypo$. (By $-h$ we mean the function $h'(x) \equiv - h(x)$).
\item {\bf No $h$ in $\Hypo$ is perfect on the training dataset $D$:}
  for all $h \in \Hypo$, there exists an $(x,y) \in D$ such that $h(x)
  \neq y$ (i.e., $h$ makes a mistake on $x$).
  \item {\bf Every $h$ in $\Hypo$ is well-behaved:} every $h \in
    \Hypo$ is $\Sigma_\X$-measurable.
\end{enumerate}
\end{condition}
The first part of the condition is easy to satisfy: just add such a hypothesis $h$ to $\Hypo$ if it is not already there. The second part of the condition is similarly easy to satisfy. Note that the first and second parts of the condition imply that (1) the constant, all-negative hypothesis is also in $\Hypo$ (i.e., the hypothesis $h$ defined as $h(x) = -1$ for all $x \in \X$ is in $\Hypo$); and (2) every example in the training dataset is incorrectly classified by some weak hypothesis in $\Hypo$ (i.e., for every $(x,y) \in D$, there exists some $h \in \Hypo$ such that $h(x) \neq y$; which we can think of as the ``converse'' of the third part of the condition).
The third part of the condition is natural because, should there be a
perfect $h \in \Hypo$, Optimal AdaBoost would stop immediately after
the first iteration given that in that case the weighted error of $h$ with respect to \emph{any}
initialization of $w_1 \in \Delta_m$ would be zero: i.e., $\sum_{i =1, h(x^{(i)}) \neq y^{(i)}}^{m} w_1(i) = \sum_{i
  =1}^{m} \indicator{h(x^{(i)}) \neq y^{(i)}} w_1(i)= 0$. 
In the context of our running example given in Fig.~\ref{fig:clex},
the hypothesis class $\Hypo$ of axis-parallel decision-stumps given in
Equation~\ref{eqn:exhypo} satisfies Condition~\ref{cond:nathypo}.
Informally speaking, the
fourth part of the condition assures that every $h \in \Hypo$ has ``well-behaved''
    decision regions~\citep[][Chapter 2, pp. 6]{BartleMT}, so that
    $h$ is a random variable
    with respect to the probability space $(\D,\Sigma,P)$ over the 
    examples~\citep[][Section 1.1, pp. 3]{DurrettProb}. It is reasonable because it holds for the
typical feature spaces considered in the machine-learning literature
and found in practice.~\footnote{For instance, the condition holds if each atribute
  inducing the feature space is either (a) \emph{discrete} (i.e., finite or
  countably infinite), and its $\sigma$-algebra is the set of all
  subsets of its domain~\citep[][Example 2.2(a)]{BartleMT}; or (b) continuous (i.e., real-valued), and the
  $\sigma$-algebra is the Borel $\sigma$-algebra~\citep[][Example
  2.2(g)]{BartleMT}. It also holds if the $\sigma$-algebra generated
  by $\Ccal \equiv \{ \{ x \in \X | h(x) = +1 \} | h \in \Hypo \}$ is a subset of
  $\Sigma_\X$, which we can guarantee by requiring that $\Sigma_\X =
  \Ccal$, of course.}  For instance, within the context of the example in
Fig.~\ref{fig:clex}, each $h \in \Hypo$ is measurable with respect to
the Borel $\sigma$-algebra on $\R^2$. In addition, if an $h \in \Hypo$ is not $\Sigma_{\X}$-measurable, then we cannot
talk about basic quantities such as the expected output or the
generalization error of $h$, with respect to $(\D,\Sigma,P)$ because
they do not exist;~\footnote{If $h \in \Hypo$ is not
  $\Sigma_\X$-measurable, then, by definition, the decision regions of
  $h$ given by the sets $\{ x \in \X |
h(x) = +1 \}$ and $\{ x \in \X |
h(x) = -1 \}$ are not $\Sigma_\X$-measurable. Hence, there is no such
thing as the expected output value $\E{h(X)}$, the generalization
error $\E{\indicator{h(X) \neq Y}} = P(h(X) \neq Y)$, or the like, with respect
to $(\D,\Sigma,P)$, as they do
not exist. So, for example, limits of empirical averages of functions of $h$ over the dataset $D$, such as
the average output, $\lim_{m \to \infty} \frac{1}{m}
\sum_{i=1}^m h(x^{(i)})$, the misclassification-error rate, $\lim_{m \to \infty} \frac{1}{m}
\sum_{i=1}^m \indicator{h(x^{(i)}) \neq y^{(i)}}$, or the like, may or may not exist, but the
typical Laws of Large Numbers do not apply, because they cannot converge to
something that does not exist.}
nor would those quantities exist for Optimal
AdaBoost if it selects $h$ at any point during its
execution.~\footnote{We do not need the condition for our results to hold if we could guarantee that Optimal AdaBoost
  never selects a non-$\Sigma_\X$-measurable $h \in \Hypo$.} \emph{This
condition only affects our results on the convergence of the Optimal AdaBoost
classifier and its generalization error.}

This set of hypotheses $\Hypo$, which may be finite or infinite,
induces a \emph{finite} set of \emph{label dichotomies} on the training dataset of input examples $S$, where each dichotomy is defined as an $m$-dimensional vector of output labels to the training examples: formally, we denote this \emph{finite set of label dichotomies} as~\footnote{Note that we do not explicitly compute such sets in practice; but they are very convenient as the only mathematical abstraction needed to characterize the actual, full behavior of Optimal AdaBoost. Said differently, the final classifier output by Optimal AdaBoost when using the mathematical abstraction implicitly provided by the finite set of label dichotomies is exactly the same as that produced by the learning algorithm when run in practice.}
\begin{align*}
\Dich(\Hypo,S) \equiv \{o^{(1)},\ldots,o^{(n)}\} = \bigcup_{h \in \Hypo} \{(h(x^{(1)}),\ldots,h(x^{(m)}))\}  \subset 
\{-1,+1\}^m \; . 
\end{align*}
Parts 1 and 2 of
Condition~\ref{cond:nathypo} imply that the vector of all +1's and the
vector of all -1's are both in $Dich(\Hypo,S)$. Hence, we have $2 \leq
n \leq 2^m$.~\footnote{Actually, the upper bound on the size is
  $2^{\min(m^*,m)}$, where $m^*$ is the \emph{VC-dimension} of
  $\Hypo$~\citep{Kearns:1994:ICL:200548}. We also refer the reader
  to
 Definition~\ref{def:vcdim} in 
Appendix~\ref{app:pac_bnds}
for a definition of the VC-dimension framed within the context of the current manuscript.} For instance, in the context of the classroom example,
for the weak-learner hypothesis class $\Hypo$ defined in
Equation~\ref{eqn:exhypo}, the set $\Dich(\Hypo,S)$ looks as follows.
\begin{align*}
\Dich(\Hypo,S) = \{ 
& (+1,+1,+1,+1,+1,+1),\\
& (-1,-1,-1,-1,-1,-1),\\
& (-1,+1,+1,+1,+1,+1),\\
& (+1,-1,-1,-1,-1,-1),\\
& (-1,-1,+1,+1,+1,+1),\\
& (+1,+1,-1,-1,-1,-1),\\
& \ldots,\\
& (+1,+1,-1,+1,+1,+1),\\
& (-1,-1,+1,-1,-1,-1),\\
& (+1,+1,-1,-1,+1,+1),\\
& (-1,-1,+1,+1,-1,-1),\\
& \ldots,\\
& (-1,+1,-1,-1,-1,-1),\\
& (+1,-1,+1,+1,+1,+1) \}
\end{align*}
For this example, we have $n=22$, but here we are only showing part of the set. We
refer the reader to Appendix~\ref{app:clex} for the full set.

For each dichotomy $o \in \Dich(\Hypo,S)$, it is convenient to
associate a (unique) \emph{representative hypothesis $h^o \in \Hypo$
  for $o$}, among any other hypothesis $h \in \Hypo$ that produces the
same dichotomy $o$. For instance, in the context of the classroom
example, given that $\Hypo$ is composed of axis-parallel decision
stumps, consider two consecutive examples in the projection along one
of the dimensions; say, for instance, the training examples indexed by
$1$ and $2$. Any $h \in \Hypo$ of the form $\sign{x_1 -v}$ with $v$ in
the open interval $(1,2)$ on the real line will produce the same label
dichotomy $o = (-1,+1,+1,+1,+1,+1)$. Hence, it is common practice to
introduce a learning bias by considering only the ``midpoint'' decision stump that results from setting
$v=1.5$, and letting that be the representative hypothesis for label dichotomy $o$. We return to this concept of a ``representative
hypothesis'' later in Section~\ref{S:classifier-convergence} when we extend our convergence results of various
functions
from the set of (unique) training examples $U$ to the whole feature
space $\X$ (see
Theorems~\ref{thm:classifier_fcn_convergence},~\ref{thm:classifier_convergence},
and~\ref{thm:gen_error_convergence},
Corollary~\ref{cor:margin_convergence}, and the discussion around them).
We call \[
\Mcal \equiv \Mcal(\Hypo,D) \equiv \bigcup_{o \in \Dich(\Hypo,S)} \left\{ \left(\indicator{y^{(1)} \neq o(1)},\ldots,\indicator{y^{(m)} \neq o(m)}\right) \right\} \subset \{0,1\}^m \; 
\]
the \emph{set of error or mistake dichotomies}
(Note that $|\Mcal| = n$).
For instance, in the context of the classroom example, the set $\Mcal$ looks as follows.
\begin{align*}
\Mcal = \{ 
& (0,1,0,1,0,1),\\
& (1,0,1,0,1,0),\\
& (1,1,0,1,0,1),\\
& (0,0,1,0,1,0),\\
& (1,0,0,1,0,1),\\
& (0,1,1,0,1,0),\\
& \ldots,\\
& (0,1,1,1,0,1),\\
& (1,0,0,0,1,0),\\
& (0,1,1,0,0,1),\\
& (1,0,0,1,1,0),\\
& \ldots,\\
& (1,1,1,0,1,0),\\
& (0,0,0,1,0,1) \}
\end{align*}
Once again, for this example, we have $n=22$, but we are only showing part of the set. We
refer the reader to Appendix~\ref{app:clex} for the full set.

AdaBoost extensively uses the weighted error of a hypothesis in its example-weight update.  The typical expression for the weighted error of any hypothesis $h$ with respect to a distribution $w$ over the examples is 
\(
\sum_{i = 1}^{m} w(i)\indicator{h(x^{(i)}) \neq y^{(i)}}.
\)
Let $h \in \Hypo$ and $\eta \in \Mcal$ be its corresponding mistake
dichotomy (i.e., for all $i$, $\eta(i) = \indicator{h(x^{(i)}) \neq
  y^{(i)}}$). We can equivalently compute the \emph{weighted error} of $h$
with respect to $w$ on $D$ as $\eta \cdot w \equiv \sum_{i=1}^m
\eta(i) w(i)$, the \emph{dot-product} of $\eta$ and $w$. Part 3 of Condition~\ref{cond:nathypo} implies $\eta \cdot w > 0$ for all $\eta \in \Mcal$ and $w \in \Delta_m^{\circ}$.

Because each \emph{mistake} dichotomy $\eta \in \Mcal$ has a
corresponding \emph{label} dichotomy $o$, which in turns has a
\emph{representative hypothesis} $h^o \in \Hypo$, it will become
convenient to denote $h^\eta \equiv h^o$ as the \emph{representative
  hypothesis for $\eta$} for the final classifier output by Optimal
AdaBoost. For instance, in the context of the classroom
example, for label
dichotomy $o = (-1,-1,-1,-1,+1,+1)$, employing the common biasing
practices for decision stumps that uses the ``midpoint'' rule, we have $h^o(x_1,x_2) = \sign{x_1 -
  8}$ as the representative hypothesis. The corresponding mistake
  dichotomy for $o$ is $\eta = (1,0,1,0,0,1)$, so that
  $h^\eta(x_1,x_2) = h^o(x_1,x_2) = \sign{x_1 -
  8}$ is the representative hypothesis for $\eta$.

Note that we are essentially producing a finite number of hypothesis selection
candidates through the process described: we are effectively reducing
the hypothesis space from $\Hypo$ to the \emph{finite set of
  representative hypotheses} $\Hrep \equiv \left\{ h^{\eta} \in \Hypo \, \mid \,
    \eta \in \Mcal \right\}$.

\section{Optimal AdaBoost as a dynamical system}
\label{sec:ds}

This paper studies Optimal AdaBoost as a \emph{dynamical system of the
  weights over the examples}, which we also refer to as the
\emph{example or sample weights}, in a way similar to previous work~\citep{RudinDynamics}.  
In this section, we show how to frame Optimal AdaBoost as such a
dynamical system. We will fix $\Hypo$ and $D$, therefore fixing $S$,
$\Dich(\Hypo,S)$,
$\Mcal$, and
$\Hrep$.

For much of our analysis we will reduce AdaBoost to only using the 
mistake di\-chot\-o\-mies in $\Mcal$, or equivalently, the
elements of 
$\Hrep$, as a proxy for the representative hypotheses in its weight
update.  Doing so is sound because of the one-to-one relationship
discussed at the end of the previous section (Section~\ref{S:BackNot}).

The following is the common condition typically assumed in the analysis of boosting algorithms, but stated in the context of our paper.
\begin{condition}{{\bf (Weak-Learning Assumption)}}\label{assume:WeakLearn}
There exists a real-value $\gamma \in (0,1/2)$ such that for all $w \in \Delta_m$, there exists an $\eta \in \Mcal$ 
that achieves weighted error $\eta \cdot w \leq \frac 12 - \gamma < \frac 12$.
\end{condition}
Said differently, 
the weak learner is guaranteed to output hypotheses whose weighted binary-classification error is strictly better than random guessing, \emph{regardless} of the dataset of examples or the weight distribution over the examples.
 The value $\gamma$ is often called the \emph{edge} of the weak
 learner. In its most general form, the assumption is sometimes
 referred to as the ``Weak-Learning Hypothesis.''~\footnote{We want to emphasize
 that, while some have attempted to further weaken or simply remove
 the Weak-Learning Assumption, this has been in the context of the
 study of other forms of convergence of AdaBoost (see Appendix~\ref{sec:mlcontext}). Recall that the main focus of this paper is the convergence, with respect to the number of rounds $T$ for a fixed, but arbitrary, datatset $D$ drawn from the probability space $(\D,\Sigma,P)$, of the generalization error of the Optimal-AdaBoost classifier, the Optimal-AdaBoost classifier itself, and other related characteristic quantities of general interest such as the margins. 
The assumption remains standard for the study of the type of convergence of Optimal AdaBoost considered in 
this paper~\citep{RudinDynamics}.
}

\subsection{Implementation details of Optimal AdaBoost}
\label{sec:details}

Before we introduce the dynamical-system view of Optimal AdaBoost, we
make a slight
generalization in the traditional initialization
of the weights over the training examples. The traditional
initialization is the uniform
distribution over the set of training examples, as presented in
Fig.~\ref{fig:adabo}.  To emphasize that almost all of our
results hold for almost every $w_1 \in \Delta_m^\circ$, which includes the
uniform distribution that is traditionally used, of course.
we replace the initialization
presented in that figure, by ``{\bf Initialize} Pick any $w_1 \in
\Delta_m^\circ$.''~\footnote{Note that picking $w_1$ in the
  \emph{boundary} of $\Delta_m$ (i.e., $\Delta_m - \Delta_m^\circ$) is
  not sensible, unless we want to effectively
  reduce the size of the data set used by AdaBoost for
  training. This is 
  because any such initial example-weights $w_1$ would have at least one
  component $i$ such that $w_1(i) = 0$, which, by the AdaBoost weight
  update being component-wise proportional to the previous weight value, implies $w_t(i) = 0$ for all rounds $t$. Hence such example with
  index $i$
  would always have zero weight throughout the execution of AdaBoost;
  said differently, essentially the learning process would not
  consider that data sample. Note also that if $w_1$ is chosen
  uniformly at random from $\Delta_m$, then $w_1 \in \Delta_m^\circ$
  with probability one.}
We note that, for
every initial
$w_1 \in \Delta_m^\circ$,  
the AdaBoost property of
driving the training error to zero holds.~\footnote{Let $w_1 \in
  \Delta_m^\circ$ be the initial example weight. 
  A minor modification of the standard derivation of Optimal AdaBoost of the upper bound on the classifier's misclassification error yields
\(
\frac{1}{m} \sum_{l=1}^m \indicator{H_T(x^{(l)}) \neq y^{(l)}} \leq \frac{\exp(-2 \gamma^2 T)}{m \times \min_{l=1,\ldots,m} w_1(l)} \; .
\)
Once we set $w_1$, 
the denominator of the upper
bound remains constant throughout the AdaBoost process (i.e., does not
depend on the number of rounds $T$); while the numerator, which
results from Condition~\ref{assume:WeakLearn} (Weak Learning), 
goes to zero exponentially fast with $T$.}  \citet{RudinDynamics} have
observed that the training behavior of AdaBoost seems sensitive to
arbitrary, but fixed initial
conditions in synthetic experiments on randomly
generated \emph{mistake matrices} corresponding to $12$ training
examples and the equivalent of $25$ mistake dichotomies. (Mistake matrices are
``isomorphic'' to the set of mistake dichotomies $\Mcal$: the mistake matrix
would have $m$ columns and each mistake dichotomy $\eta
\in \Mcal$ would form a row.) Our results formally establish that convergence
occurs almost always regardless of the initial $w_1$, at least in
theory. 
Indeed,
setting $w_1$ by drawing uniformly at random from $\Delta_m$ does not appear to have any
effect on the convergence properties of the \emph{training} process in
practice, based on our
implementation of Optimal AdaBoost in our experiments.

Any implementation of the procedure $\mathbf{WeakLearn}$ used in
Optimal AdaBoost must decide how to select and output a hypothesis
whenever there is more than one hypothesis that achieves the minimum
error on the training dataset $D$ with respect to the example weights
$w$.  We consider a typical \emph{function} implementation of
$\mathbf{WeakLearn}$, mapping elements of $(\X \times \{-1,+1\})^m
\times \Delta_m \to \Hrep$, by which we mean that $\mathbf{WeakLearn}$
has a \emph{deterministic selection scheme} to output hypotheses; said
differently, given the \emph{same} training dataset $D$ and example
weights $w$ as input, $\mathbf{WeakLearn}$ will \emph{always} map to,
or output, the \emph{same} hypothesis 
in $\Hrep$ based on
whatever selection scheme the function implementation uses.  One can
view such a deterministic selection scheme as a way to introduce
\emph{bias} into the hypothesis class $\Hypo$.~\footnote{If the bias ``matches'' the underlying process
  generating the data, then one would expect classifiers with good
  generalization error; otherwise, the quality of the classifier may
  suffer.}

Said differently, any implementation of $\mathbf{WeakLearn}$ as a function, described above, 
leads to an implementation of AdaBoost which implicitly defines a
notion of ``best'' representative weak hypothesis in $\Hrep$, or
equivalently, ``best'' mistake dichotomy in $\Mcal$, for any example weights
$w\in \Delta_m$. We use a (deterministic) function implementation of
$\mathbf{WeakLearn}$ because in general the standard notion for best
representative weak hypothesis follows from any mistake dichotomy in
$\argmin_{\eta\in \Mcal} \eta \cdot w$, which is a set, but not
necessarily a singleton: 
multiple mistake dichotomies may be in that set.  Thus, the
implementation of $\mathbf{WeakLearn}$ implicitly imposes a policy for
how to \emph{break ties} between 
mistake dichotomies with the lowest error, which is equivalent to
breaking ties between the corresponding representative weak
hypothesis. Hence, we assume that we are given a \emph{tie-breaking function} $\AdaSel : 
2^\Mcal 
\to \Mcal$, where 
$2^\Mcal \equiv \{ Z \mid Z \subset \Mcal \}$
denotes the \emph{power set} of $\Mcal$ (i.e., the set of \emph{all
  subsets} of $\Mcal$).  The tie-braking function $\AdaSel$ serves as a mathematical function proxy for the implementation of $\mathbf{WeakLearn}$.
\begin{definition}\label{D:2}
Given example weights $w \in \Delta_m$, we define our notion of \emph{the
  best representative weak hypothesis in $\Hrep$, or equivalently, the
  best mistake dichotomy in $\Mcal$, with respect to $w$} as
$h^{\eta^w} \in \Hrep$ where
\(
\eta^w \equiv \AdaSel\left(\argmin_{\eta \in \Mcal}\eta \cdot w\right)
.
\)

It is convenient to assume that $\AdaSel$ employs a strict preference relation
over the elements of $\Mcal = \{ \eta^{(1)}, \eta^{(2)},\ldots,\eta^{(n)} \}$ such
that $\eta^{(1)} \succ \eta^{(2)} \succ \cdots \succ
                                               \eta^{(n)}$.
\end{definition}
The pseudocode for the implementation of the function
$\mathbf{WeakLearn}$ used in Optimal AdaBoost is 
``\(
\eta^w \leftarrow \AdaSel\left( \argmin_{\eta \in \Mcal} \eta \cdot w \right); 
 \; h^{\eta^w} \leftarrow \mathbf{WeakLearn}(w,D) \; .
\)''

From now on, we will assume the implementation of Optimal AdaBoost just described.

Before continuing, we note that we introduce concepts and notation in
the remaining of this section that may be unfamiliar to some
readers. We refer such readers to Appendix~\ref{app:ident} for an
illustration 
within the context
of a simple set of mistake dichotomies, equivalent to the $(3
\times 3)$ identity matrix: i.e.,
$\Mcal = \{(1,0,0),(0,1,0),(0,0,1)\} =
\{\eta^{(1)},\eta^{(2)},\eta^{(3)}\}$. That section of the appendix
also includes an alternative derivation of previous results by~\citet{RudinDynamics}, but
within the context of this article.

\subsection{Preliminaries to the formal definition of the dynamical system}
\label{sec:prelimds}

The selection procedure just described naturally partitions $\Delta_m$
into regions where different 
mistake dichotomies are best, in the sense that they would be selected by $\AdaSel$.
\begin{definition}
\label{def:pistar}
For all $\eta \in \Mcal$, we define
\(
\pi^*(\eta) \equiv \left\{w \in \Delta_m |\, \eta = \eta^w \right\}.
\)
\end{definition}
Note that $\pi^*(\eta)$ may be open or closed
for different $\eta$'s, depending on how $\AdaSel$ breaks ties.  The \emph{closure} of this set, which we now formally define, will also play an important role.
\begin{definition}
\label{def:pi}
For all $\eta\in \Mcal$, we define
\(
\pi(\eta)\equiv \left\{w \in \Delta_m \left| \, \eta \in \argmin_{\eta' \in \Mcal}\eta' \cdot w \right. \right\} \; .
\)
\end{definition}
The set $\pi(\eta)$, being the closure of $\pi^*(\eta)$, is naturally
closed.  However, these sets no longer form a partition on $\Delta_m$.
Given two distinct mistake dichotomies $\eta,\eta' \in \Mcal$, it is
possible that $\pi(\eta)\cap \pi(\eta') \neq \emptyset$. We denote by
$\pi^\circ(\eta) \equiv \Int{\pi(\eta)} \subset \Delta_m^\circ$ the \emph{interior} of
$\pi(\eta)$; note that $\pi^\circ(\eta) = \Int{\pi^*(\eta)}$.

It is often convenient to consider only the subset of $\Delta_m$ where
every mistake dichotomy in $\Mcal$ has non-zero error. (If there were
a mistake dichotomy in $\Mcal$ that achieves zero error with respect
to some $w_t \in \Delta_m^\circ$ generated by the algorithm at round
$t$, Optimal AdaBoost would essentially stop at round $t$. We refer the reader to the previously-presented discussion of Condition~\ref{cond:nathypo} for more information.)
\begin{definition}\label{D:Delta_m^+}
The \emph{set of all weights in $\Delta_m$ with non-zero error on
  mistake dichotomy $\eta \in \Mcal$} is 
\(
\pi^+(\eta) \equiv \left\{w \in \pi^*(\eta) \left| \, \eta \cdot w > 0
  \right. \right\} \; ,
\)
so that \emph{set of all weights in $\Delta_m$ with non-zero error on all mistake dichotomies in $\Mcal$} is 
\(
\Delta_m^+ \equiv \bigcup_{\eta \in \Mcal} \pi^+(\eta) .
\)
\end{definition}
It is important to remind
the reader that the complement of $\Delta_m^{\circ}$ with respect to
$\Delta_m$, given by $\Delta_m - \Delta_m^{\circ}$, is the
\emph{boundary} of $\Delta_m$ and has measure zero.~\footnote{This statement is with respect to the standard definition of the Borel measure over the Borel $\sigma$-algebra
generated from all the open subsets of $\Delta_m$. The
definition of open sets depends on the standard metrizable topological
space over $\Delta_m$, which is typically defined in terms of
Euclidean distance and the usual neighborhood topology it induces.  Recall that
$\Delta_m \subset \R^m$ is really an $(m-1)$-dimensional manifold
(i.e., a topological space that locally resembles $(m-1)$-dimensional
Euclidean space near each point).} Note also that $\pi^\circ(\eta)
\subset \Delta_m^+$ for all $\eta \in \Mcal$.
\begin{proposition}\label{P:Delta_m^+}
Under Condition~\ref{cond:nathypo}
(Natural Weak-Hypothesis Class), we have that $\Delta_m^{\circ} \subset
\Delta_m^+$.  Thus, the set $\Delta_m^+$ is
not a set of measure zero; while its complement with respect to
$\Delta_m$ does have
measure zero.  In addition, the set $\Delta_m^+ - \Delta_m^{\circ}$ is a
(potentially empty) subset of the boundary of $\Delta_m$ and has
measure zero.
\end{proposition}
Because, \emph{under Condition~\ref{cond:nathypo}}, we have that
$\Delta_m^+ - \Delta_m^{\circ}$ has measure zero and the statements in
almost all of our technical results 
hold for almost every $w_1 \in \Delta_m$, it is often
safe to only consider $w_1 \in \Delta_m^\circ$ in
our analysis. However, for the sake of simplicity and convenience, we
still define the AdaBoost weight update in terms of
$\Delta_m$. A better idea is to define the update in terms of
$\Delta_m^+$ and leave any weights outside that set undefined, so that
the update is a mapping of type $\Delta_m^+ \to \Delta_m^+$. (This
relates to a simple remark that we make later in the text about dealing with non-support
vectors earlier: for the purpose of the analysis, one can assume that
every example is a support vector because, if some of the example weights
that Optimal AdaBoost generates converge to $0$, we can simply restart the algorithm, or just consider its
execution to begin right after that happens. Hence, the analysis in
terms of $\Delta_m^\circ$ would go through because we are assuming
that all examples are support vectors. Hence, the set
$\Delta_{m}^\circ(I)$, the interior of the simplex where the
probabilities of each example $i$ with weight $w(i)$ is positive if $i
\in I \subset \{1,2,\ldots,m \}$ and zero otherwise. That is the essence of our
upcoming remark.)

\subsection{Formal definition of the Optimal-AdaBoost update as a
  dynamical system}
\label{sec:fdabds}

We will depart from standard notation for the AdaBoost weight update.  The notation we use will be more convenient for the main proofs in this paper.  
First, we have a notion of a \emph{hypothetical} weight update.  That is, given $w\in \Delta_m$, if we \emph{assume} that $\eta=\eta^w$, where would the AdaBoost weight update take $w$?
\begin{definition}\label{D:1}
Given an arbitrary 
mistake dichotomy $\eta \in \Mcal$, we define $\T_\eta : \Delta_m \to \Delta_m$ component-wise as, for each component $i=1,\ldots,m$,
\[
  [\T_\eta(w)](i) \equiv \frac 12 w(i) \times \left(\frac{1}{\eta\cdot w}\right)^{\eta(i)}\left(\frac{1}{1-\eta\cdot w}\right)^{1-\eta(i)}.
\]
Implicit in this definition is that (1) for all $w \not\in \Delta_m^+$, if 
$\eta \cdot w = 0$, then
$\T_\eta(w) = w$ (i.e., the update associated with any given $\eta$ should not change $w$
if $w$ already achieves zero error with respecto to $\eta$); and that (2) for all $w \in \Delta_m$, and for all $i$,
$[\T_\eta(w)](i) = 0$ if and only if $w(i) = 0$. Also, for any set $W \subset \Delta_m$, we employ the standard abuse of notation and define $\T_\eta(W) \equiv \{ \T_\eta(w) \mid w \in W \}$.
\end{definition}
The update $\T_{\eta}$ certainly does not trace out the actual trajectory of the AdaBoost weights.  The actual update first finds the best 
mistake dichotomy $\eta^w$, and then applies $\T_{\eta^w}(w)$.
\begin{definition}
\label{D:AdaBoost_update}
The \emph{AdaBoost (example-weights) update} is $\A : \Delta_m \to \Delta_m$, defined as
\(
\A(w) \equiv \T_{\eta^w}(w).
\)
Implicit in this definition is that (1) for all $w \not\in \Delta_m^+$,
$\A(w) = w$; and that (2) for all $w \in \Delta_m$, and for all
components $i=1,\ldots,m$,
$[\A(w)](i) = 0$ if and only if $w(i) = 0$. Also, for any set $W
\subset \Delta_m$, we employ the standard abuse of notation and define
$\A(W) \equiv \{ \T_{\eta^w}(w) \mid w \in W\}$.

\end{definition}
We can now trace the trajectory of the AdaBoost example weights by
repeatedly applying $\A$ to an initial example weight picked within
$\Delta_m$.  More specifically, if $w_1\in \Delta_m$ is taken as the
initial example weight, we can rederive any $w_t$ in our original formulation of the algorithm with
\(
  w_t = \A^{(t-1)}(w_1)
\)
where $\A^{(t-1)}$ denotes \emph{composing $\A$ with itself $t-1$
  times}. Also, for any set $W \subset \Delta_m$, we employ the
standard abuse of notation and define $\A^{(t)}(W) \equiv \{
  \A^{(t)}(w) \mid w \in W \}$. 

\begin{proposition}
  \label{P:A_Delta_plus_circ}
$\A(\Delta_m) \subset \Delta_m$, $\A(\Delta_m - \Delta_m^+) = \Delta_m
- \Delta_m^+$, $\A(\Delta_m^+) \subset \Delta_m^+$, and
$\A(\Delta_m^\circ) \subset \Delta_m^\circ$.
\end{proposition}  

The following set plays an important
role in the characterization of the image of the AdaBoost weight update.
\begin{definition}\label{D:pi_half}
Given some arbitrary $\eta\in \Mcal$, we define
\(
\pi_{\frac12}(\eta)\equiv \left\{w \in \Delta_m \left| \, \eta \cdot w
    = \frac12 \right. \right\} \; .
\)
\end{definition}

In Appendix~\ref{app:abup}, we present some properties of
  $\T_\eta$ and $\A$ that will be useful in the proofs of the
  technical results.

The inverse of $\A$ plays an
important role in our application of the Birkhoff Ergodic Theorem
(Theorem~\ref{T:Birkhoff}), which we present in the next section
(Section~\ref{S:classifier-convergence}). In particular, it helps
with the proof of the existence of a measure-preserving
transformation. We denote the inverse of function $\A$ by $\A^{-1}$.
We remind the reader that $\A^{-1}$ is of
type $\Delta_m \to \Sigma_{\Delta_m}$, where $\Sigma_{\Delta_m}$ is
the Borel $\sigma$-algebra on $\Delta_m$. Gaining some insight on the properties of
$\A^{-1}$ is useful in the technical derivations, but disrupts the
presentation. Thus, we moved
the statements and
discussion of those properties to Appendix~\ref{app:Ainv}. We do so in the interest
of reaching the statements of our main technical results as early as
possible in the main body of the paper.

\subsection{Formal definition of secondary quantities}

We can also derive many of the quantities
calculated by AdaBoost solely in terms of $w_t$.
\begin{definition}\label{D:f_w}
The following are functions 
of type $\Delta_m \to \R$: $\epsilon(w) \equiv \min_{\eta\in
  \Mcal}\eta \cdot w$ and\\ $\chi_{\pi^*(\eta)}(w) \equiv \indicator{w
  \in \pi^*(\eta)}$. The following function is 
of type $\Delta_m^+ \to \R$: $\alpha(w) \equiv \frac 12 \ln\left(\frac{1-\epsilon(w)}{\epsilon(w)}\right)$.
\end{definition}
The following definition is just our way to simplify the notation of
the sequences that
the functions of $w$ just stated in Definition~\ref{D:f_w} generate with respect to the
$w_t$'s.
\begin{definition} \label{F:1}
(1) $\epsilon_t \equiv \epsilon(w_t) = \epsilon\left(\A^{(t-1)}(w_1)\right)$, (2) $\alpha_t \equiv \alpha(w_t) = \alpha\left(\A^{(t-1)}(w_1)\right)$, and (3) $\eta_t \equiv \eta^{w_t} = \eta^{\A^{(t-1)}(w_1)}$.
\end{definition}
The value sequences described in Definition~\ref{F:1} will be called
\emph{secondary quantities}, because we can derive them solely from
the example weights' trajectory.  We seek to understand the 
convergence
properties of these secondary quantities, as the number of rounds $T$
of Optimal AdaBoost increases given a fixed, arbitrary dataset $D$
drawn with respect to some probability space $(\D,\Sigma,P)$
(see Section~\ref{sec:details}). We also seek to understand the
properties of the mapping
$\A$ that causes such converging behavior.

The following properties related to the secondary quantities will
be useful in our technical proofs, and some of the upcoming
discussion. They follow directly from the respective definitions.
\begin{proposition}\label{P:secondary}
The following statements about the secondary quantities hold, under
Condition~\ref{assume:WeakLearn} (Weak Learning).
\begin{enumerate}
\item For all $t$, $\epsilon_t < \frac12 -
  \gamma$, and thus $\alpha_t > \frac12 \ln\left( \frac{\frac12 +
      \gamma}{\frac12 - \gamma} \right) = \frac12 \ln\left( \frac{1 +
      2 \gamma}{1 - 2\gamma} \right) > 0$, and $\sum_{t=1}^T
  \alpha_t > \frac{T}{2} \ln\left( \frac{1 +
     2 \gamma}{1 - 2\gamma} \right)$. Also, 
  for each $w_1 \in \Delta_m^+$, we
  have that for all $t$, $\epsilon_t > 0$ and $\alpha_t < \infty$.
\item Suppose Condition~\ref{cond:nathypo} (Natural Weak-Hypothesis Class)
  also
holds. Then, 
for each $w_1 \in \Delta_m^+$, for all $t$,
$w_{t+1} \in \pi_{\frac12}(\eta_t)$ (see Definitions~\ref{D:1},~\ref{D:AdaBoost_update} and~\ref{D:pi_half}), 
and thus $\eta_t \neq \eta_{t+1}$.
\end{enumerate}
\end{proposition}

\section{Convergence of the Optimal-AdaBoost classifier} \label{S:classifier-convergence}

As mentioned at the end of the previous section, we can express the secondary quantities of Optimal AdaBoost as functions based solely on the trajectory of $\A$ applied to some initial $w_1\in \Delta_m$.  
Empirical evidence suggests that not only are averages of this
quantities/parameters converging, where the averages are taken with respect to the number of rounds $T$ of AdaBoost, but the Optimal-AdaBoost \emph{classifier} itself is \emph{converging.}  

The study of the convergence of the AdaBoost classifier, and its implications, is the main goal of this section.

Key to our understanding of convergence, in the sense previously
discussed in Section~\ref{sec:details}, is the \emph{Birkhoff Ergodic Theorem}~\citep{Birkhoff1931}, stated as Theorem~\ref{T:Birkhoff} below.  This theorem gives us sufficient conditions for 
the (probabilistic) convergence, which we will then apply to our
secondary quantities.  Taking center stage in this theorem is the
notion of a \emph{measure} and a \emph{measure-preserving} dynamical
system.  To be able to apply the Birkhoff Ergodic Theorem, we need to
show the existence of some \emph{measure} $\mu_\Omega$ such that
$(\Omega,\Sigma_\Omega, \mu_\Omega,\A_\Omega)$ is a
\emph{measure-preserving} dynamical system, for some set
$\Omega \subset \Delta_m^+$, to be concretely defined later, as a function of
$\A$, and $\A_\Omega : \Omega \to \Omega$ is the map
consistent with $\A$ on $\Omega$: i.e., for each $w \in
\Omega$, $\A_\Omega(w) \equiv \A(w)$.  (We refer the reader to
Appendix~\ref{A:BasicMathDefn} for more details on these topics.)  The
existence of such a measure is given in Proposition~\ref{A:1}. We discuss the context
surrounding these in greater detail shortly.

We establish the existence of the measure $\mu_\Omega$ using the
\emph{Krylov-Bogolyubov Theorem} \citep{KryloffBogoliuboff1937}, formally defined as
Theorem~\ref{Krylov-Bogolyubov} in Section~\ref{sec:invmeas}, 
 and which, as it turns out, is very closely related to the Birkhoff
 Ergodic Theorem~\citep{oxtoby1952}. A couple of concepts are essential to
understand the Krylov-Bogolyubov Theorem, as well as the Birkhoff
Ergodic Theorem and the notion of convergence of sequences in
$\Delta_m$ used here.  First, as we will see,
the Krylov-Bogolyubov Theorem requires that we deal with a system of the form
$(W,\Nbf)$, formally called a \emph{topological space}, where $W$ is a
set, often called the \emph{state space}, and $\Nbf$ a
\emph{(neighborhood) topology} on it.  Furthermore, $(W,\Nbf)$ needs
to be \emph{metrizable}, meaning the topology $\Nbf$ can be induced by
some metric. In topology, a metrizable topological space $(W,\Nbf)$ is
a \emph{metric space} $(W,d)$ if the metric $d$ induces $\Nbf$.
 We note that, often, we do not use $d$ directly in our proofs, but it
 is in some sense implicit in our arguments about convergence. 
The definition of closed and open sets also implicitly uses $d$:
\emph{closed sets} are the sets in $\Delta_m$ that contain all of
their \emph{limit points}.  That is, a set $E$ is \emph{closed} if,
given any convergent sequence $(w^{(s)})$ in $E$, we have $\lim_{s \to
  \infty} w^{(s)} \in E$.  As a sub-family of closed sets we have
\emph{compact sets}, the closed sets that are also \emph{bounded}, by
the Heine-Borel Theorem~\citep[][Theorem 11.3, pp. 72]{BartleRA}.  We are only considering subsets of 
$\Delta_m^+ \subset \Delta_m$, 
so all such subsets are bounded and any closed subset will be \emph{compact}.

Equally important in the Birkhoff Ergodic Theorem is the notion of integrability, captured by the notation $f \in L^1(\mu)$.  This notation says that $f$ is \emph{integrable with respect to the measure $\mu$.}  The precise meaning of this is that, first and foremost, $f$ is \emph{measurable}.  Second, that
\(
  \int |f| d\mu < \infty.
\)
If these two conditions hold, it follows that $f \in L^1(\mu)$.  Proposition~\ref{A:3}, which we formally state later in this section, shows us that various quantities generated by Optimal AdaBoost are in $L^1(\mu)$, therefore can be analyzed using Theorem~\ref{T:Birkhoff}.

\begin{definition}{\bf (Empirical Measures, Time Averages, and State
    Averages)}
  \label{def:emp_meas_and_avgs}
  \\ Let $(W,\Sigma_W,M,\mu)$ be a dynamical system. Denote by
  $\delta_\omega \equiv \chi_{\omega}$ the Dirac-delta function,
  the point-mass probability measure with full support
  on the point $\omega \in W$. Denote by 
$\widehat{\mu}_\omega^{(T)} \equiv \widehat{\mu}_{M,\omega}^{(T)} \equiv \frac{1}{T} \sum_{t=0}^T
\delta_{M^{(t)}(\omega)}$ the \emph{empirical (probability) measure} induced by $M$ on $W$
  \emph{after $T$ time steps} starting from $\omega \in W$, and by $\widehat{\mu}_\omega \equiv \widehat{\mu}_{M,\omega} \equiv \lim_{T
  \to \infty} \widehat{\mu}_\omega^{(T)}$ the \emph{empirical (probability) measure} induced by $M$ on $W$
  \emph{in the limit} starting from $\omega \in W$, also called the {\em Birkhoff
  limit} of the point $\omega$, if the limit exists. Given a function $f : W
\to W$, denote
by $\widehat{f}_T(\omega) \equiv \widehat{f}_T^M(\omega) \equiv
\Tavg(f,\omega,M,T) \equiv \frac{1}{T} \sum_{t=0}^T
f(M^{(t)}(\omega))$ the \emph{time average} induced by $M$ on $W$, also
  called the {\em Birkhoff average}, after $T$
  time steps starting from $\omega \in W$, and by $\widehat{f}(\omega) \equiv \widehat{f}^M(\omega) \equiv
\lim_{T \to \infty} \Tavg(f,\omega,M,T)$ \emph{time average} induced by $M$ on $W$ \emph{in
  the limit} starting from $\omega \in W$, if
the limit exists. If $f \in L_1(\mu)$, denote by 
$\bar{f} \equiv \bar{f}_\mu \equiv
\Savg(f,M,\mu) \equiv \frac{1}{\mu(W)} \int f \, d\mu$ the \emph{state
  average} of $f$ with respect to $\mu$. 
\end{definition}
\begin{definition}{\bf (Ergodicity)}
  \label{def:ergodic}
Consider a dynamical system $(W,\Sigma_W,M,\mu)$ with a
measure-preserving map $M$ with respect to $(W,\Sigma_W)$ and
the finite measure $\mu$. The
system, and the measure $\mu$, is
called {\em ergodic} if for all $E \in \Sigma_W$ such that $E =
M^{-1}(E)$ (i.e., $E$ is an invariant set)
we have $\mu(E) \in \{0, \mu(W)\}$ (i.e., $E$ has full measure or
measure zero). The system is called {\em uniquely ergodic} if it admits exactly one invariant measure (i.e., $\mu$ is unique). It is called {\em strictly ergodic} if the only
invariant set satisfying that condition is $W$ (i.e., the support of
$\mu$ is $W$).
\end{definition}

We are now ready to introduce the theorem.
\begin{theorem}{\bf (Birkhoff Ergodic Theorem~\citep{Birkhoff1931,oxtoby1952})}
\label{T:Birkhoff}
Suppose $M : W \to W$ is \emph{measure-preserving} and $f \in
L^1(\mu)$ for some \emph{measure} $\mu$ on the measurable space
$(W,\Sigma_W)$.  Then, the time average $\widehat{f}(\omega)$ exists (i.e., it is
well-defined because $\widehat{f}_T$ converges as $T \to \infty$) for $\mu$-\emph{almost every} $\omega \in W$ to
a function $\widehat{f} \in L^1(\mu)$.  Also $\widehat{f} \circ M = \widehat{f}$ for
$\mu$-\emph{almost every} $\omega \in W$ and if $\mu(W) < \infty$,
then $\int \widehat{f} \, d\mu = \bar{f}$. If $(W,\Sigma_W,M,\mu)$ is
ergodic, then $\widehat{f}$ is constant $\mu$-\emph{almost everywhere} and $\widehat{f} = \bar{f}$.
\end{theorem}
Note that $\widehat{f}$ depends on $\omega$, $\mu$, $(W,\Sigma_W)$,
$f$, and $M$, while $\bar{f}$ depends on all \emph{but} $\omega$.

\subsection{Satisfying the Birkhoff Ergodic
  Theorem}\label{SatisfyingBirkhoff}

We employ an important theorem in measure theory, the
Krylov-Bogolyubov Theorem to establish the existence
of an invariant measure.
\begin{theorem}{\bf (Krylov-Bogolyubov~\citep{KryloffBogoliuboff1937,oxtoby1952})}\label{Krylov-Bogolyubov}
Let $(W,\Nbf)$ be a (non-empty) compact, metrizable topological space and $g : W \to W$ a continuous map.  Then $g$ admits an invariant Borel probability measure.
\end{theorem}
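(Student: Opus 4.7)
The plan is to follow the classical construction of an invariant measure via Ces\`aro averages of Dirac masses along a single orbit, and then extract a weak-$*$ limit. Fix any $x_0 \in X$ and define the empirical measures
\[
\mu_n \;=\; \frac{1}{n}\sum_{k=0}^{n-1}\delta_{F^k(x_0)}.
\]
Each $\mu_n$ is a Borel probability measure on $X$. The strategy is to use compactness of the space of Borel probability measures (in the weak-$*$ topology) to extract a convergent subsequence, and then verify that the limit is $F$-invariant through a short telescoping argument.

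First I would justify the existence of a subsequential weak-$*$ limit. Because $X$ is compact and metrizable, $C(X)$ is a separable Banach space (metrizability plus compactness yields a countable base, and then a countable dense subalgebra via Stone-Weierstrass). By the Riesz representation theorem, the space $\mathcal{M}(X)$ of finite signed Borel measures on $X$ is identified with the dual of $C(X)$, so Banach-Alaoglu yields weak-$*$ compactness of the closed unit ball, and separability of $C(X)$ makes this ball metrizable in the weak-$*$ topology. The set of Borel probability measures is weak-$*$ closed inside the unit ball (it is an intersection of weak-$*$ closed halfspaces $\{\nu : \int f\,d\nu \geq 0\}$ for $f \geq 0$ together with $\{\nu : \int 1\,d\nu = 1\}$), hence weak-$*$ compact and metrizable, and in particular sequentially compact. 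Therefore the sequence $\{\mu_n\}$ admits a subsequence $\mu_{n_j}$ converging weak-$*$ to some Borel probability measure $\mu$ on $X$.

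It remains to check that $F_*\mu = \mu$, which, by a standard density argument, amounts to showing $\int f\circ F\,d\mu = \int f\,d\mu$ for every $f \in C(X)$. A direct telescoping gives
\[
\int f\circ F\,d\mu_n - \int f\,d\mu_n \;=\; \frac{1}{n}\sum_{k=0}^{n-1}\bigl(f(F^{k+1}(x_0)) - f(F^k(x_0))\bigr) \;=\; \frac{f(F^n(x_0)) - f(x_0)}{n}.
\]
Since $f$ is bounded on the compact set $X$, the right-hand side tends to $0$ as $n \to \infty$. Continuity of $F$ ensures $f \circ F \in C(X)$, so passing to the weak-$*$ limit along the subsequence $n_j$ in both integrals yields $\int f\circ F\,d\mu = \int f\,d\mu$ for every $f \in C(X)$. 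This is exactly the invariance of $\mu$ under $F$.

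The only nontrivial obstacle is functional-analytic rather than dynamical: establishing that the space of Borel probability measures on a compact metrizable space is weak-$*$ sequentially compact. Once separability of $C(X)$, Riesz representation, and Banach-Alaoglu are invoked, the rest of the argument uses only the elementary telescoping identity above and does not exploit any further structure of $F$ beyond continuity.
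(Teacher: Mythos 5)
Your proof is correct, and it is the standard textbook proof of the Krylov--Bogolyubov theorem. Note, however, that the paper does not prove this statement at all: it is quoted as a known result from ergodic theory and used as a black box to establish Proposition~\ref{A:1}, so there is no internal argument to compare against. Your route --- Ces\`aro averages $\mu_n = \frac{1}{n}\sum_{k=0}^{n-1}\delta_{F^k(x_0)}$, weak-$*$ sequential compactness of the space of Borel probability measures (via separability of $C(X)$, Riesz representation, and Banach--Alaoglu), and the telescoping identity $\int f\circ F\,d\mu_n - \int f\,d\mu_n = \tfrac{1}{n}\bigl(f(F^n(x_0)) - f(x_0)\bigr) \to 0$ --- is exactly the classical argument, and you correctly identify that the only substantive work is the functional-analytic compactness step; the dynamical content is the one-line telescope. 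One small remark on exposition: when you write ``$F_*\mu = \mu$, which, by a standard density argument, amounts to showing $\int f\circ F\,d\mu = \int f\,d\mu$ for every $f\in C(X)$,'' the equivalence of $F_*\mu = \mu$ with equality of integrals over $C(X)$ is not really a density argument but the uniqueness clause of the Riesz representation theorem (two finite Borel measures on a compact metric space that agree on all of $C(X)$ are equal); phrasing it that way would be cleaner.
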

A key ingredient is establishing the continuity of the
Optimal-AdaBoost example-weights update $\A$ on some set $\Omega$ so
that we can use $\A_\Omega$ to define the dynamical system. We note that we do not need to use Theorem~\ref{Krylov-Bogolyubov} if Optimal
AdaBoost always cycles, as we discuss in Appendix~\ref{app:cycles}.

\subsubsection{Existence of an invariant measure}
\label{sec:invmeas}

We care about the asymptotic behavior of Optimal AdaBoost, and want to
disregard any of its \emph{transient states}, which for all \emph{practical} purposes in our context
means, any $w \in \Delta_m^\circ$ such that for all $w' \in
\Delta_m^\circ$ there exists a $T' \in \N$ such that for all $t > T'$,
$w \neq \A^{(t)}(w')$. The rationale for this is as follows. Speaking \emph{strictly
mathematically}, it only make sense to start AdaBoost from an initial
weight $w \in \Delta_m^+$, because any $w \in \Delta_m-
\Delta_m^+$ is a fixed-point of the update. \emph{Practically}
speaking however, it only makes sense to start the algorithm from an
initial weights $w \in \Delta_m^\circ$ because any $w \in \Delta_m^+ - \Delta_m^\circ$ must have a
component $w(l) = 0$ for which the corresponding example indexed by $l$ will have
weight $0$ and would be unaffected by the update: i.e., for any such
$w$, if $w' = \A(w)$, we have $w' \in \Delta_m^+ - \Delta_m^\circ$ and
$w'(l) = 0$. So, it is like we started the algorithm from a
\emph{subset} of the original data set. Therefore we would like to
look at a subset of its state space that the dynamics will limit
towards, or stay within, starting from $\Delta_m^\circ$ (although our
results do extend to the initial set $\Delta_m^+$ for the reasons just
stated).

The following sets are also useful to understand the
\emph{asymptotic} behavior of Optimal AdaBoost. They characterize the
set of example weights that Optimal
AdaBoost can reach for any time step $t$: i.e., the subset of
the state space of Optimal AdaBoost that the dynamics will limit
towards, or stay within. We refer to each as 
the set of \emph{transitive}, or \emph{non-transient}, states,
depending on the space of initial weights:
e.g., in contrast, the set of \emph{non-transitive}, or \emph{transient},
states in the case of $\Delta_m^+$, consists of any $w \in \Delta_m^+$
such that for all $w' \in \Delta_m^+$ there exists a $T' \in \N$ such
that for all $t > T'$, $w \neq \A^{(t)}(w')$.
\begin{definition}
\label{def:OmegaInf}
We define the sets $\Omega_{\infty}^+ \equiv  \bigcap_{t=1}^{\infty}
\A^{(t)}(\Delta_m^+)$, and $\Omega_{\infty} \equiv  \bigcap_{t=1}^{\infty}
\A^{(t)}(\Delta_m^\circ)$.
\end{definition}
Note that $(\A^{(t)}(\Delta_m^+))$ is a decreasing sequence of
sets (i.e., $(\A^{(t)}(\Delta_m^+)) \supset
(\A^{(t+1)}(\Delta_m^+))$ for all $t$); similarly for
$\A^{(t)}(\Delta_m^\circ)$. We can think of the set
$\Omega_{\infty}^+$ as a ``trapped'' attracting set in the typical sense used for dynamical systems, because $\Delta_m$ is compact and 
$\A(\Delta_m^\circ) \subset \Delta_m^\circ \subset \Delta_m^+$, $\A(\Delta_m^+) \subset
\Delta_m^+$, and $\A(\Delta_m^\circ) \subset
\A(\Delta_m^+) \subset \Delta_m^+ \subset \Delta_m$. Also note that $\bigcap_{t=1}^{\infty}
\A^{(t)}(\Delta_m) = (\Delta_m - \Delta_m^+) \cup \Omega_\infty^+$.

\paragraph{Applying the Krylov-Bogolyubov Theorem.}

The objective now is the application of
Krylov-Bogolyubov Theorem (Theorem~\ref{Krylov-Bogolyubov}) as a way
to satisfy the conditions of the Birkhoff Ergodic Theorem
(Theorem~\ref{T:Birkhoff}) within our dynamical system's view of
Optimal AdaBoost (Section~\ref{sec:ds}).
For a given dynamical system that meets certain conditions,
Krylov-Bogolyubov tells us that the system is
\emph{measure-preserving} on some Borel probability measure.  We will
apply this theorem on
$\Omega = \Omega_\infty^+$ 
to show that $\A$ admits an invariant
measure on it.

\paragraph{Continuity of $\A$.}

We will begin by studying the continuity properties of $\A$.
Theorem~\ref{T:continuity} establishes that $\A$ is continuous on most points in its
state space.
The continuity properties of AdaBoost on 
$\Omega_\infty^+$ are important to establish an invariant measure in
Section~\ref{SatisfyingBirkhoff}.  We will eventually show that, under
certain conditions, $\A$ is in fact continuous on
$\Omega = \Omega_\infty^+$.  But it is difficult to say anything
important about this set yet.  

It turns out that there are discontinuities at many points in the
state space.~\footnote{Indeed, assuming Condition~\ref{cond:nathypo} (Natural Weak-Hypothesis Class) holds, if the Optimal AdaBoost update were to
  be continuous on a compact \emph{convex} subset of $\Delta_m^\circ$, then
  it follows from Brouwer's Fixed-Point Theorem that
  Condition~\ref{assume:WeakLearn} (Weak Learning) would be violated.}
It is not difficult to see that any point $w\in \Delta_m^+$ that
yields more than one mistake dichotomy in $\argmin_{\eta\in \Mcal}
\eta \cdot w$ will be a  discontinuity. Similarly, any point that has
$\eta^w \cdot w = 0$ will also be a discontinuity. While, by definition, this type of
discontinuity does not exist in $\Delta_m^+$, we would still have to
show that $\A$ does not converge to a set \emph{outside $\Delta_m^+$in
  the limit.} This motivates the following definition.
\begin{definition}
Let $w \in \Delta_m$.
\begin{enumerate}
\item If $|\argmin_{\eta \in \Mcal} \eta \cdot w | > 1$, we then call $w$ a \emph{type-1 discontinuity}.
\item If $\eta^w \cdot w = 0$, we then call $w$ a \emph{type-2 discontinuity}.
\end{enumerate}
\end{definition}

In the following theorem, we establish that $\A$ will be continuous on any point besides \emph{type-1} and \emph{type-2} discontinuities.
\begin{theorem}\label{T:continuity}
{\bf (The Example-Weights Update of Optimal AdaBoost is Mostly Continuous.)}
Suppose Condition~\ref{assume:WeakLearn} (Weak Learning) holds.
Then Optimal AdaBoost is continuous on all points $w$ such that $w \in \bigcup_{\eta \in \Mcal}
\pi^\circ(\eta)$.
\end{theorem}
\begin{proof}
Let $W \equiv W^{\eta} \equiv \pi^\circ(\eta)$.  Take any
$w \in W$, and let $\{w^{(s)}\}$ be an arbitrary sequence in
$\Delta_m$ such that $\lim_{s \to \infty} w^{(s)} = w$.  Let
$\{w^{(s')}\}$ be the \emph{tail} of $\{w^{(s)}\}$ that is contained
within $W$, i.e., there exists a finite $T' \in \N$ such that for all
$j > T'$, $w^{(s')} \in W$.  Then, by Definitions~\ref{def:pi}
($\pi(\eta)$),~\ref{D:1} ($\T_\eta$), and~\ref{D:AdaBoost_update} ($\A$), we have the following for all $w^{(s')}$,
\begin{equation}\label{E:equality}
\A(w^{(s')}) = \T_\eta(w^{(s')}).
\end{equation}
From Definition~\ref{D:1}, for all $w^{(s')}$ it follows that, for all $i=1,\ldots,m$,
\begin{equation*}
[\T_\eta(w^{(s')})](i) = \frac12 w^{(s')}(i) \times
\left(\frac{1}{\eta\cdot
    w^{(s')}}\right)^{\eta(i)}\left(\frac{1}{1-(\eta\cdot
    w^{(s')})}\right)^{1-\eta(i)} \; .
\end{equation*}
Because $\lim_{s' \to \infty} w^{(s')} = w$, we have $\lim_{s' \to
  \infty}  w^{(s')}(i) = w(i)$ for all $i=1,\ldots,m$.  Similarly, we
have $\lim_{s' \to \infty} \eta \cdot w^{(s')} = \eta \cdot w$.
Furthermore,
by Definition~\ref{def:pi}
and Condition~\ref{assume:WeakLearn} (Weak Learning), we have $0 < \eta\cdot w < \frac 12$.  Combining these facts, we see that 
\(
\lim_{s' \to \infty} \T_\eta(w^{(s')}) = \T_\eta(w).
\)
Recalling Eqn.~\ref{E:equality}, we complete the proof. 
\qed
\end{proof}

The following corollary will be useful later in Section~\ref{sec:cycle}.
\begin{corollary}
  \label{C:continuity}
Suppose Condition~\ref{assume:WeakLearn} (Weak Learning) holds.
Then, for each $\eta \in \Mcal$, the Optimal AdaBoost update is continuous on
$\pi^+(\eta)$ when viewed as a function of type
$\pi^+(\eta) \to \Delta_m$.
\end{corollary}
\begin{proof}
  The proof is identical to that for Theorem~\ref{T:continuity},
  \emph{except} that $W =
\pi^+(\eta)$ and the arbitrary sequence
$(w^{(s)})$ converging to $w$ to be in $\pi(\eta)$, a compact superset
of $W$, so that the tail $(w^{(s')})$
is the
arbitrary sequence itself. \qed
\end{proof}

The following lemma takes a step towards establishing that AdaBoost
will not encounter type-1 discontinuities after $n+1$ rounds. The lemma shows that, given a point $w \in \Delta_m(\eta)$, if the
error of a hypothesis corresponding to a mistake dichotomy in $\Mcal$
is low on $w$, then the error of that same hypothesis on the inverse
of $w$ is not too large.  Not only that, but the error induced by the
mistake dichotomy $\eta$ on the inverse also is not too large.  We use
the following lemma to prove the next theorem, which tells us that
AdaBoost is bounded away from type-$2$ discontinuities in the limit. 
\begin{lemma}\label{L:epsilon_0+}
Given any $\eta \in \Mcal$, denote by $\Delta_m^+(\eta) \equiv
\T_\eta(\Delta_m^+)$,
the image of the set $\Delta_m^+$ with respect to the ``hypothetical''
AdaBoost example-weights update
function
$\T_\eta$ (see Definition~\ref{D:1}).
Suppose Conditions~\ref{cond:nathypo} (Natural Weak-Hypothesis Class)
and~\ref{assume:WeakLearn} (Weak Learning) hold.
Let $\eta,\eta'\in \Mcal$ and 
$w \in \Delta_m^+(\eta)$. 
If $\eta' \cdot w \leq \epsilon_0$, then for all $w' \in \A^{-1}(w)$ we have $\eta' \cdot w' \leq 2\epsilon_0$ and $\eta\cdot w' \leq 2\epsilon_0$.
\end{lemma}
\begin{proof}
Pick arbitrary $\eta \in \Mcal$, and $w \in \Delta_m^+(\eta)$, $\eta' \in
\Mcal$ such that $\eta' \cdot w \leq \epsilon_0$, and $w' \in
\A^{-1}(w)$. By the definition of the inverse of a function, we have $w = \A(w')$.

Let $\Mcal_{\frac 12}(w) \equiv \{\eta \in \Mcal | \, \eta \cdot w =
\frac 12\}$, and let $g(\rho,\eta; w) \equiv 2\rho w_\eta^- + 2(1-\rho)
w_\eta^+$, where $w_\eta^-$ and $w_\eta^+$ are as defined in
Proposition~\ref{P:Inv} in Appendix~\ref{app:Ainv}.  Let $L_{\frac
  12}(w) = \{g(\rho,\eta; w) | \, \eta \in \Mcal_{\frac 12}(w),
\rho \in (0,\frac 12)
\}$.

\begin{claim}
$\A^{-1}(w) \subset L_{\frac 12}(w)$.
\end{claim}
\begin{proof}
Let $w'' \in \A^{-1}(w)$. Then $w \in \A(w'')$. Let $\eta^* \equiv \eta^{w''} = \AdaSel\left(\argmin_{\eta'' \in \Mcal} \eta''
    \cdot w'' \right)$. Note that
  $\eta^* \cdot w'' > 0$ because $w \in \Delta_m^+$, which implies
  $w'' \in \Delta_m^+$ too (see Proposition~\ref{P:update}, and
  Propositions~\ref{P:Inv} and~\ref{pro:Ainv} in
  Appendix~\ref{app:Ainv}). The update used by Optimal AdaBoost (Definition~\ref{D:AdaBoost_update}) in this
  case is $w(i) = \frac 12 w''(i) \left( \eta^* \cdot w''
  \right)^{-\eta^*(i)} \left( 1 - \eta^* \cdot w''
  \right)^{-(1-\eta^*(i))}$. Note that by the properties of the
  AdaBoost update (see Proposition~\ref{P:update}), we have $\eta^*
  \cdot w = \frac 12$ so that $\eta^* \in \Mcal_{\frac
    12}(w)$. Rearranging the update equation using some simple algebra
  yields $w''(i) = 2 w(i) \left( \eta^* \cdot w'' \right)^{\eta^*(i)}
  \left( 1 - \eta^* \cdot w'' \right)^{1-\eta^*(i)} = 2 w(i) \left(
    \eta^*(i) (\eta^* \cdot w'') + (1-\eta^*(i)) (1 - (\eta^* \cdot w'')
  \right) = 2 (\eta^* \cdot w'') (w(i) \eta^*(i)) + 2 (1 - (\eta^* \cdot
  w'')) (w(i) (1-\eta^*(i)))$, which using the definitions of
  $w_{\eta^*}^-$ and $w_{\eta^*}^+$, and letting $\rho' \equiv \eta^*
  \cdot w''$, implies $w'' = 2 \rho' w_{\eta^*}^- + 2 (1 - \rho')
  w_{\eta^*}^+ = g(\rho',\eta^*;w)$. Invoking
  Condition~\ref{assume:WeakLearn} (Weak Learning), we have $\rho' <
  \frac 12$ (see also Proposition~\ref{P:secondary}), which yields the
  result: $w'' \in L_{\frac 12}(w)$.
  \qed
\end{proof}
So it suffices to show that the lemma holds for all elements in $L_{\frac 12}(w)$.

Pick an arbitrary real-value
$\rho \in (0,\frac 12)$.
We can
decompose $\eta' \cdot g(\rho,\eta; w)$ as
\(
  \eta' \cdot g(\rho,\eta; w) = 2\rho(\eta' \cdot w_\eta^- - \eta' \cdot w_\eta^+) + 2(\eta' \cdot w_\eta^+).
\)
To upper bound $\eta' \cdot g(\rho,\eta; w)$, we consider two cases depending on the relationship between $\eta' \cdot w_\eta^-$ and $\eta' \cdot w_\eta^+$.
\begin{enumerate}
\item If $\eta' \cdot w_\eta^- > \eta' \cdot w_\eta^+$, then 
\(
\eta'\cdot g(\rho,\eta; w) < (\eta' \cdot w_\eta^- - \eta' \cdot w_\eta^+)
+ 2(\eta' \cdot w_\eta^+) = \eta'\cdot w_\eta^- + \eta'\cdot w_\eta^+ = \eta' \cdot w \leq \epsilon_0.
\)
\item If $\eta' \cdot w_\eta^- \leq \eta' \cdot w_\eta^+$, then
\(
\eta'\cdot g(\rho,\eta; w) \leq 2\eta'\cdot w_\eta^+ \leq 2(\eta' \cdot w) \leq 2\epsilon_0.
\)
\end{enumerate}
Taking the largest of the upper bounds, we conclude that $\eta' \cdot
g(\rho,\eta; w) \leq 2\epsilon_0$.  Now, if $g(\rho,\eta; w) \in
\A^{-1}(w)$, it follows that $\eta^{g(\rho,\eta; w)} =
\AdaSel(\argmin_{\eta'' \in \Mcal} \eta'' \cdot w(\rho,\eta))$.
Therefore $\eta^{g(\rho,\eta; w)} \cdot g(\rho,\eta; w) = \min_{\eta''
  \in \Mcal} \eta'' \cdot g(\rho,\eta; w) \leq \eta' \cdot
g(\rho,\eta; w) \leq 2\epsilon_0$.
\qed
\end{proof}
Actually, had
we define the $\A$ with domain $\Delta_m^+$, instead of
$\Delta_m$, the only way we can get type-$2$ discontinuities is if the
evolution of the AdaBoost update leads to a weight in the closure of
$\Delta_m^+$, i.e., it leads to a $w$ outside $\Delta_m^+$. As we will
see, that cannot happen.

We can now apply Lemma~\ref{L:epsilon_0+}
recursively to show
that
for all $t>n+1$ the weighted
error of any hypothesis, or equivalently, mistake dichotomy, with
respect to the points/example-weights in $\A^{(t)}(\Delta_m^+)$
is bounded away from zero. (Recall that $\A^{(t)}(\Delta_m^+)$
is the set of all weight distributions $w_t$ over the examples that
Optimal AdaBoost reaches after $t$ rounds, starting from \emph{any}
initialization of the weights/distributions $w_1$ over the examples
selected from $\Delta_m^+$, the subset $\Delta_m$
where no
$\eta \in \Mcal$ has zero weighted error.)
\begin{theorem} \label{T:lower-bound+}
{\bf (Lower Bound on Weighted Errors of Optimal AdaBoost)}
Suppose Conditions~\ref{cond:nathypo} (Natural Weak-Hypothesis Class)
and~\ref{assume:WeakLearn} (Weak Learning) hold.
There exists an $\epsilon_* \geq 2^{-(n+1)}$ such that 
for all $t > n$ we have $\eta \cdot w \geq \epsilon_*$ for all $w \in \A^{(t)}(\Delta_m^+)$ and $\eta \in \Mcal$.
\end{theorem}
\begin{proof}
Set $T_0 = |\Mcal|+1 = n+1$, and $\epsilon_* < \frac{1}{2^{T_0}}$.  Take an arbitrary $\eta \in \Mcal$ and $w \in \Delta_m^+$ such that $\eta \cdot w < \epsilon_*$.  We will show that $w \notin \A^{(t)}(\Delta_m^+)$ for $t > T_0$.

Let $w^{(1)} \in \A^{-1}(w)$.  If no such $w^{(1)}$ exists, we
have already demonstrated our goal.  Otherwise, let $\eta^{(1)}
\equiv \eta^{w^{(1)}}$.  By Lemma~\ref{L:epsilon_0+}, we know that
$\eta^{(1)} \cdot w^{(1)} \leq 2\epsilon_*$ and $\eta \cdot
w^{(1)} \leq 2 \epsilon_*$. Continuing in this
way, let $w^{(2)} \in \A^{-1}(w^{(1)})$, which we can assume
exists by the same argument made for $w^{(1)}$.  Let $\eta^{(2)}
\equiv \eta^{w^{(2)}}$. By Lemma~\ref{L:epsilon_0+}, we obtain that
$\eta^{(2)} \cdot w^{(2)} \leq 2^2\epsilon_*$ and $\eta^{(1)}
\cdot w^{(2)} \leq 2^2 \epsilon_*$, or using short-hand, $\eta^{(j)}
\cdot w^{(2)} \leq 2^2 \epsilon_*$ for $j < 3$. Now note that $\eta^{(2)}
\neq \eta^{(1)}$  
because 
\(
  \eta^{(2)} \cdot w^{(1)} = \frac 12 
  > 2 \epsilon_* 
  \geq \eta^{(1)}\cdot w^{(1)}.
\)
This should be the base case of an induction argument, but we think it
may be useful to perform one more step before moving on to
the induction step, so that the pattern becomes clear. So, let $w^{(3)} \in \A^{-1}(w^{(2)})$, which we can assume
exists by the same argument made for $w^{(1)}$ and $w^{(2)}$.  Let $\eta^{(3)}
\equiv \eta^{w^{(3)}}$. By Lemma~\ref{L:epsilon_0+}, we obtain that
$\eta^{(3)} \cdot w^{(3)} \leq 2^3\epsilon_*$, $\eta^{(2)} \cdot w^{(3)} \leq 2^3\epsilon_*$, and $\eta^{(1)}
\cdot w^{(3)} \leq 2^3 \epsilon_*$, or using short-hand,
$\eta^{(j)} \cdot w^{(3)} \leq 2^3\epsilon_*$ for all $j < 4$. Now note that $\eta^{(3)}
\neq \eta^{(i)}$  for $i < 3$ because 
\(
  \eta^{(3)} \cdot w^{(2)} = \frac 12 
  > 2^2\epsilon_* 
  \geq \eta^{(i)}\cdot w^{(2)}
\)
The pattern should now be clear.

We can continue this template out to $T_0$.  Let $w^{(T_0)}
\in \A^{-1}(w^{(T_0-1)})$.  We claim that such a $w^{(T_0)}$ cannot exist.  For sake of contradiction, suppose it did.
Then, let $\eta^{(T_0)} \equiv \eta^{w^{(T_0)}}$.
From Lemma~\ref{L:epsilon_0+}, we know that $\eta^{(T_0)}
\neq \eta^{(i)}$ for all $i < T_0$ because
\(
  \eta^{(T_0)} \cdot w^{(T_0-1)} = \frac 12 
  > 2^{(T_0-1)}\epsilon_* 
  \geq\eta^{(i)}\cdot w^{(T_0-1)}.
\)
In particular, by the Principle of Mathematical Induction, all
$\eta^{(i)}$'s in the sequence are \emph{unique} by the construction.  Because $T_0 = |\Mcal|+1 = n+1$, the sequence $\{\eta^{(1)},\eta^{(2)},\ldots,\eta^{(T_0-1)}\} = \Mcal$.  But because $\eta^{(T_0)} \neq \eta^{(i)}$ for all $i < T_0$, $\eta^{(T_0)}$ is not in $\Mcal$.  As this is a contradiction, we must conclude that no such $w^{(T_0)}$ exists, and that $\A^{-1}(w^{(T_0-1)}) = \emptyset$.  Our selection of $w^{(i)}$'s was arbitrary in each step of the construction of the sequence, so we can also conclude that there does not exist any $w'$ such that $A^{(T_0)}(w') = w$, or else it would have been reached by the above procedure.  Finally, this shows $w \notin \A^{(T_0)}(\Delta_m^+)$.\qed
\end{proof}

\begin{corollary}\label{cor:non0}
  {\bf (AdaBoost Weak-Hypotheses Always Have Non-Zero Weighted Error.)} Suppose Conditions~\ref{cond:nathypo} (Natural Weak-Hypothesis Class)
and~\ref{assume:WeakLearn} (Weak Learning) hold. Then we have $\Omega_\infty^+
\subset \A^{(n+1)}(\Delta^+) \subset \Delta_m^{\epsilon_*} \equiv \{ w \in \Delta_m \mid \min_{\eta
  \in \Mcal} \eta \cdot w \geq \epsilon_* \}$ for some $\epsilon_* \geq 2^{-(n+1)}$.
\end{corollary}
\begin{proof}
The proof follows immediately from the last theorem (Theorem~\ref{T:lower-bound+}) and the definition
of $\Omega_\infty^+$ (Definition~\ref{def:OmegaInf}).
  \qed
  \end{proof}

\paragraph{Compactness of $\Omega_\infty^+$.}

Having established that the weight trajectories of Optimal AdaBoost are
bounded away from type-$2$ discontinuities after $n+1$ rounds starting
from $\Delta_m^+$, we now 
deal with
type-$1$ discontinuities. To do so, we introduce a condition
that states that any trajectory is bounded away from type-$1$
discontinuities.  This condition will be instrumental in our analysis,
and gives us a way of proving the existence of an invariant measure
that is essential for satisfying Theorem~\ref{T:Birkhoff}. 
We will provide the formal statement in Condition~\ref{C:NoTies}.
 Roughly speaking, this condition essentially says that, after a
 sufficiently long number of rounds either (1) the dichotomy
 corresponding to the optimal weak hypothesis for a round is unique
 with respect to the weights at that round, or (2) the dichotomies
 corresponding to the hypotheses that are tied for optimal are
 essentially the same with respect to the weights in
 $\Omega_{\infty}^+$.

 \begin{condition}{{\bf (Optimal AdaBoost has No Important Ties in the
       Limit.) }} \label{C:NoTies}
There exists a compact set $G$ such that
$\Omega_{\infty}^+ \subset G$ and, given any pair $\eta,\eta' \in \Mcal$, we have either
\begin{enumerate}
\item $\pi(\eta)\cap\pi(\eta')\cap G = \emptyset$; or
\item for all $w \in G$, $\sum_{i: \eta(i) \neq \eta'(i)} w(i) = 0$.
\end{enumerate}
\end{condition}
Note that Part 2 of Condition~\ref{C:NoTies} allows us to reduce the
set of label dichotomies to only those that will never become
effectively the same from the standpoint of Optimal AdaBoost when
dealing with $\Omega_{\infty}^+$, and that this condition can only
happen \emph{in the limit} if starting from $\Delta_m^\circ$ (see
Remark~\ref{R:Rerun}).

We delay further discussion on this condition until Section~\ref{sec:condrem},
where we make a brief specific remark and Section~\ref{sec:close}
(Closing remarks), where we elaborate on the condition and place it in
a broader context. But a remark is in order before we continue.
\begin{remark}
We have found that this condition always holds in all the
high-dimensional, real-world datasets we have tried in
practice. Indeed, 
we provide strong empirical evidence justifying the validity and
reasonableness of the condition in practice using high-dimension
real-world datasets in Section~\ref{S:Exp}. We should point out
that Theorem~\ref{T:continuity} in itself does not imply the
condition. Cynthia Rudin, the
Action Editor for a previous journal-submission version of this article, reports (Personal Communication), 
that in her experiments, ``AdaBoost would "walk" continuously around each continuous region (where each step is a full rotation around a cycle of weak classifiers) until it crossed a boundary where there was a tie, and then it  would change direction. So, even though the map was mostly continuous, it would hit the discontinuities occasionally and that would change the dynamics.'' We wonder whether this
experience is related to some of the example weights going to zero,
which would be consistent with the condition, or
whether it may simply be attributed to numerical instability.
(We refer the reader to Definition~\ref{def:sv} of ``non-support vector'' examples, and our discussion
on them, including their connection to this condition, in Section~\ref{sec:condrem}). Regardless, we remind the reader that our
remark concerns only about the empirical behavior of Optimal AdaBoost on high-dimensional, real-world datasets, not 
on randomly-generated or hand-tailored 
synthetic or small datasets. In fact, we conjecture that this
condition mathematically holds in AdaBoost after $T=n+1$ rounds if
started from $\Delta_m^+$, or equivalently $\Delta_m^\circ$, such that
$G =
\closure{\A^{(n+1)}(\Delta_m^+)}$.
\end{remark}

We now show the compactness of $\Omega_{\infty}^+$ given Condition~\ref{C:NoTies}.  
We first approach this by proving the following lemma, which states that any limit point $w\in G$ of $\Omega_{\infty}^+$ has a corresponding limit point $w' \in G$ of $\Omega_{\infty}^+$ such that $\A(w') = w$.
\begin{lemma} \label{L:compactness}
Suppose Conditions~\ref{cond:nathypo} (Natural Weak-Hypothesis Class),~\ref{assume:WeakLearn} (Weak Learning)
and~\ref{C:NoTies} (No Key Ties) hold.  
Let $(w^{(s)})$ be an arbitrary convergent sequence in $\Omega_{\infty}^+$, and call its limit $w$.  Then there exists a second convergent sequence $\{\omega^{(s)}\}\subset \Omega_{\infty}^+$, such that $\A(lim_{s \to \infty} \omega^{(s)}) = w$.
\end{lemma}
\begin{proof}
Let $(w^{(s)})$ be such a sequence in $\Omega_{\infty}^+$ as
described in the hypothesis, and let $w = \lim_{s \to \infty}
w^{(s)}$.  From the compactness of the set $G$, as defined in
Condition~\ref{C:NoTies} (No Key Ties), we have $w \in G$.  Additionally, as $w^{(s)} \in \Omega_{\infty}^+$, there must exist a $\omega^{(s)} \in \Omega_{\infty}^+$ such that 
\begin{equation} \label{E:def w_i}
w^{(s)} = \A(\omega^{(s)}) .
\end{equation}
Let $(\omega^{(s)})$ be a sequence in  $\Omega_{\infty}^+$ composed of
such elements.  We will now proceed to show that there exists a
subsequence of $(\omega^{(s)})$ that has a limit $w' \in \A^{-1}(w)$.

First note that $\eta^w \cdot w > 0$ by Corollary~\ref{cor:non0}
(Non-Zero Error)
and the fact that we can let $G$ in Condition~\ref{C:NoTies} (No Key
Ties) be a
subset of the set $\Delta_m^{\epsilon_*}$ also defined in 
Corollary~\ref{cor:non0}.

Consider subsets of $G$ of the form
\(
G^*(\eta) \equiv \{w \in G | \, w\in \pi^*(\eta)\} = G \cap \pi^*(\eta).
\)
Note that 
the $\pi^*(\eta)$'s form a partition of $\Delta_m$,
hence we have 
\(
G = \bigcup_{\eta \in \Mcal} G^*(\eta) \; .
\)

There exists an $\eta\in \Mcal$ such that $G^*(\eta)$ contains
infinite elements from the sequence $(\omega^{(s)})$.  Let
$(\omega^{(s_r)})$ be the subsequence of $(w^{(s)})$ that is
contained in $G^*(\eta)$.  Note that $G$ is sequentially compact
because $G$ is a compact subset of a metric space.  Therefore, there
exists a convergent subsequence $(\omega^{(s_{r_a})})$, and call its
limit $w'$.  In addition, we can always find $(\omega^{(s_{r_a})})$
such that $\lim_{a \to \infty} \eta \cdot \omega^{(s_{r_a})} = \eta \cdot w' > 0$, otherwise, the given sequence $(w^{(s)})$
cannot converge to a $w \in \Omega_\infty^+$ such that $\eta^w \cdot w >
0$. We claim that $w' \in G^*(\eta)$.

Let $G(\eta) \equiv \{w \in G | \, w \in \pi(\eta) \} = G \cap \pi(\eta)$, i.e., the \emph{closure} of $G^*(\eta)$.  
It follows that $G(\eta)$ is closed, because both sets involved in the
intersection are closed.  Also note that $G^*(\eta) \subset G(\eta)$.
The sequence $\{\omega^{(s_{r_a})} \}$ is therefore contained in $G(\eta)$, yielding $w' \in G(\eta)$.  Now, either $w' \in G^*(\eta)$ or $w' \in G(\eta) - G^*(\eta)$, the later containing only weights in which $\eta$ is tied with another 
element in $\Mcal$. The second case is impossible because 
Condition~\ref{C:NoTies} (No Key Ties) does not allow those kind of
ties,~\footnote{If Part 1 of Condition 3 holds, then the statement
  falls immediately. Otherwise, if Part 2 of Condition 3 holds, there
  is a non-key ``tie'' with another $\eta' \in \Mcal$ because $w'$ is such that
  $w'(i)=0$ wherever $\eta(i) \neq \eta'(i)$. But, at that point
  $\eta$ would have been preferred to $\eta'$; otherwise, the set
  $G^*(\eta)$ could not have been the set that contains infinite elements from the sequence $\{\omega^{(s)}\}$.} so we must conclude $w' \in G^*(\eta)$.

Now we proceed to show that $w' = \A(w)$.  From Eqn.~\ref{E:def w_i}, it is clear that there is a subsequence $(w^{(s_{r_a})})$ of $(w^{(s)})$ such that $w^{(s_{r_a})} = \A(\omega^{(s_{r_a})})$.  Whereby,
\begin{align} \label{E:w-side-left}
\lim_{a \to \infty} \A(\omega^{(s_{r_a})}) =  \lim_{a \to \infty} w^{(s_{r_a})}
= \lim_{s \to \infty} w^{(s)}
= w.
\end{align}
Because (a) $w'
\in G^*(\eta)$, (b) $\eta \cdot w' > 0$, and (c) the subsequence $(w^{(s_{r_a})})$ is also in
$G^*(\eta)$, by following a
proof akin to that of Theorem~\ref{T:continuity}, we can obtain that for all $i$,
\begin{align*} 
[\A(w')](i) \equiv & [\T_\eta(w')](i) \\
\equiv & 
\frac12 w'(i) \times
\left(\frac{1}{\eta\cdot
    w'}\right)^{\eta(i)}\left(\frac{1}{1-(\eta\cdot
    w')}\right)^{1-\eta(i)}\\
= & 
\lim_{a \to \infty} \frac12 \omega^{(s_{r_a})}(i) \times
\left(\frac{1}{\eta\cdot
    \omega^{(s_{r_a})}}\right)^{\eta(i)}\left(\frac{1}{1-(\eta\cdot
    \omega^{(s_{r_a})})}\right)^{1-\eta(i)}\\
= &
\lim_{a \to \infty} [\T_\eta(\omega^{(s_{r_a})})](i) = \lim_{a \to
    \infty} [\A(\omega^{(s_{r_a})})](i) \; .
\end{align*}
Hence, we have
\begin{align} \label{E:w-side-right}
\lim_{a \to \infty} \A(\omega^{(s_{r_a})}) = \A\left(\lim_{a \to \infty} \omega^{(s_{r_a})}\right) 
= \A(w').
\end{align}
Then, combining Eqns.~\ref{E:w-side-left} and~\ref{E:w-side-right}, we conclude that $\A(w') = w$.
\qed
\end{proof}

Given any limit point $w$ of $\Omega_{\infty}^+$, the previous lemma lets us construct an infinite orbit backwards from $w$ contained entirely in $G$, whereby $w \in \Omega_{\infty}^+$, giving us compactness.  The next theorem formalizes this.
\begin{theorem}{\bf ($\Omega_\infty^+$ is Compact.)} \label{T:compactness}
Suppose Conditions~\ref{cond:nathypo} (Natural Weak-Hypothesis Class),~\ref{assume:WeakLearn} (Weak Learning)
and~\ref{C:NoTies} (No Key Ties) hold. Then the set
$\Omega_{\infty}^+$ is compact.
\end{theorem}
\begin{proof}
Let $\{w^{(s)}\}$ be an arbitrary convergent sequence contained in
$\Omega_{\infty}^+$, and let $w = \lim_{s \to \infty} w^{(s)}$.  By
Lemma~\ref{L:compactness}, there exists a sequence
$\{w^{(s_1)}\} \subset \Omega_{\infty}^+$ converging to $w^{(1)}
\in G$ such that $\A(w^{(1)}) = w$.  However, notice that
$\{w^{(s_1)}\}$ also satisfies the hypothesis of
Lemma~\ref{L:compactness}.  Applying the lemma to
$\{w^{(s_1)}\}$, we get $\{w^{(s_2)}\} \subset
\Omega_{\infty}^+$ converging to $w^{(2)} \in G$ such that $\A(w^{(2)})
= w^{(1)}$, therefore $\A^{(2)}(w^{(2)}) = w$.  We can continue in
this way to generate $w^{(n)} \in G$ such that $\A^{(n)}(w^{(n)}) = w$
for any $n$.  Therefore, $w \in \A^{(n)}(\Delta_m)$ for all $n \in \N$
and we must conclude that $w \in \Omega_{\infty}^+$.  Because $w$ was
the limit of an arbitrary convergent sequence of $\Omega_{\infty}^+$, it
must be the case that $\Omega_{\infty}^+$ is compact.
\qed
\end{proof}

\subsection{Applying the Birkhoff Ergodic Theorem}

With Theorems~\ref{T:compactness} (Compactness of $\Omega_\infty^+$)
and~\ref{T:lower-bound+} (Lower Bound on $\epsilon_t$'s) in hand, we
can now proceed to apply the Krylov-Bugolyubov Theorem
(Theorem~\ref{Krylov-Bogolyubov}) in order to show the existence of a measure
over $\Omega_\infty^+$ under which $\A$ is 
measure-preserving (Proposition~\ref{A:1}). That in turn helps us
apply Brikhoff's Ergodic
Theorem (Theorem~\ref{T:Birkhoff}) to obtain an important technical 
result.

Proposition~\ref{A:1} helps us cover the first condition of
Theorem~\ref{T:Birkhoff}, that $\A$ is a measure-preserving dynamical
system for some measure $\mu$.  This proposition is sufficient to
yield Theorem~\ref{the:avgcvg1}, which captures one of our main results
of this paper. 
\begin{proposition}\label{A:1}
  Let $\Omega \equiv \Omega_\infty^+$ and 
   $\A_\Omega : \Omega \to \Omega$
  such that $\A_\Omega(w) \equiv \A(w)$ for all $w \in
  \Omega$.  
Under Conditions~\ref{cond:nathypo} (Natural Weak-Hypothesis
Class),~\ref{assume:WeakLearn} (Weak Learning), and~\ref{C:NoTies} (No
Ties), there exists a Borel probability
measure $\mu_\Omega$ on $\Omega$ with the property that $(\Omega, \Sigma_\Omega, \mu_\Omega, \A_\Omega)$ is a measure-preserving dynamical system.
\end{proposition}
\begin{proof}
Because Conditions~\ref{assume:WeakLearn} (Weak Learning)
and~\ref{C:NoTies} (No Key Ties) hold, we have that, by
Theorem~\ref{T:compactness}, the set $\Omega$ is a
compact and metrizable topological space.  Also, because
Conditions~\ref{cond:nathypo} (Natural Weak-Hypothesis Class) and~\ref{assume:WeakLearn} (Weak Learning)
hold, we have that, by Theorem~\ref{T:lower-bound+}, the version of the
AdaBoost
example-weights update over just $\Omega$, $\A_\Omega :
\Omega \to \Omega$, is a continuous map.  It follows
from the Krylov-Bogolyubov Theorem (Theorem~\ref{Krylov-Bogolyubov}) that $\A_\Omega$ admits an invariant Borel probability measure $\mu_\Omega$. \qed
\end{proof}

\begin{theorem}{{\bf (Averages over an AdaBoost Sequence of Example
      Weights Converge $\mu_{\Omega_\infty^+}$-Almost Everywhere.)}} 
\label{the:avgcvg1}
Suppose Conditions~\ref{cond:nathypo} (Natural Weak-Hypothesis
Class),~\ref{assume:WeakLearn} (Weak Learning), and~\ref{C:NoTies} (No
Ties) hold. Let $\Omega \equiv \Omega_\infty^+$ and $\mu \equiv \mu_\Omega$ be the (Borel) probabilistic measure on
$(\Omega,\Sigma_\Omega)$ referred to in Proposition~\ref{A:1}. For any function $f
\in L^1(\mu)$, the Optimal-AdaBoost update $\A$ has the property that
$\Tavg(f,w_1,\A,T) = \frac{1}{T} \sum_{t=0}^{T-1} f(\A^{(t)}(w_1))$ converges for $\mu$-almost
every $w_1 \in \Omega$.
\end{theorem}
\begin{proof}
The result follows from Proposition~\ref{A:1} and the Birkhoff Ergodic Theorem
(Theorem~\ref{T:Birkhoff}).
\qed
\end{proof}

\begin{definition}
  \label{D:nu_0}
For all $T$ and $w \in \Omega_\infty^+$, consider the set $\A^{(-T)}(w)$.
Note that, for each $w$, we have that $(\A^{(-T)}(w))$ is a sequence of sets that increases with $T$ to
\[ \A^{(-\infty)}(w) \equiv \bigcup_{T=0}^\infty \A^{(-T)}(w) \subset
  \Delta_m \; . \]
For all $T$, let
$(\Delta_m^+,\Sigma_{\Delta_m^+},\mu_0)$
be the \emph{uniform Borel
measure over $\Delta_m^+$.} 
For all $T$, define a measure space
$(\Delta_m^+,\Sigma_{\Delta_m^+},\nu_0^{(-T)})$ such that, for all $W \in
\Sigma_{\Delta_m^+}$,
\[
\nu_0^{(-T)}(W) \equiv \int_{w \in \Omega_{\infty}^+} \mu_0(W \cap
  \A^{(-T)}(w)) \, d\mu(w) \, 
\]
where $\mu \equiv \mu_{\Omega_\infty^+}$ is as in Theorem~\ref{the:avgcvg1}. Note that $\nu_0^{(-T)}$ is a
proper finite measure for all $T$.
Now define the
``limit'' measure space $(\Delta_m^+,\Sigma_{\Delta_m^+},\nu)$ such
that, for all $W \in \Sigma_{\Delta_m^+}$,
\begin{align*}
\nu_0(W) \equiv & \lim_T \nu_0^{(-T)}(W) = \lim_T
\int_{w \in \Omega_\infty^+} \mu_0(W \cap \A^{(-T)}(w)) \,
                  d\mu(w)  \\ = &
\int_{w \in \Omega_\infty^+} \lim_T \mu_0(W \cap \A^{(-T)}(w))
                \, d\mu(w) 
  =  \int_{w \in \Omega_\infty^+} \mu_0(W \cap \A^{(-\infty)}(w))
\, d\mu(w) \, .
\end{align*}
(The last two equalities follow from the Lebesgue Dominated
Convergence Theorem.)
\end{definition}
\begin{corollary}{{\bf (Averages over an AdaBoost Sequence of Example
      Weights Converge $\nu_0$-Almost Everywhere.)}} 
\label{cor:avgcvg_all_finite_T}
Theorem~\ref{the:avgcvg1} holds if we let 
$\Omega \equiv \Delta_m^+$ and $\mu \equiv \nu_0$, the finite measure on
$(\Delta_m^+,\Sigma_{\Delta_m^+})$ defined in terms of
$\mu_{\Omega_\infty^+}$ in Definition~\ref{D:nu_0}. 
\end{corollary}

\paragraph{Measurability of secondary quantities in $L_1$.}
The following proposition helps us cover the second condition of the Birkhoff
Ergodic Theorem (Theorem~\ref{T:Birkhoff}) in order to apply it within the context
of Optimal AdaBoost: the measurability of the
secondary quantities in $L_1$.
\begin{proposition}\label{A:3}
Let $(\Omega,\Sigma_\Omega,\mu)$ be a measure space
consisting of a Borel $\sigma$-algebra $\Sigma_\Omega$ on
$\Omega$.
Then the
following holds.
\begin{enumerate}
\item If $\Omega \subset \Delta_m$, 
  then
  the function $\epsilon(w) = \min_{\eta\in \Mcal}\eta \cdot w$ is in
  $L^1(\mu)$.
\item \label{A:3:alpha}
  If $\Omega \subset \Delta_m^{\epsilon_*}$, as defined in Corollary~\ref{cor:non0}, and Conditions~\ref{cond:nathypo} (Natural Weak-Hypothesis Class) and~\ref{assume:WeakLearn}
  (Weak Learning) hold, then the function $\alpha(w) = \frac 12
  \ln\left(\frac{1-\epsilon(w)}{\epsilon(w)}\right)$ is in $L^1(\mu)$.
\item If $\Omega \subset \Delta_m$, then the function $\chi_{\pi^*(\eta)}(w) = \indicator{w \in
    \pi^*(\eta)}$ is in $L^1(\mu)$.
\end{enumerate}
\end{proposition}
\begin{proof}
The following is the proof for each respective part of the proposition.
\begin{enumerate}
\item Because $\epsilon(w)$ is the minimum of a finite set of continuous functions, it follows that $\epsilon(w)$ is continuous as well.  In the case of a Borel algebra, continuity implies measurability.  
We also have 
\(
\int_{w\in \Omega} |\epsilon(w)| \; d\mu(w) \leq \int 1 \; d\mu = 1 \; .
\)
Therefore, $\epsilon(w) \in L^{1}(\mu)$.

\item Under Conditions~\ref{cond:nathypo} (Natural Weak-Hypothesis
  Class) and~\ref{assume:WeakLearn} (Weak Learning), because
  $\epsilon(w)$ is continuous and, by
  the definition of $\Delta_m^{\epsilon_*}$, bounded away from $0$, it follows that $\alpha(w)$ is continuous as well.  As above, this implies measurability.  From $\epsilon(w) \geq \epsilon_*>0$, where $\epsilon_*$ is as stated in Theorem~\ref{T:lower-bound+}, we have an upper bound on $\alpha(w)$ we will call $\alpha^*$.  Therefore, we then have
\(
\int_{w\in \Omega_{\infty}} |\alpha(w)| \; d\mu(w) \leq \int \alpha^* \; d\mu = 
\alpha^*.
\)
Whereby $\alpha(w) \in L^{1}(\mu)$.

\item The set $\pi^*(\eta)$, being a simple linear subset of
  $\Delta_m$, is in $\Sigma_{\Delta_m^+}$, thus measurable (i.e., it is
  composed of two disjoint measurable sets: $\pi^\circ(\eta)$ being an open subset
  of $\Delta_m^+$ and $\pi^*(\eta) -
  \pi^\circ(\eta)$ which is ether empty or a closed subset of $\Delta_m^+$. Hence, the characteristic function $\chi_{\pi^*(\eta)}(w)$ is measurable and bounded above by $1$.  Therefore, it is in $L^1(\mu)$.
\end{enumerate}
\qed
\end{proof}
Note that Part~\ref{A:3:alpha} of the proposition is a non-trivial statement
because there was no reason to believe \emph{a priori} that
$\epsilon(w)$ is bounded from below in the context of Optimal
AdaBoost, which we showed here in Theorem~\ref{T:lower-bound+}. 

\subsection{The flies in \emph{our} ointment}

As an anonymous
reviewer of a previous version of this paper pointed out, one issue with the result above is that it does not preclude
$\Omega_\infty^+ = \emptyset$.
Although we think this is just a
red-herring, because it would be counterintuitive otherwise, the reality is that we do not have a
formal mathematical proof showing that $\Omega_\infty^+ \neq
\emptyset$. We start by stating this as a condition, followed by
studying its implication.
\begin{condition}
  \label{C:NonEmptyOmega}
  $\Omega_\infty^+ \neq \emptyset$.
\end{condition}
\begin{proposition} 
  \label{P:T_tau}
  Suppose Condition~\ref{C:NonEmptyOmega} holds.
  Then, we have 
  \[
    \lim_{T \to \infty} \sup_{(w',w) \in \A^{(T)}(\Delta_m^+) \times
      \Omega_\infty^+} d(w',w) = 0 \; .
  \]
\end{proposition}
\begin{proof}
  By Condition~\ref{C:NonEmptyOmega}, we have that $\A^{(T)}(\Delta_m^+) \times
      \Omega_\infty^+$ is non-empty for all $T$, so that $d(w',w)$ exists for all
    $(w,w') \in \A^{(T)}(\Delta_m^+) \times
      \Omega_\infty^+$, and in turn, $\sup_{(w',w) \in \A^{(T)}(\Delta_m^+) \times
      \Omega_\infty^+} d(w',w)$ is a well defined non-increasing
    sequence, which converges (by the Monotone Convergence Theorem for
    sequences).
    We have
  \begin{align*}
    \lim_{T \to \infty} \sup_{(w',w) \in \A^{(T)}(\Delta_m^+) \times
    \Omega_\infty^+} d(w',w) & \leq \lim_{T \to \infty} \sup_{(w',w) \in \closure{\A^{(T)}(\Delta_m^+)} \times
                              \lim_t \closure{\A^{(t)}(\Delta_m^+)}} d(w',w) = 0
  \end{align*}
   because $A^{(T)}(\Delta_m^+) \subset \closure{A^{(T)}(\Delta_m^+)}$,
   $\Omega_\infty^+ \subset \lim_T \closure{A^{(T)}(\Delta_m^+)}$, and
   $\closure{A^{(T)}(\Delta_m^+)} \searrow \lim_T \closure{A^{(T)}(\Delta_m^+)}$.
  \qed
\end{proof}
\begin{definition}
  \label{D:nu}
Suppose Condition~\ref{C:NonEmptyOmega} holds. For all $T$, let $\A_T$ be a function from $w \in
\Omega_\infty^+$ to $\Sigma_{\Omega_\infty^+}$ such that, for all $w \in
\Omega_\infty^+$, 
\[ A_T(w) \equiv \A^{(-T)}\left( \left\{ w' \in \closure{\A^{(T)}(\Delta_m^+)} \, \left| \,
d(w',w) = \inf_{w'' \in \Omega_\infty^+} d(w',w'') \right. \right\} \right) \;
. \]
Note that, for each $w$, we have that $(A_T(w))$ is a sequence of sets that increases with $T$ to
$A_\infty(w) \equiv \bigcup_{T=0}^\infty A_T(w) \subset \Delta_m$. Let $(\Delta_m^+,\Sigma_{\Delta_m^+},\mu_0)$ be the \emph{uniform Borel
measure over $\Delta_m^+$.} 
For all $T$, define a measure space
$(\Delta_m^+,\Sigma_{\Delta_m^+},\nu_T)$ such that, for all $W \in
\Sigma_{\Delta_m^+}$,
\[
\nu_T(W) \equiv \int_{w \in \Omega_{\infty}^+} \mu_0(W \cap A_T(w)) \, d\mu(w) \, 
\]
where $\mu \equiv \mu_{\Omega_\infty^+}$ is as in Theorem~\ref{the:avgcvg1}. Note that $v_T$ is a
proper finite measure for all $T$. Now define the
``limit'' measure space $(\Delta_m^+,\Sigma_{\Delta_m^+},\nu)$ such
that, for all $W \in \Sigma_{\Delta_m^+}$,
\begin{align*}
\nu(W) \equiv & \lim_T \nu_T(W) = \lim_T
\int_{w \in \Omega_\infty^+} \mu_0(W \cap A_T(w)) \, d\mu(w)  = 
\int_{w \in \Omega_\infty^+} \lim_T \mu_0(W \cap A_T(w))
                \, d\mu(w) \\
  = & \int_{w \in \Omega_\infty^+} \mu_0(W \cap A_\infty(w))
\, d\mu(w) \, .
\end{align*}
(The last two equalities follow from the Lebesgue Dominated
Convergence Theorem.)
\end{definition}
\begin{condition} {{\bf (AdaBoost is Sufficiently Non-Expansive Globally)}}
  \label{C:NonExpansiveGlobal}
  For all $w \in \Delta_m^+$ there exists a $\tau' > 0$ such that for
  all $\tau \in (0,\tau')$, if $d(\A^{(T)}(w), w') < \tau$ for
  some $T$ and $w' \in \Delta_m^+$
  then $d(\A^{(T+t)}(w),\A^{(t)}(w')) < \tau$ for all $t > 0$.
\end{condition}
\begin{definition}
  \label{def:ContFncs}
For any set $W \subset \Delta_m$, denote by $C(W)$ the set of all continuous functions $f$ of type $\Delta_m \to
\R$ that are continuous in $W$.
\end{definition}
Note that all the functions in $C(\Delta_m)$ are also \emph{absolutely}
continuous because $\Delta_m$ is compact.
\begin{lemma}
  \label{L:mirror}
Suppose Conditions~\ref{C:NonEmptyOmega} and~\ref{C:NonExpansiveGlobal} hold. Then we have that for
all $w' \in \Delta_m^+$, there exists $w \in \Omega_\infty^+$ such
that \[
  \lim_{T \to \infty} \left| \Tavg(f,w,\A,T) -
  \Tavg(f,w',\A,T) \right| = 0 \] for all $f \in C(\Delta_m)$.
\end{lemma}
\begin{proof}
  Let $w_t \equiv \A^{(t)}(w)$. By Proposition~\ref{P:T_tau},  we can
  consider an arbitrary $\tau > 0$, and with that a round $T_0 \equiv
  T_0(\tau)$ and an example weight $w^{(T_0)} \in \Omega_\infty^+$ such that
  $d(w_{T_0}, w^{(T_0)}) < \tau$. Let $w^{(t)} \equiv
  \A^{(t-T_0)}(w^{(T_0)})$ for all $t > T_0$; and for all $t <
  T_0$  let it be such that
  $w^{(t)} \in \A^{(t-T_0)}(w^{(T_0)}) \cap \Omega_\infty^+$. Then by Condition~\ref{C:NonExpansiveGlobal} we have
  $d(w_t,w^{(t)}) < \tau$ for all $t > T_0$. Let $w'
  \equiv w^{(0)}$. For all $T > T_0$ we have
  \begin{align*}
  & \left| \Tavg(f,w,\A,T) -
    \Tavg(f,w',\A,T) \right| \leq \frac{1}{T} \sum_{t=0}^{T-1} \left| f(w_t) -
                               f(w^{(t)}) \right| \\
    & = \frac{1}{T} \left( \sum_{t=0}^{T_0-1} \left| f(w_t) -
                               f(w^{(t)}) \right| + \sum_{t=T_0}^{T-1} \left| f(w_t) -
                               f(w^{(t)}) \right| \right) \; .
  \end{align*}
  The result follows by noting that for all $\tau' > 0$ we can
  always pick $\tau < \tau'$ such that 
  \begin{align*}
  \sum_{t=T_0}^{T-1} \left| f(w_t) -
    f(w^{(t)}) \right| < \tau' (T - T_0)
  \end{align*}
  because $f$ is absolutely continuous.
  \qed
  
\end{proof}
\begin{corollary}{{\bf (Averages over an AdaBoost Sequence of Example
      Weights Converge $\nu$-Almost Everywhere.)}} 
\label{cor:avgcvg_all}
Suppose Conditions~\ref{cond:nathypo} (Natural Weak-Hypothesis
Class),~\ref{assume:WeakLearn} (Weak Learning), and~\ref{C:NoTies} (No
Ties) hold. Suppose, in addition, that
Conditions~\ref{C:NonEmptyOmega} ($\Omega_\infty^+ \neq \emptyset$) and~\ref{C:NonExpansiveGlobal}
(Non-Expansive) hold.  Let
$\Omega \equiv \Omega_\infty^+$ and $\nu$ be the finite measure on
$(\Delta_m^+,\Sigma_{\Delta_m^+})$ defined in terms of
$\mu_\Omega$ in Definition~\ref{D:nu}. For any function $f
\in C(\Delta_m)$, the Optimal-AdaBoost update $\A$ has the property that
$\widehat{f}_T^\A(w_1) = \frac{1}{T} \sum_{t=0}^{T-1} f(\A^{(t)}(w_1))$ converges for $\nu$-almost
every $w_1 \in \Delta_m^+$.
\end{corollary}
\begin{proof}
Let $\Ncal \equiv \{ w_1 \in \Omega_\infty^+ \, \mid \, \widehat{f}_T^\A(w_1)
\text{ diverges} \}$ and $\Ncal' \equiv \{ w'_1 \in \Delta_m^+ \, \mid \, \widehat{f}_T^\A(w'_1)
\text{ diverges} \}$. We can express 
  \begin{align*}
\nu(\Ncal') = & \int_{w \in \Omega} \chi_\Ncal(w) \mu_0(\Ncal'
    \cap A_\infty(w)) 
\, d\mu(w) + \int_{w \in \Omega} \chi_{\Omega-\Ncal} (w) \mu_0(\Ncal'
    \cap A_\infty(w))
                \, d\mu(w) \; .
    \end{align*}
The second term equals $0$ because $\Ncal'
    \cap A_\infty(w) = \emptyset$ for all $w \in \Omega-\Ncal$ by
    Lemma~\ref{L:mirror}. Hence, considering only the first term, we
    obtain
  \begin{align*}
\nu(\Ncal') \leq & \int_{w \in \Omega} \chi_\Ncal(w) \mu_0(\Ncal')
                \, d\mu(w) = \mu_0(\Ncal') \int_{w \in \Omega} \chi_\Ncal(w)
                   \, d\mu(w) \\
    = &  \mu_0(\Ncal') \mu(\Ncal) = \mu_0(\Ncal')
                   \cdot 0 = 0\; .
    \end{align*}    
  \qed 
\end{proof}

One way to avoid Condition~\ref{C:NonEmptyOmega} ($\Omega_\infty^+ \neq \emptyset$) is to replace Condition~\ref{C:NoTies} (no ties \emph{in the
  limit}), with the following seemingly weaker (no ties \emph{after a finite time}).
\begin{condition}{{\bf (Optimal AdaBoost has No Important Ties Eventually.) }} \label{C:NoTies2}
There exists a compact set $G$ and a round $T$ such that
$\A^{(T)}(\Delta_m^+) \subset G$ and, given any pair $\eta,\eta' \in \Mcal$, we have either
\begin{enumerate}
\item $\pi(\eta)\cap\pi(\eta')\cap G = \emptyset$; or
\item for all $w \in G$, $\sum_{i: \eta(i) \neq \eta'(i)} w(i) = 0$.
\end{enumerate}
\end{condition}
While, on the surface, Condition~\ref{C:NoTies2} seems weaker than
Condition~\ref{C:NoTies}, they might turn out to be equivalent in our
context, but we currently lack a formal mathematical proof of that.
Note that, in this case, we can take $G =
\closure{\A^{(T)}(\Delta_m^+)}$, the \emph{closure} of
$\A^{(T)}(\Delta_m^+)$. A specific example of when the condition holds
is the case of mistake dichotomies isomorphic to an $(m \times m)$
identity matrix, discussed in Appendix~\ref{app:ident}.

This variant of the no-ties condition allows an exact
characterization of $\Omega_\infty^+$ with simpler proofs.
\begin{theorem}{\bf ($\Omega_\infty^+$ is the Limiting Closure of the
    AdaBoost Update.)} \label{T:compactness2}
Suppose Conditions~\ref{assume:WeakLearn} (Weak Learning)
and~\ref{C:NoTies2} (No Key Ties Eventually) hold. Then the set
$\Omega_{\infty}^+ = \lim_{t \to \infty} \closure{\A^{(t)}(\Delta_m^+)} =
\bigcap_{t=1}^\infty \closure{\A^{(t)}(\Delta_m^+)}$, thus compact and non-empty.
\end{theorem}
\begin{proof}
  Let $\Omega_r^+
\equiv \A^{(r)}(\Delta_m^+) = \bigcap_{t=1}^r \A^{(t)}(\Delta_m^+)$ and $E_r \equiv
\A^{(r)}(\closure{\Omega_T^+})$ for $t = 1,2\ldots$, where
$T$ is as in Condition~\ref{C:NoTies}, and note that, under
the given conditions, we have that
$E \equiv \bigcap_{r=1}^\infty E_r$ is a non-empty subset
of $G$ also containing $\Omega_\infty^+$.
We also have $E_t \subset
\A^{(t)}(\Delta_m^+)$, so that $E \subset \Omega_\infty^+$, and hence,
$\Omega_\infty^+ = E$ and non-empty.  \qed
\end{proof}
\begin{condition} {{\bf (AdaBoost is Sufficiently Non-Expansive Locally)}}
  \label{C:NonExpansive}
  For all $w \in \Delta_m^+$ there exists a $\tau' > 0$ such that for
  all $\tau \in (0,\tau')$, if $d(\A^{(T)}(w), w') < \tau$ for
  some $T$ and $w' \in \Delta_m^+ \cap \pi(\eta^w)$
  then $d(\A^{(T+t)}(w),\A^{(t)}(w')) < \tau$ for all $t > 0$.
\end{condition}
\begin{definition}
  \label{def:C_McalPlus}
Denote by $C_\Mcal^+$ the set of all functions $f$ of type $\Delta_m \to
\R$
that are 
continuous on each
$\pi^\circ(\eta)$ individually when
viewed as a function of type
$\pi^{\circ}(\eta) \to \R$
for all $\eta \in \Mcal$.~\footnote{Technically, the functions only
  need to be continuous on each $\pi^\circ(\eta) \cap \cup_{\eta'}
  \pi_{\frac12}(\eta')$ individually.}
\end{definition}
\begin{corollary}{{\bf (Averages over an AdaBoost Sequence of Example
      Weights Converge $\nu$-Almost Everywhere if No Ties Eventually)}} 
\label{cor:avgcvg_all_ft}
Corollary~\ref{cor:avgcvg_all} holds if
Conditions~\ref{C:NoTies} (No Ties in the Limit)
and~\ref{C:NonEmptyOmega} ($\Omega_\infty^+ \neq \emptyset$) are replaced
with Condition~\ref{C:NoTies2} (No Ties Eventually);
Condition~\ref{C:NonExpansiveGlobal} (Globally Non-Expansive) is replaced by
Condition~\ref{C:NonExpansive} (Locally Non-Expansive); and the set of functions $C(\Delta_m)$
(Definition~\ref{def:ContFncs}) is replaced by $C_\Mcal^+$
(Definition~\ref{def:C_McalPlus}). 
\end{corollary}

But we have another potentially more critical ``fly in the ointment.'' We would like to establish that the convergence of the time average
$\Tavg(f,w_1,\A,T)$ of any measurable function, in $L_1$,
$C(\Delta_m)$, or $C_\Mcal^+$, of the weights
generated by AdaBoost (see Definition~\ref{def:emp_meas_and_avgs}) converges as $T \to \infty$ starting from
\emph{Lebesgue}-almost every initial weight $w_1 \in \Delta_m^+$. Unfortunately, as pointed out by an anonymous
reviewer of a previous version of this paper, the results stated in Theorem~\ref{the:avgcvg1} and Corollaries~\ref{cor:avgcvg_all} and~\ref{cor:avgcvg_all_ft}) only
imply the convergence of time averages starting from
$\mu$-almost or $\nu$-almost every initial weight $w_1 \in \Delta_m^+$. Under
Condition~\ref{C:NoTies2} (No Key Ties Eventually), it is
relatively straightforward to tie the convergence of any average of
any function that is continuous on the interior of each $\pi(\eta)$
starting from any $w_1 \in \Delta_m^+$ to
that of an initial weight in $\Omega_\infty^+$ as we have done above
in Lemma~\ref{L:mirror}. But doing so does not
really solve the
problem because, at least base on a straight, ``on the surface''
application of Birkoff's, we would still have a better 
understanding of the \emph{support} of the
measure on $\Omega_\infty^+$ in order to be able to say something meaningful about the starting
points for which the AdaBoost
generated averages will converge.

A more in-depth study of the literature on dynamical systems and
ergodic theory~\citep[see, e.g.,][]{OxtobyUlam1939,OxtobyUlam1941,oxtoby1952,sigmund74},
 especially some of the most recent literature~\citep{Abdenur2013,Blank_2017,2018arXiv181104805C,dong_oprocha_tian_2018}, led us to an alternative
approach to this problem, and eventually an alternative understanding of the
behavior of Optima AdaBoost, including connections to the existing conjectures
on cycling~\citep{DBLP:journals/jmlr/RudinSD12} and
ergodicity~\citep[Section 9.1]{BreimanInfinite}.  
We adapted relatively recent results in dynamical
systems about when one can extend the results of the Birkhoff Ergodic
Theorem, from simply almost-sure convergence with respect to the
invariant measure to that with respect to the standard Lebesgue/Borel
measure~\citep{Abdenur2013,2018arXiv181104805C}, to our context.

But before we continue, let us summarize what we know up to this point
and describe the challenges we still face moving forward with the
approach we have used so far: applying directly and non-constructively
the Birkhoff Ergodic Theorem
(Theorem~\ref{T:Birkhoff}) via the Krylov-Bogolyubov Theorem (Theorem~\ref{Krylov-Bogolyubov}).  Ideally, we would
like to say that the support of $\mu_\Omega$ (Proposition~\ref{A:1}) includes almost every
$w \in \Omega$
with respect to the standard Lebesgue/Borel
measure. That way, using the result of the last proposition and
applying the Birkhoff Ergodic Theorem for $\A_\Omega$ on $\Omega$ would
show that the time average $\Tavg(f,w_1,\A_\Omega,T)$ of any
measurable
function $f \in L_1$ of the example weights would converge as $T \to \infty$.
The Birkhoff Ergodic Theorem establishes the
equivalence of time and space averages with respect to a given measure
under some conditions.

In the literature on dynamical systems and ergodic theory, roughly
speaking, an
empirical measure for which the time average of every continuous
function $f$ converges is called a {\em pseudo-physical measure}, and if that convergence is to its corresponding space average, a {\em
  physical measure}. A physical measure is pseudo-physical but the
opposite does not always hold. Our main interest from the Birkhoff
Ergodic Theorem is the convergence of time averages, and thus,
pseudo-physical measures. In fact, it is very common to find dynamical systems
defined by a continuous self-maps on a \emph{compact connected} domain
$W$ are
\emph{weird}, a formal term defined
by~\citet{Abdenur2013}, which essentially means
that time averages exists for Lebesgue-almost every $\omega \in W$,
but no physical measure exists (i.e., roughly, time and state averages never match).

The following simple example~\citep[pg. 4650]{Blank_2017} beautifully illustrates the distinction between time and space averages. Consider the
discontinuous map
$M : [0,1] \to [0,1]$ defined as $M(\omega) = \omega/2$ if $\omega \in (0,1]$ and $ =
1$ if $\omega = 0$. The empirical 
measure $\widehat{\mu}_\omega^{(T)}$
converges to
$\delta_{0}$, the Dirac-delta measure having only support at $0$, for
any $\omega \in [0,1]$: i.e., the Birkhoff limit $\widehat{\mu}_\omega$ exists and is
$M$-invariant for all
$\omega \in [0,1]$, and the dynamical system $([0,1],\Sigma_{[0,1]},M,\widehat{\mu}_\omega)$ is uniquely ergodic. Consider the identity function $f(\omega) =
\omega$, which is 
continuous and $L_1$ measurable.  
The time
average $\widehat{f}_\omega$ converges to $2 \omega$ for all $\omega \in (0,1]$ and to $2$ for $\omega =
0$, while the space average $\bar{f}$ is $1$. (Note that, in contrast,
the Birkhoff
Ergodic Theorem would only guarantee the convergence of the time
average to the space average for $\omega = 0$.) Hence, the empirical measure is
pseudo-physical but not physical, because the time average converges to
a value equal to the
space average only when started at $\omega \in \{0,1\}$. 

So, while intuition may suggest that the time averages converge from any
starting $w_1 \in \Omega_\infty^+$ in the case of the Optimal AdaBoost
update $\A$, proving this formally seems difficult. Even if that is possible, we would still have to formally show
that this convergence extends to Lebesgue-almost every $w_1 \in \Delta_m^+$. Once again,
while intuition may suggest that this is true, a formal proof also
seems difficult. For instance, even under Condition~\ref{C:NoTies2}, we would have to show that for any
measurable set $\Wcal \in \Sigma_{\Delta_m^+}$ of positive Borel
measure, we have $\closure{\A^{(T)}(\Wcal)}$ being a subset of
$\Omega$ of positive measure: i.e., under Condition~\ref{C:NoTies2},
and where $T$ is as in that condition, 
$\mu_\Omega(\closure{\A^{(T)}(\Wcal)}) > 0$,  for all measurable sets
$\Wcal \in \Sigma_{\Delta_m^+}$. While that seems relatively easier, it
still seems difficult overall.
In what follows, we will provide
a constructive proof of the existence of $\mu_\Omega$ that essentially
proves just that in Section~\ref{sec:cycle}.

The Krylov-Bugolyubov Theorem
is in fact strongly connected to the Birkhoff
Ergodic Theorem~\citep{oxtoby1952}. Its proof typically consists of two parts: (1)
establishing the existence of a Borel probability measure such that the
time average of any continuous function equals its space
average with respect to the measure; and (2) showing that the measure
is invariant with respect to the map. It turns out that only the second part requires the
continuity of the map. The typical argument for the first part, using notation in the context of the Optimal
AdaBoost update, begins
by defining a sequence of time-indexed empirical measures $\widehat{\mu}^{(T)}_{w_1}$
starting from an
arbitrary initial example-weight $w_1$ in the compact set $\Omega$. Using sophisticated
mathematical tools (e.g., the Riezs Theorem, also known as the Riezs-Fischer
Theorem, as well as properties of the sets of 
Borel probability measures and weak-*
convergence), the proof would continue by showing that for any map $\A_\Omega$ on a compact set
$\Omega$ to
itself there exists
a subsequence $(\mu^{(T_s)}_{w_1})$ converging to
$\widehat{\mu}_{w_1}$ such that for any
continuous function $f$, we have $\Savg(f,\A_\Omega,\widehat{\mu}_{w_1}) =
\lim_{s \to \infty} \Tavg(f,w_1,\A_\Omega,T_s)$ (see Definition~\ref{def:emp_meas_and_avgs}).
Note that this means that a
{\em subsequence} of the time averages converges, {\em not} that the original full
sequence of time averages does; otherwise, we would have been done: because
we could take an arbitrarily large compact subset of $\Delta_m^+$, and
then have $w_1$ in that set,
instead of $\Omega$.

Because the~\citet{Birkhoff1931} Ergodic Theorem is close to a century old, one would expect
that the question of establishing conditions under which one can extend convergence from the set of
positive measure with respect to the invariant measure to all or
almost all points in the original space of the map has been addressed 
long ago. Yet, to our surprise, it appears that this
question was actually addressed only rather
recently~\citep{Abdenur2013,Blank_2017,2018arXiv181104805C,dong_oprocha_tian_2018}. This
seems remarkable at first, given the importance of this theorem to ergodic
theory and dynamical systems~\citep{Moore2015}. The sense one gets from the
early literature following the publication of the ergodic theorem may
explain why. Indeed, some of the early results essentially establish
that, for many functions (i.e., uniformly continuous, or bounded
functions for which the set of discontinuities has Borel measure zero), the set of points for which
convergence of time averages exists is large, measure-theoretically
speaking: i.e., convergence occurs for almost all initial
points~\citep[see, e.g.,][]{oxtoby1952}. However, perhaps because of the apparent interest at the time, most results
specifically concern maps that are continuous homeomorphisms~\citep[see, e.g.,][]{OxtobyUlam1939,OxtobyUlam1941,oxtoby1952,sigmund74}. None of
those conditions hold for the Optimal AdaBoost update $\A$.

Even the recent results have limitations as they only concern continuous
maps and homeomorphisms on compact \emph{connected}
manifolds~\citep{Abdenur2013,2018arXiv181104805C}. Note that, under
Condition~\ref{C:NoTies2}, Optimal AdaBoost is continuous on $\Omega \equiv
\closure{\A^{(T)}(\Delta_m^+)}$, a compact set, that set is clearly no
connected because the ties are not there. 
In addition, they do not establish convergence of the time
averages with respect to every continuous map. Instead, they show that for
every continuous map there exists another continuous function that
uniformly approximates it and for which convergence of time averages
occurs. (Or in the parlance of ergodic theory and dynamical systems,
they hold for so called ``typical,'' or ``generic,'' continuous maps.) This appears to be a fundamental limitation in general. Yet, it
turns out that the main ideas behind the proof could be adapted
specifically for the Optimal AdaBoost update, as we will show. In
fact, 
this approach
essentially establishes the no ties condition, albeit only by
construction,
and
leads to what seems like a proof of the conjectures that AdaBoost always
converges to (almost) a cycle~\citep{DBLP:journals/jmlr/RudinSD12} and that it is
ergodic~\citep[Section 9.1]{BreimanInfinite}, to our very pleasant surprise.  Indeed, we show that the Optimal AdaBoost update
can be uniformly approximated by a sequence of continuous maps, each
converging in finite time to a cycle of sets arbitrarily close to a cycle of length at
most $n$, and whose time averages of essentially any function with
certain type of local continuity properties also always converge.
  The construction we design here provides a constructive proof of
  exitence of an ergodic invariant measure and reveals that the actual Optimal
AdaBoost update exhibits the same cycling and ergodic properties if $\A$ satisfies certain ``non-expansion''
property.

\subsection{Optimal AdaBoost cycling behavior}
\label{sec:cycle}

In this subsection, we consider approximations of Optimal AdaBoost and
show that they all exhibit cycling behavior.

\subsubsection{Finite-precision (Discretized) Optimal AdaBoost converges to a cycle}

Let us begin by using a simple approximation of Optimal AdaBoost
based on a discretization of $\Delta_m$ induced by a discretization of
each $\pi(\eta)$ for all $\eta \in \Mcal$. Despite its simplicity,
this construction reveals key insights into establishing cycling
behavior in the other more sophisticated constructions we pursue later
in this section. It also serves as a nice warmup to those more
sophisticated constructions.

\begin{definition}{\bf (Finite-Precision Optimal AdaBoost)}
  \label{def:FPAB}
  Let $\tau > 0$. Define \[ \approxpi^\tau(\eta) \equiv \{ w^{\tau,\eta,1},
w^{\tau,\eta,2}, \ldots, w^{\tau,\eta,N_{\tau,\eta}}\} \subset
\pi^\circ(\eta) \] as a finite set of points, of
minimal cardinality $N_{\tau,\eta}$, that are ``uniformly
distributed'' over $\pi(\eta)$ such that the sets \[ R_{\tau,\eta,j}
  \equiv \left\{ w \in \pi(\eta) \, \left| \, 
d(w,w^{\tau,\eta,j}) = \min_{j' \in [N_{\tau,\eta}]}
d(w,w^{\tau,\eta,j'}) \right. \right\} \] form a covering of $\pi(\eta)$ (i.e., $\pi(\eta) =
\bigcup_j R_{\tau,\eta,j}$) with the following property.
$\mathrm{diam}(R_{\tau,\eta,j}) \equiv \sup \{ d(w,w^{\tau,\eta,j}) |
  w \in R_{\tau,\eta,j} \} < \tau$.  Let $R^\circ_{\tau,\eta,j} \equiv
\mathrm{Int}(R_{\tau,\eta,j})$. Note from the construction of the
covering that $R^\circ_{\tau,\eta,j} \cap
R^\circ_{\tau,\eta,j} = \emptyset$ for all $(j,j'), j\neq j'$. 
Define \[ \approxDelta_m^\tau \equiv \bigcup_{\eta \in \Mcal}
\approxpi^\tau(\eta) \] the resulting discretization of
$\Delta_m$. For every $(\tau,\eta)$, impose a fixed, but arbitrary preference
order on the sets $(R_{\tau,\eta,j})$ such that $j \succ j'$, for $j
\neq j'$, indicates that the set with index $j$ is preferred to that
with index $j'$, and let 
$R^*_{\tau,\eta,j} \equiv R_{\tau,\eta,j} - \bigcup_{j' \succ j}
\left( R_{\tau,\eta,j'} \bigcap R_{\tau,\eta,j} \right)$.
Note
that the sets $R^*_{\tau,\eta,j}$, for $j =
1,2,\ldots,N_{\tau,\eta}$, form a partition of $\pi(\eta)$ (i.e.,
$R^*_{\tau,\eta,j} \bigcap R^*_{\tau,\eta,j'} = \emptyset$, for all $(j,j'), j\neq
j'$, and $\pi(\eta) = \bigcup_j R^*_{\tau,\eta,j}$).  Because we
will be considering a sequence of approximations produced by a strictly
monotonically decreasing sequence
$(\tau_k)$, for technical
reasons, we require that $\pi^{\tau_k}(\eta) \subset
\pi^{\tau_{k+1}}(\eta)$. We can achieve that by employing a kind of hierarchical
discretization: i.e., to obtain a finer discretization, discretize each
$R_{\tau_k,\eta,j}$ individually, by adding new points in its interior
to the discretization, so that we obtain 
$\pi^{\tau_{k+1}}(\eta)$ after inserting those new points leading to the finer-grain
discretization into $\pi^{\tau_k}(\eta)$.
Now define a projection operator $\Proj_{\approxDelta_m^\tau} : \Delta_m \to
\approxDelta_m^\tau$ such that $\Proj_{\approxDelta_m^\tau}(w) \equiv w^{\tau,\eta,j}$
if $w \in R^*_{\tau,\eta,j}$,
and define the map 
$\approxA_{\tau,\mathrm{disc}} : \Delta_m \to \Delta_m$ such
that $\approxA_{\tau,\mathrm{disc}}(w) \equiv \Proj_{\approxDelta_m^\tau}(\A(w))$. Let $w_1 = \Proj_{\approxDelta_m^\tau}(w)$
for some $w \in \Delta_m^\circ$, so that effectively
$\approxA_{\tau,\mathrm{disc}}$ is a map from $\approxDelta_m^\tau$ to
itself. Let us call the corresponding
algorithm {\em $\tau$-Finite-Precision Optimal AdaBoost}, given that this is
how AdaBoost would essentially behave in a finite-precision computer.
\end{definition}
The following follows from the construction
just described.
\begin{theorem}{\bf (Properties of Finite-Precision Optimal
    AdaBoost.)}
  \label{thm:FPAB}
Suppose Conditions~\ref{cond:nathypo} (Natural Weak-Hypothesis
Class) and ~\ref{assume:WeakLearn} (Weak Learning) hold.
\begin{enumerate}
\item {\bf (Approximates Optimal AdaBoost)} The sequence
  of functions $(\approxA_{\tau,\mathrm{disc}})$ converges to $\A$ on $\Delta_m$.
\item For every
 $\tau > 0$, starting from any $w_1 = \Proj_\tau(w)
 \in
\approxDelta_m^\tau$ for some $w \in \Delta_m^\circ$, the following
holds. 
Let $\approxA_\tau \equiv \approxA_{\tau,\mathrm{disc}}$. Let $w_{t+1} \equiv
\approxA_\tau^{(t)}(w_1)$ and $\eta_t \equiv
\eta^{w_t}$ for all $t$, and consider the sequence of
example weights $(w_t)$.
\begin{enumerate}
\item {\bf (Never has Ties)} $w_t \in
\pi^\circ\left(\eta_t \right)$ for all
$t$.
 \item {\bf (Always Converges to a Cycle)} The sequence $(w_t)$
   converges in finite time to a
cycle in $\approxDelta_m^\tau$, with the precise cycle depending on $w_1$.
\item {\bf (Is Always Ergodic)} Let $T_\tau \equiv T_1(w_1,\tau)$ be the first time
  that the sequence $(w_t)$ enters a cycle of period $p_\tau \equiv
  p(w_1,\tau)$, $1 < p_\tau \leq n$, and define
  \begin{align*}
    \widehat{\mu}^\tau \equiv \widehat{\mu}_{\mathrm{disc},w_1}^\tau \equiv \frac{1}{p_\tau} \sum_{s=0}^{p_\tau-1}
    \delta_{w_{T_\tau+s}} = \widehat{\mu}_{\approxA_\tau,w_1} \; . 
  \end{align*}
 The dynamical system $(\approxDelta_m^\tau,
 2^{\approxDelta_m^\tau}, \approxA_\tau,\widehat{\mu}^\tau)$ corresponding to $\tau$-Finite-Precision
 Optimal AdaBoost is ergodic.
 \item {\bf (Time Averages 
    Always Converge)} The time average $\Tavg(f,w_1,\approxA_{\tau},T)$ of any function $f$ over the elements of the
  sequence $(w_t)$ converges to \(  \frac{1}{p_\tau} \sum_{s=0}^{p_\tau-1}
    f(w_{T_\tau+s}) \; . \)
\end{enumerate}
\end{enumerate}
\end{theorem}
\begin{proof}
Part 1 follows because $(\approxDelta_m^\tau)$ converges to $\Delta_m$
and $\approxA_{\tau,
    \mathrm{disc}} = \A$ on $\approxDelta_m^\tau$. Part 2.a follows
  immediately from the construction: $\approxDelta_m^\tau \subset
  \pi^\circ(\eta)$ and only points in $\approxDelta_m^\tau$ are
  visited. Part 2.b follows by the Pigeonhole Principle: the map is
  deterministic and only a finite number of points can be visited so
  that $(w_t)$ enters a cycle as soon as a point is revisited. Part
  2.c follows from 2.b and the definition of ergodicity
  (Definition~\ref{def:ergodic}): the empirical measure is a probability mass
  function with positive mass only on the elements of the cycle and 
  the cycle is an invariant set. Part 2.d follows from 2.b and
  Proposition~\ref{pro:cycles}.\qed
\end{proof}

\subsubsection{Almost uniform approximations of Optimal AdaBoost by step
  functions that converge to a cycle}

The approximation leading to the $\tau$-Finite-Precision Optimal
AdaBoost update is not uniform. An alternative is to consider an
approximation of Optimal AdaBoost that is \emph{Lebesgue-almost} uniform and consistent with
an infinite precision computational model. In such a model, we would
be considering example weights in the set $\Delta_m \bigcap \Q^m$, where
$Q$ denotes the set of rational numbers.
\begin{definition}
We say an example weight $w$ is \emph{rational} if all its components $w(i)$
are rational numbers: i.e., $w \in \Delta_m \bigcap \Q^m$. We say that $w$ is \emph{irrational} if at least one
component is an irrational number: i.e., $w \in \Delta_m - \Q^m$. 
\end{definition}
Starting from a rational $w_1$, each example weight $w_t$ that 
Optimal AdaBoost generates
would also be rational.
Note
however that this may not be true in the limit: i.e., the
sequence of example-weights $(w_t)$ may converge to a set of
irrational weights; said differently, the set $\Omega_\infty^+$ may consist
of irrational weights entirely. In fact, this always happens for $m=3$
examples, where every mistake matrix is isomorphic to the $3\times 3$
identity matrix: the system globally converges to one of two cycles of
period $3$, each cycle consisting of $3$ irrational weights, with two
of their components involving the golden ratio (see Appendix~\ref{app:ident}).

In the infinite-precision computational model, the strictly
monotonically decreasing sequence
$(\tau_k)$ used for the covering of $\Delta_m$ should consist of a rational number. For example, we can let
$\tau_k = 1/k$.
\begin{definition}{\bf (Step-Function Optimal AdaBoost)}
  \label{def:SFAB}
  Let $(\Delta_m,d)$ be a metric space and $\Sigma_{\Delta_m}$ be the
  Borel $\sigma$-algebra, with respect to $(\Delta_m,d)$.  Consider
  the measure space
  $(\Delta_m,\Sigma_{\Delta_m},\bar{\mu}_{\mathrm{Leb}})$, where
  $\bar{\mu}_{\mathrm{Leb}}$ is the uniform Borel probability measure:
  i.e., if $\mu_{\mathrm{Leb}}$ is the (standard) Lebesgue/Borel
  measure,
  $\bar{\mu}_{\mathrm{Leb}}(\Wcal) \equiv \mu_{\mathrm{Leb}}(\Wcal) /
  \mu_{\mathrm{Leb}}(\Delta_m)$ for each measurable set
  $\Wcal \in \Sigma_{\Delta_m}$. Assume, without loss of generality,
  that $\pi^*(\eta) \neq \emptyset$; otherwise $\eta$ is ``dominated''
  or ``consistently non-preferred'' and therefore Optimal AdaBoost
  would never select it.  For each $\eta \in \Mcal$, let
  $\omega^\eta \in \pi^\circ(\eta)$ be a ``centroid'' of $\pi(\eta)$: i.e.,
  \[ \omega^\eta \in \argmax_{w \in \pi(\eta)}
  d(w,\mathrm{Bnd}(\pi(\eta))) \equiv \sup \{ d(w,w') | w' \in
  \mathrm{Bnd}(\pi(\eta)) \} \; . \] 
Let $\theta > 0$ and define, for each $\eta \in \Mcal$, the set $\pi_{1-\theta}(\eta) \equiv
  \{ (1-\theta) w + \theta \omega^\eta | w \in \pi(\eta) \}$, which is a compact and
  convex subset of $\pi^*(\eta)$, such that
$\bar{\mu}_{\mathrm{Leb}}(\pi(\eta) - \pi_{1-\theta}(\eta)) <
\frac{\theta}{n}$ (i.e., $\pi_{1-\theta}(\eta)$ is a ``slight shrinking''
of $\pi^*(\eta)$). The reasoning behind this slight shrinking is that
the Optimal AdaBoost update is absolutely continuous on each $
\pi_{1-\theta}(\eta)$, something that may not be possible on each
$\pi^+(\eta)$. aAs we will see later, it turns that we do not need this under
Condition~\ref{C:NoTies2} (No Ties Eventually) if we start the
construction after $T$ rounds of AdBoost, where $T$ is the finite
round when ties are guarantee not to appear anymore as described in
that condition. 
  Let $\pi_{1-\theta}^\circ \equiv
\Int{\pi_{1-\theta}(\eta)}$ and $\Delta_m^{1-\theta} \equiv \bigcup_{\eta \in \Mcal}
\pi_{1-\theta}(\eta)$.
  The construction will now
  proceed as in that given on the previous definition
  (Definition~\ref{def:FPAB}, Finite-Precision Optimal AdaBoost), \emph{except}
  that it is done over $\pi_{1-\theta}(\eta)$, \emph{instead of}
  $\pi(\eta)$. Thus, we slightly abuse notation throughout the
  construction so that the construction parallels that given in
  Definition~\ref{def:FPAB}. Let $\tau > 0$. To simplify notation, and
  improve the presentation, let $w^{\eta,j} \equiv
  w^{(\tau,1-\theta),\eta,j}$ and $N_\eta \equiv N_{(\tau,1-\theta),\eta}$. Define
  \[ \approxpi^\tau_{1-\theta}(\eta) \equiv \{ w^{\eta,1},
    w^{\eta,2}, \ldots, w^{\eta,N_\eta}\} \subset
    \pi^\circ_{1-\theta}(\eta) \] as a finite set of points, of minimal
  cardinality $N_\eta$, that are ``near-uniformly distributed'' over
  $\pi_{1-\theta}(\eta)$ such that the sets
  \[ R_{\eta,j} \equiv R_{(\tau,1-\theta),\eta,j} \equiv \left\{w \in
      \pi_{1-\theta}(\eta) \, \left| \, 
      d(w,w^{\eta,j}) = \min_{j' = 1,2,\ldots, N_\eta}
      d(w,w^{\eta,j'}) \right. \right\} \] form a covering of
  $\pi_{1-\theta}(\eta)$ (i.e., $\pi_{1-\theta}(\eta) = \bigcup_{j =
    1}^{N_\eta} R_{\eta,j}$)
  satisfying 
\[ \mathrm{diam}(R_{\eta,j}) < \tau . \]
  In an attempt to simplify the notation in the presentation of the
  construction, let 
  $R^\circ_{\eta,j} \equiv R^\circ_{(\tau,1-\theta),\eta,j}  \equiv
  \mathrm{Int}(R_{\eta,j})$ and note that $R^\circ_{\eta,j} \cap R^\circ_{\eta,j} = \emptyset$
  for all $(j,j'), j\neq j'$, by construction. 
  Define \[ \approxDelta_m^{1-\theta} \equiv \approxDelta_m^{(\tau,1-\theta)} \equiv \bigcup_{\eta \in \Mcal}
\approxpi^\tau_{1-\theta}(\eta) \] the resulting discretization of
$\Delta_m^{1-\theta} \equiv \bigcup_{\eta \in \Mcal} \pi_{1-\theta}(\eta)$. For every $(\theta,\tau,\eta)$, impose a fixed, but arbitrary preference
order on the sets $(R_{(\tau,1-\theta),\eta,j})$ such that $j \succ j'$, for $j
\neq j'$, indicates that the set with index $j$ is preferred to that
with index $j'$, and let 
\[
R^*_{\eta,j}  \equiv R^*_{(\tau,1-\theta),\eta,j} \equiv R_{(\tau,1-\theta),\eta,j} - \bigcup_{j' \succ j}
\left( R_{(\tau,1-\theta),\eta,j'} \bigcap R_{(\tau,1-\theta),\eta,j}
\right). \]
Note
that the sets $R^*_{\eta,j}$, for $j =
1,2,\ldots,N_\eta$, form a partition of
$\pi_{1-\theta}(\eta)$ (i.e.,
$R^*_{\eta,j} \bigcap R^*_{eta,j'} = \emptyset$, for all $(j,j'), j\neq
j'$, and $\pi_{1-\theta}(\eta) = \bigcup_{j = 1}^{N_\eta} R^*_{\eta,j}$).
Because we
will be considering a sequence of approximations produced by related strictly
monotonically decreasing sequences $(\theta_k)$ and $(\tau_k) \equiv (\tau_k(\theta_k))$, for technical
reasons, we require that $\approxpi^{\tau_k}_{1-\theta_k}(\eta) \subset
\approxpi^{\tau_{k+1}}_{1-\theta_{k+1}}(\eta)$. Note from the
construction that for each
$\eta \in \Mcal$, we have $\pi_{1-\theta_k}(\eta) \subset
\pi_{1-\theta_{k+1}}(\eta)$, so that $\pi_{1-\theta_k}^\circ(\eta) \subset
\pi_{1-\theta_{k+1}}^\circ(\eta)$.  We can achieve the desired construction by employing a kind of hierarchical
discretization: i.e., to obtain a finer discretization, discretize each
$R_{(\tau_k,1-\theta_k),\eta,j}$ individually, by adding new points in its interior
to the discretization, so that we obtain 
$\pi^{\tau_{k+1}}_{1-\theta_{k+1}}(\eta)$ after inserting those new points leading to the finer-grain
discretization into $\pi^{\tau_k}_{1-\theta_k}(\eta)$.
Define $\Proj_{\pi_{1-\theta}(\eta)} : \pi^*(\eta) \to
\pi_{1-\theta}(\eta)$ such that, for all $w \in \pi^*(\eta)$,
\[
  \Proj_{\pi_{1-\theta}(\eta)}(w) \equiv \begin{cases}
    w, & \text{if $w \in \pi_{1-\theta}(\eta)$,}\\
    (1-\theta) w + \theta \omega^\eta, & \text{otherwise.}
    \end{cases}
  \]
  Define $\approxA_{(\tau,1-\theta),\mathrm{step}} : \Delta_m \to
  \Delta_m$ such that for all $w
  \in \Delta_m$ we have
  \[ \approxA_{(\tau,1-\theta),\mathrm{step}}(w) \equiv
    \A(\Proj_{\pi_{1-\theta}(\eta^w)}(w)) \]
(i.e., $\approxA_{(\tau,1-\theta),\mathrm{step}}$ is a ``slightly squeezed'' version of $\A$ on each $\pi^*(\eta)$).
\end{definition}
\begin{proposition}
  \label{pro:unifcont}
For each $\theta > 0$, $\A$ is uniformly
continuous on $\pi_{1-\theta}(\eta)$ for each $\eta \in \Mcal$. Thus,
the same holds for $\approxA_{(\tau,1-\theta),\mathrm{step}}$ on each $\pi^*(\eta)$.
\end{proposition}
\begin{proof}
The first statement follows from the Uniform Continuity Theorem
because $\A$ is continuous on $\pi^\circ(\eta)$ and 
$\pi_{1-\theta}(\eta)$ is a compact subset of $\pi^\circ(\eta)$. The
result for $\approxA_{(\tau,1-\theta),\mathrm{step}}$ follows
immediatly from that for $\A$ by noting that
$\Proj_{\pi_{1-\theta}(\eta)}$ is continuous on $\pi^*(\eta)$ for each $\eta$.
  \qed
\end{proof}
\begin{theorem}{\bf (Properties of Step-Function Optimal AdaBoost.)}
  \label{thm:SFAB}
Suppose Conditions~\ref{cond:nathypo} (Natural Weak-Hypothesis
Class) and ~\ref{assume:WeakLearn} (Weak Learning) hold. 
\begin{enumerate}
\item {\bf (Almost Uniformly Approximates Optimal AdaBoost)}
  The Optimal AdaBoost update $\A$ can be Lebesgue-almost uniformly
  approximated on $\Delta_m$
  by step functions: i.e., for each $\tau > 0$, there exists $\tau_*
  \equiv \tau_*(\tau)$, $0 < \tau_* < \tau'$, such that for 
 each function in the sequence $(\approxA_{(\tau_*,1-\tau),\mathrm{step}})$
 uniformly approximates $\A$ on $\Delta_m^{1-\tau}$ to within $\tau$ and 
 $\bar{\mu}_{\mathrm{Leb}}(\Delta_m - \Delta_m^{1-\tau}) < \tau$.
\item Starting from
  any $w_1 \in \Delta_m^\circ$ and for every
 $\tau > 0$, there exists $\tau_*
  \equiv \tau_*(\tau)$, $0 < \tau_* < \tau'$ such that the following holds. 
Let $\approxA_\tau \equiv \approxA_{(\tau_*,1-\tau),\mathrm{step}}$, $w_{t+1} \equiv
\approxA_\tau^{(t)}(w_1)$ and $\eta_t \equiv
\eta^{w_t}$ for all $t$, and consider the sequence of
example weights $(w_t)$. 
\begin{enumerate}
\item {\bf (Never Has Ties)} $w_{t+1} \in
\pi^\circ\left(\eta_{t+1}\right)$ for all
$t$.
\item {\bf (Always Converges to a Cycle)} The sequence $(w_t)$
  converges in finite time to a
 cycle in $\approxDelta_m^{(\tau_*,1-\tau)} \subset \Delta_m^\circ$, the precise cycle depending
 on $w_1$.
\item {\bf (Is Always Ergodic)} Let $T_\tau \equiv T_1(w_1,\tau)$ be the first time
  that the sequence $(w_t)$ enters a cycle of period $p_\tau \equiv
  p_{(\tau_*,1-\tau)}(w_1)$, $1 < p_\tau \leq n$, and define
  \begin{align*}
    \widehat{\mu}^\tau \equiv \widehat{\mu}_{\mathrm{step},w_1}^{(\tau_*,1-\tau)} \equiv \frac{1}{p_\tau} \sum_{s=0}^{p_\tau-1}
    \delta_{w_{T_\tau+s}} = \widehat{\mu}_{\approxA_\tau,w_1} \; . 
  \end{align*}
 The dynamical system $(\Delta_m,
 \Sigma_{\Delta_m},
 \approxA_\tau, \widehat{\mu}^\tau)$
 is ergodic.
 \item {\bf (Time Averages Always Converge)} The time average $\Tavg(f,w_1,\approxA_\tau,T)$ of
 any function $f$ based on the sequence $(w_t)$ converges \( \frac{1}{p_\tau} \sum_{s=0}^{p_\tau-1}
    f(w_{T_\tau+s}) \; . \)
\end{enumerate}
\end{enumerate}
\end{theorem}
\begin{proof}
For Part 1, pick $\tau' > 0$. Set $\theta = \tau'$. By properties of
Lebesgue measure and the construction, we
have
\begin{align*}
\bar{\mu}_{\mathrm{Leb}}(\Delta_m - \Delta_m^{1-\tau'}) &= \bar{\mu}_{\mathrm{Leb}}(\cup_{\eta \in \Mcal} (\pi^*(\eta)
  - \pi_{1-\tau'}(\eta)) = \sum_{\eta \in \Mcal} \bar{\mu}_{\mathrm{Leb}}(\pi^*(\eta)
           - \pi_{1-\tau'}(\eta)) \\
  &\leq \sum_{\eta \in \Mcal} \bar{\mu}_{\mathrm{Leb}}(\pi(\eta)
  - \pi_{1-\tau'}(\eta)) < \sum_{\eta \in \Mcal} \frac{\tau'}{n} =
  \tau' \; .
\end{align*}
By Proposition~\ref{pro:unifcont}, for each $\eta$, we can find 
$\tau_\eta \equiv \tau_\eta(\tau') > 0$ such that, letting $N_\eta
\equiv N_{(\tau_\eta,1-\tau'),\eta}$, for each $j = 1, 2, \ldots,
N_\eta$, and letting $R^*_{\eta,j} \equiv
R^*_{(\tau_\eta,1-\tau'),\eta,j}$, for each $w,w' \in
R^*_{\eta,j}$, we have \[ d(\A(w),\A(w')) < \tau' \; . \] Let
$\tau_* \equiv \tau_*(\tau') \equiv \min(\tau',\min_{\eta \in \Mcal} \tau_\eta)$,
so that for each $j$ and each $w,w' \in
R^*_{\eta,j}$, we still have \[ d(\A(w),\A(w')) <
\tau' \] because $d(w,w') < \mathrm{diam}(R^*_{\eta,j}) < \tau_* \leq \tau_\eta$.
For each $w \in \Delta_m^{1-\tau'}$, letting $(\eta,j)$ 
such that $w \in R^*_{\eta,j}$, $w^{\eta,j} \equiv
w^{(\tau_*,1-\tau'),\eta,j}$, and $\approxA \equiv \approxA_{(\tau_*,1-\tau'),\mathrm{step}}$, we have
\begin{align*}
d(\approxA(w),\A(w)) 
&=
                                                        d(\A(\Proj_{\pi_{1-\tau'}(\eta)}(w)),\A(w))
  \\
&=
  d(\A(w^{\eta,j}),\A(w)) < \tau' \; ,
\end{align*}
which completes the proof of Part 1. The proof for Part 2 is
essentially identical to that of Theorem~\ref{thm:FPAB}.
  \qed
\end{proof}

\subsubsection{Almost uniform approximations of Optimal AdaBoost by continuous
  functions that converge to a cycle of arbitrarily small sets}

We now consider Lebesgue-almost uniform approximations of $\A$ on $\Delta_m$ by continuous
functions. This would yield a sufficient condition for the convergence
of the time averages induced by $\A$ itself.
\begin{definition}{\bf (Continuous-Function Optimal AdaBoost)}
  \label{def:CFAB}
  Let $\rho \in (0,1)$.
  Define
  \[ \bar{\A} \equiv \bar{\A}_{(\tau,1-\theta, 1-\rho)} \] of type $\Delta_m \to \Delta_m$ as a ``slightly squeezed'' version
  of $\A$ whenever \[ \A(w^{\eta,j}) \in
  \mathrm{Bnd}(R_{\eta',j'}) \] for some $(\eta,j)$ and $(\eta',j')$:
i.e., for each $w \in \Delta_m$, letting $\eta \equiv \eta^w$, define
\begin{align*}
\bar{\A}(w) & \equiv
                       \A((1-\rho) w + \rho
                       w^{\eta,j})\\
  & \text{ if } w \in
        R_{\eta,j}^*,\\ &  w' \equiv \A(w^{\eta,j}) \in
                                            R_{\eta',j'}^*,                                            
                                            \eta' =
                                            \eta^{w'}, \text{ and }\\
  & w' \in
    \mathrm{Bnd}(R_{\eta',j'}) \; ;
\end{align*}
and $\bar{\A}(w) \equiv
  \A(w)$ otherwise.
  The rationale behind this slight squeeze is to make sure that we can
  continuously map every element in $R^*_{\eta,j}$ to exactly one
  $R^*_{\eta',j'}$. We can avoid this slight squeeze if we could
  guarantee that every $A(w^{\tau,j})$ always falls in the interior of
  some $R^*_{\eta',j'}$. 
 (As an aside, before continuing with the construction, we note that while $\rho$ plays a very important conceptual role in the
 construction, it turns out that the exact value of $\rho$ does not
 really matter for
  the proofs to go through, we want it to be as small as
  possible, and ideally set to zero. For the purpose of presenting the
  construction, we can simply set $\rho = \tau$, for example. Hence,
  from now on, we avoid adding $\rho$ to our notation, except on those
  cases where it seems important not to forget its dependency.)
   Note that now, for $(\eta,j)$, we have
   \begin{align}
     \label{eqn:intball}
     w' \equiv \bar{\A}(w^{\eta,j})
  \in R^\circ_{\eta',j'}
\end{align}
for $\eta' = \eta^{w'}$  
and some $j'$.  For any $\lambda \in [0,1)$, denote
by \[ \lambda R_{\eta,j}\equiv \lambda R_{(\tau,1-\theta),\eta,j} \equiv \{ \lambda w + (1-\lambda)
w^{\eta,j} | w  \in R_{\eta,j} \} \; .
\]
For completeness, for
$\lambda = 1$, let $\lambda R_{\eta,j} \equiv \lambda R_{(\tau,1-\theta),\eta,j} \equiv 1 R_{\eta,j}
\equiv R_{\eta,j}^*$. Note that \[ \lambda
R_{\eta,j}  \subset R_{\eta,j} \] because $R_{\eta,j}$ is
compact and convex; it is also non-empty. By
Proposition~\ref{pro:unifcont} and Equation~\ref{eqn:intball}, for
each $\theta > 0$, $\rho > 0$, and $\tau > 0$ there exists $\lambda \equiv
\lambda(\theta,\rho,\tau) > 0$
with the following property: for all $(\eta,j)$,  
there exists an
open neighborhood ``ball'' $B(w^{\eta,j}, \lambda)\equiv B(w^{(\tau,1-\theta),\eta,j}, \lambda)
\equiv \{ w \in R^*_{\eta',j'} | d(w,w') < \lambda
\}$ of $w'$ of ``radius'' $\lambda > 0$ such that 
$\bar{\A}_{(\tau,1-\theta,1-\rho)}(w) \in B(w^{\eta,j}, \lambda)
\subset R^\circ_{\eta',j'}$ for all $w \in \lambda
R_{\eta,j}$. Define a non-negative real-valued function $r_{\eta,j}^{\mathrm{in}}  \equiv
r_{(\tau,1-\theta),\eta,j}^{\mathrm{in}}$ of type $R^*_{\eta,j} \to
[0,\infty)$ such that $r_{\eta,j}^{\mathrm{in}}(w) \equiv
d(w,w^{\eta,j})$. Similarly, define \[
  r_{\eta,j}^{\mathrm{out}} \equiv r_{(\tau,1-\theta),\eta,j}^{\mathrm{out}} : R^*_{\eta,j} \to
[0,\infty) \] such that \[ r_{\eta,j}^{\mathrm{out}}(w) \equiv
\sup \{ d(w,w') | w' = \lambda w  + (1-\lambda) w^{\eta,j} \in
R^*_{\eta,j} \text{ for some } \lambda > 0 \}\; . \] Let
\[ r_{\eta,j}^{\mathrm{prop}} (w) \equiv r_{(\tau,1-\theta),\eta,j}^{\mathrm{prop}} (w) \equiv \frac{r_{\eta,j}^{\mathrm{in}}(w)}{r_{\eta,j}^{\mathrm{in}}(w)
  + r_{\eta,j}^{\mathrm{out}}(w)} \; . \]
Define $\approxA_{(\tau,1-\theta, 1-\rho),\mathrm{cont}} : \Delta_m \to \Delta_m$ such that for all $w
\in \Delta_m$, letting $(\eta,j)$ be such that $w \in
R^*_{(\tau,1-\theta),\eta,j}$ and $\lambda' \equiv \lambda'(w,\theta,\tau) \equiv
r_{(\tau,1-\theta),\eta,j}^{\mathrm{prop}} (w) \lambda(\theta,\tau)$,
we have \[ \approxA_{(\tau,1-\theta, 1-\rho),\mathrm{cont}}(w) \equiv \bar{\A}_{(\tau,1-\theta,1-\rho)}(\lambda' w
+ (1-\lambda') w^{(\tau,1-\theta),\eta,j}) \; . \]
\end{definition}

\begin{condition}{\bf (AdaBoost is Sufficiently Non-Expansive, With
    Respect to the Discretization.)}
  \label{cond:containment}
Suppose we allow $\theta = 0$ in the construction so that
$\pi_{1-\theta}(\eta) = \pi(\eta)$, $w^{(\tau,1-\theta),\eta,j} = w^{\tau,\eta,j}$, $\approxpi^\tau_{1-\theta}(\eta)
= \approxpi^\tau(\eta)$, and $R^*_{(\tau,\theta),
  \eta,j} = R^*_{\tau,
  \eta,j}$ for each $(\eta,j)$, as in the discretization for $\tau$-Finite-Precision Optimal
  AdaBoost (Definition~\ref{def:FPAB}). There exists $\tau^* > 0$ with
  the following property: 
for all $\tau > 0, \tau < \tau^*$, there exists a positioning of each
point $w^{\tau,\eta,j}$ in the
  discretization induced by $\approxpi^\tau(\eta)$ on
  $\pi(\eta)$ such that for each pair
$(\eta,j)$, we have $\A(R^*_{\tau,
  \eta,j}) \in R^*_{\tau,\eta',j'}$, where $\eta' =
\eta^{w^{\tau,\eta,j}}$ and $j'$ is such that $\A(w^{\tau,\eta,j}) \in
R^*_{\tau,\eta',j'}$.
\end{condition}

\begin{definition}
  \label{def:C_Mcal}
Denote by $C_\Mcal$ the set of all functions $f$ of type $\Delta_m \to
\R$
that are 
continuous on each
$\pi^*(\eta)$ individually when
viewed as a function of type
$\pi^*(\eta) \to \R$
for all $\eta \in \Mcal$.~\footnote{Technically, the functions only
  need to be continuous on each $\pi^*(\eta) \cap \cup_{\eta'}
  \pi_{\frac12}(\eta')$ individually.}
\end{definition}

\begin{theorem}{\bf (Properties of Continuous-Function Optimal AdaBoost.)}
 \label{thm:CFAB}
Suppose Conditions~\ref{cond:nathypo} (Natural Weak-Hypothesis
Class) and ~\ref{assume:WeakLearn} (Weak Learning) hold. 
\begin{enumerate}
\item {\bf (Uniformly Approximates Optimal AdaBoost)}
 The Optimal AdaBoost update $\A$ can be Lebesgue-almost uniformly
  approximated on $\Delta_m$
  by continuous functions: i.e., for each $\tau > 0$, there exists $\tau_*
  \equiv \tau_*(\tau)$, $0 < \tau_* < \tau'$, such that for 
 each function in the sequence $(\approxA_{(\tau_*,1-\tau,1-\tau),\mathrm{cont}})$
 uniformly approximates $\A$ on $\Delta_m^{1-\tau}$ to within $\tau$ and 
  $\bar{\mu}_{\mathrm{Leb}}(\Delta_m - \Delta_m^{1-\tau}) < \tau$.

\item Starting from
  any $w_1 \in \Delta_m^\circ$ and for every
 $\tau > 0$, there exists $\tau_*
  \equiv \tau_*(\tau)$, $0 < \tau_* < \tau'$ such that the following holds. 
Let $\approxA_\tau \equiv \approxA_{(\tau_*,1-\tau,1-\tau),\mathrm{cont}}$, $w_{t+1} \equiv
\approxA_\tau^{(t)}(w_1)$ and $\eta_t \equiv
\eta^{w_t}$ for all $t$, and consider the sequence of
example weights $(w_t)$.  
\begin{enumerate}
\item {\bf (Never Has Ties)} $w_{t+1} \in
\pi^\circ\left(\eta_{t+1}\right)$ for all
$t$.
\item {\bf (Converges to a Cycle of Arbitrarily Small Sets Containing a Cycle)}
The sequence $(w_t)$
 converges in finite time to a
 cycle of sets
 $(R_{(\tau_*,1-\tau'),\eta^{(s)},j^{(s)}})_{s=1,2,\ldots,p}$
of period $p_\tau \equiv p_{(\tau_*,1-\tau',1-\tau)}(w_1)$, $1 <
p_\tau \leq n$, 
 containing a cycle $(\omega^{(s)})_{s=1,2,\ldots,p}$, in the
 partition of $\approxDelta_m^{1-\tau}$ induced by
 $\approxDelta_m^{(\tau_*,1-\tau)}$, so that
 each set indexed by $s$ in the cycle has Lebesgue measure 
 \[ \bar{\mu}_{\mathrm{Leb}}(R_{(\tau_*,1-\tau'),\eta^{(s)},j^{(s)}})
   < \tau_*^{m-1} \; . \] 
The partition, the precise cycle of sets, the cycle it contains and its period, and the
Lebesgue measure of each set in the cycle depend on $w_1$ and $\tau$.
 \item {\bf (Is Always Ergodic)}  Let $\widehat{\mu}^\tau_{w_1,T} \equiv
  \widehat{\mu}_{\approxA_{(\tau_*,1-\tau,1-\tau),\mathrm{cont}},w_1}^{(T)}$ for all $T$. 
\begin{enumerate}
\item The sequence of empirical measures
 $(\widehat{\mu}^\tau_{w_1,T})_T$ converges to
$\widehat{\mu}^\tau_{w_1} \equiv \widehat{\mu}_{\approxA_\tau,w_1}$
 (i.e.,
 the Birkhoff limit exists) and
\item The dynamical system $(\Delta_m,
 \Sigma_{\Delta_m},
 \approxA_\tau, \widehat{\mu}_{w_1}^\tau)$
 is ergodic. 
\end{enumerate}
\item {\bf (Time Averages Converge)}
The time average $\Tavg(f,w_1,\approxA_{(\tau_*,1-\tau,1-\tau),\mathrm{step}},T)$ of
 any function $f \in C_\Mcal$ based on the sequence converges to \( \frac{1}{p_\tau} \sum_{s=0}^{p_\tau-1}
    f(\omega^{(s)}) \; . \) 
\end{enumerate}
The same
holds for the Optimal AdaBoost update $\A$ if, in addition,
 Condition~\ref{cond:containment} (Non-Expansive) holds.
\end{enumerate}
\end{theorem}
\begin{proof}
  The proof for Part 1 is almost identical to that of Theorem~\ref{thm:SFAB},
except for the last step: for each $w \in \Delta_m^{1-\tau'}$, letting $(\eta,j)$ 
such that $w \in R^*_{\eta,j}$, $w' \equiv
\lambda' w + (1-\lambda') w^{\eta,j} \in
R^*_{\eta,j}$, $\approxA \equiv
\approxA_{(\tau_*,1-\tau',1-\tau'),\mathrm{cont}}$, and $\bar{\A}
\equiv \bar{\A}_{(\tau_*,1-\tau',1-\tau')}$, we have
\begin{align*}
d(\approxA(w),\A(w)) =
  d(\bar{\A}(w'),\A(w)) \; .
\end{align*}
Let $\eta' \equiv \eta^{w'}$ and $j'$ such that $w' \in
R^*_{\eta',j'}$. If $\A(w') \not\in
\mathrm{Bnd}(R^*_{\eta',j'})$, then
\begin{align*}
d(\approxA(w),\A(w)) =
  d(\A(w'),\A(w)) < \tau' \; .
\end{align*}
Otherwise, $\A(w') \in
\mathrm{Bnd}(R^*_{\eta',j'})$ and, because $w''
\equiv (1-\rho) w' + \rho w^{(\tau_*,1-\tau'),\eta,j} \in
R^*_{\eta,j}$ too, we have
\begin{align*}
d(\approxA(w),\A(w)) =
  d(\A(w''),\A(w)) < \tau' \; .
\end{align*}

Part 2.a follows immediately from the construction.

For Part 2.b, we first note a few properties of the construction. Consider a
pair 
$(\eta,j)$ and $w \in R^*_{\eta,j}$.  
Let $\lambda
\equiv \lambda(\tau',\tau_*)$, $\lambda' \equiv \lambda'(w,\tau',\tau_*) \equiv
r_{(\tau_*,1-\tau'),\eta,j}^{\mathrm{prop}} (w)
\lambda \leq \lambda$. Thus, $w' \equiv \lambda' w + (1-\lambda')
w^{\eta,j} \in \lambda R_{\eta,j} \equiv \lambda
R_{(\tau_*,1-\tau'),\eta,j}$ 
and $\approxA(w) =
\bar{\A}(w') \in 
R^\circ_{\eta',j'}$. Hence, each pair $(\eta,j)$ maps
to 
exactly one $(\eta',j')$: i.e.,
$\approxA$ maps every $w \in
R^*_{\eta,j}$ maps
$R^\circ_{\eta',j'}$ only. By the Pigeonhole
Principle, $\approxA$ will
enter a cycle of closed sets
$(R_{\eta^{(s)},j^{(s)}})_{s=1,2,\ldots,p}$
of period $p \equiv p_{(\tau_*,1-\tau',1-\rho)}(w_1)$, $1 < p \leq n$. Hence, we can
view
$\approxA^{(p)}$
as a continuous mapping from $R_{\eta^{(1)},j^{(1)}}$
to itself. Because $R_{\eta^{(1)},j^{(1)}}$ is compact and convex, by the Brouwer Fixed-Point Theorem,
$\approxA^{(p)}$
has fixed point $\omega^{(1)} \equiv \omega^{(1)}_{(\tau_*,1-\tau')} \in
R_{\eta^{(1)},j^{(1)}}$, anchoring a cycle
$(\omega^{(s)})_{s=1,2,\ldots,p}$ such that $\omega^{(s)} \in
R_{\eta^{(s)},j^{(s)}}$ for all $s =
1,2,\ldots,p$. Because $\mathrm{diam}(R_{(\tau_*,1-\tau'),\eta,j}) <
\tau_*$, we have $\bar{\mu}_{\mathrm{Leb}}(R_{(\tau_*,1-\tau'),\eta,j})
\leq \tau_*^{m-1}$, because $m > 2$ for Optimal
AdaBoost to be consistent with Condition~\ref{assume:WeakLearn} (Weak Learning).

For Part 2.c, we note that a sufficient condition
for the convergence of the empirical measure is that the time
average of any continuous function exists (i.e., converges). 
Consider an arbitrary continuous real-valued function $f
: \Delta_m \to \R$. By the Uniform Continuity Theorem, $f$ is
uniformly continuous because $\Delta_m$ is compact. Pick $\tau' >0$
and set $\theta = \tau'$. Consider an $\eta
\in \Mcal$. Because $f$ is
uniformly continuous, we can find
$\tau_\eta \equiv \tau_\eta(\tau') > 0$, such that if $w,w' \in
\pi_{1-\tau'}(\eta)$ and $d(w,w') < \tau_\eta$ then
$|f(w)-f(w')| < \tau'$. Let $\tau_* \equiv \min(\tau',\min_{\eta \in \Mcal}
\tau_\eta)$. Let $w_1 \in \Delta_m^\circ$ be an arbitrary initial
point, $w_{t+1} \equiv \approxA^{(t)}(w_1)$,
and $p \equiv p_{(\tau_*,1-\tau',1-\rho)}(w_1)$, $1 < p \leq n$ be the period of the cycle
of sets
$(R_{\eta^{(T_1+s)},j^{(T_1+s)}})_{s=0,1,\ldots,p-1}$ 
that the sequence $(w_t)$ enters first
at time $T_1 \equiv T_1(w_1,\tau',\rho)$, so that
$w_{(T_{\tau}+t')} \in
R^*_{\eta^{(T_1+ (t' \mod p))},j^{(T_1+ (t' \mod p))}}$
for all $t' = 0,1,2,\dots$, which implies that
$|f(w_{(T_1+t')}) -
f(w_{(T_1+(t' \mod p_\tau))}) | < \tau'$ because
$d(w_{(T_1+t')} w_{(T_1+(t' \mod p))}) < \tau'$. Denote by
$\widehat{f}_{w_1} \equiv \widehat{f}_{(\tau_*,1-\tau',1-\rho)}(w_1) \equiv \frac{1}{p} \sum_{s=0}^{p-1}
f(w_{(T_1+s)})$.
Let
$L \equiv \lfloor \frac{T-T_1+1}{p} \rfloor$, $T' \equiv p L$, 
and $r \equiv T - (T_1 + T')$. 
The
time average of $f$ can be decomposed as follows.
\begin{align*}
\frac{1}{T} \sum_{t=1}^T f(w_t) &= \frac{1}{T}
                                                \sum_{t=1}^{T_1-1}
                                                f(w_t) +
                                                \frac{1}{T}
                                                \sum_{t=T_1}^T
                                                f(w_t)
\end{align*}
The first term can be upper-bounded as
\begin{align*}
\frac{1}{T} \sum_{t=1}^{T_1-1} f(w_t) 
  \leq \frac{T_1-1}{T} \max_{t=1,2,\ldots,T_1-1} f(w_t)
    \; ;
\end{align*}
the second term can be further decomposed as
\begin{align*}
  \frac{1}{T}
  \sum_{t=T_1}^T
                                                f(w_t) = \frac{1}{T}
                                                \sum_{t=T_1}^{T_1
  + T' - 1}
                                                f(w_t) + \frac{1}{T}
                                                \sum_{t=T_1+T'}^T
                                                f(w_t)
  \; .
\end{align*}
The first term in the last expression is 
\begin{align*}
\frac{1}{T}
                                                \sum_{t=T_1}^{T_1 + T' - 1}
  f(w_t)
  & = \frac{1}{T}
                                                \sum_{t'=0}^{T' - 1}
                                                f(w_{T_1
  + t'}) \\
  &< \frac{1}{T}
                                                \sum_{t'=0}^{T' - 1}
                                                \left( f(w_{T_1
  + (t' \mod p)}) + \tau' \right) \\
  & = \frac{L}{T}
                                                \sum_{s=0}^{p-1}
                                                \left(
    f(w_{T_1+s}) + \tau' \right) \\ &=
  \frac{p L}{T} \left( \widehat{f}_{w_1} + \tau' \right) 
\end{align*}
and 
the second term is
\begin{align*}
\frac{1}{T} \sum_{t=T_1+T'}^T f(w_t)
  &= \frac{1}{T} \sum_{s=0}^r
    f(w_{T_1+T'+s}) \\
  & < \frac{1}{T} \sum_{s=0}^r
    \left( f(w_{T_1+((T'+s) \mod p)}) + \tau' \right) \\
  & = \frac{1}{T} \sum_{s=0}^r
  \left( f(w_{T_1+s}) + \tau' \right)
  \\ &< \frac{p}{T}
  \left( \widehat{f}_{w_1}+ \tau' \right) \; ,
\end{align*}
so that
\begin{align*}
  \frac{1}{T}
  \sum_{t=T_1}^T
  f(w_t) & < \frac{p (L+1)}{T}
                                            \left(
                             \widehat{f}_{w_1} + \tau' \right) \\
                                          &< \frac{p(\frac{T-T_1+1}{p}+1)}{T}
                                            \left( \widehat{f}_{w_1} + \tau' \right) \\
  &= \widehat{f}_{w_1} + \tau' + \frac{1+p-T_1}{T}
  \left( \widehat{f}_{w_1} + \tau' \right) \; .
\end{align*}
Putting everything back together we obtain the following upper bound
on the time averages of $f$ starting from any $w_1 \in \Delta_m^\circ$:
\begin{align*}
\frac{1}{T} \sum_{t=1}^T f(w_t) &<
                                                                   \widehat{f}_{w_1} + \tau' 
                                                                   + \frac{1+p-T_1}{T}
  \left( \widehat{f}_{w_1} + \tau'  \right)+ 
                                                                   \frac{T_1-1}{T}
                                                                   \max_{t=1,2,\ldots,T_1-1}
                                                                   f(w_t)
                                                                   \; ,
\end{align*}
which implies
\begin{align*}
\limsup_{T \to \infty} \frac{1}{T} \sum_{t=1}^T
  f(w_t) \leq \widehat{f}_{w_1} + \tau'
  \; .
\end{align*}
An analogus derivation leads to the following lower bound:
\begin{align*}
\frac{1}{T} \sum_{t=1}^T f(w_t) &>
                                                                   \widehat{f}_{w_1}
                                                                   - \tau'
                                                                   + \frac{1-T_1}{T}
  \left( \widehat{f}_{w_1} - \tau' \right) + 
                                                                   \frac{T_1-1}{T}
                                                                   \min_{t=1,2,\ldots,T_1-1}
                                                                   f(w_t)
                                                                   \; ,
\end{align*}
which implies
\begin{align*}
\liminf_{T \to \infty} \frac{1}{T} \sum_{t=1}^T
  f(w_t) \geq \widehat{f}_{w_1} - \tau'
  \; .
\end{align*}
Hence, 
we have
\begin{align*}
\limsup_{T \to \infty} \frac{1}{T} \sum_{t=1}^T
  f(w_t) - \liminf_{T \to \infty} \frac{1}{T} \sum_{t=1}^T
  f(w_t) \leq 2 \tau'
  \; .
\end{align*}
The result for Part 2.c.i immediately follows because the above is a
sufficient condition for the empirical measure (i.e., the Berkhoff
limit) to exist~\citep[see,
e.g.,][]{Abdenur2013,2018arXiv181104805C}.~\footnote{As an aside, note
  that the last statement does not immediately imply that
  $\sum_{t=1}^T
  f(\A^{(t-1)}(w_1))$ converges because we would have to show that,
  for any
  $\tau'$, we can make the
  $d(\approxA^{(t-1)}(w_1),\A^{(t-1)}(w_1))$ remain arbitrarily small:
  while we know that $d(\approxA(w_1),\A(w_1)) <
  \tau_*$, this is not sufficient.
   Sufficient would be to show that
   $d(\approxA^{(t)}(w_1),\A^{(t)}(w_1))
   < \tau_*$ for all $t > 1$ and that 
   $\sum_{t=1}^T
   d(\approxA^{(t-1)}(w_1),\A^{(t-1)}(w_1))$
   is $o(T \tau_*(\tau'))$.}

For Part 2.c.ii, note that $\widehat{\mu} \equiv
\widehat{\mu}_{w_1}$ assigns measure
zero to any subset outside the cycle of sets: i.e., letting $\Wcal^{\mathrm{cycle}}
\equiv \Wcal^{\mathrm{cycle}}_{(\tau_*,1-\tau',1-\rho)} \equiv \bigcup_{s=1}^p
R_{\eta^{(s)},j^{(s)}}$, any subset of
$\Delta_m - \Ccal$. The rest of the proof is by
contradiction. Suppose that there exists two $\approxA$-invariant sets
in $\Wcal_1,\Wcal_2 \in \Wcal^{\mathrm{cycle}}$ with positive measure $\widehat{\mu}(\Wcal_1) > 0$
and $\widehat{\mu}(\Wcal_2) > 0$. Then that would mean that example
weights generated by the update that are in $\Wcal_1$ can reach those
in $\Wcal_2$, and \emph{vice versa}, which contradicts the fact that
the sets are $\approxA$-invariant. Hence, there is only one
$\approxA$-invariant set in $\Wcal^{\mathrm{cycle}}$ and it has full
measure. The results follows immediately from the definition of
ergodicity (Definition~\ref{def:ergodic}).

The proof of Part 2.d is essentially identical to that for Part 2.b,
except that now the function $f$ is continuous on each
$\pi^*(\eta)$ only when viewed as a mapping from $\pi^*(\eta)$ to $\R$
only. But the proof would not change if we could find a corresponding
$\tau_*$ as function of $\tau'$ for
this case. We can find such a value by noting that $f$ is uniformly
continuous on each $\pi_{1-\tau'}(\eta)$, by the Uniform Continuity
Theorem, because $\pi_{1-\tau'}(\eta)$ is a compact subset of
$\pi^*(\eta)$. So we can perform the same process to find such a
$\tau_*$, and the proof continues exactly as that for Part 2.b.

For the last statement in Part 2 of the theorem, first note that the
only reason we use $\pi_{1-\theta}(\eta)$ instead of $\pi(\eta)$ in
the construction is
to make sure that $\A$ is uniformly continuous, so that we can find an
appropriate $\tau_* > 0$ for any $\tau' > 0$ in the proof. But there
is another way to 
achieve this. By Theorem~\ref{T:lower-bound+}, we can run $\A$ for $n$ rounds
first, and then continue the process in a discretization over each
closed set
$\pi_{\epsilon_*}(\tau) \equiv \{ w \in \pi(\eta) | \eta \cdot w
\geq \epsilon_* \}$ for the lower bound on the error $\epsilon_* \geq
\frac{1}{2^{n+1}}$ guaranteed by that theorem.~\footnote{In fact, we
  could have performed the whole construction in that same way. We did
not do so because (a) we wanted to maintain certain degree of fidelity to the
construction performed by~\citet{Abdenur2013} and~\citet{2018arXiv181104805C}; and (b) the exact
value of $\theta$ has little to no effect in the proofs.} Now note that if Part 1
of Condition~\ref{cond:containment} holds, then $\bar{A}_{1-\tau'} = \A$. If
Part 2 of the condition holds, then the construction holds for
$\lambda \equiv \lambda(\tau_*,\tau') = 1$, so that $\lambda R_{\eta,j}
= R^*_{\eta,j}$. In that case, every weight in $R^*_{\eta,j}$ maps to
exactly the same set $R^*_{\eta',j'}$, where $\eta' \equiv
\eta'_{(\tau_*,1-\tau')} = \eta^{w^{\eta,j}}$ and $j'$ is such that
$\A(w^{\eta,j}) \in R^*_{\eta',j'}$. The cycling
results then follow immediately by applying the Pigeonhole Principle, while the
rest of the proof is as in the case without Condition~\ref{cond:containment}.
  \qed
\end{proof}

\subsubsection{Uniform approximations of Optimal AdaBoost by continuous
  functions that converge to a cycle of arbitrarily small sets}

The objective of the next construction is to eliminate the dependence
on $\theta$. Under Condition~\ref{C:NoTies2} (No Ties Eventually), we do that by starting the approximation after perform
$T$ rounds of AdaBoost, for some finite $T$, after which we know the algorithm behaves
like a continuous map on a compact set $\closure{\A^{(T)}(\Delta_m^+)}$ under the no-ties condition, and thus it is absolutely continuous.  
We perform the construction on the next definiton based on 
pairs of mistake dichotomies that are related via the Optimal AdaBoost
update. Although we can obtain the same results using a construction
based on each $\eta$ individually, we want to highlight characteristics of the Optimal-AdaBoost
update, at the expense of a slight increase in the complexity of the
presentation.

\begin{definition}{\bf (A Tailored Version of Continuous-Function Optimal AdaBoost)}
  \label{def:DVAB}
Recall that, under Condition~\ref{C:NoTies2} (No Ties Eventually),
$\Omega \equiv \closure{\A^{(T)}(\Delta_m^+)}$ for some finite $T$. Let
$\pi_\Omega(\eta) \equiv \pi^*(\eta) \cap \Omega = \pi(\eta) \cap
\Omega$ and
\[
\phi_\Omega(\eta,\eta') \equiv \{ w \in \pi_\Omega(\eta) \; \mid \;  \A(w) =
\T_\eta(w) \in
\pi_\Omega(\eta') \} \; .
\]
Let $\tau > 0$. To simplify notation slightly, and
  improve the presentation, let $w^{(\eta,\eta'),j} \equiv
  w^{\tau,(\eta,\eta'),j}$ and $N_{\eta,\eta'} \equiv N_{\tau,(\eta,\eta')}$. Define
  \[ \approxphi^\tau_\Omega(\eta,\eta') \equiv \{ w^{(\eta,\eta'),1},
    w^{(\eta,\eta'),2}, \ldots, w^{(\eta,\eta'),N_{\eta,\eta'}}\} \subset
    \phi_\Omega(\eta,\eta') \] as a finite set of points, of minimal
  cardinality $N_{\eta,\eta'}$, that are ``near-uniformly distributed'' over
  $\phi_\Omega(\eta,\eta')$ such that the sets
  \[ R_{(\eta,\eta'),j} \equiv R_{\tau,(\eta,\eta'),j} \equiv \left\{w
      \in \phi_\Omega(\eta,\eta') \; \left| \;
      d(w,w^{(\eta,\eta'),j}) = \min_{j' = 1,2,\ldots, N_{\eta,\eta'}}
      d(w,w^{(\eta,\eta'),j'}) \right. \right\} \] form a covering of
  $\phi_\Omega(\eta,\eta')$ (i.e., $\phi_\Omega(\eta) = \cup_{j =
    1}^{N_{\eta,\eta'}} R_{(\eta,\eta'),j}$)
  satisfying 
\[ \mathrm{diam}(R_{(\eta,\eta'),j}) < \tau . \]
  Let 
  $R^\circ_{(\eta,\eta'),j} \equiv R^\circ_{\tau,(\eta,\eta'),j}  \equiv
  \mathrm{Int}(R_{(\eta,\eta'),j})$ and note that $R^\circ_{(\eta,\eta'),j} \cap R^\circ_{(\eta,\eta'),j} = \emptyset$
  for all $(j,j'), j\neq j'$, by construction. 
  Define \[ \approxOmega \equiv \approxOmega^\tau \equiv
    \bigcup_\eta \bigcup_{\eta' \succ \eta}
\approxpi^\tau_\Omega(\eta,\eta') \] the resulting discretization of
$\Omega \equiv \cup_\eta \cup_{\eta' \succ \eta} \phi_\Omega(\eta,\eta')$. For every $(\tau,(\eta,\eta'))$, impose a fixed, but arbitrary preference
order on the sets $(R_{\tau,(\eta,\eta'),j})$ such that $j \succ j'$, for $j
\neq j'$, indicates that the set with index $j$ is preferred to that
with index $j'$, and let 
\[
R^*_{(\eta,\eta'),j}  \equiv R^*_{\tau,(\eta,\eta'),j} \equiv R_{\tau,(\eta,\eta'),j} - \bigcup_{j' \succ j}
\left( R_{\tau,(\eta,\eta'),j'} \bigcap R_{\tau,(\eta,\eta'),j}
\right) \; . \]
Note
that the sets $R^*_{(\eta,\eta'),j}$, for $j =
1,2,\ldots,N_{\eta,\eta'}$, form a partition of
$\phi_\Omega(\eta,\eta')$; that is, 
$R^*_{(\eta,\eta'),j} \bigcap R^*_{(\eta,\eta'),j'} = \emptyset$, for all $(j,j'), j\neq
j'$, and $\phi_\Omega(\eta,\eta') = \bigcup_{j = 1}^{N_{\eta,\eta'}} R^*_{(\eta,\eta'),j}$.
Because we
will be considering a sequence of approximations produced by a strictly
monotonically decreasing sequence $(\tau_k)$, for technical
reasons, we require that $\approxphi^{\tau_k}_\Omega(\eta,\eta') \subset
\approxphi^{\tau_{k+1}}_\Omega(\eta,\eta')$.
We can achieve the desired construction by employing a kind of hierarchical
discretization: i.e., to obtain a finer discretization, discretize each
$R_{\tau_k,(\eta,\eta'),j}$ individually, by adding new points in its interior
to the discretization, so that we obtain 
$\phi^{\tau_{k+1}}_\Omega(\eta,\eta')$ after inserting those new points leading to the finer-grain
discretization into $\phi^{\tau_k}_\Omega(\eta,\eta')$.

Let $\rho \in (0,1)$.
  Define
  \[ \bar{\A} \equiv \bar{\A}_{(\tau,1-\rho)} \] of type $\Omega \to \Omega$ as a ``slightly squeezed'' version
  of $\A$ whenever \[ \A(w^{(\eta,\eta'),j}) \in
  \mathrm{Bnd}(R_{(\eta',\eta''),j'}) \] for some $(\eta,j)$,
$(\eta',j')$, and $\eta''$:
i.e., for each $w \in \Omega$, letting $\eta \equiv \eta^w$, define
\begin{align*}
\bar{\A}(w) & \equiv
                       \A((1-\rho) w + \rho
                       w^{(\eta,\eta'),j})\\
  & \text{ if } w \in
        R_{(\eta,\eta'),j}^*,\\ &  w' \equiv \A(w^{(\eta,\eta'),j}) \in
                                            R_{(\eta',\eta''),j'}^*,
                                            \eta' =
                                            \eta^{w'}, \text{ and }\\
    & w' \in
      \mathrm{Bnd}(R_{(\eta',\eta''),j'}) \; ;
\end{align*}
and $\bar{\A}(w) \equiv
  \A(w)$ otherwise.
   Note that now, for $((\eta,\eta'),j)$, we have
   \begin{align}
     \label{eqn:intball2}
     w' \equiv \bar{\A}(w^{(\eta,\eta'),j})
  \in R^\circ_{(\eta',\eta''),j'}
\end{align}
for $\eta' = \eta^{w'}$  
and some $j'$ and $\eta''$.  For any $\lambda \in [0,1)$, denote
by \[ \lambda R_{(\eta,\eta'),j}\equiv \lambda R_{\tau,(\eta,\eta'),j} \equiv \{ \lambda w + (1-\lambda)
w^{(\eta,\eta'),j} | w  \in R_{(\eta,\eta'),j} \} \; .
\]
For completeness, for
$\lambda = 1$, let $\lambda R_{(\eta,\eta'),j} \equiv \lambda R_{\tau,(\eta,\eta'),j} \equiv 1 R_{(\eta,\eta'),j}
\equiv R_{(\eta,\eta'),j}^*$. Note that \[ \lambda
R_{(\eta,\eta'),j}  \subset R_{(\eta,\eta'),j} \] because $R_{(\eta,\eta'),j}$ is
compact and convex; it is also non-empty. By
Proposition~\ref{pro:unifcont} and Equation~\ref{eqn:intball2}, for each $\rho > 0$ and $\tau > 0$ there exists $\lambda \equiv
\lambda(\rho,\tau) > 0$
with the following property: for all $((\eta,\eta'),j)$,  
there exists an
open neighborhood ``ball'' $B(w^{(\eta,\eta'),j}, \lambda)\equiv B(w^{\tau,(\eta,\eta'),j}, \lambda)
\equiv \{ w \in R^*_{(\eta',\eta''),j'} | d(w,w') < \lambda
\}$ of $w'$ of ``radius'' $\lambda > 0$ such that 
$\bar{\A}_{(\tau,1-\rho)}(w) \in B(w^{(\eta,\eta'),j}, \lambda)
\subset R^\circ_{(\eta',\eta''),j'}$ for all $w \in \lambda
R_{(\eta,\eta''),j}$. Define a non-negative real-valued function $r_{(\eta,\eta''),j}^{\mathrm{in}}  \equiv
r_{\tau,(\eta,\eta'),j}^{\mathrm{in}}$ of type $R^*_{(\eta,\eta'),j} \to
[0,\infty)$ such that $r_{(\eta,\eta'),j}^{\mathrm{in}}(w) \equiv
d(w,w^{(\eta,\eta'),j})$. Similarly, define \[
  r_{(\eta,\eta'),j}^{\mathrm{out}} \equiv r_{\tau,(\eta,\eta'),j}^{\mathrm{out}} : R^*_{(\eta,\eta'),j} \to
[0,\infty) \] such that \[ r_{(\eta,\eta'),j}^{\mathrm{out}}(w) \equiv
\sup \{ d(w,w') | w' = \lambda w  + (1-\lambda) w^{(\eta,\eta'),j} \in
R^*_{(\eta,\eta'),j} \text{ for some } \lambda > 0 \}\; . \] Let
\[ r_{(\eta,\eta'),j}^{\mathrm{prop}} (w) \equiv r_{\tau,(\eta,\eta'),j}^{\mathrm{prop}} (w) \equiv \frac{r_{(\eta,\eta'),j}^{\mathrm{in}}(w)}{r_{(\eta,\eta'),j}^{\mathrm{in}}(w)
  + r_{(\eta,\eta'),j}^{\mathrm{out}}(w)} \; . \]
Define $\approxA_{(\tau,1-\rho)} : \Omega \to \Omega$ such that for all $w
\in \Omega$, letting $(\eta,j)$ is such that $w \in
R^*_{\tau,(\eta,\eta'),j}$ and $\lambda'_w \equiv \lambda'_w(\theta,\tau) \equiv
r_{\tau,(\eta,\eta'),j}^{\mathrm{prop}} (w) \lambda(\theta,\tau)$,
we have \[ \approxA_{(\tau,1-\rho)}(w) \equiv \bar{\A}_{(\tau,1-\rho)}(\lambda'_w w
+ (1-\lambda'_w) w^{\tau,(\eta,\eta'),j}) \; . \]
\end{definition}
\begin{definition}
  \label{def:C_Mcal_Omega}
Denote by $C_\Mcal(\Omega)$ the set of all functions $f$ of type $\Omega \to
\R$
that are 
continuous on each
$\pi_\Omega(\eta)$ individually when
viewed as a function of type
$\pi_\Omega(\eta) \to \R$
for all $\eta \in \Mcal$.
\end{definition}

\begin{condition}{\bf (AdaBoost is Sufficiently Non-Expansive, With
    Respect to the Tailored Discretization.)}
  \label{cond:containment_tailored}
There exists $\tau^* > 0$ with
  the following property: 
for all $\tau > 0, \tau < \tau^*$, there exists a positioning of each
point $w^{\tau,(\eta,\eta'),j}$ in the
  discretization induced by $\approxphi_\Omega^\tau(\eta,\eta')$ on
  $\phi_\Omega(\eta,\eta')$ such that for each 
$((\eta,\eta'),j)$, we have $\A(R^*_{\tau,(\eta,\eta'),j}) \in
R^*_{\tau,(\eta',\eta''),j'}$, where $j'$ and $\eta''$ are such that $\A(w^{\tau,(\eta,\eta'),j}) \in
R^*_{\tau,(\eta',\eta''),j'}$.
\end{condition}
\begin{theorem}{\bf (Properties of the Tailored Version of Continuous-Function Optimal AdaBoost.)}
 \label{thm:TCFAB}
Suppose Conditions~\ref{cond:nathypo} (Natural Weak-Hypothesis
Class),~\ref{assume:WeakLearn} (Weak Learning), and~\ref{C:NoTies2} (No
Ties Eventually) hold. 
\begin{enumerate}
\item {\bf (Uniformly Approximates Optimal AdaBoost)}
 The Optimal AdaBoost update $\A$ can be uniformly
  approximated on $\Delta_m^+$
  by continuous functions: i.e., for each $\tau > 0$, there exists $\tau_*
  \equiv \tau_*(\tau)$, $0 < \tau_* < \tau'$, such that for 
 each function in the sequence $(\approxA_{(\tau_*,1-\tau)})$
 uniformly approximates $\A$ on $\Delta_m^+$ to within $\tau$.

\item Starting from
  any $w_1 \in \Delta_m^+$ and for every
 $\tau > 0$, there exists $\tau_*
  \equiv \tau_*(\tau)$, $0 < \tau_* < \tau'$ such that the following holds. 
Let $\approxA_\tau \equiv \approxA_{(\tau_*,1-\tau)}$, $w_{t+1} \equiv
\approxA_\tau^{(t)}(w_1)$ and $\eta_t \equiv
\eta^{w_t}$ for all $t$, and consider the sequence of
example weights $(w_t)$.  
\begin{enumerate}
\item {\bf (Never Has Ties)} $w_{t+1} \in
\pi^\circ\left(\eta_{t+1}\right)$ for all
$t > T$, where $T$ is sufficiently large, as prescribed by
Condition~\ref{C:NoTies2} (No Ties Eventually).
\item {\bf (Converges to a Cycle of Arbitrarily Small Sets Containing a Cycle)}
The sequence $(w_t)$
 converges in finite time to a
 cycle of sets
 $(R_{\tau_*,(\eta^{(s)},\eta^{(s+1)}),j^{(s)}})_{s=0,1,2,\ldots,p_\tau-1}$,
 with $\eta^{(p_\tau)} = \eta^{(0)}$, 
 of period $p_\tau \equiv p_{(\tau_*,1-\tau)}(w_1)$, $1 <
p_\tau \leq n$, 
 containing a cycle $(\omega^{(s)})_{s=0,1,2,\ldots,p_\tau-1}$, in the
 partition of $\Omega$ induced by
 $\approxOmega^{(\tau_*,1-\tau)}$, so that
 each set indexed by $s$ in the cycle has Lebesgue measure 
 \[ \bar{\mu}_{\mathrm{Leb}}(R_{\tau_*,(\eta^{(s)}, \eta^{(s+1)}),j^{(s)}})
   < \tau_*^{m-2} \; . \] 
The partition, the precise cycle of sets, the cycle it contains and its period, and the
Lebesgue measure of each set in the cycle depend on $w_1$ and $\tau$.
\item {\bf (Is Always Ergodic)}  Let $\widehat{\mu}^\tau_{w_1,T} \equiv
  \widehat{\mu}_{\approxA_{(\tau_*,1-\tau),w_1}}^{(T)}$ for all $T$. 
\begin{enumerate}
\item The sequence of empirical measures
 $(\widehat{\mu}^\tau_{w_1,T})_T$ converges to
 $\widehat{\mu}^\tau_{w_1} \equiv \widehat{\mu}_{\approxA_\tau,w_1}$
 (i.e.,
 the Birkhoff limit exists) and
\item The dynamical system $(\Delta_m^+,
 \Sigma_{\Delta_m^+},
 \approxA_\tau, \widehat{\mu}_{w_1}^\tau)$
 is ergodic. 
\end{enumerate}
\item {\bf (Time Averages Converge)}
The time average $\Tavg(f,w_1,\approxA_{(\tau_*,1-\tau)},T)$ of
 any function $f \in C_\Mcal(\Omega)$ based on the sequence converges to \( \frac{1}{p_\tau} \sum_{s=0}^{p_\tau-1}
    f(\omega^{(s)}) \; . \) 
\end{enumerate}
The same
holds for the Optimal AdaBoost update $\A$ if, in addition,
 Condition~\ref{cond:containment_tailored} (Non-Expansive, Tailored Version) holds.
\end{enumerate}
\end{theorem}

\subsubsection{Birkhoff averages and the non-expansive condition}

We now discuss why we believe we have hit the limit of the current state
of knowledge on dynamical systems and ergodic theory, even if the
no-ties condition holds, and argue why further
applications require extension of the current state of knowledge in
pure mathematics or a more detailed understanding of the specific AdaBoost
dynamical system. 

Consider an (arbitrary) continuous map $M : G \to G$ on a compact set
$G$. Note that $M$ is also uniformly continuous on $G$, by the
compactness of $G$~\citep[][Uniform Continuity Theorem 23.3, pp. 160]{BartleRA}. Let $w_1
\in G$, and $w_{t+1} \equiv M(w_t) =  M^{(t-1)}(w_1)$ for all $t \geq
1$. Because $(w_t)$ is a sequence in a compact set, it has a
convergence subsequence $(w_{t_s}) \in G$, by the 
Bolzano-Weierstrass Theorem~\citep[][Theorem 16.4,
pp. 108]{BartleRA}. Let $w^{(1)} \equiv \lim_{s \to \infty}
w_{t_s}$. 

Let $M^{(0)} : G \to G$ be the identity function (i.e.,
$M^{(0)}(w) = w$ for all $w$ in $G$), and, for all natural numbers
$k > 0$, let $M^{(k)} \equiv M \circ M^{(k-1)} \equiv \circ_{j=1}^k M$
be the composition of $M$ with itself $k$ times, which is also
(uniformly) continuous on $G$ by the (uniform) continuity of $M$ on
$G$~\citep[][Theorem 20.8, pp. 143]{BartleRA}. Note that for all
$k \geq 0$, we have $w_{t_s + k} = M^{(k)}(w_{t_s})$, so that
$\lim_{s \to \infty} w_{t_s + k} = \lim_{s \to \infty}
M^{(k)}(w_{t_s}) = M^{(k)}\left(\lim_{s \to \infty} w_{t_s} \right) =
M^{(k)}(w^{(0)})$. Hence, for all $k \geq 0$, let
$w^{(k)} \equiv M^{(k)}(w^{(0)}) = \lim_{s \to \infty} w_{t_s + k}$,
so that $(w^{(0)},w^{(1)},w^{(2)},\ldots)$ is the trajectory of the dynamical system starting
from $w^{(0)} \in G$.

We now show that $w^{(k)} \in \Omega_\infty(G) \equiv \bigcap_{t=0}^{\infty} M^{(t)}(G)$ for
all $k \geq 0$. For all $k \geq 0$, because the sequence
$(w_{t_s-k})$ is in $G$, by the Bolzano-Weierstrass Theorem~\citep[][Theorem 16.4,
pp. 108]{BartleRA}, there exists a convergent subsequence
$(w_{t_{s_l}-k})$, also in $G$, of  $(w_{t_s-k})$. Let $w^{(-k)} \equiv \lim_{l \to \infty} w_{t_{s_l}
  -k}$. Note that we have
\begin{align*}
M^{(k)}(w_{t_{s_l} -
  k}) &= w_{t_{s_l}}\\
\lim_{l \to \infty} M^{(k)}(w_{t_{s_l} -
  k}) &= \lim_{l \to \infty} w_{t_{s_l}}\\
M^{(k)}\left(\lim_{l \to \infty} w_{t_{s_l} -
  k} \right) &= w^{(0)}\\
M^{(k)}(w^{(-k)}) &= w^{(0)}  
\end{align*}
which implies that $w^{(0)} \in \Omega_\infty(G)$ and, in turn, that $w^{(k)} \in \Omega_\infty(G)$ for all $k \geq 0$.

Let $p_s \equiv t_{s+1} - t_s$. For $s$ large enough, we can think of
$p_s$ as  the ``return time'' of the original trajectory of the
sequence $(w_t)$ near $w^{(0)}$.

Consider the subsequence $(w_{t_s})$ that converges to $w^{(0)}$. For
notational convenience, let $t_0 \equiv 1$ so that $p_0 \equiv t_1 -
t_0$. For all positive natural numbers $T$,
let $S^{(T)} = \sup \{ s | t_{s-1} \leq T < t_s \}$ and $p_s^{(T)} =
\min(t_{s+1},T+1) - t_s$ for all $s=0,1,\ldots,S^{(T)}-1$ (i.e.,
$p_{S^{(T)}-1}^{(T)} = T-t_{S^{(T)}-1}+1$ and, for $0 \leq s < S^{(T)}-1$, $p_s^{(T)} = t_{s+1} -
t_s = p_s$).

Let $f$ be a continuous real-valued function on $G$. Because $f$ is
continuous and $G$ is compact, there exist
real-valued constants $u^*$ and $v^*$
such that $u^* = \sup_{w \in G} f(w)$ and $v^* = \inf_{w \in G}
f(w)$~\citep[][Maximum and Minimum Value Theorem 22.6, pp. 154]{BartleRA}, which implies that $|f(w)-f(w')| \leq \tau^*
\equiv u^* - v^*$ for all $w,w' \in G$.

We can express the Birkhoff average of $f$, with respect to $w_1$ and
$M$, as a sequence of weighted averages:
\begin{align*}
\frac{1}{T} \sum_{t=1}^T f(w_t) &= \sum_{s = 0}^{S^{(T)}-1}
                              \frac{p_s^{(T)}}{T} \left(
                              \frac{1}{p_s^{(T)}}
                              \sum_{k=0}^{p_s^{(T)} - 1}
                              f(w_{t_s + k}) \right)
\end{align*}
To simplify notation, let
\[
  \widebar{F}_T \equiv \widebar{F}_T(w_1) \equiv \frac{1}{T}
  \sum_{t=1}^T f(w_t) \; , 
\]
\[
  \widetilde{F}^{\left(p_s^{(T)}\right)}_s((w_{t_s+k})) \equiv \frac{1}{p_s^{(T)}}
  \sum_{k=0}^{p_s^{(T)} - 1} f(w_{t_s + k})
\]
and \[ \widetilde{F}_T((w_{t_s+k})) \equiv \sum_{s = 0}^{S^{(T)}-1}
\frac{p_s^{(T)}}{T} \widetilde{F}^{\left(p_s^{(T)}\right)}_s((w_{t_s+k})) \; , \]
so that we can
then express the last equality simply as
$\widebar{F}_T =
\widetilde{F}_T((w_{t_s+k}))$. To make some sense of our notational
choices, first note that $w_t
= M(w_{t-1})$ for all $t>1$, so that the entire sequence
$(w_t)$ is solely a function of the
initial element $w_1$. Note also that $\widetilde{F}_T$ operates on
the sequence $(w_{t_s+k})$ because 
implicit in the notation of the sequence $(w_{t_s+k})$ we
have $k \in \{0,1,\ldots,p_s-1\}$ for
all $s$ so that the sequence $(w_t)$ and
$(w_{t_s + k})$ are the same. The separate
notation for $\widetilde{F}_T$ will
become clearer soon once we let
$\widetilde{F}_T$ operate on the 
sequence $(w^{(k)})$ instead of the
sequence $(w_{t_s+k})$ 
to obtain
\begin{align*}
  \widetilde{F}_T((w^{(k)})) = \sum_{s = 0}^{S^{(T)}-1}
                              \frac{p_s^{(T)}}{T} \widetilde{F}^{\left(p_s^{(T)}\right)}_s((w^{(k)})) \; .
\end{align*}     
This highlights that the weighted averages expressed in
$\widetilde{F}_T((w_{t_s+k}))$ (i.e., each average
$\widetilde{F}^{\left(p_s^{(T)}\right)}_s((w^{(k)}))$ is weighted by a factor $\frac{p_s^{(T)}}{T}$) are related to the sequence $(w^{(k)})$
in $\Omega_\infty(G)$ because
$\lim_{s\to \infty} f(w_{t_s + k}) = f( \lim_{s\to \infty} w_{t_s +
  k}) = f(w^{(k)})$ for all $k$. Intuitively, we expect the average
$\widetilde{F}^{\left(p_s^{(T)}\right)}_s((w_{t_s+k}))$ to be
close to the average
$\widetilde{F}^{\left(p_s^{(T)}\right)}_s((w^{(k)}))$ for large
enough $s$.

Now, let $p' \equiv \liminf_{s \to \infty} p_s$. If the sequence
$(p_s)$ is bounded from above (note that it is already bounded from
below by $0$), then we have $p' < +\infty$ exists. Consider a
subsequence $(w_{t_{s_l}})$ of $(w_{t_s})$ such that $p_{s_l} =
p'$. Note that such a subsequence exists because $(p_s)$ is a sequence
of integers and $p'$ is an integer.  Note also that
$M(w_{t_{s_l}+p' - 1}) = M(w_{t_{s_l}+p_{s_l} - 1}) =
M(w_{t_{s_l+1}-1}) = w_{t_{s_l+1}}$, so that we have
\[ M(w^{(p' - 1)}) = M(\lim_{l \to \infty} w_{t_{s_l}+p' - 1}) = \lim_{l
  \to \infty} M(w_{t_{s_l}+p' - 1}) = \lim_{l \to \infty}
w_{t_{s_l+1}} = w^{(0)} \; . \] Hence, if $(p_s)$ is bounded, then the
sequence $(w^{(0)},w^{(1)},w^{(2)},\ldots)$ is a cycle of period
$p \leq p'$. In addition, this suggests that we could have selected
the subsequence $(w_{t_s})$ such that $\lim_{s \to \infty} p_s = p$.

Pick $\tau > 0$. Let $S_p$ such that $p_s = p$ for all $s \geq S_p$ and let
$S_{p,\tau} \geq S_p$ such that \[ \max_{k=0,\ldots,p-1}
|f(w_{t_s +k}) - f(w^{(k)})| < \tau \] for all $s \geq
S_{p,\tau}$.  For  sufficiently large $T$, we have 
\begin{align*}
\left| \widebar{F}_T - \widetilde{F}_T((w^{(k)}))
                 \right| = & \left| \widetilde{F}_T((w_{t_s + k}))
            -  \widetilde{F}_T((w^{(k)})) \right|\\
  = & \left| \sum_{s = 0}^{S^{(T)}-1}
                              \frac{p_s^{(T)}}{T} \left(
                              \frac{1}{p_s^{(T)}}
                              \sum_{k=0}^{p_s^{(T)} - 1}
                                     (f(w_{t_s + k}) - f(w^{(k)})) \right)
    \right|\\
  \leq &\sum_{s = 0}^{S^{(T)}-1}
                              \frac{p_s^{(T)}}{T} \left(
                              \frac{1}{p_s^{(T)}}
                              \sum_{k=0}^{p_s^{(T)} - 1}
                                     \left|  f(w_{t_s + k}) -
    f(w^{(k)}) \right| \right) \\
  = & \sum_{s = 0}^{S_{p,\tau}-1}
                              \frac{p_s^{(T)}}{T} \left(
                              \frac{1}{p_s^{(T)}}
                              \sum_{k=0}^{p_s^{(T)} - 1}
                                     \left|  f(w_{t_s + k}) -
      f(w^{(k)}) \right| \right) +\\
  & \sum_{s = S_{p,\tau}}^{S^{(T)}-1}
                              \frac{p_s^{(T)}}{T} \left(
                              \frac{1}{p_s^{(T)}}
                              \sum_{k=0}^{p_s^{(T)} - 1}
                                     \left|  f(w_{t_s + k}) -
    f(w^{(k)}) \right| \right) 
\end{align*}
The first term on the right-hand-side of the bound
\begin{align*}
\sum_{s = 0}^{S_{p,\tau}-1}
                              \frac{p_s}{T} \left(
                              \frac{1}{p_s}
                              \sum_{k=0}^{p_s - 1}
                                     \left|  f(w_{t_s + k}) -
    f(w^{(k)}) \right| \right) &= \sum_{s = 0}^{S_{p,\tau}-1}
                              \frac{p_s}{T} \left(
                              \frac{1}{p_s}
                              \sum_{k=0}^{p_s - 1}
                                     \left|  f(w_{t_s + k}) -
                                 f(w^{(k)}) \right| \right) \\
                               &\leq  
                              \frac{1}{T} \sum_{s = 0}^{S_{p,\tau}-1} \sum_{k=0}^{p_s - 1}
                                     \tau^* \\
  &= \frac{t_{S_{p,\tau}}}{T} \tau^* 
\end{align*}
goes to zero with $T$. We can further decompose the second term as
\begin{align*}
& \sum_{s = S_{p,\tau}}^{S^{(T)}-1}
                              \frac{p_s^{(T)}}{T} \left(
                              \frac{1}{p_s^{(T)}}
                              \sum_{k=0}^{p_s^{(T)} - 1}
                                     \left|  f(w_{t_s + k}) -
                 f(w^{(k)}) \right| \right) \\
  = & \sum_{s = S_{p,\tau}}^{S^{(T)}-2}
                              \frac{p_s^{(T)}}{T} \left(
                              \frac{1}{p_s^{(T)}}
                              \sum_{k=0}^{p_s^{(T)} - 1}
                                     \left|  f(w_{t_s + k}) -
                                 f(w^{(k)}) \right| \right) + \\
  & \frac{p_{S^{(T)}-1}^{(T)}}{T} \left(
                              \frac{1}{p_{S^{(T)}-1}^{(T)}}
                              \sum_{k=0}^{p_{S^{(T)}-1}^{(T)} - 1}
                                     \left|  f(w_{t_{S^{(T)}-1} + k}) -
                                 f(w^{(k)}) \right| \right)\\
  = & \sum_{s = S_{p,\tau}}^{S^{(T)}-2}
                              \frac{p_s}{T} \left(
                              \frac{1}{p_s}
                              \sum_{k=0}^{{p_s} - 1}
                                     \left|  f(w_{t_s + k}) -
    f(w^{(k)}) \right| \right) +\\
  & \frac{1}{T} 
                              \sum_{k=0}^{p_{S^{(T)}-1}^{(T)} - 1}
                                     \left|  f(w_{t_{S^{(T)}-1} + k}) -
                                 f(w^{(k)}) \right| 
\end{align*}
The second term in the last expression
\begin{align*}
\frac{1}{T} \sum_{k=0}^{p_{S^{(T)}-1}^{(T)} - 1}
                                     \left|  f(w_{t_{S^{(T)}-1} + k}) -
                                 f(w^{(k)}) \right| &< \frac{1}{T}
                                                      \sum_{k=0}^{p_{S^{(T)}-1}^{(T)}
                                                      - 1} \tau = \frac{1}{T}
                                                      p_{S^{(T)}-1}^{(T)}
                                                      \tau \leq \frac{1}{T} p \tau
\end{align*}
goes to zero with $T$, while the first term
\begin{align*}
\sum_{s = S_{p,\tau}}^{S^{(T)}-2}
                              \frac{p_s}{T} \left(
                              \frac{1}{p_s}
                              \sum_{k=0}^{p_s - 1}
                                     \left|  f(w_{t_s + k}) -
  f(w^{(k)}) \right| \right) &= \sum_{s = S_{p,\tau}}^{S^{(T)}-2}
                              \frac{p}{T} \left(
                              \frac{1}{p}
                              \sum_{k=0}^{p- 1}
                                     \left|  f(w_{t_s + k}) -
                               f(w^{(k)}) \right| \right)\\
  &< \sum_{s = S_{p,\tau}}^{S^{(T)}-2}
                              \frac{p}{T} \left(
                              \frac{1}{p}
                              \sum_{k=0}^{p - 1}
    \tau \right) \\
  &= \sum_{s = S_{p,\tau}}^{S^{(T)}-2}
    \frac{p}{T} \tau \\
                             &= \frac{p (S^{(T)}- S_{p,\tau} - 1)}{T} \tau \\
  &\leq \tau \; ,
\end{align*}
where the last inequality follows because
\begin{align*}
T \geq & t_{S^{(T)}-1} = t_{S_{p,\tau}} +
  \sum_{s=S_{p,\tau}}^{S^{(T)}-2} t_{s+1} - t_s = t_{S_{p,\tau}} +
  \sum_{s=S_{p,\tau}}^{S^{(T)}-2} p_s = t_{S_{p,\tau}} +
                       \sum_{s=S_{p,\tau}}^{S^{(T)}-2} p \\
  = & t_{S_{p,\tau}} + p (S^{(T)}-
  S_{p,\tau} -1) \\ \geq & p (S^{(T)}-
  S_{p,\tau} -1)
\end{align*}

Putting everything together, we obtain that for all $\tau > 0$
\begin{align*}
\lim_{T \to \infty} \left| \widebar{F}_T - \widetilde{F}_T((w^{(k)})) 
  \right| < \tau 
\end{align*}
which implies that
\[
  \lim_{T \to \infty} \widebar{F}_T - \widetilde{F}_T((w^{(k)}))  = 0
\]
This also implies that 
\[
  \lim_{T \to \infty} \widebar{F}_T = \widetilde{F}^{(p)} \equiv
  \frac{1}{p} \sum_{k=0}^{p-1} f(w^{(k)}) 
\]
because, as we will show next,
\[
 \lim_{T \to \infty} \widetilde{F}_T((w^{(k)})) =
 \widetilde{F}^{(p)}
\]
so that
\begin{align*}
\lim_{T \to \infty} \widebar{F}_T = & \lim_{T \to \infty} (\widebar{F}_T
- \widetilde{F}_T((w^{(k)}))) + \widetilde{F}_T((w^{(k)})) \\
= & \left( \lim_{T \to \infty} \widebar{F}_T
- \widetilde{F}_T((w^{(k)})) \right)  + \left( \lim_{T \to \infty}
\widetilde{F}_T((w^{(k)})) \right)
= 0 + \widetilde{F}^{(p)} \\
= & \widetilde{F}^{(p)} \; ,
\end{align*}
as claimed.

To prove our remaining claim, for sufficiently large $T$, we can
decompose $\widetilde{F}_T((w^{(k)}))$ as follows.
\begin{align*}
\widetilde{F}_T((w^{(k)}))= & \sum_{s = 0}^{S_{p,\tau}-1}
                              \frac{p_s}{T} \left(
                              \frac{1}{p_s}
                              \sum_{k=0}^{p_s - 1}
                                     f(w^{(k)}) \right) + \\
& \sum_{s = S_{p,\tau}}^{S^{(T)}-2}
                              \frac{p}{T} \left(
                              \frac{1}{p}
                              \sum_{k=0}^{p - 1}
                                     f(w^{(k)}) \right) +\\
&
                              \frac{1}{T} 
                              \sum_{k=0}^{p_{S^{(T)}-1}^{(T)} - 1}
                                     f(w^{(k)}) 
\end{align*}
The first and third term in the decomposition goes to zero with $T$ because
\begin{align*}
 \frac{t_{S_{p,\tau}}}{T} v^* \leq \sum_{s = 0}^{S_{p,\tau}-1}
                              \frac{p_s}{T} \left(
                              \frac{1}{p_s}
                              \sum_{k=0}^{p_s - 1}
                                     f(w^{(k)}) \right) \leq
  \frac{t_{S_{p,\tau}}}{T} u^*
\end{align*} 
and
\begin{align*}
\frac{1}{T} v^* \leq \frac{1}{T} \sum_{k=0}^{p_{S^{(T)}-1}^{(T)} - 1}
                                     f(w^{(k)}) \leq  \frac{p}{T} u^*
  \; .
\end{align*}
For the second term in the composition, we have
\begin{align*}
\sum_{s = S_{p,\tau}}^{S^{(T)}-2}
                              \frac{p}{T} \left(
                              \frac{1}{p}
                              \sum_{k=0}^{p - 1}
                                     f(w^{(k)}) \right) &= \frac{p
                                                          (S^{(T)}-S_{p,\tau}
                                                          -1)}{T} \frac{1}{p}
                              \sum_{k=0}^{p - 1}
                                     f(w^{(k)}) \; .
\end{align*}
Noting that
\begin{align*}
T \leq t_{S^{(T)}} &= t_{S_{p,\tau}} + t_{S^{(T)}} - t_{S^{(T)}-1} +
                     \sum_{s=S_{p,\tau}}^{S^{(T)}-2} t_{s+1} - t_s \\
  &=
                     t_{S_{p,\tau}} + p_{S^{(T)}-1} + 
  \sum_{s=S_{p,\tau}}^{S^{(T)}-2} p_s = t_{S_{p,\tau}} + p +
  \sum_{s=S_{p,\tau}}^{S^{(T)}-2} p = t_{S_{p,\tau}} + p + p (S^{(T)}-
  S_{p,\tau} -1) \; ,
\end{align*}
we obtain
\begin{align*}
\frac{p (S^{(T)}-
  S_{p,\tau} -1)}{T} \geq 1 - \frac{t_{S_{p,\tau}} + p}{T} \; .
\end{align*}
Hence, we have $\lim_{T \to \infty} \frac{p (S^{(T)}-
  S_{p,\tau} -1)}{T} = 1$, and
\begin{align*}
\lim_{T \to \infty} \sum_{s = 0}^{S^{(T)}-1}
                              \frac{p_s^{(T)}}{T} \left(
                              \frac{1}{p_s^{(T)}}
                              \sum_{k=0}^{p_s^{(T)} - 1}
                                     f(w^{(k)}) \right) = \frac{1}{p} \sum_{k=0}^{p - 1}
                                     f(w^{(k)})
\end{align*}
Putting everything together, we obtain that if the sequence $(p_s)$ is bounded, then the sequence $(w_t)$ converges to a cycle and
Birkhoff averages always exist (i.e., converge); that is, we have $\lim_{T \to \infty} \frac1T \sum_{t=1}^T f(w_t)$
\begin{align*}
= & \lim_{T \to
                                                  \infty} \frac1T
                                                  \sum_{t=1}^T f(w_t)
                                                  - \sum_{s = 0}^{S^{(T)}-1}
                              \frac{p_s^{(T)}}{T} \left(
                              \frac{1}{p_s^{(T)}}
                              \sum_{k=0}^{p_s^{(T)} - 1}
                                                    f(w^{(k)}) \right)
                                                  + \sum_{s = 0}^{S^{(T)}-1}
                              \frac{p_s^{(T)}}{T} \left(
                              \frac{1}{p_s^{(T)}}
                              \sum_{k=0}^{p_s^{(T)} - 1}
                                                    f(w^{(k)}) \right) 
  \\
  = & \left( \lim_{T \to
                                                  \infty} \frac1T
                                                  \sum_{t=1}^T f(w_t)
                                                  - \sum_{s = 0}^{S^{(T)}-1}
                              \frac{p_s^{(T)}}{T} \left(
                              \frac{1}{p_s^{(T)}}
                              \sum_{k=0}^{p_s^{(T)} - 1}
                                                    f(w^{(k)}) \right) \right)
    + \\
  & 
    \left( \lim_{T \to
                                                  \infty} \sum_{s = 0}^{S^{(T)}-1}
                              \frac{p_s^{(T)}}{T} \left(
                              \frac{1}{p_s^{(T)}}
                              \sum_{k=0}^{p_s^{(T)} - 1}
                                                    f(w^{(k)}) \right) \right)
  \\
  = & 0 +
\frac{1}{p} \sum_{k=0}^{p - 1}
    f(w^{(k)}) \\
  = & 
\frac{1}{p} \sum_{k=0}^{p - 1}
    f(w^{(k)})
\end{align*}

Intuitively, one would think that if $(p_s)$ is unbounded, then the
existence of the Birkhoff average would be covered by Birkhoff's
Convergence Theorem and its value would be $\lim_{K \to \infty}
\frac{1}{K} \sum_{k=0}^{K-1} f(w^{(k)})$. But the fact that no such
general result seems to exist in the current state of knowledge of the
literature on dynamical system and ergodic theory suggests that that
intuition is wrong in general. So, even if Condition~\ref{C:NoTies2}
(No Ties Eventually) holds, we cannot rely on general results from that
theory to provide us with universal convergence for Optimal
AdaBoost from every initial weight in $\Delta_m^\circ$. We would have to look more closely at the specific
properties of Optimal AdaBoost.

The non-expansion conditions, Conditions~\ref{cond:containment} and~\ref{cond:containment_tailored}, stated relatively generally previously,
may be one way to address the particular universal convergence for
Optimal AdaBoost. Assuming $\ell_1$ distance (i.e., $d(w,w') \equiv \| w -
w' \|_1 \equiv \sum_i | w(i) - w'(i) |$), a sufficient condition for
those non-expansion conditions to hold
is that for every pair of weights $\omega_1, \omega_2 \in \Delta_m^+$ such that
$\eta^{\omega_1} = \eta^{\omega_2} = \eta \in \Mcal$ (i.e., $\omega_1,
\omega_2 \in \pi^*(\eta)$),
we have \[ \|
\omega_1 - \omega_2 \|_1 \geq \|
\A(\omega_1) - \A(\omega_2) \|_1 \; . \]
This condition may also be necessary for non-expansion. (Non-expansion
may not be generally necessary for the existence of Birkhoff averages though, as
simple examples intuitively suggest.) Using properties of the AdaBoost inverse
discussed in Appendix~\ref{app:Ainv}, we can show that this condition is
equivalent to \[ (1 - \eta) \cdot | \omega_1 - \omega_2 | \geq \eta
\cdot | \A(\omega_1) - \A(\omega_2) | \; , \] which under Condition~\ref{assume:WeakLearn} (Weak
Learning) is in turn equivalent to \[ \frac{1}{2} \| w_1 - w_2 \|_1 \geq
\eta \cdot | \A(\omega_1) - \A(\omega_2) | \; . \] Hence, the condition 
depends on specific characteristics of the mistake matrices induced by
the data and the class of weak classifiers. Having said that, recall
however that, by the properties of the Optimal AdaBoost update, $\eta
\cdot \A(\omega_1) = \eta \cdot \A(\omega_2) = \frac{1}{2}$, which
may suggest that  the condition may hold more generally.

\subsubsection{Putting everything together: The time averages of
  Optimal AdaBoost (essentially) converge}

The following theorem states a key technical result of this
paper from which most of the results stated in the next section
follow. It serves as a \emph{template} that summarizes the results derived in this
section on the time averages of several approximating versions of
Optimal AdaBoost. This is because the theorem holds if
we replace the phrases ``the Optimal  AdaBoost update $\A$'' and ``for
any function $f$'' with any of the following:
\begin{enumerate}
\item ``the $\tau$-Finite-Precision Optimal AdaBoost update
  $\approxA_{\tau,\mathrm{disc}}$'' and ``for any function $f$''
\item ``the step function $\approxA_{(\tau_*,1-\tau),\mathrm{step}}$
  Lebesgue-almost uniformly approximating Optimal AdaBoost to
  within an arbitrarily small error $\tau$'' and ``for any function $f$''
\item ``the continuous function $\approxA_{(\tau_*,1-\tau,1-\tau),\mathrm{cont}}$
  Lebesgue-almost uniformly approximating Optimal
  AdaBoost within an arbitrarily small error $\tau$'' and ``for any function $f \in C_\Mcal$''
\item ``the original/exact Optimal
  AdaBoost $\A$, if, in addition, Condition~\ref{cond:containment} (Non-Expansive)
    or, Conditions~\ref{C:NoTies2} (No Ties
  Eventually) and~\ref{cond:containment_tailored} (Tailored Non-Expansive)
  holds,'' and ``for any function $f \in C_\Mcal$''
\end{enumerate}
The theorem also holds, under the respective additional
conditions, for ``the \emph{original/exact} Optimal
  AdaBoost $\A$'' if we replace ``for any function $f$'' and
``for every $w_1 \in \Delta_m^\circ$'' with any of the following:
\begin{enumerate}
\item ``for any function $f \in L_1(\mu)$'' and, under Condition~\ref{C:NoTies} (No Ties),
\begin{enumerate}
\item   ``for $\mu$-almost every
  $w_1 \in \Omega$", where $\Omega$
  and $\mu$ are as in Theorem~\ref{the:avgcvg1}, or
\item ``for $\nu_0$-almost every
  $w_1 \in \Delta_m^+$", where $\nu_0$ are as in Corollary~\ref{cor:avgcvg_all_finite_T};
\end{enumerate}
\item ``for any function $f \in C(\Delta_m)$'' and ``for $\nu$-almost every
  $w_1 \in \Delta_m^+$", under
  Conditions~\ref{C:NoTies},~\ref{C:NonEmptyOmega}
  ($\Omega_\infty^+ \neq \emptyset$), and~\ref{C:NonExpansiveGlobal}
  (Globally Non-Expansive), where $C(\Delta_m)$ and $\nu$
  is as in Corollary~\ref{cor:avgcvg_all}
\item ``for any function $f \in C_\Mcal^+$'' and ``for $\nu$-almost every
  $w_1 \in \Delta_m^+$", under  Conditions~\ref{C:NoTies2} (No Ties
  Eventually) and~\ref{C:NonExpansive} (Locally Non-Expansive), where
  $C_\Mcal^+$ and $\nu$
  is as in Corollary~\ref{cor:avgcvg_all_ft}.  
\end{enumerate}
\begin{theorem}{{\bf (Averages over an AdaBoost Sequence of Example Weights Converge.)}} 
\label{the:avgcvg}
Suppose Conditions~\ref{cond:nathypo} (Natural Weak-Hypothesis
Class) and ~\ref{assume:WeakLearn} (Weak Learning) hold. 
\emph{For any function $f$},
\emph{the Optimal-AdaBoost update $\A$} has the property that
$\frac{1}{T} \sum_{t=0}^{T-1} f(\A^{(t)}(w_1))$ converges for 
every $w_1 \in \Delta_m^\circ$.  
\end{theorem}

{\bf {\em Before continuing, we would like to point out that we state the results of the next
section, and those in their associated appendices, in a template-like matter, just like we did for the last
theorem. This is because, just like for the last theorem, they hold if we
replace the phrases regarding the Optimal AdaBoost algorithm version, the class from which
the function being
averaged is taken, and the set from which the initial weight is taken with the corresponding phrase for any of the versions of the
algorithm as listed above. In doing so, our intention is not to
offuscate our results, but to avoid unnecessary clutter in the
statements of those results.}}

\subsection{Preliminaries to the study of the convergence of the
  Optimal-AdaBoost classifier}
\label{sec:adaboost_conv_prelim}

The study of the convergence of the AdaBoost classifier and its
implications is the main goal of the next subsection (Section~\ref{sec:convres}).  Here we provide some preliminary definitions, and introduce some useful concepts and mathematical results.

\begin{definition}\label{D:H_and_F}
Starting from some initial example weights $w_1 \in
\Delta_m$, the \emph{final classifier that AdaBoost outputs after $T$
  rounds}, which we denote by $H_T : \X \to \{-1,+1\}$, labels input examples $x \in \X$ by computing
\(
  H_T(x) \equiv H^{w_1}_T(x) \equiv \sign{F_T(x)} = \sign{\sum_{t=1}^{T} \alpha_t h_t(x)} \; ,
\)
where we define the \emph{final real-valued function $F_T : \X \to \R$
  that AdaBoost built to use for classification} as $F_T(x)\equiv F_T^{w_1}(x)\equiv
\sum_{t=1}^{T} \alpha_t h_t(x)$.
\end{definition}
It is very important to keep in mind that the sequence of
$\alpha_t$'s and $h_t$'s are really functions of the initial example
weights $w_1$. Thus, the functions $H_T$ and $F_T$ are functions of
$w_1$ too.

Carefully note that in the definition of $H_T$ above (Definition~\ref{D:H_and_F}), the weak-hypothesis $h_t$ corresponds to an effective representative hypothesis in $\Hrep \subset \Hypo$, \emph{not} a label dichotomy in 
$\Dich(\Hypo,S)$. Certainly for $x\in S$, $F_T(x)$ does not converge
as the total number of rounds $T$ of AdaBoost approaches infinity. In
fact, if Condition~\ref{assume:WeakLearn} (Weak Learning) holds, that value must be growing
at least \emph{linearly} in
$T$ (see Part 1 of Proposition~\ref{P:secondary}).
So, what do we mean by ``convergence of the AdaBoost classifier'' exactly then?  We can replace $\sign{F_T(x)}$ with $\sign{\frac 1T F_T(x)}$ without changing the classification label output by $H_T(x)$.
Another alternative 
is to use the concept of margins, which we now formally define.
\begin{definition}
Denote the \emph{normalized weak-hypothesis weights after $T$ rounds of
  Optimal AdaBoost} by $\widetilde{\alpha}_t \equiv
\frac{\alpha_t}{\sum_{t=1}^T \alpha_t}$ for all $t=1,\ldots,T$.  Note
that by Condition~\ref{assume:WeakLearn} (Weak Learning), we have $\alpha_t$ is lower
bounded by a positive constant for all finite $T$, which implies that
$\widetilde{\alpha}_t < 1$.
Hence, the normalized weak-hypothesis weights are
well-defined for any finite $T$ and
we can think of them as a probability distribution over the set of
indexes to the
rounds $\{1,\ldots,T\}$. 
For all initial example weights $w_1 \in
\Delta_m$, define the \emph{margin function after $T$ rounds of
  Optimal AdaBoost} $\margin_T : \X \to [-1,1]$ as, for all $x \in
\X$, $\margin_T(x) \equiv \margin_T^{w_1}(x) \equiv \sum_{t=1}^T
\widetilde{\alpha}_t h_t(x)$, the \emph{margin} of input $x$ with
respect to $H_T$.  The range of $\margin_T$ is $[-1,1]$ because the
range of each $h_t$ is $\{-1,1\}$, and we can think of the defining
expression as an expectation over the weak-hypotheses selected at each
round with respect to the normalized selected-weak-hypothesis weights.
Similarly, for all initial example weights $w_1 \in
\Delta_m^\circ$, define the \emph{empirical margin function after $T$ rounds
  of Optimal AdaBoost} $\widehat{\margin}_T : U \to [-1,1]$ as, for
all $x \in U$, $\widehat{\margin}_T(x) \equiv \widehat{\margin}_T^{w_1}(x)
\equiv \margin_T(x)$, the \emph{empirical margin} of input example $x$ with respect to $H_T$.
\end{definition}
Hence, we can equivalently use $\sign{\margin_T(x)}$ instead of $H_T(x)$ for classification.
Then, as we will shortly prove, under certain conditions, if $\frac 1T
F_T(x)$ or $\margin_T(x)$ converges for all $x \in \X$, so does
$\sign{F_T(x)}$ and $\sign{\margin_T(x)}$, respectively. 

\subsection{Technical results on the convergence of Optimal AdaBoost
  and related objects}
\label{sec:convres}

We can use Theorem~\ref{the:avgcvg} 
to obtain
a convergence result for $\frac 1T F_T(x^{(i)})$ for $x^{(i)} \in U$
in the training examples, as stated in the next theorem.  Note that in
the next theorem we depart from the standard notations of
$F_T(x^{(i)}) = \sum_{t=1}^{T}\alpha_t h_t(x^{(i)})$.  The new
notation defines $F_T(x^{(i)})$ in terms of the effective
\emph{mistake} dichotomies in $\Mcal$
constructed from the \emph{label} dichotomies in $\Dich(\Hypo,S)$, not
directly on the effective representative hypothesis $h_t \in \EHypo$
output by the weak learner in Optimal AdaBoost.  The elements of these
mistake dichotomies are defined over $\{0,1\}$, unlike the hypotheses
whose output is in $\{-1,1\}$. Thus, we need to scale and translate
them appropriately.  The new notation for $F_T(x^{(i)})$ results in
the exact same values as the one defined over selected effective
representative hypotheses. To avoid confusion, we denote the
corresponding function over the \emph{set} of input training examples
$U$, generated by Optimal AdaBoost starting from initial example
weights $w_1 \in \Delta_m$, by $\widehat{F}_T \equiv \widehat{F}_T^{w_1}$. Using this notation, we have that for all $x
\in U$, $F_T(x) = \widehat{F}_T(x)$.

Before continuing, we remind the reader that the sequences of $\epsilon_t$'s,
$\alpha_t$'s, and $h_t$'s, as well as the functions $H_T$,
$F_T$, $\widehat{F}_T$, $\margin_T$, and $\widehat{\margin}_T$, referred to in the discussion
and statements that follow, all depend on the initial example
weights $w_1 \in \Delta_m$.

\begin{theorem}\label{T:classifier-convergence}
{\bf (The Average of the Real-Valued Function AdaBoost Builds By Com\-bin\-ing Weak Clas\-si\-fiers Con\-verges for All Training Examples.)}
Sup\-pose Con\-di\-tions~\ref{cond:nathypo} (Natural Weak-Hy\-poth\-e\-sis
Class) and~\ref{assume:WeakLearn} (Weak Learning) 
hold.
Let $\widehat{F}_T : U \to \R$, defined as $\widehat{F}_T(x^{(i)})
\equiv \widehat{F}_T^{w_1}(x^{(i)})
\equiv \sum_{t=1}^{T} \alpha_t (2\eta_t(i)-1)$, be the
Optimal-AdaBoost classifier function at round $T$, starting from
initial example weights $w_1 \in \Delta_m^\circ$, defined only over
the set of (unique) input training examples $x^{(i)} \in U$, and in
terms of $\eta_t \in \{0,1\}^m$, for $t=1,\dots,T$, corresponding to
the mistake dichotomy of the representative hypothesis $h_t =
h^{\eta_t}$ output by the weak learner, such that, for all
$l=1,\ldots,m$, $\eta_t(l) = \indicator{h_t(x^{(l)}) \neq y^{(l)}}$.
For all $x^{(i)}\in U$, the limit $\lim_{T\to\infty}\frac 1T
\widehat{F}_T(x^{(i)})$ exists for every initial example
weights $w_1 \in \Delta_m^\circ$ used for the Optimal AdaBoost algorithm.
\end{theorem}
\begin{proof}
Let $\epsilon_* > 0$ be lower-bound on the weighted error of Optimal
AdaBoost guaranteed by Theorem~\ref{T:lower-bound+} from \emph{any} initial
example weight in $\Delta_m^\circ$ (i.e., independent of $w_1$)
after $T > n+1$ rounds under Conditions~\ref{cond:nathypo} (Natural Weak-Hypothesis Class)
and~\ref{assume:WeakLearn} (Weak Learning).
Let $\alpha^* \equiv \frac{1}{2} \ln\frac{1-\epsilon_*}{\epsilon_*} <
+\infty$. Define the truncation $\alpha_{\alpha^*} : \Delta_m \to \R^+$ as
$\alpha_{\alpha^*}(w) \equiv \max(\alpha(w),\alpha_*)$, and note
that Note that
$\alpha_{\alpha^*}(w)$ is continuous on $\Delta_m$ and that for any initial weight $w_1 \in\Delta_m^\circ$ the sequence
$(w_t)$ of
example weights that Optimal AdaBoost generates satisfies $\alpha(w_t)
= \alpha_{\alpha^*}(w_t)$ for all $t > n+1$. 
Let 
\(
A_\eta(T) \equiv \sum_{t=1}^{T} \alpha(w_t)\chi_{\pi^*(\eta)}(w_t).
\)
Because for each $\eta \in \Mcal$ the
function $\alpha_{\alpha^*} \circ \chi_{\pi^*(\eta)}$ is continuous when
viewed as a map from $\pi^*(\eta)$ to $\R$ only, we have $\alpha_{\alpha^*} \circ \chi_{\pi^*(\eta)} \in C_\Mcal$
(Definition~\ref{def:C_Mcal}) and, by Theorem~\ref{the:avgcvg}, we
have that 
\begin{align*}
  A_\eta^* \equiv \lim_{T\to\infty}\frac 1T A_\eta(T) & =
\lim_{T\to\infty}\frac 1T \sum_{t=n+2}^{T}
                                                        \alpha(w_t)\chi_{\pi^*(\eta)}(w_t) \\
  &= \lim_{T\to\infty}\frac 1T \sum_{t=n+2}^{T}
\alpha_{\alpha^*}(w_t)\chi_{\pi^*(\eta)}(w_t) = \lim_{T\to\infty}\frac 1T \sum_{t=1}^{T}
\alpha_{\alpha^*}(w_t)\chi_{\pi^*(\eta)}(w_t)
\end{align*}
exists for all $\eta \in
\Mcal$ 
and 
every $w_1 \in \Delta_m^\circ$ used to initialize the example
weights in the Optimal
AdaBoost algorithm. 

We are restricting ourselves to the set $U \subset \X$, hence we can write, for all $x^{(i)} \in U$,
\(
\frac 1T \widehat{F}_T(x^{(i)}) = \sum_{\eta\in \Mcal} \frac 1T A_\eta(T)(2\eta(i)-1).
\)
Finally, by taking limits we see that
\[
\lim_{T\to\infty} \frac 1T \widehat{F}_T(x^{(i)} ) = \sum_{\eta\in \Mcal} \lim_{T\to\infty}\left[\frac 1T A_\eta(T)\right](2\eta(i)-1)
= \sum_{\eta\in \Mcal} A_\eta^*(2\eta(i)-1)  ,
\]
exists for every $w_1 \in \Delta_m^\circ$. \qed
 \end{proof}
As a corollary to this theorem, we can show convergence for the margin
on any example in the training set.~\footnote{Note that this corollary
  does not say anything about whether Optimal AdaBoost maximizes the
  margins.
}
\begin{corollary}{\bf (The Margin of Every Training Example Converges.)}
\label{C:example_margins_convergence}
Suppose Conditions~\ref{cond:nathypo} (Natural Weak-Hypothesis
Class) and~\ref{assume:WeakLearn} (Weak Learning) 
hold.
For all $x \in U$, the limit $\lim_{T\to\infty}
\widehat{\margin}_T(x)$ exists for every initial example
weights $w_1 \in \Delta_m^\circ$.
\end{corollary}
\begin{proof}
Let the truncation $\alpha_{\alpha^*} : \Delta_m \to \R$ be as defined in the proof of
Theorem~\ref{T:classifier-convergence}. 
Because
$\alpha_{\alpha^*}$ is continuous on $\Delta_m$, it is in $C_{\Mcal}$.  By
Theorem~\ref{the:avgcvg}, we have that
\begin{align*}
  \Theta \equiv \lim_{T\to\infty}\frac 1T \sum_{t=1}^{T} \alpha(w_t) =
  \lim_{T\to\infty}\frac 1T \sum_{t=n+2}^{T} \alpha(w_t) =
  \lim_{T\to\infty}\frac 1T \sum_{t=n+2}^{T} \alpha_{\alpha^*}(w_t) =  \lim_{T\to\infty}\frac 1T \sum_{t=1}^{T} \alpha_{\alpha^*}(w_t)
\end{align*}
exists 
for every $w_1 \in \Delta_m^\circ$.
From the Condition~\ref{assume:WeakLearn} (Weak Learning), we know that $\epsilon_t = \epsilon(w_t) < \frac 12 - \gamma$ for some $\gamma > 0$.  This gives us a lower bound $\alpha_*>0$ on $\alpha(w_t)$.  Using this lower bound, we see that
\[
\Theta = \lim_{T\to\infty}\frac 1T \sum_{t=1}^{T} \alpha(w_t) 
\geq \lim_{T\to\infty}\frac 1T (\alpha_*T) 
\geq \alpha_* 
> 0
\]
for every $w_1 \in \Delta_m^\circ$.

Now, we can say that
$\lim_{T\to\infty}T\left(\sum_{t=1}^{T}\alpha_t\right)^{-1} = \frac
1\Theta$,
for every $w_1 \in \Delta_m^\circ$.
Combining this with Theorem~\ref{T:classifier-convergence}, we have for all $x^{(i)}\in U$,
\begin{align*}
\lim_{T\to\infty} \widehat{\margin}_T(x^{(i)}) = & \lim_{T\to\infty} \widehat{F}_T(x^{(i)})\left(\sum_{t=1}^{T}\alpha_t\right)^{-1} 
=
\lim_{T\to\infty}\frac{\widehat{F}_T(x^{(i)})}{T}\left(\sum_{t=1}^{T}\alpha_t\right)^{-1}T \\
= & \frac 1\Theta \sum_{\eta\in \Mcal} A_\eta^*(2\eta(i)-1) 
= \sum_{\eta\in \Mcal} \widetilde{A}_\eta(2\eta(i)-1) \; ,
\end{align*}
for every $w_1 \in \Delta_m^\circ$,
where $\widetilde{C}_\eta \equiv A_\eta^*/ \Theta$ is a probability distribution over the
mistake dichotomies in $\Mcal$.
\qed
\end{proof}

Using convergence results about $\frac 1T F_T(x)$ and $\margin_T(x)$
for every $w_1 \in \Delta_m^\circ$, and $P$-almost surely for all $x$ in the training dataset (i.e., for all $x \in U$), we can establish
the convergence of the same functions almost surely on any $x$ \emph{outside} of the training dataset (i.e., for all $x \in \X$).  The upshot is that given such convergence results, we can say something strong about how the generalization error of the Optimal-AdaBoost classifier behaves in the limit.  Intuitively, if the Optimal-AdaBoost classifier is effectively converging, so should its generalization error.  \emph{This outlines one of the main contributions of this paper.}

But the extension of the convergence results to the whole input space $\X$ instead of just the unique input instances $U$ of the training set $S$ does not come without some difficulty.
On $S$, we know that $(2\eta(i) - 1)$ will correspond directly to some $h^{\eta}(x^{(i)})$.  However, outside of $U$ our mistake dichotomies $\eta$'s are no longer defined, because they are simply 0-1 vectors over the examples in $S$.  To evaluate $F_T(x)$ for an arbitrary $x \in \X$, we must appeal to the hypotheses 
selected from the hypothesis space, not just the mistake dichotomies they produced.

Let $\Hrep(\eta) = \left\{ h \in \Hrep | \, (2\eta(i)-1) = h(x^{(i)}) \text{ for all } x^{(i)}\in S\right\}$.  A key observation is that $\Hrep(\eta)$ induces an \emph{equivalence relation} or \emph{partition} on $\Hrep$: $\Hrep = \bigcup_{\eta \in \Mcal} \Hrep(\eta)$ and $\Hrep(\eta)\cap \Hrep(\eta') = \emptyset$ for any pair $\eta,\eta' \in \Mcal$.  All hypotheses in each equivalence class, from the perspective of Optimal AdaBoost, are indistinguishable in the sense that picking any of them will result in no change in the trajectory of $w_t$.  However, the weak learner might have a bias towards certain hypotheses in these classes.  For example, perhaps the weak learner will always pick the ``simplest'' hypothesis in $\Hrep(\eta)$, based on some simplicity measure (e.g., depth of a decision tree or its number of leaves).

Recall that even though our analysis, and sometimes our implementation
as well, uses the set of mistake dichotomies to implement/instantiate
the weak learner, we must have the corresponding representative
hypothesis from $\Hrep$ output by the weak learner and used to put
together the final AdaBoost classifier in order to classify new input samples.

\begin{theorem}{\bf (The Average of the Real-Valued Function Optimal AdaBoost Builds By Combining Weak Classifiers Converges.)}
\label{thm:classifier_fcn_convergence}
Suppose Conditions~\ref{cond:nathypo} (Natural Weak-Hypothesis
Class) and~\ref{assume:WeakLearn} (Weak Learning)
hold.
For every initial example weights $w_1 \in \Delta_m^\circ$, the limit
$\lim_{T \to \infty} \frac 1T F_T(x)$ exists for all $x\in \X$.
\end{theorem}
\begin{proof}
Let $\eta_t \equiv \eta^{w_t}$. Then the representative hypothesis selected at iteration $t$ is $h_t = h^{\eta_t}$.  This yields
\(
\frac 1T F_T(x) = \frac 1T \sum_{t=1}^{T} \alpha_t h_t(x) 
= \frac 1T \sum_{t=1}^{T} \alpha_t h^{\eta_t}(x) 
= \sum_{\eta\in \Mcal} \frac 1T A_{\eta}(T) h^{\eta}(x) \; ,
\)
where $A_{\eta}$ is defined in the same way as in the proof of
Theorem~\ref{T:classifier-convergence}.
As in that same proof, we
have $\lim_{T \to \infty} \frac 1T A_{\eta}(T) = A_{\eta}^*$ for every $w_1 \in \Delta_m^\circ$.  Hence,
\(
\lim_{T \to \infty} \frac 1T F_T(x) = \sum_{\eta \in \Mcal} A_{\eta}^* h^{\eta}(x) 
\)
for every $w_1 \in \Delta_m^\circ$. \qed
\end{proof}
Similarly, we can extend the convergence of the margin distribution to the whole space $\X$.
\begin{corollary}{\bf (The Margin of Any Input in the Feature Space
    Converges.)}
\label{cor:margin_convergence}
Suppose Conditions~\ref{cond:nathypo} (Natural Weak-Hypothesis
Class) and~\ref{assume:WeakLearn} (Weak Learning)
hold.
For every initial example weights $w_1 \in \Delta_m^\circ$, the limit $\lim_{T\to \infty} \margin_T(x)$ exists for all $x\in \X$.
\end{corollary}
\begin{proof}
We arrive at the convergence of $\lim_{T \to \infty}
A_{\eta}(T)\left(\sum_{t=1}^{T} \alpha_t\right)^{-1} =
\widetilde{A}_{\eta}$, for every $w_1 \in \Delta_m^\circ$, the same
way as in the proof Corollary~\ref{C:example_margins_convergence}.  
Then, closely following the
proof of Theorem~\ref{thm:classifier_fcn_convergence}, 
we get
\(
\lim_{T \to \infty} \margin_T(x) = \lim_{T \to \infty} F_T(x)\left(\sum_{t=1}^{T} \alpha_t \right)^{-1}
= \sum_{\eta \in \Mcal} \widetilde{C}_{\eta} \; h^{\eta}(x) , 
\)
for every $w_1 \in \Delta_m^\circ$. \qed
\end{proof}
Recall that the full Optimal-AdaBoost classifier is
$H_T(x) = \sign{F_T(x)}.$ In that equation, we can easily replace $\sign{F_T(x)}$ with $\sign{\frac 1T F_T(x)}$.  From the convergence of $\frac 1T F_T(x)$, we would like to say that $H_T(x)$ converges as well.  However, we have a discontinuity in the $\mathrm{sign}$ function at $0$.  It may be the case that $\lim_{T \to \infty} \frac 1T F_T(x) = 0$ for some $x\in \X$, possibly yielding a non-existent limit for $\sign{\frac 1T F_T(x)}$.  In that case, $\lim_{T \to \infty} H_T(x)$ simply does not exist.

To overcome this obstacle, we consider the following condition. If one
let $F^*(x) \equiv  F^{*,w_1}(x) \equiv \lim_{T \to \infty} \frac 1T
F_T(x)$, intuitively we are saying that the decision boundary of the
function $F^*$ has measure $0$ with respect to the probability space
$(\D,\Sigma,P)$. But first we establish the following so that the statement on the condition
makes sense.
\begin{proposition}
  \label{pro:class_meas}
  Suppose Conditions~\ref{cond:nathypo}
(Natural Weak-Hy\-poth\-e\-sis Class) and~\ref{assume:WeakLearn}
(Weak Learning) hold. 
For every initial example weight $w_1 \in \Delta_m^\circ$, the
functions $F_T$ and $H_T$, for all $T$, and $F^*$, as real-valued functions with
domain $\X$, are $\Sigma_\X$-measurable.
\end{proposition}
\begin{proof}
First note that, for every $w_1 \in \Delta_m^\circ$, each function in
the sequence $(h_t)$ is $\Sigma_\X$-measurable, by
Condition~\ref{cond:nathypo} (Natural Weak-Hypothesis Class). Now, each $F_T$ is
$\Sigma_\X$-measurable because it is a linear combination of a finite
number of $\Sigma_\X$-measurable functions
$(h_t)$~\citep[][Lemma 2.6, pp. 9]{BartleMT}. The definition of
measurability immediately imply that $H_T = \mathrm{sign} \circ F_T =
\mathrm{sign} \circ (\frac1T F_T)$ is
$\Sigma_\X$-measurable~\citep[][Definition 2.3, pp. 8]{BartleMT}.
Because, by Theorem~\ref{thm:classifier_fcn_convergence}, the sequence of
functions $(\frac1T F_T)$ converges to $F^*$, it follows
that $F^*$ is also $\Sigma_\X$-measurable~\citep[][Corollary 2.10,
pp. 12]{BartleMT}. \qed
\end{proof}
\begin{condition}{\bf (The Decision Boundary has $P$-Measure Zero.)}
\label{C:decbnd}
For every initial example weights $w_1 \in \Delta_m^\circ$, we have
$F^* \neq 0$ on $\X$, $P$-almost surely. 
\end{condition}
Under this condition (Condition~\ref{C:decbnd}), the limit of the
classifier behaves nicely.  In fact, under the condition, 
the AdaBoost classifier itself, $H_T$, is converging in classification for \emph{almost all} elements in the instance space $\X$, i.e., except for a subset of $\X$ of measure $0$ with respect to $(\D,\Sigma,P)$.
\begin{theorem}{\bf (The AdaBoost Classifier Converges.)}
\label{thm:classifier_convergence}
Suppose Conditions~\ref{cond:nathypo}
(Natural Weak-Hypothesis Class),~\ref{assume:WeakLearn}
(Weak Learning), 
and~\ref{C:decbnd}
(Measure-Zero Decision Boundary) hold.
For every initial example weights $w_1 \in \Delta_m^\circ$, 
we have that $H^* \equiv H^{*,w_1} \equiv \lim_{T \to \infty} H_T$ exists on $\X$,
$P$-almost surely; 
or equivalently, for each $w_1 \in 
\Delta_m^\circ$, the sequence of
$\Sigma_\X$-measurable functions $(H_T)$, as functions with domain $\X$, converges to
the $\Sigma_\X$-measurable function $H^*$ on $\X$, $P$-almost surely.
\end{theorem}
\begin{proof}
Pick any $w_1 \in \Delta_m^\circ$. By Conditions~\ref{cond:nathypo}
(Natural Weak-Hypothesis Class) and~\ref{assume:WeakLearn}
(Weak Learning), Theorem~\ref{thm:classifier_fcn_convergence} implies that $(\frac1T F_T(x))$
converges to $F^*(x)$ for every $x \in \X$. Let $\X_0 \equiv \{ x \in \X |
F^*(x) \neq 0 \}$. For every $x \in \X - \X_0$, we have \(
  H^*(x) = \lim_{T \to \infty} \sign{\frac 1T F_T(x)}
  = \sign{F^*(x)}, 
  \)
  because the $\mathrm{sign}$ function is continuous, except at
  $0$~\citep[][Theorem 20.2, pp. 137]{BartleRA}. Because $(H_T)$ is a
  sequence of $\Sigma_\X$-measurable functions which converges to
  $H^*$ on $\X - \X_0$, it follows that $H^*$ is also
  $\Sigma_\X$-measurable on $\X-\X_0$~\citep[][Corollary 2.10,
pp. 12]{BartleMT}. Noting that $P(\X_0) = 0$, by Condition~\ref{C:decbnd}
(Measure-Zero Decision Boundary), the theorem follows from the
  definition of almost everywhere convergence~\citep[][Chapter 3,
  pp. 9, Chapter 7, pp. 65]{BartleMT}, or equivalently,
  convergence almost surely~\citep[][Section 1.2, pp. 12]{DurrettProb}.
\qed
\end{proof}

If the Optimal-AdaBoost classifier is converging in the limit, certainly its generalization error should as well.  
\begin{definition}
We can express the $0/1$-loss function $\loss_H : \D \to \{0,1\}$ of a binary classifier $H$ with output labels in $\{-1,+1\}$ as
\(
\loss_H(x,y) \equiv \frac{1-y \;  H(x)}{2} \; . 
\)
We show in the proof of the next theorem (Theorem~\ref{thm:gen_error_convergence}) that for both the Optimal
AdaBoost classifier $H_T$ after $T$ rounds and its converging limit
$H^*$, as established in Theorem~\ref{thm:classifier_convergence}, the
function $\loss_H$ is $\Sigma$-measurable and integrable, so that its
generalization error exists, with respect to $(\D,\Sigma,P)$. 
We can express the \emph{generalization error} of a
($\Sigma$-measurable and integrable) binary classifier $H$ with output labels $\{-1,+1\}$ as the \emph{expected misclassification error}:
\(
  \Err(H) \equiv \E{\loss_H(X,Y)} \equiv \int \; \loss_H \; dP = \int_{\D} \left[\frac{1-yH(x)}{2}\right] \; dP(x,y) \; .
\)
\end{definition}

It follows from the \emph{Lebesgue Dominated Convergence Theorem} that \emph{the generalization error converges.}
\begin{theorem}{\bf(The Generalization Error Converges.)}
\label{thm:gen_error_convergence}
Suppose Conditions~\ref{cond:nathypo}
(Natural Weak-Hy\-poth\-e\-sis Class),~\ref{assume:WeakLearn}
(Weak Learning), 
and~\ref{C:decbnd}
(Measure-Zero Decision Boundary) hold.
For every initial example weights $w_1 \in \Delta_m^\circ$, the
\emph{limit} of the \emph{generalization error}, $\lim_{T \to \infty}
\Err(H_T)$, exists and equals the generalization error $\Err(H^*)$ of the Optimal
AdaBoost classifier $H^*$.
\end{theorem}
\begin{proof}
Pick an arbitrary $w_1 \in \Delta_m^\circ$. 
Under Conditions~\ref{cond:nathypo}
(Natural Weak-Hy\-poth\-e\-sis Class) and~\ref{assume:WeakLearn}
(Weak Learning), $H_T$ is $\Sigma_\X$-measurable as a
function with domain $\X$, by Proposition~\ref{pro:class_meas}, and so is
$H^*$ $P$-almost surely, by
Theorem~\ref{thm:classifier_convergence}.
 By the properties of the probability space over the examples, 
 $H_T$ is immediately also $\Sigma$-measurable, as is $H^*$ $P$-almost
 surely.
Furthermore, the loss function $\loss_{H_T}(x,y)$ is
$\Sigma$-measurable 
because it is a linear function of $y \; H_T(x)$, and $y$ is, straightforwardly,
$2^{\{-1,+1\}}$-measurable and thus immediately also
$\Sigma$-measurable by the properties of the probability space over
the examples~\citep[][Lemma 2.6, pp. 9]{BartleMT}, and thus also
integrable by definition because it is
non-negative~\citep[][Definition 5.1, pp. 41]{BartleMT}. 
It is also dominated by the constant (integrable) function $f(x,y) = 1$ for all $(x,y)\in \D$ and all $T$:
\(
0 \leq \loss_{H_T}(x,y)
=
\frac{1-y \; H_T(x)}{2}
\leq 1.
\)
Therefore, the conditions of the Lebesgue Dominated Convergence Theorem~\citep[][Theorem 5.6, pp. 44]{BartleMT}
are satisfied, implying that we can
``distribute'' the limit over the integral and that $\loss_{H^*}$ is integrable.  
We then have that 
for every $w_1 \in \Delta_m^\circ$,
\begin{align*}
\lim_{T \to \infty} \Err(H_T) = & \lim_{T \to \infty} \E{\loss_{H_T}(X,Y)}
= \lim_{T \to \infty} \int \loss_{H_T} \; dP
= \int \lim_{T \to \infty} \loss_{H_T} \;  dP \\
= & \int_{\D} \lim_{T \to \infty} \left(\frac{1-y \; H_T(x)}{2}\right) dP(x,y)
= \int_{\D} \left(\frac{1-y \; H^*(x)}{2}\right) dP(x,y) \\
= & \E{\loss_{H^*}(X,Y)} = \Err(H^*) \; . 
\end{align*}
\qed
\end{proof}

\subsection{Some remarks about ties 
  and Condition~\ref{C:decbnd} (Measure-zero Decision Boundary)}
\label{sec:condrem}

A couple of remarks about Condition~\ref{C:NoTies2} (No Ties Eventually), or also its
related Condition~\ref{C:NoTies} (No Ties in the Limit), 
and Condition~\ref{C:decbnd} (Typical Decision Boundary) are in order before continuing.

The concepts of a ``support-vector'' example, and more specifically
``non-sup\-port-vec\-tor'' examples within the context of AdaBoost, will prove useful to some of the
discussion.
\begin{definition}
\label{def:sv}
We say an example indexed by $i$ is a \emph{non-support-vector
  example} with respect to initial example weights $w_1 \in \Delta_m$
in AdaBoost if $\lim_{t
  \to \infty} w_t(i) = 0$. Otherwise, we call the example a
\emph{support-vector example} with respect to $w_1$.
\end{definition}

\begin{remark}
\label{R:Rerun}
Roughly speaking, Condition~\ref{C:NoTies2} (No Key Ties) (see also Condition~\ref{C:NoTies}) states that any two effective mistake
di\-chot\-o\-mies are either never tied for best within the set $G$ (Part
1), or if they were tied, then it must be the case that $G$ is a
lower-dimensional subspace of $\Delta_m$ in which $w(i) = 0$ for all
$i$ such that $\eta(i) \neq \eta'(i)$ (Part 2). 

The implication of
Part 2 of the condition follows because for all $\eta, \eta' \in
\Mcal$, we have $\eta
\neq \eta'$. Thus, there exists at least one $i$ for which $\eta(i)
\neq \eta'(i)$, and at least $w(i) = 0$ for such $i$. Because all the
elements of $w_1$ are positive, with probability 1, and the
Optimal-AdaBoost example-weights update maps to a positive $w$, it
must have been the case that the at least one of the $w_t(i)$'s
converged to $0$. 

Indeed, any corresponding example $i$ referred to in the previous paragraph cannot
be a ``support vector'' example.
In such a case, both mistake
dichotomies 
$\eta$ and $\eta'$ referred to in the last paragraph would behave the same starting from any 
$w$ leading to non-support-vector examples. Hence, at least
computationally, we can equivalently ``reset'' the learning problem by
removing any tying $\eta''$ that is not the one selected by $\AdaSel$
from $\Mcal$ leading to a new set of mistake dichotomies
$\Mcal' \equiv \Mcal - \left( \argmin_{\eta'' \in \Mcal - \{\eta^w\}}
  \eta'' \cdot w \right) \subset \Mcal$. 

By the same reasoning, we can
remove any training dataset example for which $w(i)=0$: that is,
create $D' \equiv D'(w) \equiv \{ (x^{(i)},y^{(i)}) \in D \mid w(i) > 0 \}$. We
would now have a new dynamical system with 
$\Delta_{m'}$ as the state space, where $m' \equiv |D'|$,
corresponding to a lower-dimensional subspace of $\Delta_m$. 

Note that
this process of removing dichotomies from $\Mcal$ and examples from
$D$ may reveal what we call ``dominated hypotheses'' (see
Section~\ref{S:Exp} and Appendix~\ref{app:pac_bnds}) in the new set of mistake
hypotheses of lower-dimensional space (i.e., each mistake dichotomy is
now an $m'$-dimensional vector, and $m' < m$); of course, we must also
remove those ``dominated mistake dichotomies'' before
continuing/re-starting the process on the lower-dimensional
space. 

Note also that this process of removal cannot continue forever,
nor can the resulting sets become empty.
Under Condition~\ref{assume:WeakLearn} (Weak Learning), there must exist at least one positive
      and one negative example with positive weight at every round.
Also, by the properties of the AdaBoost example-weight update (see Appendix~\ref{app:abup}), consecutive mistake dichotomies selected
at consecutive rounds must be different: that is, for every round $t$,
we have $\eta_t \neq \eta_{t+1}$; thus, any set of mistake dichotomies
composed of exactly two dichotomies would break Condition~\ref{assume:WeakLearn}.
\end{remark}

\begin{remark}
Let us address the reasonableness of 
Condition~\ref{C:decbnd} (Measure-Zero Decision
Boundary) in our setting.
To do this, consider the following condition.
\begin{condition}{\bf (Sufficiently Rich Weak-Hypothesis Class.)}
\label{C:WL_decbnd}
For every $h \in \Hypo$, denote by $g : \X \to \R$ its ``proxy'' classifier
function: i.e., $h = \mathrm{sign} \circ g$. Let
$(\Hypo,\Sigma_\Hypo,\mu_{\Hypo})$ be an appropriate 
measure space over the weak-hypothesis class. For $P$-almost every  
dataset $S$ of input examples in a training dataset $D$ of $m$ examples 
drawn according to $(\D,\Sigma,P)$, and for every label dichotomy $o \in
\Dich(\Hypo,S)$, we have \[ \mu_{\Hypo}(\{ h \in \Hypo |
h(x^{(l)}) =o_l, \text{ for all } l = 1,2,\ldots, m, \text{ and }
P(g(X) \neq 0) \} > 0 \; . \] 
\end{condition}
This seemingly esoteric condition essentially says that we are almost always
able to find representative hypothesis with ``nice,'' ``typical,'' or
``non-degenerate'' decision boundaries. Paraphrasing the condition, it
says that for $P$-almost every possible datasets $D$,
and every output label dichotomy $o \in \Dich(\Hypo,S)$ that $\Hypo$
can produce on any $x$ in the set of input examples $S$ in $D$, we
can $\mu_\Hypo$-almost surely select (or draw) a representative hypothesis
$h^o \in \Hypo$ for $o$, to include in $\Hrep$,
whose decision boundary
has $P$-measure zero. If the decision boundary of every representative weak-hypothesis in
$\Hrep$ has $P$-measure zero, any classifier built using a linear
combination of them will also have $P$-measure zero for Borel-almost every
assignment to the coefficients of the linear combination.  For instance, returning to our running example
of earlier technical sections, axis-parallel decision-stumps on
feature spaces $\X$ that are subsets of Euclidean space satisfy this
condition. In that case, each $h \in \Hypo$ is of the form $h(x) = \sign{x_j - v}$
or $h(x) = \sign{-x_j - v}$ for some dimension/axis $j$ and threshold
value $v \in \R$ (see Equation~\ref{eqn:exhypo}). Hence, we can relate each threshold value in each
dimension to a hypothesis. So, effectively, we can think of $\Hypo$ as a
subset of $\times_j \R$, the Cartesian product of real-valued
spaces. We can define a Borel measure space over each input dimension
and then define $(\Hypo,\Sigma,P)$ as the Cartesian product of the
measure spaces with the Borel $\sigma$-algebra for each input feature
dimension $j$. Often, $P$ induces a probability density function over
$\X$, for which sets of Borel-measure zero would also have $P$-measure
zero. In that sense, \emph{every} $h \in \Hypo$ satisfies
Condition~\ref{C:decbnd}.  Other examples in the context of binary
classification include most 
typical implementations of decision trees,
nearest-neighbors,
linear and generalized linear classifiers, neural networks, and SVMs,
among others.

We can also prove the following proposition,
somewhat related to Condition~\ref{C:decbnd}.
\begin{proposition}
Suppose Conditions~\ref{cond:nathypo}
(Natural Weak-Hy\-poth\-e\-sis Class) and~\ref{assume:WeakLearn}
(Weak Learning). Let $\Delta_n$ be the probability
$n$-simplex and $\Hypo_{\mathrm{AdaBoost}} \equiv \{
\, 
\mathrm{sign} \circ \widetilde{F} \, \mid \,  \widetilde{F} : \X \to [-1,+1], \widetilde{F} = \sum_{\eta \in \Mcal}
\widetilde{\alpha}^\eta h^\eta, (\widetilde{\alpha}^\eta)_{\eta \in
  \Mcal} \in \Delta_n \}$, the set
the hypothesis class of all possible Optimal AdaBoost classifiers that
Optimal AdaBoost could output from
a given set of mistake dichotomies $\Mcal$. Consider the (finite) Borel measure on $\Delta_n$,
i.e., equivalent to the uniform distribution over $\Delta_n$; or said differently,
the Dirichlet distribution with all concentration parameters equal to $1$. The
decision boundary of every
classifier $H \in \Hypo_{\mathrm{AdaBoost}}$ has $P$-measure zero
 for Borel-almost every $(\widetilde{\alpha}^\eta)_{\eta \in
  \Mcal} \in \Delta_n$. 
\end{proposition}
But the current discussion is missing an important point, and one that
the Action Editor brought to our attention: that we are \emph{not drawing} 
classifiers from a pile \emph{at random}, but \emph{selecting} them
deliberately on the basis of some optimization process on the training
data, ultimately seeking to
reduce generalization error. We must admit that we did
not think of this until the Action Editor brought it to our
attention. We respectfully argue, however, that the selection
mechanism obtained via optimization is intrinsically tied to the data
generating process. Our selection is essentially ``drawing
inferences'' from the available data, which we assume came from some
underlying random mechanism governed by $(\D,\Sigma,P)$.

For instance, returning
to the particular case of Optimal AdaBoost, based on our intuition, we
argue that even if $\Hrep$ contains hypotheses with non-zero
$P$-measure, the likelihood that Optimal AdaBoost would select such
hypotheses with sufficient frequency to play any significant role in
the final classifier seems low to us, because we would expect the weighted
error of hypotheses with a non-negligible number of indifferences to
be higher than those that with negligible numbers of the
same. Classification is inherently about discrimination after all. In
our opinion, the objective of supervised learning algorithms is, or at least should
be, to reduce
indifferences, not create more, or at least not more than the data
reflects. But we admit that this is simply our educated opinion, and
that a more formal treatment of this topic is required.

Yet, there are
some statements we can make on this topic with certain degree of
certainty. For instance, the underlying assumption that allows the
training error to vanish exponentially fast, in addition to some
version of Condition~\ref{assume:WeakLearn} (Weak Learning), of
course, is that no two labeled
examples in $(x,y)$ and $(x',y')$ exists in $D$ for which the input is
the same, $x=x'$, but their label is not, $y\neq y'$. That would
violate Condition~\ref{assume:WeakLearn} (Weak Learning), and Optimal
AdaBoost will detect that because the sequence of weighted error
$(\epsilon_t)$ will converge to $\frac{1}{2}$, so that the sequence of
example weights $(w_t)$ also converge. So that assumption is
essentially saying that $P(Y = +1) \in \{0,1\}$; or said differently,
that the set of examples with non-deterministic output labels has
$P$-measure zero. Hence, the decision boundary of the ``ground-truth''
classifier $\sign{P(Y=+1|X) - \frac12}$ in this case has $P$-measure
zero. Because the training error of Optimal AdaBoost in this case is
guaranteed to go to zero, the margin of the training examples will be
different than zero after $O(\log T)$ rounds. We can use this to
derive a PAC-Learning type statement: that with arbitrarily high
accuracy, the probability that any future example will fall
arbitrarily close to the boundary is $\tilde{O}(\sqrt{n/m})$. While
$n$ can grow with $m$, it does so only up to a point, as $n$ is
bounded by a function of the VC-dimension of the weak-hypothesis
class.
This implies that we can obtain an upper bound on the $P$-measure of
the decision boundary of an Optimal AdaBoost classifier. We can use this to relax Condition~\ref{C:decbnd}.

\paragraph{Relaxing the $P$-measure-zero condition on the decision boundary.}
For any starting $w_1 \in \Delta_m^\circ$, let
\(
E \equiv E_{w_1} \equiv  \left\{ (x,y) \in \D \left| \, \lim_{T \to \infty} \frac 1T F_T(x) \neq 0 \right. \right\}.
\)
By Theorem~\ref{thm:classifier_fcn_convergence}, we have that $\lim_{T
  \to \infty} \frac 1T F_T$ converges to a $\Sigma_\X$-measurable
bounded real-valued function $F^*$. By the properties of the probability space over the examples, $F^*$ is also
$\Sigma$-measurable. The set 
\[
E = \left\{ (x,y) \in \D \left| \, F^*(x) > 0 \right. \right\} \bigcup
\left\{ (x,y) \in \D \left| \, F^*(x) < 0 \right. \right\} 
\]
is $\Sigma$-measurable, by the
definiton of $\sigma$-algebra, because it is
the union of two (disjoint) sets, each $\Sigma$-measurable by definition
because $F^*$ is a
$\Sigma$-measurable real-valued function~\citep[][Definition 2.3 and Lemma
2.4, pp. 8]{BartleMT}. 
We allow the complement of this set with respect to $\D$ to take
non-zero measure.
Because the set $E$ is $\Sigma$-measurable,
we can say that the stability of the generalization error depends on $P(\D - E)$.
\begin{proposition}
The set $E$ is $\Sigma$-measurable and the following holds.
\begin{enumerate}
\item $\limsup_{T \to \infty} \Err(H_T) \leq \int_{E} \; \loss_{H^*} \; dP + P(\D - E)$
\item $\liminf_{T \to \infty} \Err(H_T) \geq \int_{E}  \; \loss_{H^*} \; dP - P(\D - E)$
\end{enumerate}
So that $\limsup_{T \to \infty} \Err(H_T) - \liminf_{T \to \infty}
\Err(H_T) \leq 2 P(\D - E)$.\\
Additionally, if  \( \lim_{T \to \infty} \Err(H_T) \)  exists, then
\(
\left|\lim_{T \to \infty} \Err(H_T) - \int_{E}  \; \loss_{H^*} \; dP \right| \leq P(\D - E).
\)
\end{proposition}
\begin{proof}
We can bound $\Err(H_T)$ as follows.
\begin{align} \label{E:limsup}
\Err(H_T) =  \int  \; \loss_{H_T} \; dP 
=  \int_{E} \; \loss_{H_T} \; dP  + \int_{\D - E} \; \loss_{H_T} \; dP 
\leq  \int_{E} \; \loss_{H_T} \; dP  + P(\D - E).
\end{align}
Symmetrically, we also have
\begin{align} \label{E:liminf}
\Err(H_T) \geq \int_{E} \; \loss_{H_T} \; dP - P(\D - E).
\end{align}
We will consider only Equation~\ref{E:limsup}, and results for Equation~\ref{E:liminf} will follow symmetrically.  By taking $\limsup$ on both sides of Equation~\ref{E:limsup}, we see that 
\begin{align*}
\limsup_{T \to \infty} \Err(H_T) &\leq \limsup_{T \to \infty} \int_{E} \; \loss_{H_T} \; dP+ P(\D-E) \\
 & =  \lim_{T \to \infty} \int_{E} \; \loss_{H_T} \; dP + P(\D-E)
=  \int_{E} \; \loss_{H^*} \; dP + P(\D-E),
\end{align*}
where the exchange of the limit and the integral follows from the same
argument about the loss function used in the proof of
Theorem~\ref{thm:gen_error_convergence}.
Symmetrically, we find
\(
\liminf_{T \to \infty} \Err(H_T) \geq \int_{E} \; \loss_{H^*} \; dP -
P(\D - E).
\)
Finally, if $\lim_{T \to \infty} \Err(H_T)$ exists, then
\(
\liminf_{T \to \infty} \Err(H_T) = \lim_{T \to \infty} \Err(H_T) = \limsup_{T \to \infty} \Err(H_T). 
\) \qed
\end{proof}

\end{remark}

\section{Preliminary experimental results on high-dimensional real-world datasets}\label{S:Exp}

This section provides empirical evidence 
that Optimal AdaBoost moves away form ties and illustrates
the
difficulty of finding evidence of cycling behavior in practice, despite
our theoretical results.
The empirical
results are in the context of \emph{decision stumps}. Decision stumps are simple decision tests based on a single
attribute of the input; i.e., a decision tree with a single node: the
root corresponding to the attribute test. Studying decision stumps has very practical implications. Because of their
simplicity, and effectiveness, decision stumps are arguably the most
commonly used weak-hypothesis class $\Hypo$ with AdaBoost in practice,
as the first slide from Breiman's Wald Lecture we quote at the beginning of this paper suggests.

We have observed that the ``effective'' number of decision stumps is relatively
  smaller than expected. By ``effective'' here we mean decision
  stumps that may have a chance of being selected by Optimal AdaBoost because
  they are not strictly ``dominated.''~\footnote{A ``dominated''
    decision stump with respect to Optimal AdaBoost is one whose set of mistakes on the training dataset is a strict superset
    of another decision stump.} Another
observation is that the number of \emph{uniquely-selected} decision stumps
consistently grows \emph{logarithmically} with the number of rounds of
boosting, at least
within $100K$ rounds in several high-dimensional real-world datasets
for binary classification publicly available from the UCI ML
Repository. The set of weak-hypotheses \emph{uniquely
    selected} by Optimal AdaBoost with respect to initial
  example-weights $w_1$ is $\cup_{t=1}^T \{ h_t \}$. In passing, it is important to point out that although the effective number of decision stumps is relatively small,
the number of those selected by Optimal AdaBoost is even smaller, and
the empirically observed logarithmic
growth suggests that it would take a very long time before
AdaBoost would have selected all effective weak classifiers, if ever;
of course, it will eventually plateau by the Pigeonhole Principle. In order to keep the presentation here brief, we discuss
the results of those experiments in 
Appendix~\ref{app:pac_bnds}.
In that
appendix,
we also show and discuss technical implications of our empirical
observations, including new, potentially tighter uniform-convergence data-dependent PAC
bounds on the generalization error. Our data-dependent bounds may be tighter than
those previously derived because their direct dependence on $T$ is
expected to be a
very low-degree polynomial of $\ln{T}$, as opposed to $\sqrt{T
  \ln{T}}$ which is the traditional bound for AdaBoost based on $T$.

It is important to keep in mind that the fact that the empirical
observation on the logarithmic growth suggests that Optimal AdaBoost is selecting the same decision stump
many times does not in itself immediately imply that Optimal AdaBoost is
cycling over the example weights. In
fact, we did not find any empirical evidence in our experiments of AdaBoost cycling, or
being even ``near'' any detectable cycle, despite our theoretical
results.

What the empirically observed logarithmic
growth in the number of unique decision stumps does suggest is that
Optimal AdaBoost may take a very long time to even complete a cycle in high-dimensional, real-world data
sets.
Said differently, the
cycling behavior seems to take an extremely long time in practice. At the same time, the
empirical evidence suggests that Optimal AdaBoost reaches
stability of the averages of many quantities, as well as the
convergence of its generalization error, relatively quickly. 
 We delay further discussion on this topic to Section~\ref{sec:close}
(Closing remarks); we also refer the reader to
Appendix~\ref{app:pac_bnds}
for a detailed
discussion of those empirical results, including plots.

\subsection{Experimental results show empirical evidence that Optimal
  AdaBoost moves away from ties eventually}

This section discusses \emph{preliminary} experimental evidence
consistent with our condition that Optimal AdaBoost moves
away from ties eventually, and with our theoretical result that its time averages converge quickly, as does its
classifier, its generalization, as well other typically studied quantities/objects.
We provide empirical evidence on commonly used data sets in practice
suggesting that the following two conditions are satisfied: for any pair $\eta,\eta'\in 
\Mcal$, either (1) there are no ties between $\eta$ and $\eta'$ in the
limit, or (2) if they are tied, they are effectively the same with
respect to the weights in the limit.  We provide empirical evidence on
commonly used data sets in practice demonstrating these two conditions. In fact, 
we have 
empirical evidence 
that demonstrate that those conditions
hold in \emph{all} other
UCI datasets that are applicable to our setting and have been used in
the literature.
We only report our evidence in some real-world datasets here for
brevity. The results presented here are representative of those we
observed on the other datasets. 

\begin{figure}
  \begin{center}
\includegraphics[scale=.5]{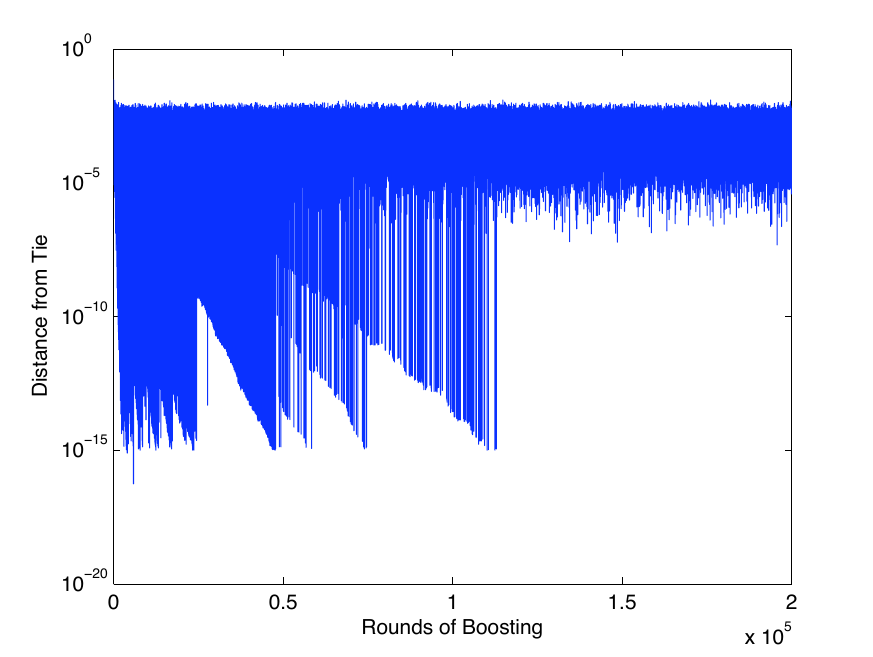} \\
\includegraphics[scale=.5]{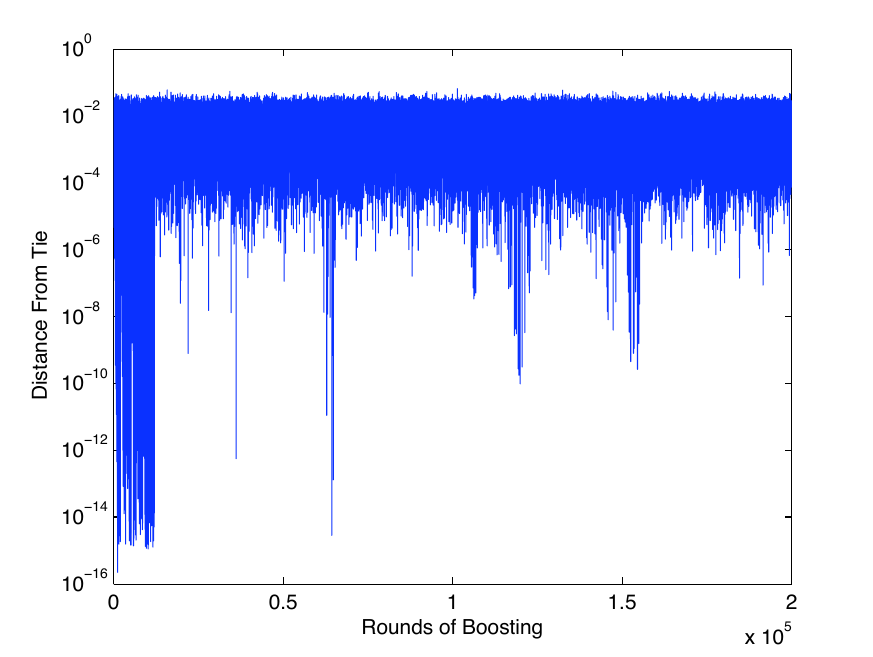} \\
\includegraphics[scale=.5]{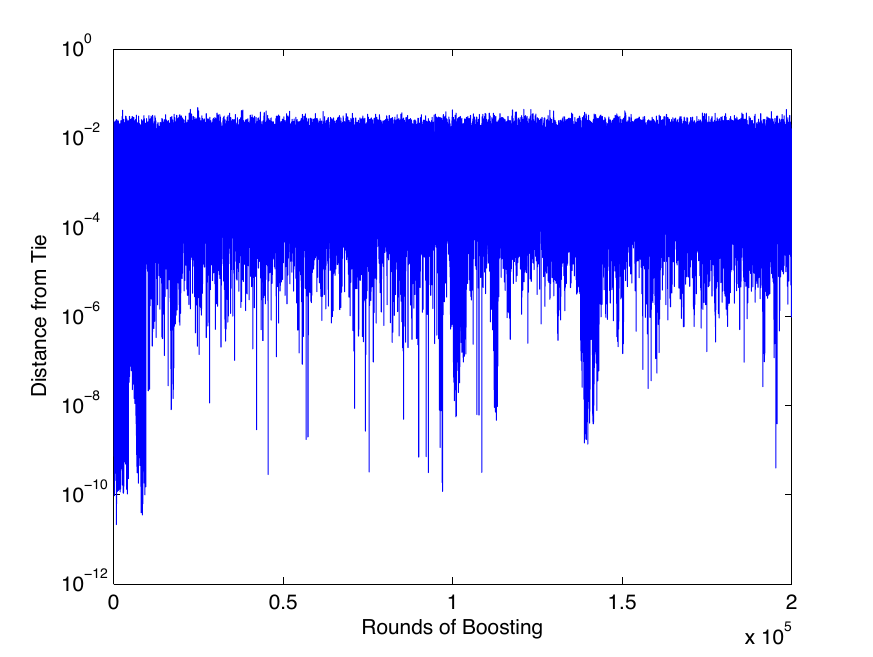}
\end{center}
\caption{{\bf Empirical Evidence is Consistent with 
    Condition~\ref{C:NoTies} (No Key Ties) in High-Dimensional,
    Real-World Datasets.} These plots depict the difference
  between the errors for the best and second best
  mistake-dichotomies/representative-hypotheses (in log scale) as a
  function of the number of rounds $T$ of boosting decision stumps on
  the Heart-Disease {\bf (top)}, Sonar {\bf (center)}, and Cancer
  datasets {\bf (bottom)}. The behavior depicted in these plots
  the two conditions
  described in the main body of the text, as our
  theoretical results predict.  Recall that, as
  described in the body of the text, when looking for the second best
  mistake-dichotomy/representative-hypothesis at time $t$, we ignore
  mistake-dichotomies $\eta'$ such that $\sum_{i: \eta_t(i) \neq
    \eta'(i)} w_t(i) < \kappa$, where we set $\kappa = 10^{-15}$,
  and that, for these experiments, we start AdaBoost from a weight over the training examples
  drawn uniformly at random from $m$-simplex.}
\label{fig:boundedties}
\end{figure}

In Fig.~\ref{fig:boundedties}, we present the results of running
Optimal AdaBoost using decision stumps on the Heart-Disease, Sonar,
and Breast-Cancer datasets, while tracking the difference between the
error of the best and the second best mistake-dichotomies of $\Mcal$
at each round $t$. Let $\eta$ be the optimal mistake dichotomy in
$\Mcal$ at round $t$.  When looking for the second best mistake
dichotomy at round $t$, we ignore mistake dichotomies $\eta'$ such
that $\sum_{i: \eta(i) \neq \eta'(i)} w_t(i) < \kappa$, where we set
$\kappa = 10^{-15}$.  Recall that we start Optimal AdaBoost from an
arbitrary initial weight in $\Delta_m^\circ$. In this experiment, we
draw $w_1$ uniformly at random from the $m$-simplex, as opposed to the
traditionally used weight corresponding to a uniform distribution over
the training examples. Our theoretical results hold for that case too,
of course, but we want to illustrate the robustness of the algorithm to initial
conditions, at least as it relates to the relatively quick convergence
of time averages and other related objects, including the Optimal
AdaBoost classifier itself.

The difference between the best and second best
mistake-dichotomy/representative-hypothesis tends to decrease to
$\kappa$ early on.  This happens because some weights for
non-minimal-margin examples go to zero. 
This set of
minimal-margin examples is precisely the ``support-vectors'' examples (Definition~\ref{def:sv}), a
term~\citet{RudinDynamics} also use because of the similar
interpretation to those examples in SVMs.~\footnote{\label{foot:sv}
  For all training examples indexed by $i=1,\ldots,m$, denote by
  $\beta_T(i) \equiv y^{(i)} \, \margin_T(x^{(i)})$ the ``signed''
  margin of example indexed by $i$. From our convergence results we
  can show that $\beta^{\min} \equiv \lim_{T \to \infty} \min_i
  \beta_T(i)$ exists.  We can also show that $\beta^{\min} = \lim_{T
    \to \infty} \sum_i w_{T+1}(i) \, \beta_T(i)$. This implies that,
  for all training examples, indexed by $i$,  $ \lim_{T \to \infty}
  \beta_T(i) > \beta^{\min}$ implies $\lim_{T \to \infty} w_{T+1}(i) =
  0$; and that $\lim_{T \to \infty} w_{T+1}(i) > 0$ implies $\lim_{T
    \to \infty} \beta_T(i) = \beta^{\min}$. Also, assuming training
  examples with different outputs, there always exists a pair of
  different-label examples, indexed by $(i^+,i^-)$, with $y^{(i^+)} = 1$
  (positive example) and $y^{(i^-)} = -1$ (negative example), such that
  $\lim_{T \to \infty} w_{T+1}(i^+) > 0$ and $\lim_{T \to \infty}
  w_{T+1}(i^-) > 0$ (because the error $\eta_T \cdot w_{T+1} =
  \frac{1}{2}$, where $\eta_T$ is the mistake dichotomy in $\Ecal$
  corresponding to the label-dichotomy/representative-hypothesis
  selected at round $T$). This in turn implies $ \lim_{T \to \infty}
  \beta_T(i^+) = \lim_{T \to \infty} \beta_T(i^-) = \beta^{\min}$, leading
  to our interpretation of the set $\{ i  \mid \lim_{T \to \infty}
  \beta_T(i) = \beta^{\min}\}$ as the set of indexes to
  support-vectors examples.}
Such zero-weight examples could cause certain rows of the mistake
matrix to become
essentially equal with respect to the weights.  Once such weights go
below $\kappa$, a condition which we equate to essentially
satisfying the second condition stated earlier about the ``equivalence
of mistake dichotomies in lower-dimensional subspaces,''
we ignore these ``equivalent''
mistake-dichotomies/hypotheses. In turn, this causes the trajectory of
the differences between best and second best to jump upwards.  After a
sufficient number of rounds, the set of support-vector examples with
respect to $w_1$ manifests, and this jumping behavior stops.  At this
point, as the data shows, the distance from ties is bounded away from
zero, as predicted by the theory.

\begin{figure}
  \begin{center}
\begin{tabular}{c}
\hspace*{-0.25in}\includegraphics[scale=.7]{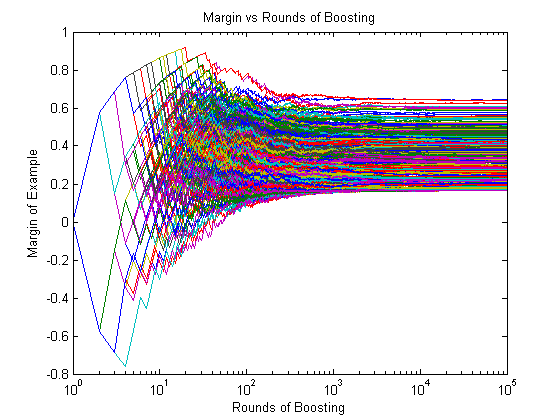}
\end{tabular}
  \end{center}
  \caption{
{\bf Evidence for the Convergence of the (Signed) Margins of the Optimal AdaBoost Classifier when
  Boosting Decision Stumps on the Cancer Dataset.}  This
plot 
shows the behavior of the ``signed'' margin $y^{(i)} \,
\margin_T(x^{(i)})$ of every example $i=1,\ldots,m$ as a function of
the number of rounds $T$ of boosting (in log scale).}
\label{fig:classconv}
\end{figure}

\begin{figure}
  \begin{center}
\begin{tabular}{c}
\hspace*{-0.75in} \includegraphics[scale=.48]{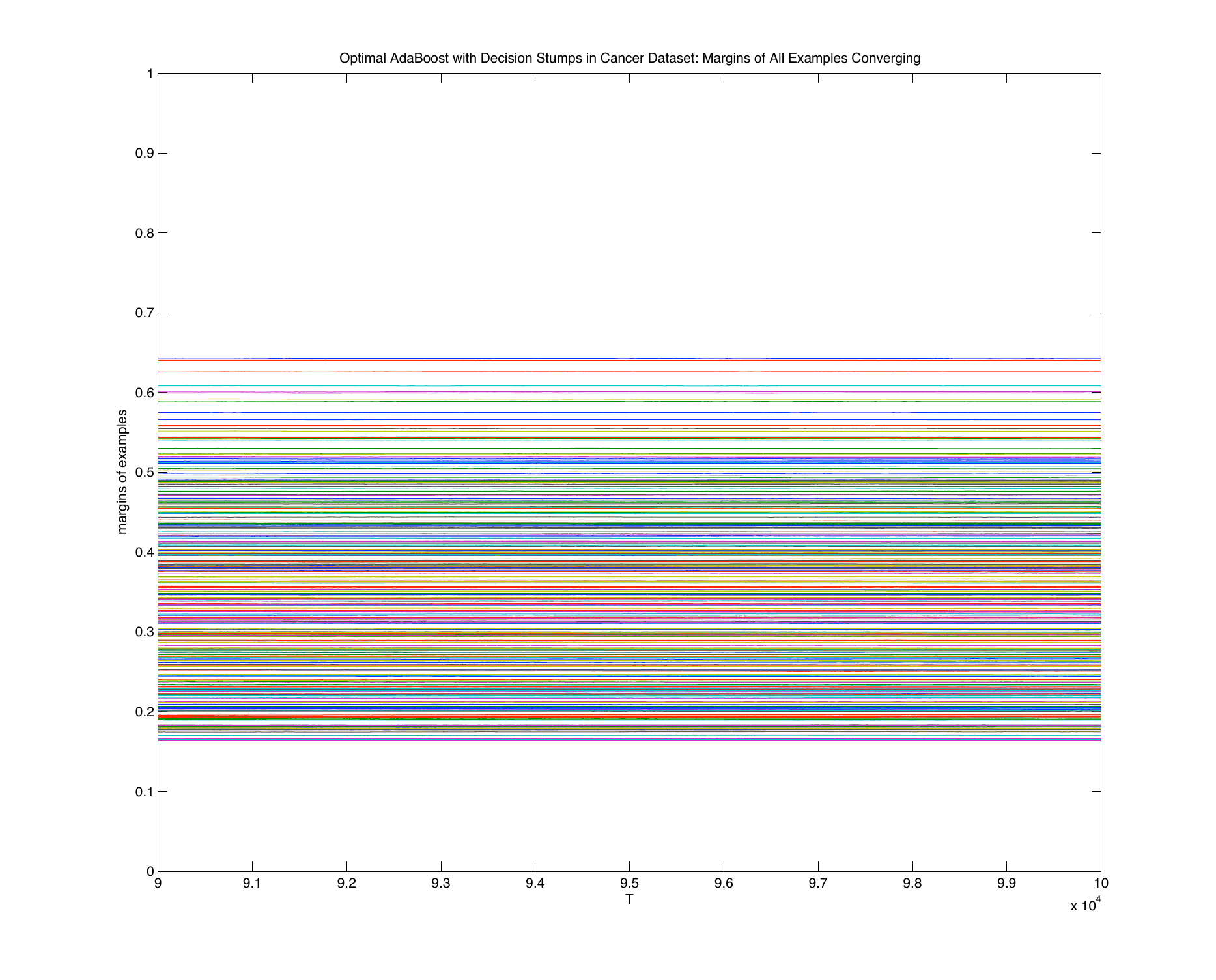}
\end{tabular}
  \end{center}
  \caption{
{\bf Evidence for the Convergence of the (Signed) Margins of the Optimal AdaBoost Classifier when
  Boosting Decision Stumps on the Cancer Dataset (Zoom).}  
This plot 
is a closer look at the asymptotic behavior of the signed margins in
the plot in Fig.~\ref{fig:classconv} from
rounds $T=90K$ to $100K$.  Evidence for the convergence of the signed
margins is more evident at this resolution.}
\label{fig:classconvzoom}
\end{figure}

Fig.~\ref{fig:classconv} provides reasonably clear empirical evidence for the
convergence of the (signed) margins of the Optimal AdaBoost classifier when boosting decision
stumps on the Cancer dataset.  In this figure, the signed margin for every
example appears to be converging: From rounds 90k to 100k there is
very little change, as seen most clearly in Fig.~\ref{fig:classconvzoom}.

\begin{figure}
  \begin{center}
\begin{tabular}{c}
\hspace*{-0.75in} \includegraphics[scale=.5]{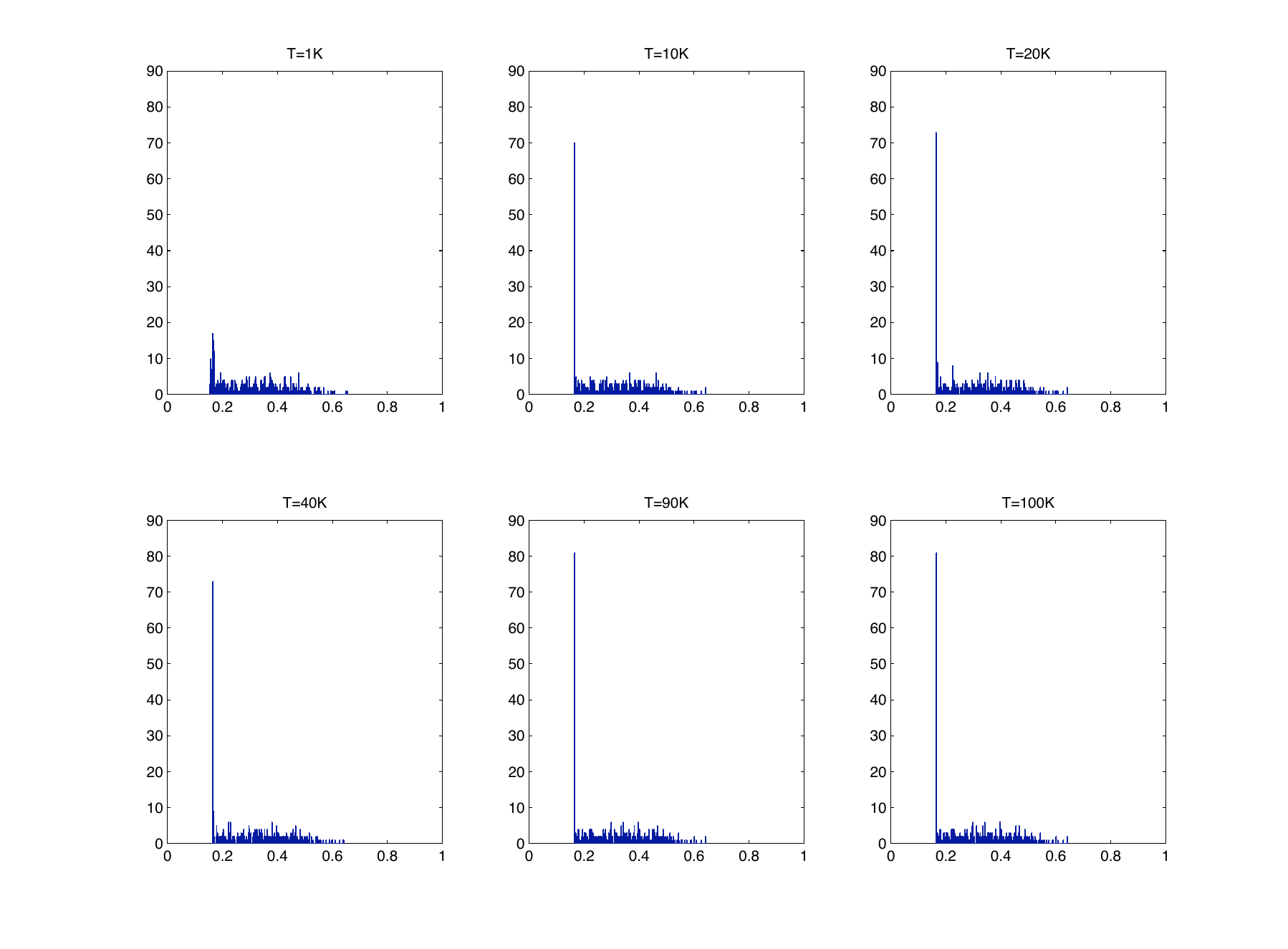}
\end{tabular}
  \end{center}
  \caption{{\bf Evidence for the Convergence of the Minimum Margin.} This
    plot depicts the minimum margin as a function of the number of
    rounds of boosting (log scale) on the Cancer dataset, using
    decision stumps.  This is an isolation of the minimum margin from
    Figure~\ref{fig:marginhist}. (see main text for further discussion)}
  \label{fig:cancermargin}
\end{figure}

\begin{figure}
  \begin{center}
\begin{tabular}{c}
\includegraphics[scale=.7]{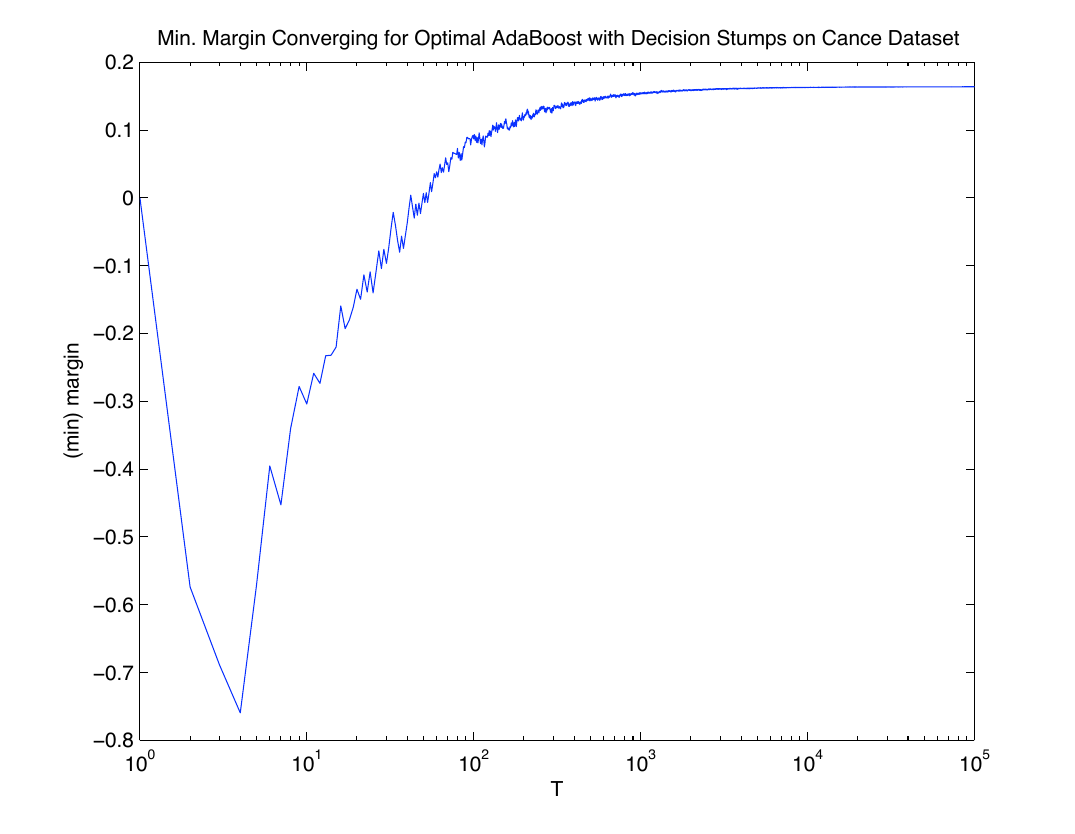}
\end{tabular}
  \end{center}
  \caption{{\bf Evidence for the Convergence of 
      the Signed-Margins' Distribution.} This plot shows the
histogram of signed margins at rounds $T=1K, 10K, 20K, 40K, 90K,
100K$. The histograms contain $200$ bins. Note that they are all
positive because, from the theory of AdaBoost, supposing
Condition~\ref{assume:WeakLearn} (Weak Learning) holds, all the
training examples are correctly classified eventually after some
finite number of rounds (logarithmic in $m$), so that the signed
margin will always be positive. Note also that the examples in the
histogram whose signed margin is closest to zero correspond to the
``support vectors'' (see main text for further discussion).}
  \label{fig:marginhist}
\end{figure}

Fig.~\ref{fig:cancermargin} shows convergence of the minimum margin;
this is essentially a more complete view of the convergence of the
minimum margin clearly seen in the histograms in
Fig.~\ref{fig:marginhist}. 

\begin{figure}
  \begin{center}
\includegraphics[width=\textwidth]{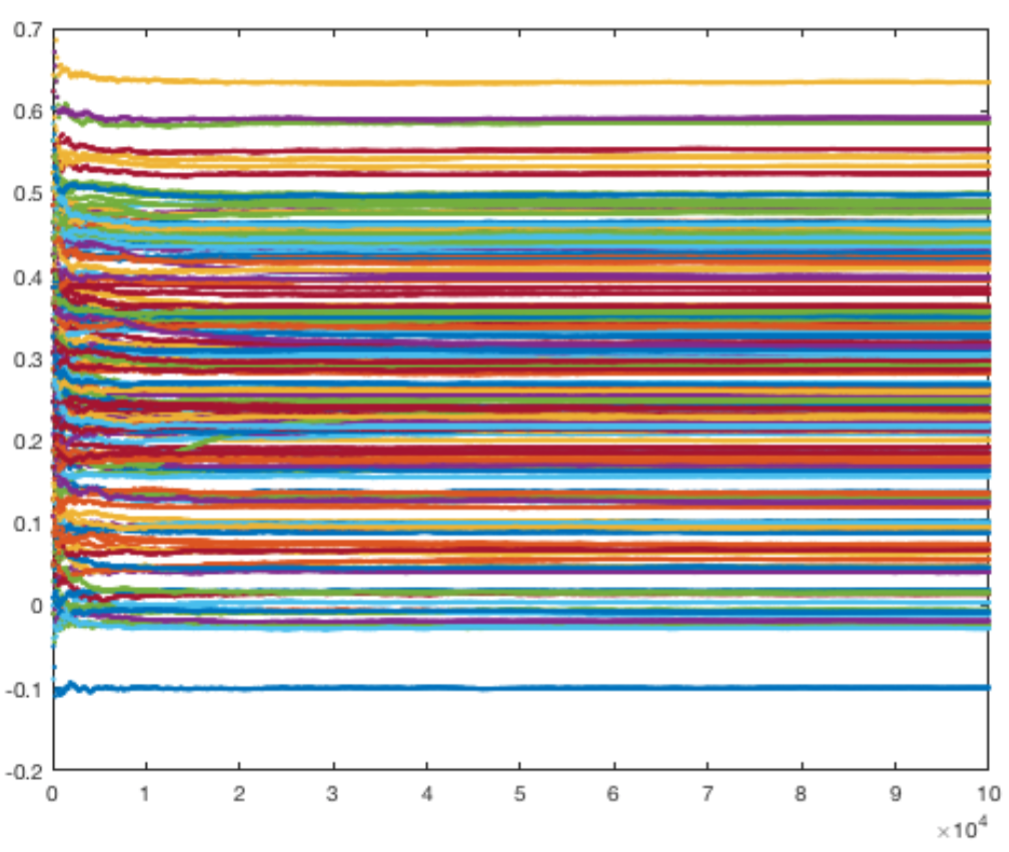}
  \end{center}
  \caption{
{\bf Evidence for the Convergence of Optimal AdaBoost
      Classifier: Signed Margins on a Test Dataset from Full Cancer Dataset.}  This
plot 
shows the behavior of the ``signed'' margin $y \,
\margin_T(x)$ (given in $y$-axis), as a function of
the number of rounds $T$ of Optimal AdaBoost ($x$-axis), of every $(x,y)$ input-output pair on a randomly drawn (without
replacement) test
dataset of size $169$ from the full "Cancer Dataset" of size
$569$. In order for the plotting tool to draw the plots quickly, the
plot is the result of the computation of the signed margin after every
$100$ rounds that the AdaBoost classifier $H_T$, would produce after
running for $T$ rounds. }
\label{fig:classconv_test}
\end{figure}

Fig.~\ref{fig:classconv_test} shows the signed margins on the \emph{test}
dataset from the whole Cancer dataset. This test dataset is the result
of a random permutation of the $569$ examples in the Cancer dataset,
which we partition uniformly at random into two partition of $400$ and
$169$ examples to form the training and test datasets,
respectively. We ran the \emph{same implementation} of
AdaBoost described for the experiments on the signed margin of the
\emph{training} dataset used in this section. However, this experiment is independent from that presented in the
results for the traditional signed-margins on the training set, given
in Figs.~\ref{fig:classconv} and~\ref{fig:classconvzoom}; that is, the random partition between train and test
dataset was different, as was the initial $w_1$. We are only
presenting a single figure, but the converging behavior was consistent
across all runs we tried. Also, a ``zoom'' into the period of $90K$ to
$100K$ would look similar to that in Fig.~\ref{fig:classconvzoom}. We should also note that
empirical evidence for the behavior of moving away from ties was
strongly present 
during this run, and over all other runs
we tried for that matter.

To provide further evidence of the convergence of the Optimal-AdaBoost
\emph{final classifier} over almost all input values in the whole
feature space $\X$, \emph{outside} the training and test datasets, we
generated $200$ i.i.d. input $30$-dimensional examples as follows. In this case, we
have $\X \subset \R^{30}$. First, we computed the largest $v_j^{\mathrm{max}}$ and lowest $v_j^{\mathrm{min}}$ values for
each of the $30$ feature-dimensions indexed by $j$, individually, as
given in the complete
Cancer dataset consisting of $569$ examples. To generate each of the
$200$ examples, we then independently sampled a value for each that feature $x_j
\sim \mathrm{Uniform}([v_j^{\mathrm{min}},
v_j^{\mathrm{max}}])$.  Fig.~\ref{fig:cancer_avg_F_T_outside} provides a plot of the value of
$\frac1T F_T(x)$ for each of the $200$ examples $x$ in the newly
generated, uniformly at random, unseen examples that are not in the
original Cancer dataset. The convergence of the average of classifier
function $F_T$, over $T$, for each of those outside the examples in the original dataset
is clear from the plot. A ``zoom'' into the period of $90K$ to
$100K$ would look similar to that in Fig.~\ref{fig:classconvzoom}. 

The
plots of the weighted-error difference at each round between the best
and second-best weak-classifier for the runs leading to Figs.~\ref{fig:classconv_test}
and~\ref{fig:cancer_avg_F_T_outside} are similar to that in
Fig.~\ref{fig:boundedties} {\bf (bottom)}, so we do not present them here.

All of the converging behavior just described is as predicted by
the theoretical work in Section~\ref{S:classifier-convergence}.

\begin{figure}
  \begin{center}
\includegraphics[width=\textwidth]{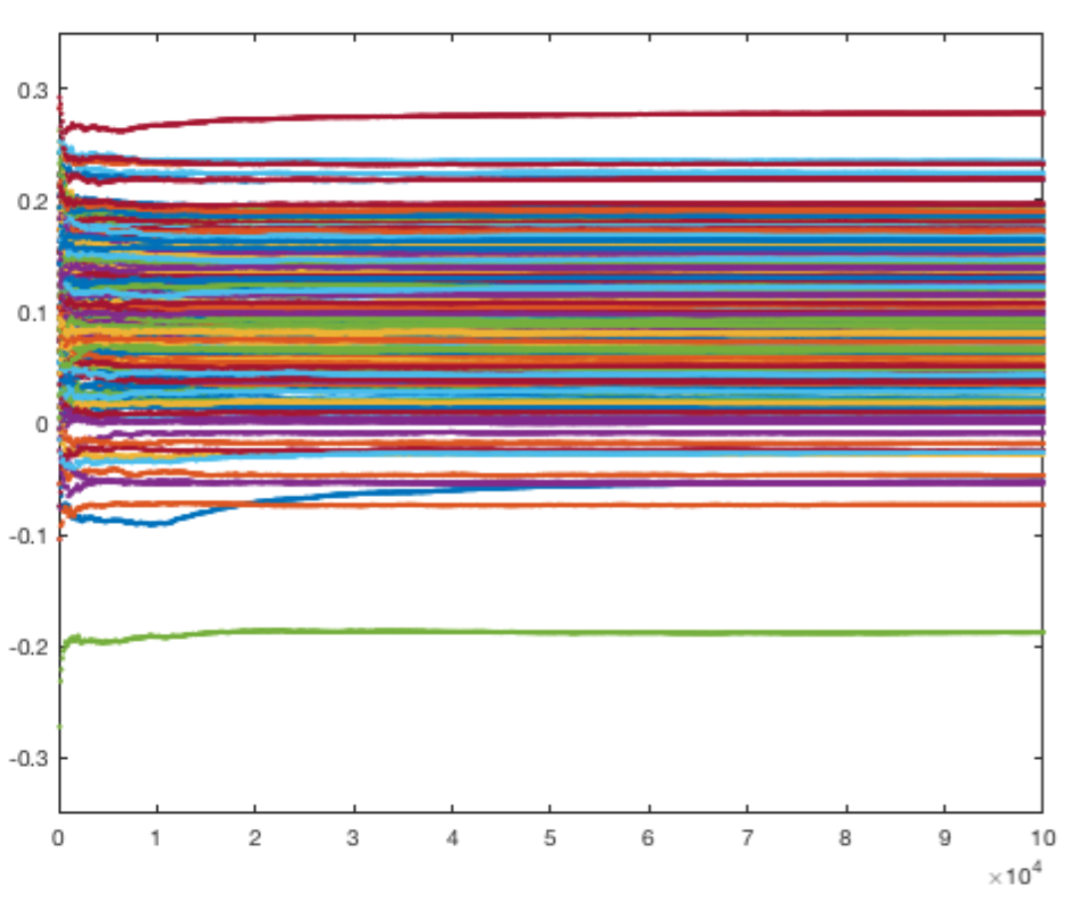}
  \end{center}
  \caption{{\bf Evidence for the Convergence of Optimal AdaBoost
      Classifier: Average $F_T$ Value on Random  Set.} This
    plot depicts the average $F_T$ value ($y$-axis) as a function of the number of
    rounds $T$ of boosting ($x$-axis) on $200$ i.i.d. samples drawn by
    independently sampling from the uniform distribution over each
    range of each of the $30$ feature dimensions as calculated
    from all the $569$ examples in the Cancer dataset. Like all
    previous experiments, here we use
    decision stumps as the weak-hypothesis class. The plot shows that
    the quantity $\frac1T F_T(x)$ is converging for each of the $200$
    randomly drawn examples not contained within the original Cancer dataset. We refer the reader
    to the main text for further details and discussion.}
  \label{fig:cancer_avg_F_T_outside}
\end{figure}

\section{Closing remarks}
\label{sec:close}

We end this paper with a discussion of our results, statements of future work, and a summary of our contributions.

\subsection{Discussion}
\label{sec:disc}

From a practical perspective, it is easy to find reasonable
  evidence of the absence of
  key ties within a large, but finite number of rounds. 
We can say the same for the appearance of converging behavior of time averages of
  functions of the example weights, such as the (signed) margins of
  every training example of
  the final Optimal AdaBoost
  classifier, or its
  test error, albeit within a finite number of rounds, even in
  high-dimensional real-world datasets (see 
  Section~\ref{S:Exp}).~\footnote{We also refer the reader to 
Fig.~\ref{fig:all}, on pg.~\pageref{fig:all}
in Appendix~\ref{app:pac_bnds}.}
Yet, despite our theoretical results, we found very hard to detect
  in practice
whether Optimal AdaBoost has reached or will reach a cycle within a reasonable number
  of rounds in \emph{high-dimensional real-world datasets}.

We would like to make a related note about
some of the empirical observations we made at the beginning of
Section~\ref{S:Exp}, and in particular those about a relatively small number of ``effective''
decision stumps and their consistently apparent logarithmic growth
(see also Appendix~\ref{app:pac_bnds}).
Because of the particular
implementation of the weak learner used here, the size of the
``effective hypothesis class'' is the number of representative
classifiers for the effective mistake dichotomies, thus
finite. Hence, by the Pigeonhole Principle, the number of unique, effective representative base-classifiers
selected will converge: that set of unique base-clasifiers will either
``saturate'' the whole set of effective representative base-classifiers, or
``plateau'' at a smaller subset. If the logarithmic pace of growth of
the number of unique, effective representative base-classifiers 
we have observed were to continue for longer runs of
AdaBoost,~\footnote{We refer the reader to the plots in the center
  column of 
Fig.~\ref{fig:all} in Appendix~\ref{app:pac_bnds}.} 
a simple
``back of the envelope'' analysis suggests that saturation
would happen after approximately $10^{58}$, $10^{22}$, and $10^{52}$
rounds for the Breast Cancer, Parkinson, and Spambase datasets,
respectively. Needless to say, those are quite large numbers to
reach or test convergence of the example weights to a cycle. We have never seen AdaBoost plateau or
saturate in our experiments with real-world high-dimensional
datasets within the large but finite maximum number of rounds of the
execution of the algorithm.

\subsection{Future work}
\label{sec:future_work}

We believe that the technical statements that result from our
construction about convergence may be streghtened, but only up to a
point. We know that Optimal AdaBoost cannot be a strongly ergodic
dynamical system on $\Delta_m$, but whether it is always a \emph{uniquely} ergodic dynamical
system \emph{modulo permutation of the examples} is an interesting
open problem.

In the so-called ``Non-optimal AdaBoost,'' a term also coined
by~\citet{RudinDynamics}, the weak hypothesis $h_t$ that the function
{\bf WeakLearn} outputs may not be that which achieves the minimum
weighted error among all $h \in \Hypo$ with respect to $w_t$ over $D$,
at any round $t$. The convergence results may also extend to this
version of AdaBoost, but a careful study is certainly needed.
Adapting the
analysis to the non-optimal setting may require careful derivation and significant effort.

\hspace{1in} 
\scalebox{0.82}{\begin{minipage}{0.8\textwidth}
\framebox[1.1\textwidth]{
\begin{minipage}{\textwidth}
\begin{center}
{\em WHAT ADABOOST DOES}
\end{center}
In its first stage, Adaboost tries to emulate the population version. This continues for thousands of trees. Then it gives up and moves into a second phase of increasing error .

\vspace{\baselineskip}

Both Jiang and Bickel-Ritov have proofs that for each sample size $m$, there is a stopping time $\tau(m)$ such that if Adaboost is stopped at $\tau(m)$, the resulting sequence of ensembles is consistent.

\vspace{\baselineskip}

There are still questions--what is happening in the second phase? But this will come in the future.

\vspace{\baselineskip}

For years I have been telling everyone in earshot that the behavior of Adaboost, particularly consistency, is a problem that plagues Machine learning.

\vspace{\baselineskip}

Its solution is at the fascinating interface between algorithmic behavior and statistical theory.

\vspace{\baselineskip}

\end{minipage}
}

\vspace{0.5\baselineskip}

\hfill \begin{minipage}{0.9\textwidth}
{\em Leo Breiman, Machine Learning. 2002 Wald Lecture. Slide 38}
\end{minipage}
\end{minipage}
}

\vspace{\baselineskip}

From a statistical perspective,
one question that follows from our
work is, can Optimal AdaBoost converge to the Bayes-risk if we
introduce just the ``right'' bias in the deterministic selection of
base classifiers via just the ``right'' implementation of the
{\bf WeakLearn} function? From an ML perspective, which is the
intended focus of this paper, one open question is, can Optimal AdaBoost converge to the ``minimum risk/loss'' for the given amount of data, under the same kind of implementation conditions?

We wish we could say something about the
\emph{quality} of the generalization error, beyond that it
converges.  In all of our experiments involving decision stumps, we
have observed a logarithmic growth of the number of \emph{unique}
hypothesis contained in the combined AdaBoost classifier as a function
of time.  Such a logarithmic growth yields potentially tighter data-dependent
bounds on the generalization of the AdaBoost classifier.~\footnote{We
  refer the reader to
Appendix~\ref{app:pac_bnds} for additional details and discussion.}
We believe
that the distribution of the invariant measure over the regions
$\pi(\eta)$ (see Definition~\ref{def:pi}) is an important factor for
this behavior. Empirically, the relative frequency of selecting each
hypothesis $h^{\eta}$ seems Gamma distributed. While the empirical behavior of Optimal AdaBoost of
repeating classifiers may suggest why the algorithm tends to resist over-fitting (i.e., the final, global classifier's
complexity remains low), the observed logarithmic-growth behavior itself is
still a mystery.  While our results indicate that the growth will stop
\emph{asymptotically}, it is still interesting to investigate this
behavior in the \emph{non-asymptotic regime} and its potential
connection to convergence rates and resistance to over-fitting.  We attempted to provide precise mathematical
statements of some related open problems in a
separate short manuscript~\citep{DBLP:journals/corr/BelanichO15}.

We have
provided a proof of convergence of Optimal AdaBoost, under some
conditions.  But we do not provide \emph{convergence rates}.
We believe that formally determining convergence rates for the simple, classical
version of Optimal AdaBoost is an important problem in ML.  We suspect
that convergence rates in this case vary significantly depending on
the idiosyncrasies of the datasets and the choice of weak
learner.  
We believe that it is reasonable to begin to move
on to the study of how to establish non-asymptotic convergence rates for Optimal
AdaBoost. 
The constructions described here may be a good place to start that line
of research.

\subsection{Summary of contributions}

We formally establish convergence results for various objects of
interest related to 
Optimal AdaBoost. For instance, we showed that the margin of all examples in the
training set converge.  Using a particular function implementation for
the weak learner, we extended the convergence results to the whole instance space $\X$.  Finally, under the condition that
the decision boundary of $F^*$ (i.e., the limiting function that Optimal
AdaBoost would use for classification) has probability $0$, we
proved that the Optimal AdaBoost classifier $H_T$ and its
generalization error converge.  If the decision boundary has
non-zero probability measure, we can say that the stability of the
generalization error depends on the probability of drawing an example
on the decision boundary of the converged classifier.
We believe our results provide largely positive answers to two important open problems about the behavior of AdaBoost in the machine-learning
community, at least within a computational perspective: Somewhat to our surprise, we can state with reasonable
strength that Optimal AdaBoost always exhibit cycling behavior and is
an ergodic dynamical system.

\begin{acknowledgements}
The work presented in this
    manuscript has not been published in any conference
    proceedings or any other venue, except for the following: (1) the
    undergraduate Honor's thesis in Computer Science at Stony Brook
    University of the first author include parts of the work presented
    in this manuscript; and (2) previous versions of this arXiv
    technical report also appear at
    \url{http://arxiv.org/abs/1212.1108}. The work was supported in
    part by National Science Foundation's Faculty Early Career
    Development Program (CAREER) Award IIS-1643006 (transferred from IIS-1054541).
\end{acknowledgements}

\bibliographystyle{spbasic} 
\bibliography{citation}

\appendix

\section{Our work in context}

In this section of the appendix we place our work in context of the closest related work
and other previous work on other forms of convergence of the AdaBoost
algorithm, or AdaBoost's variants.

\subsection{Related work:~\citet{RudinDynamics}}
\label{sec:rw}

We refer the reader to~\citet{boosting_book} for a textbook
introduction to AdaBoost.
We also refer the reader to
Appendix~\ref{sec:mlcontext} for a discussion of previous work on other forms of convergence of AdaBoost not considered in this article.

As mentioned in the Introduction, others have also approached the study of AdaBoost from a
dynamical-system perspective.  \citet{RudinDynamics} pioneered this
approach, demonstrating that the example weights that AdaBoost
generates enter cycles in many low
dimensional cases. To the best of our understanding, they proved that if
the AdaBoost ``is cycling,'' and several other conditions on the
cycling itself, the 
mistake matrix, and the selection of weak hypotheses at every round,
hold, then AdaBoost produces the maximum margin
solution~\citep[][Theorem 5]{RudinDynamics}. They also provably show several
other results regarding margin maximization, or lack thereof, for 
Optimal AdaBoost, as well as for the so-called ``Non-optimal
AdaBoost''~\citep[][Theorems 4, 6, and 7]{RudinDynamics}. 
We do not consider Non-optimal AdaBoost in
this paper; we discuss it briefly in Section~\ref{sec:future_work}
(Future Work).  

Their results also establish convergence to a
cycle for other special mistake matrices; e.g., those isomorphic to
the identity matrix. We discuss those cases in details as a way to
illustrate our approach in Appendix~\ref{app:ident}, where we also present
alternative derivations of some of their \emph{convergence}
results,
\emph{exlcuding those about margin maximization}~\citep[][Part 1 of
Theorems 1 and 2]{RudinDynamics}. They also show, by means
of an example, that an
infinite number of cycles may exist if there are examples that are
``identically classified''~\citep[][Theorem 3]{RudinDynamics}. 

We must
admit that once we
established that Optimal AdaBoost always exhibits cycling behavior, we
may have been able to obtain our results on the convergence of the
clasifier and its generalization error directly from their results.  In
that sense, the work here may provide an alternative approach and
proofs 
leading to those
results. A more careful study is needed to say that definitively.

Yet, it is fair to say that little was understood on what appeared to be the ``non-cyclic case''
in practice.  
Despite our theoretical results in this paper, it is common to observe seemingly chaotic
non-cyclic behavior on most higher dimensional cases; and
that behavior is typical
of AdaBoost on large real world
datasets. \citet{RudinDynamics} themselves point this out in their
Section 10, entitled, ``Indications of Chaos,'' where they show
empirical evidence of chaotic behavior when considering matrices other
than random
low-dimensional matrices. They attribute the chaotic behavior to
``sensitivity to initial conditons'' and ``movement into and out of cycles.'' 

In closing the discussion of the seminal work of~\citet{RudinDynamics}, 
we remind the reader that the main
interest in that work is the study of maximization of margins, a very important
problem that we do not consider in this paper.

\subsection{Previous work on convergence of other variants of AdaBoost or
  other types of convergence}
\label{sec:mlcontext}

In this appendix we discuss work on other forms of convergence of AdaBoost not considered in this article.

A bulk of the 
asymptotic analysis on AdaBoost
has been focused on how it minimizes different types of loss
functions, with most emphasis on the exponential loss.  Breiman and
others demonstrated how one can view AdaBoost as a coordinate-descent
algorithm that iteratively minimizes the exponential
loss~\citep{Breiman:1999:PGA:334369.334370,Mason00boostingalgorithms,FriedmanAdditiveLogistic}.
Under the so called ``Weak-learning Assumption,'' which we formally
state in our context as Condition~\ref{assume:WeakLearn} in
Section~\ref{sec:ds}, this minimization procedure is well understood,
and has a fast convergence rate: the exponential loss is an
upper-bound of the misclassification error rate on the training
dataset and goes to zero exponentially fast.  Later,
\citet{Collins2002} and~\citet{ZhangBoostingEarlyStopping} showed that AdaBoost minimizes the exponential loss 
even without the Weak-learning Assumption, in the ``unrealizable'' or ``non-separable'' case (i.e., the training error cannot achieve zero);
they do not provide convergence rates.  Finally,
\citet{MukherjeeRateOfConv} proved that AdaBoost enjoys a rate
polynomial in $1 / \epsilon$.  \citet{Telgarsky} achieves a similar
result by exploring the primal-dual relationship implicit in AdaBoost.
~\citet{Telgarsky_COLT13} deals with the convergence in terms of a variety of 
loss functions, not the classifier itself or its generalization error.
These results all concern the convergence of several types of loss functions, with the \emph{exponential loss}, and some of its variants, perhaps receiving most of the attention.  Meanwhile, in this paper our interest is the convergence of the basic, ``vanilla'' Optimal AdaBoost classifier itself, along with its generalization error and the data examples' margins, and time or per-round average of functions of its example weights.

There is another line of research related to the work just mentioned, mostly within the statistics community, that considers what happens in the limit of the number of rounds $T$ of AdaBoost, while  simultaneously letting the number of training examples $m$ go to infinity. From an ML perspective, we end up with a dataset of training examples of \emph{infinite size}, so that we have an infinite number of training examples at our disposal. Statistician often called this ``version'' of AdaBoost the \emph{population version}, while calling the version that considers a set of finite training examples the \emph{sample version}. Being slightly more technical, often in addition to letting $T \to \infty$, that work concerns the 
\emph{consistency}, and more specifically, 
statistical consistency in various forms,
of the 
asymptotic behavior of AdaBoost.
There are a number of papers that show that variants of AdaBoost are consistent 
(see, e.g., \citet{ZhangBoostingEarlyStopping}  and \citet{Bickel_BayesConsistent}).  
\citet{BartlettConsistent} showed that AdaBoost is
consistent if stopped at time $m^{1-\epsilon}$ for $\epsilon \in
(0,1)$, where $m$ is the number of examples in the training set. But
consistency, an inherently \emph{statistical} concept, is distinct
from the notion of convergence in this paper. There are also various
notions of consistency. In the context of the AdaBoost literature, the
Bayes-consistency of a predictor is of particular interest. In
statistics, an algorithm, predictor or estimator is
\emph{Bayes-consistent} if it produces a hypothesis whose
generalization error approaches the \emph{Bayes risk} in the limit of
the number of \emph{examples} $m$ in the training dataset; said
differently, the study of statistical consistency is by its very
nature under the condition of \emph{infinite-size} training  datasets.
Here our concern is the convergence of the generalization error of the
produced hypothesis in the limit of the number of \emph{iterations}
$T$ of the algorithm on a \emph{fixed-size} training dataset.

\section{Mathematical terminology and definitions}\label{A:BasicMathDefn}
\label{app:math}

Here we present a brief description and formal definitions of some fundamental concepts in real analysis, measure theory, and probability theory that we use during the technical sections. For more in depth information, we refer the reader to standard textbooks in those subjects such as~\citet{BartleRA,KFReal,BartleMT,WZMeasure,BTIntroProb,BreimanProb,DurrettProb}.

The notation within this section is self-contained, and generally
considered standard. The
reader should try to avoid confusing the notation used in this section
of the appendix with that used within other parts of the article.

\subsection{Concepts from real analysis and topology}
\label{app:real}

\begin{definition}{\bf (Topological Spaces, Interior Points, and Interior Sets)}
Let $X$ be a set. A function $\Nbf : X \to (2^X - \emptyset)$ is called a \emph{neighborhood topology} (on $X$) if it satisfies the following axioms for all $x \in X$:
\begin{enumerate}
\item
if $S \in \Nbf(x)$ then $x \in S$;
\item 
If $S \subset X$ and $Y \subset X$ for some $Y \in \Nbf(x)$, then $S \in \Nbf(y)$; 
\item 
for all $S,Y \in \Nbf(x)$, we have $S \bigcap Y \in \Nbf(x)$; 
and
\item 
for all $S \in \Nbf(x)$, there exists $Y \in \Nbf(x)$ such that, for all $y \in Y$, we have $S \in \Nbf(y)$.
\end{enumerate}
We call the ordered pair $(X,\Nbf)$ a \emph{topological space}. Whenever the neighborhood topology $\Nbf$ is implicit, we call the set $X$ a \emph{topological space}. If $S \subset X$, then $x$ is an \emph{interior point} of $S$ if there exists a neighborhood $Y \in \Nbf(x)$, such that $Y \subset S$. The \emph{interior of a set} $S \subset X$, denoted by $\Int{S}$, is the subset of $S$ that contains exactly all its interior points: $\Int{S} \equiv \{ x \in S \mid x \text{ is an interior point of } S \}$.
\end{definition}

\begin{definition}{\bf (Metric Spaces and Metric/Distance Functions)}
Let $X$ be some set and a function $d : X \times X \to \R$. We call the ordered pair $(X,d)$ a \emph{metric space} if $d$ is a \emph{metric} or \emph{distance function} on $X$; that is, if $d$ satisfies the following conditions:
\begin{enumerate}
\item ({\bf non-negative}) for all $x,y \in X$, $d(x,y) \geq 0$;
\item ({\bf identity}) for all $x,y \in X$, $d(x,y) = 0$ if and only if $x = y$;
\item ({\bf symmetric}) for all $x,y \in X$, $d(x,y) = d(y,x)$; and
\item ({\bf triangle inequality}) for all $x,y,z \in X$, $d(x,y) \leq d(x,z) + d(z,y)$.
\end{enumerate}
Whenever the metric $d$ is implicit, we call the set $X$ a \emph{metric space}.
\end{definition}

\begin{definition}{\bf (Metrizable Topological Spaces)}
Let $(X,d)$ be a metric space and $(X,\Nbf)$ be a topological space. We say the metric $d$ \emph{induces} the neighborhood topology $\Nbf$ if we define the function $\Nbf$ as follows.
\begin{enumerate}
\item Denote by $B(x,r) \equiv \{ y \in X \mid d(x,y) < r \}$ the ``open ball centered at $x \in X$ of radius $r > 0$'' with respect to metric $d$ and metric space $X$.
\item Set $\Nbf(x) \equiv \{ S \subset X \mid x \in S, B(x,r) \subset S, \text{ for some } r>0 \}$ .
\end{enumerate}
We call $(X,\Nbf)$ a \emph{metrizable topological space}. Hence, every metric space induces a metrizable topological space; and every metrizable topological space is inherently a metric space. Thus, viewed from this perspective, every metric space is a topological space.
\end{definition}

At this point, we could state the definition of the notions defined below (e.g., sequences, limits, and open sets), using the more general mathematical object of topological spaces. Instead, we find it more convenient to define them in terms of the more special concept of metric spaces.

\begin{definition}{\bf (Sequences and their Limits)} Let $(X,d)$ be a metric space. 
If, for all $t = 1, 2,\ldots$, $x_t \in X$, then we denote by $\{x_t\}$ the corresponding \emph{sequence} in $X$. We say the sequence $\{x_t\}$ of elements in $X$, denoted by $\{x_t\} \subset X$ for simplicity, has a \emph{limit} with respect to the metric space $(X,d)$, denoted by $\lim_{t \to \infty} x_t \equiv x^*$, if for all $\epsilon > 0$, there exists $T$, such that for all $t >T$, we have $d(x_t,x^*) < \epsilon$. 
\end{definition}

\begin{definition}{\bf (Open and Closed Sets, Bounded Sets, and Compact Sets)}
Let $(X,d)$ be a metric space. We say a set $S \subset X$ is a
\emph{closed set} if for every sequence $\{x_t\}$ of elements in $S$,
$\lim_t x_t \in S$ (i.e., if the set $S$ contains all of its limit
points). We say $S$ is an \emph{open set} if its \emph{complement}
$S^c \equiv X - S$ is closed. We say $S$ is a \emph{bounded set} if
there exists $r > 0$, such that for all $x,y \in S$, we have $d(x,y) <
r$. We say $S$ is a \emph{compact set} if $S$ is closed and bounded
(by the Heine-Borel Theorem).
\end{definition}

\subsection{Concepts from measure theory and probability theory}
\label{app:measure}

\begin{definition}{\bf ($\sigma$-algebra)} Let $X$ be some set. The
  set $\Sigma \equiv \Sigma_X$, composed of subsets of $X$, is called a \emph{$\sigma$-algebra} over $X$ if it satisfies the
following properties:
\begin{enumerate}
\item ({\bf non-empty}) $\Sigma \neq \emptyset$;
\item ({\bf closed under complementation, with respect to $X$}) if $A
 \in \Sigma$ then $A^c \equiv X-A \in \Sigma$; and
\item ({\bf closed under countable unions}) if $A_1,A_2,A_3,\ldots \in
   \Sigma$ then $A = A_1 \cup A_2 \cup A_3 \cup \cdots \in \Sigma$.
\end{enumerate}
\end{definition}
\begin{definition}{\bf (Measure, Measurable Space, Measurable Set,
 Measurable Function, and
 Meas\-ure Space)} Let $X$ be
some set and $\Sigma$
a $\sigma$-algebra over $X$. A function $\mu : \Sigma \to [-\infty,+\infty]$ is
called a \emph{measure} if it satisfies the
following properties:
\begin{enumerate}
\item ({\bf non-negative}) $\mu(A) \geq 0$ for all $A \in \Sigma$;
\item ({\bf null empty set}) $\mu(\emptyset) = 0$; and
\item ({\bf countable additivity}) if for any countable collection
 $\{A_i\}_{i \in I}$ of pairwise disjoint sets in $\Sigma$, we have
 $\mu\left( \cup_{i \in I} A_i \right) = \sum_{i \in I} \mu\left( A_i \right)$.
\end{enumerate}
The ordered pair $(X,\Sigma_X)$ is called a \emph{measurable space} and the
members of $\Sigma_X$ are called \emph{measurable sets}. If the ordered pair
$(Y,\Sigma_Y)$ is another measurable space, then a function $f : X \to
Y$ is called \emph{a measurable function} if for all measurable sets
$B \in \Sigma_Y$, the pre-image (i.e., inverse image) is $X$-measurable:
i.e., $f^{-1}(B) \in \Sigma_X$. An ordered triple $(X,\Sigma,\mu)$ is called a
\emph{measure space}. 
\end{definition}
\begin{definition}{\bf (Probability Measure, Probability Space, Probabilistic Model, Outcome, Samples Space, Event, Probability Law, and the Axioms of Probability)} 
Let $(X,\Sigma,\mu)$ be a measure space.
A measure $\mu$ such that $\mu(X) = 1$ is called
a \emph{probability measure}. A measure space with a probability
measure is called a \emph{probability space}, and we typically refer to $X$ as the \emph{set of outcomes} and $\Sigma$ the \emph{set of events}, i.e., set of \emph{(measurable) subsets} of $X$. Note that the probability measure $\mu$, when viewed as the \emph{probability law}, over \emph{events} in $\Sigma$, of a \emph{probabilistic model} with \emph{sample space} $X$ (i.e., the set of \emph{outcomes}), satisfies all the \emph{(Kolmogorov's) axioms of probability}.
\end{definition}
\begin{definition}{\bf (Borel $\sigma$-algebra and Borel measures)}
Let $(X,d)$ be a metric space. A $\sigma$-algebra $\Sigma_X$ over $X$
is called a
\emph{Borel $\sigma$-algebra}, with respect to $(X,d)$, if $\Sigma_X$
is generated using the
  the set of all open subsets of $X$, with respect to $(X,d)$ (i.e., we start defining
  $\Sigma_X$ by first including all open subsets of $X$ and then recursively
  applying the result of the complement and countable union operations
  over all the resulting sets). Given a measure space
  $(X,\Sigma_X,\mu)$, the measure $\mu$ is called a \emph{Borel
    measure} if $\Sigma_X$ is a Borel $\sigma$-algebra; if $\mu$ is a
  probability measure, then it is called a
  \emph{Borel probability measure}.
\end{definition}
\begin{definition}{\bf (Measurable and Measure-Preserving
    Transformations)}
\label{def:invariant}
Let $(X,\Sigma_X)$ be a meas\-ur\-a\-ble space. A \emph{transformation} $M : X \to
X$ is \emph{measurable}, with respect to $(X,\Sigma_X)$, if $M$ is a
measurable function, with respect to $(X,\Sigma_X)$.
Let $(X,\Sigma_X,\mu)$ be a measure space. We say transformation $M$
is \emph{measure preserving}, with respect to $(X,\Sigma_X,\mu)$, if $M$ is
measurable, with respect to $(X,\Sigma_X)$, and for any measurable set $A
\in \Sigma_X$, we have $\mu(A) = \mu(M^{-1}(A))$. We also call such
$\mu$ an \emph{invariant measure} of $M$ on $(X,\Sigma_X)$, or simply
on $X$ when $\Sigma_X$ is obvious from context.
\end{definition}

\section{Classroom Example: Addendum}
\label{app:clex}
The full set of label dichotomies for the example is
\begin{align*}
\Dich(\Hypo,S) = \{ 
& (+1,+1,+1,+1,+1,+1), (-1,-1,-1,-1,-1,-1),\\
& (-1,+1,+1,+1,+1,+1), (+1,-1,-1,-1,-1,-1),\\
& (-1,-1,+1,+1,+1,+1), (+1,+1,-1,-1,-1,-1),\\
& (-1,-1,-1,+1,+1,+1), (+1,+1,+1,-1,-1,-1),\\
& (-1,-1,-1,-1,+1,+1), (+1,+1,+1,+1,-1,-1),\\
& (-1,-1,-1,-1,-1,+1), (+1,+1,+1,+1,+1,-1),\\
& (+1,+1,-1,+1,+1,+1), (-1,-1,+1,-1,-1,-1),\\
& (+1,+1,-1,-1,+1,+1), (-1,-1,+1,+1,-1,-1),\\
& (-1,+1,-1,-1,+1,+1), (+1,-1,+1,+1,-1,-1),\\
& (-1,+1,-1,-1,-1,+1), (+1,-1,+1,+1,+1,-1),\\
& (-1,+1,-1,-1,-1,-1), (+1,-1,+1,+1,+1,+1) \} \; .
\end{align*}
The full set of mistake dichotomies for the example is
\begin{align*}
\Mcal = \{ 
& (0,1,0,1,0,1), (1,0,1,0,1,0),\\
& (1,1,0,1,0,1), (0,0,1,0,1,0),\\
& (1,0,0,1,0,1), (0,1,1,0,1,0),\\
& (1,0,1,1,0,1), (0,1,0,0,1,0),\\
& (1,0,1,0,0,1), (0,1,0,1,1,0),\\
& (1,0,1,0,1,1), (0,1,0,1,0,0),\\
& (0,1,1,1,0,1), (1,0,0,0,1,0),\\
& (0,1,1,0,0,1), (1,0,0,1,1,0),\\
& (1,1,1,0,0,1), (0,0,0,1,1,0),\\
& (1,1,1,0,1,1), (0,0,0,1,0,0),\\
& (1,1,1,0,1,0), (0,0,0,1,0,1) \} \; .
\end{align*}

\section{Properties related to Optimal AdaBoost update}
\label{app:abup}

The following properties related to the Optimal AdaBoost update will
be useful in our technical proofs. Some may be of independent
interest. They follow directly from the respective definitions.
\begin{proposition}\label{P:update}
The following statements about the AdaBoost update hold.
\begin{enumerate}
\item Suppose Condition~\ref{cond:nathypo} (Natural Weak-Hypothesis Class)
holds. Then the following also holds.
\begin{enumerate}
\item For all $w \in \Delta_m^+$, $\T_\eta(w) \in \Delta_m^+ \cap \pi_{\frac12}(\eta)$ for all $\eta \in
\Mcal$, and thus $\A(w) \in \Delta_m^+ \cap \pi_{\frac12}(\eta^w)$. 
\item For all $w \not\in \Delta_m^+$, for all $\eta \in \Mcal$,
\begin{enumerate}
\item if $\eta \cdot w
> 0$, then $\T_\eta(w) \in \pi_{\frac12}(\eta)$, and thus $\A(w) \in
\pi_{\frac12}(\eta^w)$; 
\item otherwise, if $\eta \cdot w
= 0$, then $\eta \cdot \T_\eta(w) = 0$, and thus, if $\eta^w \cdot w =
0$, then $\eta^w \cdot \A(w) = 0$.
\end{enumerate}
\end{enumerate}
\item For all $\eta \in \Mcal$, for all $w \in \pi_{\frac12}(\eta)$,
  $\T_\eta(w) = w$, and thus $\A(w) = w$ if $w \in \pi_{\frac12}(\eta^w)$. Hence, for all $\eta \in
  \Mcal$, $\T_\eta(\pi_{\frac12}(\eta))
=
\pi_{\frac12}(\eta)$, and thus $\A(\pi_{\frac12}(\eta^w))
=
\pi_{\frac12}(\eta^w)$. 
\item Suppose Condition~\ref{cond:nathypo} (Natural Weak-Hypothesis Class)
holds. Then, for almost every $w \in
\Delta_m$, we have that $\T_\eta(w) \in \pi_{\frac12}(\eta)$ for all $\eta \in
\Mcal$, and thus $\A(w) \in \pi_{\frac12}(\eta^w)$. If, in addition,
Condition~\ref{assume:WeakLearn} (Weak Learning) holds,
then, for almost every $w \in \Delta_m$, we have $\A(w) \neq w$.
\end{enumerate}
\end{proposition}

\section{When a dynamical system converges to a cycle}
\label{app:cycles}

In this appendix we explore the consequences of convergence to a cycle
in terms of the convergence of time averages and the construction of
invariant measures.

\subsection{Convergence of time averages}
\label{app:cyclesra}

If the evolutions of a dynamical system 
converges to a cycle, then the convergence of the time/per-round average of
functions of its state evolution follow easily.

\begin{proposition}
\label{pro:cycles}
Let $M : W \to W$ be a transformation and $f : W \to \R$ be a
function. Let $\omega_1 \in W$ be an initial point in a sequence and
$\omega_{t+1} \equiv M^{(t)}(\omega_1)$. Suppose
that the sequence $(\omega_t)$ converges to a cycle $(\omega^{(s)})_{s=0,1,\ldots,p}$ of periodicity $p \equiv
p(\omega_1)$ in finite time and let \(
\widehat{f}_{\omega_1} \equiv \frac{1}{p} \sum_{s=0}^{p-1} \omega^{(s)} \;
.\) 
Then \( \lim_{T \to \infty} \frac{1}{T} \sum_{t=1}^T
  f(\omega_t) = \widehat{f}_{\omega_1} \; . \)
The same holds for convergence in the limit if $f$ is uniformly
continuous on $W$; e.g., if $f$
is continuous on $W$ and $W$ is compact, by the Uniform Continuity Theorem~\citep[][Theorem 23.3, pp.160]{BartleRA}.
\end{proposition}
\begin{proof}
Suppose the sequence $(\omega_t)$ enters a cycle in finite time and
let $T_1 \equiv T_1(\omega)$ be the first time it does. Consider the average, assuming, without loss of generality, that
$T > T_1+ p - 1$. Let $L \equiv L(\omega) \equiv \lfloor
\frac{T-p+1}{p} \rfloor$ and $r \equiv r(\omega) \equiv T - T_1 - p
L$. We have that 
\begin{align*}
\frac{1}{T} \sum_{t=1}^T f(\omega_t) = & \frac{1}{T} \sum_{t=1}^{T_1-1} f(\omega_t) + \frac{1}{T} \sum_{t=T_1}^T f(\omega_t) 
=  \frac{1}{T} \sum_{t=1}^{T_1-1} f(\omega_t) + \frac{1}{T}
    \sum_{t=T_1}^{T_1 + p L - 1} f(\omega_t)) + \frac{1}{T} \sum_{t=T_1 +
    p L}^T f(\omega_t) \\
= & \frac{1}{T} \sum_{t=1}^{T_1-1} f(\omega_t) + \frac{1}{T} L 
    \widehat{f}_\omega + \frac{1}{T}
    \sum_{t=T_1}^{T_1+r} f(\omega_t) \; .
\end{align*}
Now taking the $\liminf_{T \to \infty}$, which always exists, we obtain
\begin{align*}
\liminf_{T \to \infty} \frac{1}{T} \sum_{t=1}^T f(\omega_t) =
  \liminf_{T \to \infty} \frac{1}{T} L \widehat{f}_w + \frac{1}{T}
    \sum_{t=T_1}^{T_1+r} f(\omega_t) 
\geq \liminf_{T \to \infty} \frac{1}{T} L \widehat{f}_w +
       \frac{1}{T} (r+1) \min_{t=T_1,\ldots,T_1+r} f(\omega_t) 
= \widehat{f}_w \; .
\end{align*}
A similarly derivation yields
\(
\limsup_{T \to \infty} \frac{1}{T} \sum_{t=1}^T f(\omega_t)
\leq  \widehat{f}_w \; .
\)
The result for the case of convergence in finite time 
follows because
\begin{align*}
\limsup_{T \to \infty} \frac{1}{T} \sum_{t=1}^T f(\omega_t)
\leq \widehat{f}_w \leq \liminf_{T \to \infty} \frac{1}{T} \sum_{t=1}^{T} f(\omega_t) \leq \limsup_{T \to \infty} \frac{1}{T} \sum_{t=1}^T f(\omega_t)
\end{align*}
implies 
\(
\limsup_{T \to \infty} \frac{1}{T} \sum_{t=1}^T f(\omega_t) =\liminf_{T \to \infty} \frac{1}{T} \sum_{t=1}^{T} f(\omega_t)  = \lim_{T \to \infty} \frac{1}{T} \sum_{t=1}^T f(\omega_t) = \widehat{f}_w \; .
\)

The proof for the case of convergence in the limit is almost
identical, \emph{except} that $T_1$ may no longer by finite. However, by the uniform continuity of $f$ on $W$, we can find a $\tau \equiv \tau(\tau') > 0$ such that if
$\omega,\omega' \in W$ and $d(\omega,\omega') < \tau$ then
$|f(\omega) - f(\omega')| < \tau'$. Let $s_t \in \argmin_{s=0,1,\ldots,p-1}
d(\omega_t,\omega^{(s)})$. By the convergence of $(\omega_t)$
to $(\omega^{(s)})$, 
we have that $d(\omega_t, \omega^{(s_t)}) < \tau$ for all $t > T_1 -
1$, for some corresponding 
finite time $T_1 \equiv T_1(\omega_1,\tau')$. Following the same
argument used for the case of convergence in finite time, we can derive that
\(
\liminf_{T \to \infty} \frac{1}{T} \sum_{t=1}^T f(\omega_t) 
\geq \widehat{f}_{\omega_1} - \tau'
\)
and that
\(
\limsup_{T \to \infty} \frac{1}{T} \sum_{t=1}^T f(\omega_t) 
\leq \widehat{f}_{\omega_1} + \tau' \; ,
\)
so that 
\[ \limsup_{T \to \infty} \frac{1}{T} \sum_{t=1}^T f(\omega_t) -
  \liminf_{T \to \infty} \frac{1}{T} \sum_{t=1}^T f(\omega_t)  
\leq 2 \tau' \; , 
\]
 from which the same result follows immediately because $\tau'$ is
 arbitrary.
\qed
\end{proof}

Note that this proposition immediately implies that we can easily obtain all of the same results about
the convergence of the classifier itself and its generalization error
that we prove in this paper when Optimal AdaBoost converges to cycle,
not just a cycle of sets, and even if it only does so in the limit. We leave out the formal statements in the interest of terseness.

\subsection{Constructive proof of existence of invariant
  measures}
\label{app:cyclesBirkhoff}

Here we show how to construct an invariant measure when a dynamical
system is guarantee to converge to a cycle. In particular, we show that the resulting
empirical measure
is invariant and ergodic. The proposition also serves to illustrate
some of the concepts sorrounding ergodicity in
dynamical system.

\begin{proposition}
\label{pro:inv}
Let $M : W \to W$ be a transformation, $\omega \in W$ be an
initial point in a sequence.
Suppose
that there exists $\omega_1(\omega) \in M$ such that the sequence
$(M^{(t-1)}(\omega))$ converges in finite time to a cycle \[
(M^{(s)}(\omega_1(\omega)))_{s=0,1,\ldots,p(\omega)-1} \] of periodicity $
p(\omega)$ anchored by $\omega_1(\omega)$. Then the following holds.
\begin{enumerate}
\item The sequence of empirical measures \(
(\frac1T \sum_{t=1}^T \delta_{M^{(t-1)}(\omega)}) \)
converges 
to the (discrete) probability
measure \( \widehat{\mu}_\omega \equiv \frac{1}{p(\omega)} \sum_{s=0}^{p(\omega)-1}
\delta_{M^{(s)}(\omega_1(\omega))} \). In addition, $\widehat{\mu}_\omega$ is $M$-invariant and ergodic.
\item Consider another sequence $(M^{(t-1)}(\omega')$ that
  converges to a cycle
  $(M^{(s')}(\omega_1(\omega')))_{s'=0,1,\ldots,p(\omega')-1}$ of periodicity $
p(\omega')$ anchored by $\omega_1(\omega')$.
Then the
  following also holds.
\begin{enumerate}
\item The dynamical system is not uniquely ergodic if \[ (M^{(s')}(\omega_1(\omega')))_{s'=0,1,\ldots,p(\omega')-1} \neq
  (M^{(s)}(\omega_1(\omega)))_{s=0,1,\ldots,p(\omega)-1} \; , \] i.e., if
  the empirical measures are not unique.
\item Every (strict) mixture of two distinct empirical measures is
  invariant but \emph{not}
  ergodic: i.e., conversely, for every $\rho \in (0,1)$, the mixture of empirical
  measures $\rho \widehat{\mu}_{\omega} + (1-\rho)
  \widehat{\mu}_{\omega'}$ is ergodic if and only if
  $\widehat{\mu}_{\omega} = \widehat{\mu}_{\omega'}$.
\end{enumerate}
\end{enumerate}
The same holds for convergence in the limit if 
$f$ is uniformly continuous on $W$; e.g., if $f$
is continuous on $W$ and $W$ is compact, by the Uniform Continuity Theorem~\citep[][Theorem 23.3, pp.160]{BartleRA}.
\end{proposition}
\begin{proof}
For Part 1, the proof for convergence in finite time follows from
Proposition~\ref{pro:cycles} by applying it several times with $f = \delta_{\omega^{(s)}}$ for
each $s = 0,1,\ldots,p(\omega)-1$. The proof for the case of
convergence in the limit follows the
same argument used for Part 2.c of Theorem~\ref{thm:CFAB}: that a
sufficient condition for the convergence of the empirical measure is
that the time average of any continuous function $f : W \to \R$
exists. But then, by Proposition~\ref{pro:cycles} again, the time average of any
such function converges in this case too. Hence, the empirical measure exists and
equals $\widehat{\mu}_\omega$. Now, note that 
for any measurable $M$-invariant set $V$, i.e., $V =
M^{-1}(V)$, we have $\widehat{\mu}_\omega(V) = \widehat{\mu}_\omega(M^{-1}(V))
= 1$ if and only if $V = \cup_{s=0}^{p(\omega)-1} \{
\omega^{(s)} \}$. Hence, $\widehat{\mu}_\omega$ is both invariant
(Definition~\ref{def:invariant}) and
ergodic (Definition~\ref{def:ergodic}) by their respective
definitions.

Part 2.a follows by the definition of unique ergodicity
(Definition~\ref{def:ergodic}) because the condition there implies that $\widehat{\mu}_\omega$ and
$\widehat{\mu}_{\omega'}$ have different supports. Part 2.b follows for
similar reasons: the support of any (strict) mixture of empirical measure is
the union of their support. But, the mixture will have two invariant
sets of positive measure if and only if the empirical measures are
different, from which the result follows by the definition of an
ergodic measure. Also, note that invariance is not affected by convex combinations.
\qed
\end{proof}

\section{Characterizing the inverse of the Optimal-AdaBoost update}
\label{app:Ainv}

When studying the dynamics of the AdaBoost update $\A$, it is natural
to ask, when given $w \in \Delta_m$, what is $\A^{-1}(w)$?  Or
similarly, when given $E \subset \Delta_m$, what is $\A^{-1}(E)$?  An
analysis of the inverse is essential in establishing an invariant
measure, and is useful for some of the technical proofs. In particular, it will allow us to establish the existence of a measure over the set of interest, its ``trapping/attracting set,'' on which $\A$ is measure-preserving. To approach this problem, we decompose the inverse into a union of line segments.
\begin{proposition}\label{P:Inv}
Suppose we have $\eta \in \Mcal$ and 
$w \in \A(\Delta_m)$.
Define $w_\eta^-$ and $w_\eta^+$ such that $w_\eta^-(i) \equiv w(i)
\eta(i)$ and $w_\eta^+(i) \equiv w(i)
(1-\eta(i))$. Under Condition~\ref{cond:nathypo} (Natural
Weak-Hypothesis Class),
if $\eta \cdot w > 0$, then $\T_\eta^{-1}(w) = \{ 2\rho w_\eta^- + 2(1-\rho)w_\eta^+ |\, \rho \in (0,1) \}$.
\end{proposition}
\begin{proof}
Let $L(\eta,w) = \{ 2\rho w_\eta^- + 2(1-\rho)w_\eta^+ |\, \rho \in (0, 1) \}$.  Consider an element $w' \in L(\eta,w)$.  Clearly, $w' = 2\rho' w_\eta^- + 2(1-\rho')w_\eta^+$ for some $\rho' \in (0,1)$.  Then, we have
\(
\eta \cdot w' = \eta \cdot (2\rho' w_\eta^-) + \eta \cdot (2(1-\rho')w_\eta^+)
= 2\rho' (\eta \cdot  w_\eta^-) 
= 2\rho' (\eta \cdot w) 
= (2\rho') \frac 12 
= \rho',
\)
where $\eta \cdot w = \frac12$ follows from Proposition~\ref{P:update}.
Using this fact, we see for $i$ such that $\eta(i)=1$, we have
\(
[\T_\eta(w')](i) = w'(i) \times \frac{1}{2(\eta\cdot w')} 
 =  2\rho' w(i) \times \frac{1}{2\rho'} 
 =  w(i).
\)
And similarly, for $i$ such that $\eta(i) = 0$, we have
\(
[\T_\eta(w')](i) = w'(i) \times \frac{1}{2(1-(\eta \cdot w'))} 
 =  2(1-\rho') w(i) \times \frac{1}{2(1-\rho')} 
 =  w(i).
\)
Pulling the cases together, we conclude that $\T_\eta(w') = w$ and $w' \in \T_\eta^{-1}(w)$.

Instead, suppose that $w' \in \T_\eta^{-1}(w)$.  From Definition~\ref{D:1}, we see that
\(
w'(i) = 2 w(i) ( \eta(i) (\eta\cdot w') + (1-\eta(i)) (1-(\eta\cdot w')) ).
\)
Setting $\rho' = \eta\cdot w'$, we see that $w' \in L(\eta,w)$.
\qed
\end{proof}

\begin{figure}[t]
\begin{center}
\begin{tabular}{ll}
\includegraphics[width=0.45\textwidth]{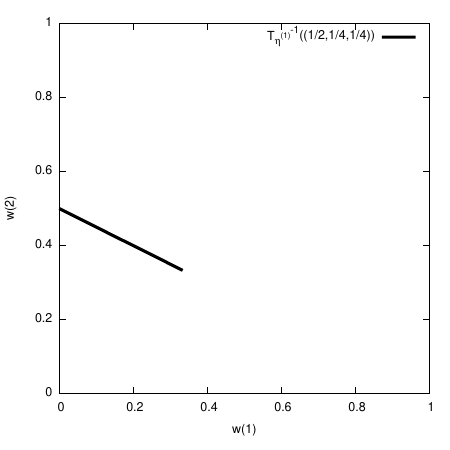} & \includegraphics[width=0.45\textwidth]{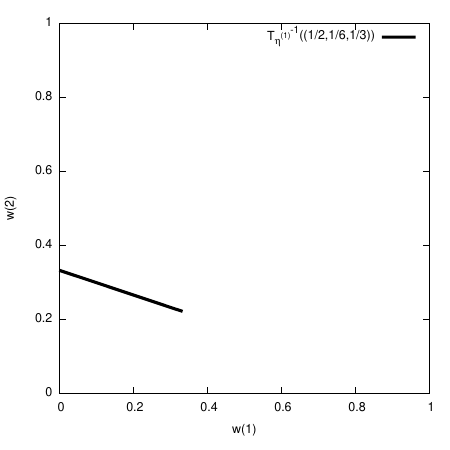}\\
\includegraphics[width=0.45\textwidth]{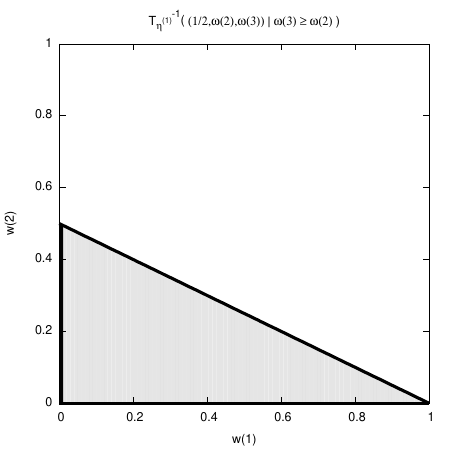} & \includegraphics[width=0.45\textwidth]{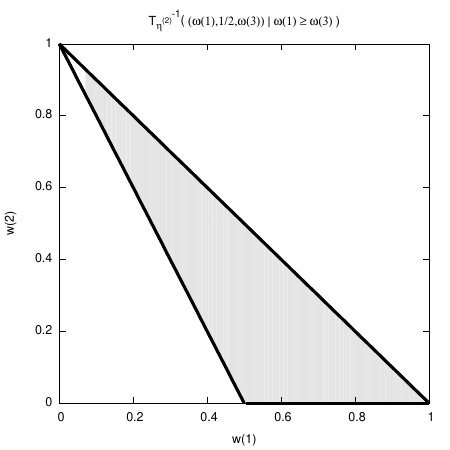}\\
\end{tabular}
\end{center}
\caption{{\bf An Illustration of $\T_\eta^{-1}$:} (\emph{top left})
Eqn.~\ref{eqn:Tinv_ex_1} is a (closed) line segment; (\emph{top right})
Eqn.~\ref{eqn:Tinv_ex_2} is another (closed) line segment; (\emph{bottom left})
Eqn.~\ref{eqn:Tinv_ex_3} is a closed, filled triangle (i.e., a compact set); (\emph{bottom right})
Eqn.~\ref{eqn:Tinv_ex_4} is also a closed, filled triangle (i.e., a
compact set).}
\label{fig:Tinv}
\end{figure}

Fig.~\ref{fig:Tinv} provides an illustration of $\T^{-1}$ in the context of the simple $\Mcal$ isomorphic to the $(3
\times 3)$ identity matrix used in Appendix~\ref{app:ident}. 
For instance, in the context of that example, 
for all $k \in \{1,2,3\}$, and all $w \in \Delta_3^\circ$, we have
\begin{align*}
\T_{\eta^{(k)}}^{-1}(w) = 
\begin{cases}
\emptyset, & \text{if $w(k) \neq\frac12$,}\\ 
\Delta_3^\circ, & \text{if $w(k) = \frac12$.}
\end{cases}
\end{align*}
The following are other examples, presented in Fig.~\ref{fig:Tinv}, which we can compute using the
characterization provided in the last proposition above (Proposition~\ref{P:Inv}).
\begin{align}
\label{eqn:Tinv_ex_1}
\T_{\eta^{(1)}}^{-1}\left( \left(\frac12, \frac14, \frac14 \right)
  \right) = \left\{ w \in \Delta_3 \, \mid \, w(1) \leq \frac13, w(2) =
  \frac12 - \frac12 w(1) \right\}
\end{align}
\begin{align}
\label{eqn:Tinv_ex_2}
\T_{\eta^{(1)}}^{-1}\left( \left(\frac12, \frac16, \frac13 \right)
  \right) = \left\{ w \in \Delta_3 \, \mid \, w(1) \leq \frac13, w(2) =
  \frac13 - \frac13 w(1) \right\}
\end{align}
\begin{align}
\label{eqn:Tinv_ex_3}
\T_{\eta^{(1)}}^{-1}\left( \left\{ \left( \frac12, w(2), w(3) \right)
  \in \Delta_3 \,
  \mid \, w(2) \leq w(3) \right\}\right) = \left\{ w \in \Delta_3 \, \mid \, w(2)
  \leq \frac12 - \frac12 w(1) \right\} 
\end{align}
\begin{align}
\label{eqn:Tinv_ex_4}
\T_{\eta^{(2)}}^{-1}\left( \left\{ \left( w(1), \frac12, w(3) \right)
  \in \Delta_3 \,
  \mid \, w(3) \leq w(1) \right\}\right) = \left\{ w \in \Delta_3 \,
  \mid \, 1-2w(1) \leq w(2) \right\} \; .
\end{align}

So, $\T_\eta(w)$ has a very clean inverse, being simply a line through simplex space.  But, it is important to note that $\T_\eta(w)$ is hypothetical, asking ``where would $w$ go if $\eta=\eta^w$?'' and is not the true AdaBoost weight update, $\A(w)$.  Regardless, the inverse $\A^{-1}(w)$ does decompose into a union of these line segments.
\begin{proposition}
\label{pro:Ainv}
Let 
$w \in \A(\Delta_m)$.  
Then $\A^{-1}(w) = \bigcup_{\eta\in \Mcal} (\T_\eta^{-1}(w) \cap \pi^*(\eta))$.
\end{proposition}
\begin{proof}
Take $w' \in \A^{-1}(w)$.  First, by Definitions~\ref{D:2}
and~\ref{def:pistar} about the tie-breaking of mistake dichotomies and
the respective partition tie-breaking induces in $\Delta_m$,
respectively, for $\eta^{w'} \in \Mcal$ we have $w'\in \pi^*(\eta^{w'})$. By the definition of the inverse of a function, we have
$\A(w') = w$. By the definition of $\A$
(Definition~\ref{D:AdaBoost_update}), we have $\A(w') =
\T_{\eta^{w'}}(w') = w$, which implies $w' \in
\T_{\eta^{w'}}^{-1}(w)$.  Therefore, we have $w' \in (\T_{\eta^{w'}}^{-1}(w)
\cap \pi^*(\eta^{w'})) \subset \bigcup_{\eta\in \Mcal} (\T_\eta^{-1}(w) \cap \pi^*(\eta))$.

Instead, take $w' \in \bigcup_{\eta\in \Mcal} (\T_\eta^{-1}(w) \cap \pi^*(\eta))$.  It must be the case that $w' \in \T_{\eta^w}^{-1}(w) \cap \pi^*(\eta^w)$, because $w'$ can only be in $\pi^*(\eta')$ for one possible $\eta'\in \Mcal$, namely $\eta'=\eta^{w'}$.  But by Definition~\ref{D:2}, we see that implies $\A(w') = w$.  Therefore, $w' \in \A^{-1}(w)$.
\qed
\end{proof}

\begin{figure}
\begin{center}
\begin{tabular}{ll}
\includegraphics[width=0.45\textwidth]{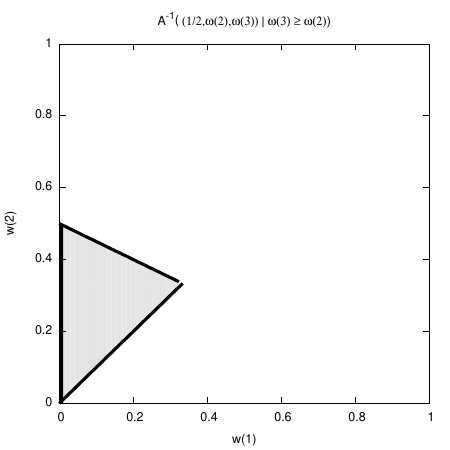} & \includegraphics[width=0.45\textwidth]{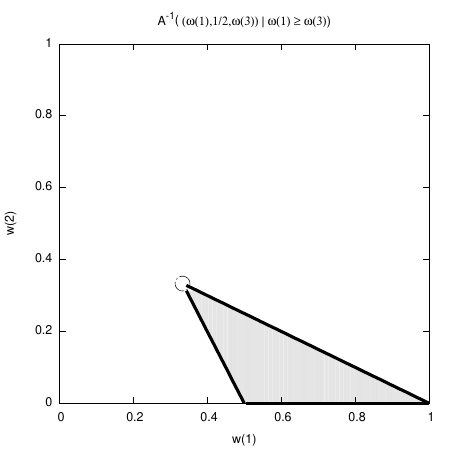}\\
\end{tabular}
\end{center}
\caption{{\bf An Illustration of $\A^{-1}$:} (\emph{left})
Eqn.~\ref{eqn:Ainv_ex_3} is a closed, filled triangle (i.e., a compact set); (\emph{right})
Eqn.~\ref{eqn:Ainv_ex_4} is also a mostly closed, filled triangle,
\emph{except} for the point $(\frac13,\frac13,\frac13)$.}
\label{fig:Ainv}
\end{figure}

Fig.~\ref{fig:Ainv} provides an illustration of 
$\A^{-1}$ in the context of the same example.
used in Appendix~\ref{app:ident}. 
For instance, in the context of that example, 
for all $w \in \Delta_m$, we have
\begin{align*}
\A^{-1}(w) = 
\begin{cases}
\emptyset, & \text{if $w(k) \neq\frac12$ for all $k$,}\\ 
\Delta_3, & \text{if $w(k) = \frac12$ and $w \in \pi^*(\eta^{(k)})$ for some $k$.}
\end{cases}
\end{align*}
The following are other examples, presented in Fig~\ref{fig:Ainv}, which we can compute using the
characterization provided in the last proposition above (Proposition~\ref{pro:Ainv}).
\begin{align*}
\A^{-1}\left( \left(\frac12, \frac14, \frac14 \right)
  \right) = \left\{ w \in \Delta_3 \, \mid \, w(1) \leq \frac13, w(2) =
  \frac12 - \frac12 w(1) \right\}
\end{align*}
\begin{align*}
\A^{-1}\left( \left(\frac12, \frac16, \frac13 \right)
  \right) = \left\{ w \in \Delta_3 \, \mid \, w(1) \leq \frac13, w(2) =
  \frac13 - \frac13 w(1) \right\}
\end{align*}
\begin{align}
\label{eqn:Ainv_ex_3}
\nonumber \A^{-1}\left( \left\{ \left( \frac12, w(2), w(3) \right) \,
  \mid \, w(2) \leq w(3) \right\}\right) = & \left\{ w \in \Delta_3 \,
  \mid \, w(1) \leq w(2) \leq w(3) \right\}\\
= &  \left\{ w \in \Delta_3 \,
  \mid \, w(1) \leq w(2) \leq \frac12 - \frac12 w(1) \right\} 
\end{align}
\begin{align}
\label{eqn:Ainv_ex_4}
\nonumber \A^{-1}\left( \left\{ \left( w(1), \frac12, w(3) \right)
  \in \Delta_3 \,
  \mid \, w(3) \leq w(1) \right\}\right) = & \left\{ w \in \Delta_3 \,
  \mid \, w(2) < w(1), w(3) < w(1) \right\} \\
= & \left\{ w \in \Delta_3 \,
  \mid \, w(1) > \frac13, 1-2w(1) \leq w(2) \leq \frac12 - \frac12 w(1)
    \right\} \; .
\end{align}

\section{Mistake dichotomies isomorphic to $(m \times
  m)$ identity matrix}
\label{app:ident}

In order to illustrate the notation introduced in Sections~\ref{sec:details} and~\ref{sec:prelimds}, let us use a simple set of mistake dichotomies, equivalent to the $(3
\times 3)$ identity matrix: i.e.,
$\Mcal = \{(1,0,0),(0,1,0),(0,0,1)\} =
\{\eta^{(1)},\eta^{(2)},\eta^{(3)}\}$. (We note
that~\citet{RudinDynamics} studied this example, and its
generalization, and established convergence
properties. We discuss this further at the end of this section of the
appendix.) For this illustration, we define $\AdaSel$ such that it
                              encodes the strict
                                               preference relation
                                               $\eta^{(1)} \succ
                                               \eta^{(2)} \succ \eta^{(3)}$ (e.g.,
                                               $\AdaSel(\{\eta^{(2)},\eta^{(3)}\})
                                               = \eta^{(2)}$).

\begin{figure}
\begin{center}
\begin{tabular}{ll}
\includegraphics[width=0.45\textwidth]{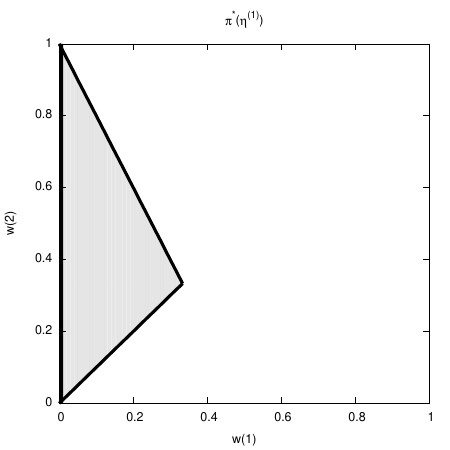} & \includegraphics[width=0.45\textwidth]{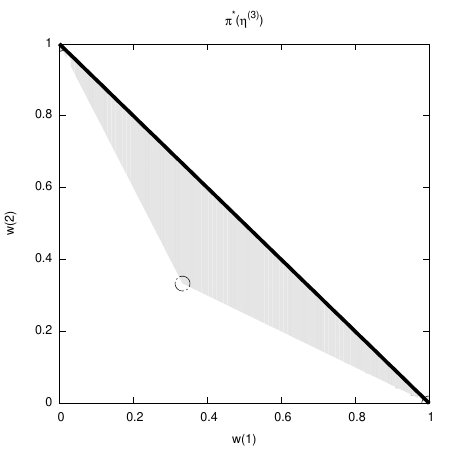}\\
\includegraphics[width=0.45\textwidth]{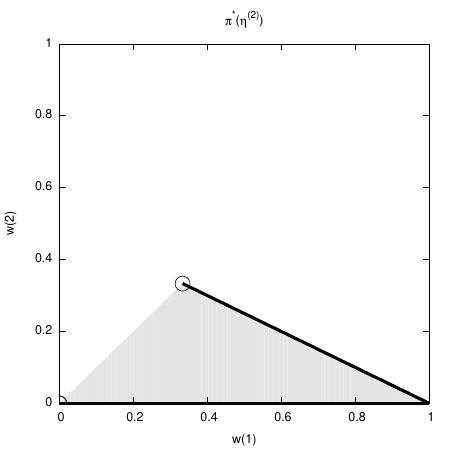} & \includegraphics[width=0.45\textwidth]{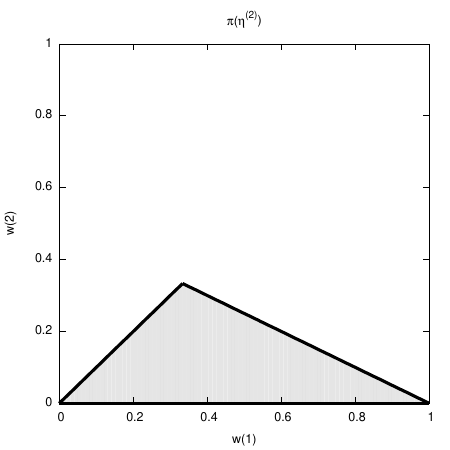}\\
\end{tabular}
\end{center}
\caption{{\bf An Illustration of $\pi^*$ and $\pi$:} (\emph{top left})
$\pi^*(\eta^{(1)}) = \pi(\eta^{(1)})$ is a closed set; (\emph{bottom
  left}) $\pi^*(\eta^{(2)})$ is an open set at the line segment $w(2)=w(1),
w(1) \in
[0,\frac13]$. 
(\emph{top
  right}) $\pi^*(\eta^{(3)})$ is an open set at the line segments $w(2)=1-2w(1),
w(1) \in
[0,\frac13]$ and $w(2)=\frac12 -\frac12 w(1), w(1) \in (\frac13,1]$,
but contains the line segment $w(2)=1-w(1), w(1) \in (0,1)$; (\emph{bottom
  right}) $\pi(\eta^{(2)})$ is the closure of $\pi^*(\eta^{(2)})$ and
thus closed (i.e., contains the line segment $w(2)=w(1),
w(1) \in
[0,\frac13]$)}
\label{fig:pi}
\end{figure}

Fig.~\ref{fig:pi} provides an illustration of $\pi$ and $\pi^*$ using
this example.
For simplicity, we use a
$2$-dimensional projection of the $3$-dimensional simplex. 
For this example, from Definition~\ref{def:pistar}, we have
\begin{align*}
\pi^*(\eta^{(1)}) = & \{ w \in \Delta_3 \, \mid \, w(1) \leq
\min(w(2),w(3)) \},\\
 \pi^*(\eta^{(2)}) = &\{ w \in \Delta_3 \, \mid \,
w(2) < w(1), w(2) \leq w(3) \}, \text{ and}\\
\pi^*(\eta^{(3)}) = &\{ w \in \Delta_3 \, \mid \,
w(3) < \min(w(1), w(2)) \} \; ;
\end{align*}
and from Definition~\ref{def:pi}, we have
\begin{align*}
\pi(\eta^{(1)}) = & \pi^*(\eta^{(1)}),\\
 \pi(\eta^{(2)}) = &\{ w \in \Delta_3 \, \mid \,
w(2) \leq \min(w(1),w(3)) \}, \text{ and}\\
\pi(\eta^{(3)}) = &\{ w \in \Delta_3 \, \mid \,
w(3) \leq \min(w(1), w(2)) \} \; .
\end{align*}
From Definition~\ref{D:1}, we have, for $k \in \{1,2,3\}$,
\begin{align*}
\T_{\eta^{(k)}}\left(\left(\frac13,\frac13,\frac13\right)\right) = 
\begin{cases}
\left(\frac12,\frac14,\frac14\right), & \text{if $k=1$,}\\
\left(\frac14,\frac12,\frac14\right), & \text{if $k=2$,}\\
\left(\frac14,\frac14,\frac12\right), & \text{if $k=3$,}\\
\end{cases}
\end{align*}
so that, due to our tie-breaking scheme implemented via $\AdaSel$,
and from Definition~\ref{D:AdaBoost_update}, we have
\begin{align*}
\A\left(\left(\frac13,\frac13,\frac13\right)\right) =
  \T_{\eta^{(1)}}\left(\left(\frac13,\frac13,\frac13\right)\right) = \left(\frac12, \frac14,
                                               \frac14\right) \; .
\end{align*}
More generally, for any $w_1 \in \Delta_m^\circ$, we have, for any $k
\in \{1,2,3\}$,
\begin{align*}
\T_{\eta^{(k)}}(w_1) = 
\begin{cases}
\left(\frac12, \frac{w_1(2)}{2 (1 - w_1(1))},\frac{w_1(3)}{2 (1-w_1(1))}\right), & \text{if $k=1$,}\\
\left(\frac{w_1(1)}{2 (1 - w_1(2))}, \frac12,\frac{w_1(3)}{2 (1-w_1(2))}\right), & \text{if $k=2$,}\\ 
\left(\frac{w_1(1)}{2 (1 - w_1(3))}, \frac{w_1(2)}{2 (1-w_1(3))}, \frac12\right),
& \text{if $k=3$,}
\end{cases}
\end{align*}
so that, using our tie-breaking scheme,
\begin{align*}
\A(w_1) = 
\begin{cases}
\left(\frac12, \frac{w_1(2)}{2 (1 - w_1(1))},\frac{w_1(3)}{2 (1-w_1(1))}\right), &
\text{if $w_1 \in \pi(\eta^{(1)})$,}\\
 \left(\frac{w_1(1)}{2 (1 - w_1(2))}, \frac12,\frac{w_1(3)}{2 (1-w_1(2))}\right), & \text{if $w_1 \in \pi(\eta^{(2)})$,}\\ 
\left(\frac{w_1(1)}{2 (1 - w_1(3))}, \frac{w_1(2)}{2 (1-w_1(3))}, \frac12, \right),
& \text{if $w_1 \in \pi(\eta^{(3)})$.}
\end{cases}
\end{align*}

\begin{figure}
\begin{center}
\begin{tabular}{ll}
$1$st Iteration & $2$nd Iteration\\
\includegraphics[width=0.45\textwidth]{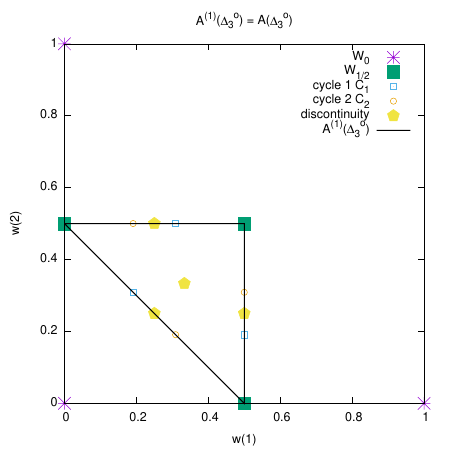}
              & \includegraphics[width=0.45\textwidth]{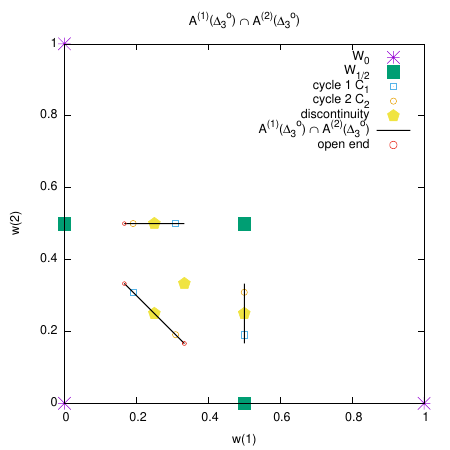} \\
$3$th Iteration & $\infty$ Iteration\\
\includegraphics[width=0.45\textwidth]{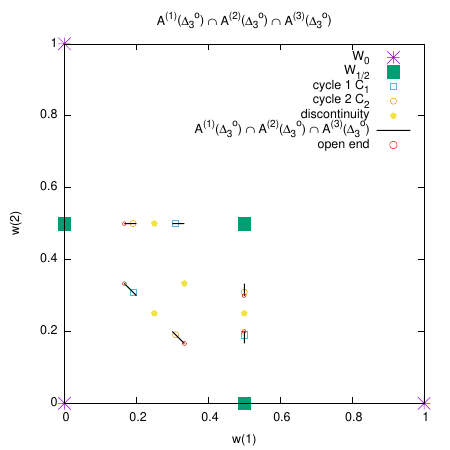} & \includegraphics[width=0.45\textwidth]{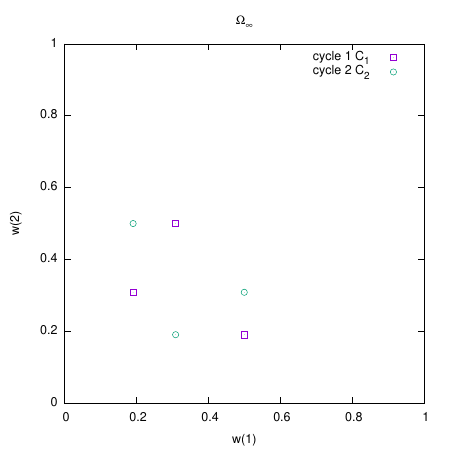}\\
\end{tabular}
\end{center}
\caption{{\bf An Illustration of Recursively Applying $\A$ to Reach
    $\Omega_{\infty}$:} We refer the reader to the main body of the
  paper for a discussion of the plots in this figure.}
\label{fig:mapex}
\end{figure}

Fig.~\ref{fig:mapex} provides an illustration of the application of
$\A$ to calculate $\Omega_\infty$ (Definition~\ref{def:OmegaInf}) in the same simple example. Before
we move to describe the figure in detail, the following sets will be useful in the discussion:
\begin{align}
\label{eqn:W_0}
W_0 = \left\{ (1,0,0),(0,1,0),(0,0,1) \right\}
\end{align}
and
\begin{align}
\label{eqn:W_half}
W_{\frac12} = \left\{ \left(\frac12,\frac12,0\right),
  \left(\frac12,\frac12,0\right), \left(\frac12,\frac12,0\right)
  \right\} \; .
\end{align}
For this example, the following is the set of type-$1$ discontinuities of
$\A$ within $\Delta_3^\circ$:
\begin{align}
\label{eqn:W_disc}
W_{\mathrm{disc}} = \left\{ \left(\frac13,\frac13,\frac13\right),
  \left(\frac14,\frac12,\frac14\right),
  \left(\frac12,\frac14,\frac14\right),
 \left(\frac14,\frac14,\frac12\right) \right\} \; .
\end{align}
Note that there are no type $2$-discontinuities within
$\Delta_3^\circ$ because they are all in the boundary of $\Delta_m$. 
As we can see from the figure, we have $W_{\mathrm{disc}} - \left\{
  \left(\frac13,\frac13,\frac13 \right)
\right\} \subset
\A^{(t)}(\Delta_3^\circ)$ for $t = 1$ and $2$, but $W_{\mathrm{disc}}
\cap \bigcap_{t=3}^T A^{(t)}(\Delta_3^\circ) = \emptyset$ for every $T >
2$. Hence, we have $W_{\mathrm{disc}} \cap \Omega_\infty =
\emptyset$, so that
there are no ties after
the second round of Optimal AdaBoost for every $w_1 \in \Delta_3^+$,
as established by our theoretical results. 
In fact, for this example, we have that the following two sets
correspond to the only two possible cycles one could reach starting
from any $w_1 \in \Delta_m^\circ$: we will call these sets \emph{cycle $1$},
\begin{align}
\label{eqn:C_1_ex}
C_1 = \left\{\left(\frac{1}{2}, \frac{1}{2\varphi^2}, \frac{1}{2\varphi}\right),
\left(\frac{1}{2\varphi}, \frac{1}{2}, \frac{1}{2\varphi^2}\right),
\left(\frac{1}{2\varphi^2}, \frac{1}{2\varphi}, \frac{1}{2} \right)
  \right\} \; ,
\end{align}
and \emph{cycle $2$},
\begin{align}
\label{eqn:C_2_ex}
C_2 = \left\{\left(\frac{1}{2}, \frac{1}{2\varphi}, \frac{1}{2\varphi^2}\right),
\left(\frac{1}{2\varphi^2}, \frac{1}{2}, \frac{1}{2\varphi}\right),
\left(\frac{1}{2\varphi}, \frac{1}{2\varphi^2}, \frac{1}{2} \right)
  \right\} \; ,
\end{align}
where $\varphi \equiv (1 + \sqrt{5})/2$ is the well-known \emph{golden
  ratio}.
In what follows, we describe the figure in detail.
\begin{itemize}
\item {\em 1st Iteration:}
The inner triangle represents a subspace of $\A(\Delta_m)$. In
particular, it is a subspace of the set 
$\bigcup_{\eta \in \Mcal} \pi_{\frac12}(\eta)$. For instance, the
vertical line segment corresponds to $A(\Delta_3^\circ \cap \pi^*(\eta^{(1)}))$, which is open; that is, it does not contain the end
points $(1/2,0,1/2)$ or $(1/2.1/2,0)$, which are both in $W_{\frac12}$
(Eqn.~\ref{eqn:W_half}). More generally, because
$\Omega_\infty$ is defined in terms of $\Delta_3^\circ$, \emph{not}
$\Delta_m$, the points in $W_0$ (Eqn.~\ref{eqn:W_0}) and
$W_{\frac12}$
are not in
$\A^{(1)}(\Delta_m^\circ) = \A(\Delta_m^\circ)$.
\item {\em 2nd
  Iteration:} 
\begin{itemize}
\item {\bf (Begin)}
For all points from the first iteration, we apply $\A$. Note that, in
this example,
$\A^{(2)}(\Delta_m)$ will remain in $\A^{(1)}(\Delta_m)$. And in
particular, we have
$\A^{(2)}(\Delta_m^\circ)=\A^{(1)}(\Delta_m^\circ)\cap \A^{(1)}(\Delta_m^\circ)$.
To find $\A^{(2)}(\Delta_m)$, we find all
points that result from applying $\A$ to every point $w \in
\A^{(1)}(\Delta_m)$. We do so in stages.
For instance, the left-most parts of the horizontal and diagonal line segments, to
the left of their respective discontinuities along those line segmenets (i.e., $(1/4,1/2,1/4)$ and $(1/4,1/4,1/2)$,
respectively), are the result of $\T_{\eta^{(2)}}(\A^{(1)}(\Delta_m^\circ))
\cap \pi^*(\eta^{(1)})$ and $\T_{\eta^{(3)}}(\A^{(1)}(\Delta_m^\circ)) \cap
\pi^*(\eta^{(1)})$, respectively.  
The circles at the left-most end points of those line segments are those
points \emph{not} in $\A^{(2)}(\Delta_m^\circ)$. Line segments \emph{without}
circles at their end-points \emph{are in} $\A^{(2)}(\Delta_m^\circ)$.
\item {\bf (End)}
The union of the corresponding line-segments are
a subspace of $\A^{(2)}(\Delta_3)$ and form $\A^{(1)}(\Delta_3^\circ)
\cap \A^{(2)}(\Delta_3^\circ)$, which in this examples equals $\A^{(2)}(\Delta_3^\circ)$. 
Recall that line segments with the circles at their end points correspond to open sets, while the
lack of circles at both ends means the line segment is closed.
Some line segments are open because of tie-breaking. 
\end{itemize}
\item {\em 3rd Iteration:}
We repeat the previous step on our new set to get
$\A^{(3)}(\Delta_m^\circ)$. Note that once again
$\A^{(3)}(\Delta_m)$ will remain in $\A^{(2)}(\Delta_m)$. And in
particular we have
$\A^{(3)}(\Delta_m^\circ)=\bigcap_{t=1}^3 \A^{(t)}(\Delta_m^\circ)$.
Note that this time, the set $\A^{(3)}(\Delta_3^\circ)$ is bounded away from ties.
Indeed, after the $3$th iteration, the resulting recursive
intersection yields a set that is \emph{forever bounded away from
  ties.} 
Define the following (six) sets, all subsets of $\Delta_3^\circ$,
forming a partition of $\A^{(3)}(\Delta_3^\circ)$: for
all $\eta,\eta' \in \Mcal$, such that $\eta \neq \eta'$,
\begin{align}\label{eqn:W_3_etas}
W_3^{\eta,\eta'} \equiv &
  \T_\eta(\A^{(3)}(\Delta_3^\circ)) \cap \pi^*(\eta') \\
\nonumber = & \begin{cases}
\left\{ \left( \frac12, w(2), w(3) \right) \in \Delta_3^\circ \, \mid
  \, \frac16 \leq w(2) < \frac{2}{10}, w(2) + w(3) = \frac12 \right\},
  & \text{if $\eta = \eta^{(1)}$ and $\eta' = \eta^{(2)}$,}\\
\left\{ \left( \frac12, w(2), w(3) \right) \in \Delta_3^\circ \, \mid
  \, \frac{3}{10} < w(2) \leq \frac13, w(2) + w(3) = \frac12 \right\},
  & \text{if $\eta = \eta^{(1)}$ and $\eta' = \eta^{(3)}$,}\\
\left\{ \left( w(1), \frac12, w(3) \right) \in \Delta_3^\circ \, \mid
  \, \frac16 < w(1) \leq \frac{2}{10}, w(1) + w(3) = \frac12 \right\},
  & \text{if $\eta = \eta^{(2)}$ and $\eta' = \eta^{(1)}$,}\\
\left\{ \left( w(1), \frac12, w(3) \right) \in \Delta_3^\circ \, \mid
  \, \frac{3}{10} \leq w(1) \leq \frac13, w(1) + w(3) = \frac12
  \right\}, & \text{if $\eta = \eta^{(2)}$ and $\eta' = \eta^{(3)}$,}\\
\left\{ \left( w(1), w(2), \frac12 \right) \in \Delta_3^\circ \, \mid
  \, \frac16 < w(1) \leq \frac{2}{10}, w(1) + w(2) = \frac12 \right\},
  & \text{if $\eta = \eta^{(3)}$ and $\eta' = \eta^{(1)}$,}\\
\left\{ \left( w(1), w(2), \frac12 \right) \in \Delta_3^\circ \, \mid
  \, \frac{3}{10} \leq w(1) < \frac13, w(1) + w(2) = \frac12 \right\}, & \text{if $\eta = \eta^{(3)}$ and $\eta' = \eta^{(2)}$.}
\end{cases}
\end{align}
In general, for higher-dimensions, we expect the hyperplanes, which
are $(m-2)$-dimensional manifolds used to build $\bigcap_{t=1}^T
A^{(t)}(\Delta_m^\circ)$ get chopped up into 
hyperplanes of smaller size. For this example, the line segments are just
reduced in size, and $\bigcap_{t=1}^T
A^{(t)}(\Delta_m^\circ)  = A^{(T)}(\Delta_m^\circ) =
\bigcup_{\eta,\eta' \in \Mcal, \eta \neq \eta'} W_t^{\eta,\eta'}$,
where $W_t^{\eta,\eta'}$ for $t>3$ is defined similarly to that case
for $t=3$ in Eqn.~\ref{eqn:W_3_etas}. This pattern continues
indefinitely. Our intuition
for the general case of higher dimensions is that the number of
hyperplanes in $\A^{(t)}(\Delta_m^\circ)$ will be no more than $m-1$ times that in $\A^{(t-1)}(\Delta_m^\circ)$.
For this example, the resulting smaller line segments will be
bounded on the ``left'' and ``right'' the same way indefinitely; i.e.,
throughout the run of the algorithm, each smaller line segment will
keep being open or closed as those ends appeared in the $3$th
iteration. 
We believe that in general, the result of this process of
countably infinite intersections is a \emph{Cantor-like pattern}. But recall that, under our conditions,
$\Omega_{\infty}$ will have a Borel \emph{probability} measure. Hence,
unlike the Cantor set with respect to the standard Borel measure in
$\R$, in our case, the
measure of $\Omega_\infty$ is non-zero with respect to an existing Borel
probability measure space (Proposition~\ref{A:1}). We construct a
specific measure after the description of this figure.
\item {\em $\infty$ Iteration:}
Our intuition for general high-dimensional problems is that each
resulting $W_t^{\eta,\eta'}$ is a Cantor-like set. Our theory states that under the
respective conditions we should expect those sets to be non-empty,
uncountably-infinite, and have no isolated points. In this particular
example, a simple low-dimensional set of mistake dichotomies, however,
the union of all those sets consists of the elements of only two sets,
$C_1$ (Eqn.~\ref{eqn:C_1_ex}) and $C_2$ (Eqn.~\ref{eqn:C_2_ex}), and they are of size $3$,
corresponding to each of the two possible $3$-cycles that the dynamics of
$\A$ will lead us to starting from any $w_1 \in
\Delta_3^\circ$~\citep{RudinDynamics}. Finally, in this case, we have
$\bigcup_{\eta,\eta' \in \Mcal, \eta \neq \eta'} W_\infty^{\eta,\eta'} =
C_1 \cup C_2 =
\Omega_{\infty}$, where $W_\infty^{\eta,\eta'} \equiv \lim_{t \to
  \infty} W_t^{\eta,\eta'}$. 
Our intuition for high-dimensional problems in general is that the $W_t^{\eta,\eta'}$'s are not obviously compact, but that is ok in our case:
We note that $\A^{(4)}(\Delta_m^+)  \supset \Omega_{\infty}$
bounded away from ties. Thus, as we establish in Theorem~\ref{T:compactness}, we obtain that
$\Omega_{\infty}$ \emph{is} compact. For this particular example,
because $\Omega_\infty$ is finite, it is ``trivially'' compact with
respect to the respective topological space. 
Note that $W_0 \cap \Omega_\infty = \emptyset$.
In general, it may be that $W_{\frac12} \cap \Omega_\infty \neq \emptyset$, because
some examples may be non-support-vector examples with respect to $w_1$
(see Definition~\ref{def:sv}). But, for this example, we do
have $W_{\frac12} \cap \Omega_\infty = \emptyset$. Indeed, for this
example, we have $\Omega_\infty \cap (W_0 \cup W_{\frac12}) =
\emptyset$. As an aside, we note that, had we defined $\Omega_\infty$ in
terms of countably infinite intersections of recursive applications of
$\A$ over $\Delta_3$, instead of $\Delta_3^\circ$, the resulting set
would also include sets such as $W_0$ and $W_{\frac12}$, and more
generally, the \emph{boundary} of $\Delta_3$.
\end{itemize}
Thus, for this example, we have
\(
\Omega_\infty = C_1 \cup C_2 \; ,
\)
which is a compact set and $\A$ is continuous on it.
In this case, as 
a result of Proposition~\ref{pro:inv} in Appendix~\ref{app:cycles}, we do not need to use the Krylov-Bogolyubov Theorem
(Theorem~\ref{Krylov-Bogolyubov}), as further discussed in
Section~\ref{app:cyclesBirkhoff}.
We can actually construct 
an uncountably infinite number of invariant measures
$\mu_\infty$ as follows. Let $\Sigma_\infty \equiv
2^{\Omega_\infty}$, so that $(\Omega_\infty,\Sigma_\infty)$ is a Borel
$\sigma$-algebra of $\Omega_\infty$.  By Proposition~\ref{pro:inv}, there are exactly two empirical
measures in this case, one for each set $C_1$ and $C_2$, both of which are
invariant, and which we can
use to define an uncountably infinite number of other invariants measures: Define
$\mu_\infty^{(0)}(W) \equiv \frac{| W \cap C_1|}{3}$ and
$\mu_\infty^{(1)}(W) \equiv \frac{| W \cap C_2|}{3}$, and for any
$\rho \in (0,1)$, $\mu_\infty^{(\rho)}(W) \equiv (1-\rho)
\mu_\infty^{(0)}(W) + \rho \mu_\infty^{(1)}(W)$. By
Proposition~\ref{pro:inv}, only $\mu_\infty^{(0)}$ and
$\mu_\infty^{(1)}$ are ergodic. Note that by the same propositon, the
resulting dynamical system is not uniquely ergodic, even when the
state space is resctricted to $\Omega_\infty$ instead of the full
$\Delta_m$. But one may argue that it \emph{is} uniquely ergodic, even
on $\Delta_m$, 
\emph{modulo permutaions of the examples}. This qualification is
reasonable given that permutations do not affect the inherent
behavior of the algorithm. This is because we can always run the algorithm
using a fixed ordering and, should the example be permuted, we can
simply permute the weights in the evolution accordingly to retrace the
dynamics under the new permutaion. So, in this sense, we
technically do not have to re-run the algorithm. (The observations in
this  paragraph also hold for the more general case of mistake matrices isomorphic to
the $(m\times m)$-identity matrix considered later in this appendix.)

As for the properties of the secondary quantities, 
In this example, we have $\lim_{t\to \infty} \epsilon_t =
\frac{1}{2\varphi^2}$~\citep{RudinDynamics}. Also, if we initialize
$w_1 \in \Delta_m^\circ$, such that $w_1(1) \leq w_1(2) \leq w_1(3)$,
and use the definition of $\AdaSel$ stated at the beginning of this
section (i.e., such that it
                              encodes the strict
                                               preference relation
                                               $\eta^{(1)} \succ
                                               \eta^{(2)} \succ \eta^{(3)}$), then the
                                               sequence of $\eta_t$'s
                                               converges to the $3$-cycle
\(
\eta^{(1)} \to \eta^{(2)} \to \eta^{(3)} \to \eta^{(1)} 
\)
right from the start. (As we discuss in Remark~\ref{rem:init_cond_wlog} later, the
conditions on $w_1$ and $\AdaSel$ are essentially without loss of generality.)
For this example, starting from the uniform initial weight
$w_1$ given above, there will be ties during the first two
rounds (i.e, $|\argmin_{\eta \in
  \Mcal} \eta \cdot w_1| = 3$, and thus
$\eta_1 = \eta^{(1)}$; $\argmin_{\eta \in
  \Mcal} \eta \cdot w_2 =
\{\eta^{(2)},\eta^{(3)}\}$, and thus $\eta_2 = \eta^{(2)}$); but there will never be ties again after the second round (i.e., $\argmin_{\eta \in
  \Mcal} \eta \cdot w_t =
\{\eta^{((t\mod 3) + 3)}\}$, and thus, $\eta_t = \eta^{((t\mod 3) + 3)}$, for all $t \geq 3$). It turns out that one can extend convergence to the same
$3$-cycle in the limit for any initial $w_1 \in \Delta_3^+$.
(see
Definition~\ref{D:Delta_m^+}), and thus for every $w_1 \in
\Delta_m^+$. In fact, the convergence to an $m$-cycle generalizes to any
$m$ when $\Mcal$ is isomorphic to an $(m \times m)$ identity
matrix. This is implicit in the proofs given by~\citet{RudinDynamics}. Here we
provide an alternative proof below (Theorem~\ref{the:conv_ident})
based on the Fibonacci sequence, and
its higher-order generalizations. (Some reader may find this
alternative proof and presentation
simpler, and it may be of independent interest). Although the
statements of the following technical results on the ``global''
convergence of Optimal AdaBoost for such $\Mcal$ considered here are 
under
conditions on $w_1$ and $\AdaSel$, doing so is in some sense ``without loss
of generality.'' We also note that the results hold
slightly more broadly than just the $\Mcal$ isomorphic to $(m \times
m)$ identity matrix. We remark on both of those points after the statements and respective
proofs of the technical results.
\begin{lemma}\label{lem:w_expr_ident}
Let $m \geq 3$. Given the set of mistake dichotomies $\Mcal$
isomorphic to the $(m \times m)$ identity matrix $\mathbf{I}_{m \times
  m}$, the sequence of example weights $w_t$ generated by Optimal
AdaBoost for $\Mcal$ starting from any initial $w_1 \in \Delta_m^{\circ}$ such
that $w_1(1) \leq w_1(2) \leq \cdots \leq w_1(m)$, and using an
implementation of $\AdaSel$ such that $\eta^{(1)} \succ \eta^{(2)}
\succ \cdots \succ \eta^{(m)}$, are given by an expression involving the Fibonacci sequence for $m=3$, or one closely related to its standard higher-order generalization for $m > 3$.  
\end{lemma}
\begin{proof}
Given any initial $w_1 \in \Delta_m^{\circ}$, define the recurrence relation $Z_t = Z_{t-1} + \cdots + Z_{t-m+1}$ for all $t \geq m$, and $Z_t = \sum_{l=t+1}^m w_1(l)$ for all $t=1,\ldots,m-1$.
For the first $t=2,\ldots,m$ rounds we have the following form for the $w_t$'s: for all $t=2,\ldots,m$, 
\[
w_t = \left(\frac{Z_1}{2 Z_{t-1}}, \frac{Z_2}{2 Z_{t-1}}, \ldots, \frac{Z_{t-2}}{2 Z_{t-1}}, \frac12, \frac{w_1(t+1)}{2 Z_{t-1}}, \ldots, \frac{w_1(m)}{2 Z_{t-1}} \right)
\]
and more generally, for all $t > m$, and $s = t \mod m$,
\begin{align*}
w_t = \left(\frac{Z_{t-s}}{2 Z_{t-1}}, \frac{Z_{t-s+1}}{2 Z_{t-1}}, \ldots, \frac{Z_{t-m-2}}{2 Z_{t-1}}, \frac12, \frac{Z_{t-m+1}}{2 Z_{t-1}}, \ldots, \frac{Z_{t-s-1}}{2 Z_{t-1}} \right) \; .
\end{align*}
The result follows by applying the Principle of Mathematical Induction.
\qed
\end{proof}
\begin{theorem}\label{the:conv_ident}
Let $m \geq 3$. Given a set of mistake dichotomies $\Mcal$ isomorphic to the $(m \times m)$ identity matrix $\mathbf{I}_{m \times
  m}$, the sequence of example weights $w_t$ generated by Optimal
AdaBoost for that matrix and any initial $w_1 \in \Delta_m^{\circ}$ such
that $w_1(1) < w_1(2) < \cdots < w_1(m)$, , and using an
implementation of $\AdaSel$ such that $\eta^{(1)} \succ \eta^{(2)}
\succ \cdots \succ \eta^{(m)}$, always converges to the following $m$-cycle in $\Delta_m^{\circ}$:
\begin{align}
\label{eqn:mcycle} & \left( \frac12, \frac{1}{2r^{m-1}}, \frac{1}{2r^{m-2}},\ldots,\frac{1}{2r^2}, \frac{1}{2r} \right)\\
\nonumber \to& \left( \frac{1}{2r}, \frac12, \frac{1}{2r^{m-1}}, \frac{1}{2r^{m-2}},\ldots, \frac{1}{2r^2} \right)\\
\nonumber \to& \cdots\\
\nonumber \to& \left( \frac{1}{2r^{m-1}}, \frac{1}{2r^{m-2}}, \ldots, \frac{1}{2r^2}, \frac{1}{r}, \frac12 \right)\\
\nonumber \to& \left( \frac12, \frac{1}{2r^{m-1}}, \frac{1}{2r^{m-2}},\ldots,\frac{1}{2r^2}, \frac{1}{2r} \right) \; ,
\end{align}
where $r \equiv r(m)$ is the only positive solution to the equation
$z+z^{1-m} = 2$, and thus 
depends on $m$.
\end{theorem}
\begin{proof}
The proof follows from that used for the last lemma (Lemma~\ref{lem:w_expr_ident}). In
particular, note that we can also express the sequence as $Z_{t+1} - Z_t = Z_t -
Z_{t-m}$ for all $t > m$. Hence, the ratio of two consecutive values
in the sequence satisfy the following equation $\frac{Z_{t+1}}{Z_t}
+\frac{Z_{t-m}}{Z_t}  = 2$, which we can solve in the limit by finding
a solution with positive values to the following equation: $z + z^{-(m-1)}
= 2$. If the value $r \equiv r(m)$ is a proper solution to the equation with
respect to our setting, then we can use that value to provide exact
values to the $w_t$'s in the limit. That is, we have that for all
$s=0,\ldots,m-1$, $\lim_{t \to \infty} Z_{t-s}/Z_t = r^{-s}$, so
that the $w_t$'s will converge in the limit to the $m$-cycle
in $\Delta_m^\circ$ stated in the theorem (Equation~\ref{eqn:mcycle}).\qed
\end{proof}
\begin{remark}
Note that for the special case of $m=3$, we have $r = \varphi$, the golden ratio.
It is curious to note that $r$ increases monotonically with $m$, with
$r=\varphi$ for $m=2$ to $r=2$
as $m \to \infty$. Hence, for every $w_1 \in \Delta_m^\circ$,
as we increase 
$m$, the elements of the $w_t$'s
will tend to powers of $1/2$, the ratio of the error of the second
best with respect to the best hypothesis will tend to the value $2$,
and $\epsilon_t \to 0$ as $m \to \infty$. The
reader should keep in mind, however, that in this example, for any
\emph{finite} $m$, we have, for all $t = 1,2,\ldots,T$, $\epsilon_t =
\frac{Z_{t-m-2}}{2 Z_{t-1}} > 0$, where $Z_t$ comes from the
recurrence relation defined in terms of a Fibonacci-like sequence, or
one of its higher-order generalizations, as given in the proof of the
last lemma (Lemma~\ref{lem:w_expr_ident}); and from which we can also obtain the
expressions for $\alpha_t$ and $\widetilde{\alpha}_t$ via simple
substitution. Thus, we have that 
$\lim_{t \to \infty} \epsilon_t = \frac{1}{2 r^{m-1}}$, so that
$\lim_{t \to \infty} \alpha_t = \frac12 \ln\left( 2 r^{m-1} - 1
\right)$, and $\lim_{t \to \infty} \widetilde{\alpha}_t = \frac{1}{m}$.
\end{remark}
\begin{remark}
\label{rem:init_cond_wlog}
As~\citet{RudinDynamics} state, we can always relabel the training
examples via a reordering of their indexes without really affecting
the general nature of the resulting dynamical system induced by the
AdaBoost update. It is in this sense what we mean that the technical
results are really ``without loss of generality.'' In particular, we can
always relabel the indexes to the training examples by sorting any
initial $w_1' \in \Delta_m^\circ$. Let the sequence of indexes to the
training examples $(i_1,i_2,\ldots,i_m)$ be such that $w_1'(i_1) \leq
w_1'(i_2) \leq \cdots \leq w_1'(i_m)$. Set $w_1(s) \equiv w_1'(i_s)$
for all $s=1,2,\ldots,m$, so as to satisfy the condition on $w_1$
imposed on the technical results just presented above (Lemma~\ref{lem:w_expr_ident}
and Theorem~\ref{the:conv_ident}). In addition, redefine $\AdaSel$ to use the
preference order $\eta^{(i_1)} \succ \eta^{(i_2)} \succ \cdots \succ
\eta^{(i_m)}$, so as to satisfy the conditions on the preference order,
stated in the same technical results. From this discussion we can
conclude that convergence will occur to one of the
$(m-1)!$ possible permutations of the cycle given in Theorem~\ref{the:conv_ident}, one for
each permutation resulting from the sorting of the $m-1$ degrees of
freedom defining the $m$-probability simplex. The exact
cycle to which the process will converge will depend on $w_1$ and
$\AdaSel$, as determined by ``inverting'' the corresponding
permutation (i.e., mapping back to the original weights $w_1'$ based
on their sorted order). The final classifier may change because of the
difference in tie-breaking, but the general nature of the dynamics of the training process
will not change in any critical way in terms of convergence. We note that, for this example, the values of the
secondary quantities generated by Optimal AdaBoost will not change
with the re-ordering.
\end{remark}
\begin{remark}
As~\citet{RudinDynamics} also essentially state using different terminology, any
set of mistake dichotomies $\Mcal' \supset \Mcal$ which does not
contain the ``all zeros'' mistake dichotomy, and thus maintain Part 3 of
Condition~\ref{cond:nathypo} (Natural Weak-Hypothesis Class), will
behave essentially equivalent to the $\Mcal$ considered in this
example.
Hence, Optimal
AdaBoost will still exhibit the same global convergence, starting from every $w_1 \in \Delta_m^\circ$, to some
$m$-cycle formed by a permutation of the specific $m$-cycle given in
Theorem~\ref{the:conv_ident}. 
\end{remark}

\section{On Optimal-AdaBoost's weak-hypothesis class: dominated hypotheses
  and data-dependent PAC bounds}
\label{app:pac_bnds}

In this appendix we provide further discussion on our statement about
the empirically observed logarithmic growth on the number of
weak-hypothesis that Optimal AdaBoost selects during its execution
(i.e., in the non-asymptotic regime),
within a large, but albeit finite, number of rounds. Some of the
concepts, terminology, and notation used here are already defined in
the main body.

The inspiration for the work presented in this appendix comes from our empirical observations
on the considerably small number of ``effective'' weak hypothesis, and
the even smaller number of ``unique'' weak hypothesis selected by
Optimal AdaBoost in practice. Those observations led us to the
derivation of potentially tighter data-dependent bounds on the
generalization error, that are closer to the behavior of Optimal AdaBoost in
high-dimensional real-world datasets. We
present those experimental results later in
Appendix~\ref{S:Exp_v2}. We present the
data-dependent PAC bounds 
in Appendix~\ref{A:DataBnd}. We believe the empirical results
presented here also provide some practical
perspective on
our theoretical results on the provably cycling-like behavior of
arbitrarily-accurate approximations of Optimal
AdaBoost, and of the exact version in certain conditions.

\subsection{Dominated, effective, and uniquely-selected weak
  hypotheses}
\label{sec:dom}

We now formally introduce the concept of \emph{dominated} mistake
dichotomies. We say a mistake dichotomy $\eta$ \emph{dominates}
another mistake dichotomy $\eta'$ if for all $i=1,\ldots,m$, we have
$\eta(i) = 1 \implies \eta'(i)=1$, so that the set of mistakes
associated with $\eta$ is a subset of those associated with
$\eta'$. For instance, in the context of the classroom example in
Fig.~\ref{fig:clex}, 
shown on pg.~\pageref{fig:clex} of the main body,
the
dichotomy $\eta = (0,0,1,0,1,0)$ dominates $\eta' = (1,0,1,0,1,0)$.
Because we are studying Optimal AdaBoost, we can eliminate
\emph{dominated} mistake dichotomies from the \emph{set of all possible
mistake or error dichotomies} $\Mcal$. To see why this removal is sound, note that any dominated hypothesis would never be selected by Optimal AdaBoost during its execution. This is because any $w_t$ that Optimal AdaBoost can generate during its execution is strictly positive (i.e., satisfies $w_t(i) > 0$ for all $i$).
Thus, if $\eta$ dominates $\eta'$, for any such $w_t$, the weighted error $\eta \cdot w_t$ of $\eta$ would always be 
\emph{strictly smaller} than 
the weighted error $\eta' \cdot w_t$ of $\eta'$, at every round $t$ of
Optimal AdaBoost. The result after such removals is what we call
\emph{the set of effective mistake dichotomies} $\Ecal \equiv \{ \eta
\in \Mcal \mid \text{ no other } \eta' \in \Mcal \text{ dominates }
\eta \} \subset \Mcal$. Let $r \equiv |\Ecal|$ and $\Ecal \equiv
\{\eta^{(1)},\ldots,\eta^{(r)}\}$.~\footnote{In essence, in the
  context of Optimal AdaBoost as defined in Section~\ref{sec:details} in the main
  body of the paper,
we
could have replaced $\Mcal$ by $\Ecal$ throughout the presentation
there and obtain exactly the same technical results.} (Note that $r \leq n \equiv |\Mcal|$.) 
For instance, in the context of the classroom example in Fig.~\ref{fig:clex}, we have
\begin{align*}
\Ecal = \{ 
& (1,0,1,0,0,1),\\
& (0,1,1,0,0,1),\\
& (0,0,1,0,1,0),\\
& (0,1,0,0,1,0),\\
& (1,0,0,0,1,0),\\
& (0,0,0,1,0,0) \} \; ,
\end{align*}
so that $r=6$. 

We can define \emph{the effective mistake matrix} $\M \in \{0,1\}^{r
  \times m}$ similarly from $\Ecal$.~\footnote{
The notation used in~\citet{RudinDynamics} for the so called ``mistake
matrix'' is syntactically different from the one used here. In their
notation, the $(i,j)$ element of the matrix equals $+1$ or $-1$
depending on whether the hypothesis indexed by $j$ correctly or
incorrectly classifies example $i$, respectively. In our case, we
replace the $-1$ elements by $0$ and transpose their mistake matrix;
that is, the values of the mistake matrix we use are $+1$ or $0$
depending on whether the hypothesis indexed by $i$ correctly or
incorrectly classifies example $j$, respectively. While we regret the
change of notation, there are no semantic differences and we found our
syntactic changes in notation to prove extremely convenient in
simplifying the presentation of our technical results and their
proofs.}
 This matrix has the form $\M(k,i) \equiv \eta^{(k)}(i)$, where each
 $(k,i)$ pair indexes a row $k$ and column $i$ of the matrix $\M$,
 respectively.  Note that by construction, a row $k$ of the matrix is
 a 0-1 (bit) vector corresponding to an effective mistake dichotomy
 $\eta^{(k)} \in \Ecal$, which we call an \emph{effective mistake
   dichotomy}. It indicates where the representative hypothesis
 $h^{\eta^{(k)}}$ is incorrect on the dataset of input examples $S$.
 Said differently, we have that \[ \left(\M(k,i) = \indicator{y^{(i)}
     \neq h^{\eta^{(k)}}(x^{(i)})}\right)_{i=1,\ldots,m}\] is a
 bit-vector encoding of the set of training examples that the
 hypothesis $h^{\eta^{(k)}}$ representing the dichotomy $\eta^{(k)}$
 classifies incorrectly: i.e., $h^{\eta^{(k)}}(x^{(i)}) \neq
 y^{(i)}$. For instance, in the context of the classroom example in Fig.~\ref{fig:clex}, we
 could let 
\begin{align*}
\M = \left[ 
\begin{array}{cccccc} 
1 & 0 & 1 & 0 & 0 & 1\\
0 & 1 & 1 & 0 & 0 & 1\\
0 & 0 & 1 & 0 & 1 & 0\\
0 & 1 & 0 & 0 & 1 & 0\\
1 & 0 & 0 & 0 & 1 & 0\\
0 & 0 & 0 & 1 & 0 & 0 
\end{array}
\right] \; ,
\end{align*}
so that $\M$ is a $(6 \times 6)$ matrix and, for example, $\eta^{(1)} = (1,0,1,0,0,1)$ is the first row of $\M$. 

Hence, we can equivalently define the \emph{set of effective
  representative hypotheses} \[\EHypo \equiv
\left\{h^{\eta^{(1)}},h^{\eta^{(2)}},\ldots,h^{\eta^{(r)}}\right\}.\] 
This
is because for any row $k$ of $\M$, we have $\err(h^{\eta^{(k)}}; D ,
w) = 
\sum_{i=1}^m \M(k,i) w(i) = \sum_{i=1}^m \eta^{(k)}(i) w(i) =
\eta^{(k)} \cdot w$.
For instance, in the context of the classroom
example in Fig.~\ref{fig:clex}, we have
\begin{align*}
&h^{\eta^{(1)}}(x_1,x_2) = \sign{x_1-8}, h^{\eta^{(2)}}(x_1,x_2) =
\sign{x_2-4}, h^{\eta^{(3)}}(x_1,x_2) = -\sign{x_1-2},\\
&h^{\eta^{(4)}}(x_1,x_2) = -\sign{x_1-6}, h^{\eta^{(5)}}(x_1,x_2) =
-\sign{x_2-2}, h^{\eta^{(6)}}(x_1,x_2) =
-\sign{x_2-8}
\end{align*}
and
\begin{align*}
\Ecal = \{ & \sign{x_1-8}, 
\sign{x_2-4}, -\sign{x_1-2},\\
&-\sign{x_1-6}, 
-\sign{x_2-2}, 
-\sign{x_2-8}\} \; .
\end{align*}

Finally note that we can construct a matrix very similar to $\M$ directly
from $\Mcal$.  Then, applying a ``pruning'' procedure on $\M$ removes
repeated and \emph{dominated} mistake dichotomies. This process leads
to a common selection scheme in the context of decision stumps. 

\subsection{Preliminary experimental results about dominated, effective, and uniquely-selected weak
  hypotheses on high-dimensional
  real-world datasets using decision stumps}\label{S:Exp_v2}

The empirical results are in the context of decision stumps, which are
one of the most common instantiations of the weak learner for Optimal
AdaBoost effectively used in practice.~\footnote{Some students in an undergraduate
AI course at the University of Puerto Rico at Mayag\"{u}ez, taught by the second author, as well as Girish Kathalagiri, an MS student in
  Computer Science at Stony Brook University,
  working under the supervision of the second author, performed
  very preliminary work on similar experiments.}

\subsubsection{AdaBoosting decision stumps}
\label{S:ds}

Decision stumps are simple decision tests based on a single
attribute of the input; i.e., a decision tree with a single node: the
root corresponding to the attribute test. For instance, we can define the following $\mathrm{test}$ function as an implementation of a decision stump:
\[
\mathrm{test}(\textrm{\em condition}) = \left\{
\begin{array}{ll}
+1, & \textrm{if {\em condition\/} holds},\\
-1, & \textrm{otherwise.}
\end{array}
\right.
\]

\subsubsection{The effective number of decision stumps is relatively
  smaller than expected}
\label{S:eff}

\begin{figure}
\begin{center}
\resizebox{\textwidth}{!}{
\begin{tabular}{|c|c||c|c|}
\hline
Test $\textrm{\em condition}$ & Mistakes & Test
$\textrm{\em condition}$ & Mistakes\\
 & & (inverse) & \\
\hline
TRUE  & \st{2, 4, 6} & FALSE & \st{1, 3, 5} \\
\hline
$x_1 > 2$ & \st{1, 2, 4, 6} & $x_1 \leq 2$ & 3, 5\\
\hline
$x_1 > 4$ & \st{1, 4, 6} & $x_1 \leq 4$ & \st{2, 3, 5}\\
\hline
$x_1 > 6$ & \st{1, 3, 4, 6} & $x_1 \leq 6$ & 2, 5\\
\hline
$x_1 > 8$ & 1, 3, 6 & $x_1 \leq 8$ & \st{2, 4, 5}\\
\hline
$x_1 > 10$ & \st{1, 3, 5} & $x_1 \leq 10$ & \st{2, 4, 6} \\
\hline
$x_2 > 2$ & \st{2, 3, 4, 6} & $x_2 \leq 2$ & 1, 5 \\
\hline
$x_2 > 4$ & 2, 3, 6 & $x_2 \leq 4$ & \st{1, 4, 5} \\
\hline
$x_2 > 6$ & \st{1, 3, 2, 6} & $x_2 \leq 6$ & \st{4, 5} \\
\hline
$x_2 > 8$ & \st{1, 2, 3, 5, 6} & $x_2 \leq 8$ & 4\\
\hline
\end{tabular}
}
\end{center} 
\caption{{\bf Dominated Hypotheses in Classroom Example.} This figure 
illustrates the
  concept of \emph{dominated hypothesis} and \emph{effective hypothesis
  spaces} with respect to Optimal AdaBoost, and within the context of
the ``classroom example'' in Fig.~\ref{fig:clex} on pg.~\pageref{fig:clex} in the
main body. The table displays
the set of mistakes for each decision stump. Strictly dominated
decision stumps are crossed out; said differently, the crossed-out \emph{mistake sets} means the corresponding
  classifier is dominated and will never be selected by Optimal AdaBoost. Out of a maximum of $20$ decision stumps
suggested by the data via the midpoint split rule, only $6$ could ever
be selected by AdaBoost, a reduction of $70\%$ in the size of the
hypothesis space. We found such reduction levels to be common in both synthetic
and real data sets of larger size and dimensionality. }
\label{fig:clexdom}
\end{figure}

Fig.~\ref{fig:clexdom} builds on the classroom example in
Fig.~\ref{fig:clex} to further illustrate the concept of
dominated and effective hypotheses in the context of decision stumps.
Table~\ref{tab:eff}
contains examples of the number of \emph{effective/non-dominated}
decision stumps and the number of \emph{unique} decision stumps from that set
that Optimal AdaBoost actually uses in the context of high-dimensional real-world datasets publicly available from the UCI ML Repository. Note the significant reduction in both
the \emph{``effective'' size of $\Hypo$} and the \emph{actual number of decision
stumps selected}.

\begin{table}[t]
\begin{center}
\begin{tabular}{|l|c|c|c|c|c|}
\hline 
{\bf Dataset} & \multicolumn{2}{c|}{{\bf Examples}} &
\multicolumn{3}{c|}{{\bf Classifiers}} \\
\hline
 & {\bf train}  & {\bf test} & {\bf total} & {\bf non-dominated} & {\bf used} \\
\hline
Breast Cancer & 400 & 169 & 11067 & 3408 \hspace{5pt} \emph{(69\%)} & 290 \hspace{5pt} \emph{(97\%,91\%)}\\
\hline
Parkinson & 150 & 45 & 3029 & 351 \hspace{5pt} \emph{(88\%)} & 79 \hspace{5pt} \emph{(97\%,77\%)}\\
\hline
Sonar & 104 & 104 & 5806 & 436 \hspace{5pt} \emph{(92\%)} & 154 \hspace{5pt} \emph{(97\%,65\%)}\\
\hline
Spambase & 2500 & 2100 & 11406 & 7495 \hspace{5pt} \emph{(34\%)} & 710 \hspace{5pt} \emph{(94\%,91\%)} \\
\hline
\end{tabular}
\end{center}
\caption{{\bf An Illustration of Effective/Non-Dominated and Actually-Used/Unique Decision Stumps When Running AdaBoost on Several High-Dimensional Real-World Datasets Publicly Available from the UCI ML Repository.} The table shows the number of \emph{effective/non-dominated}
decision stumps and the number of \emph{unique} decision stumps from that set
that Optimal AdaBoost actually uses. The number in parenthesis are {\em percent reductions.} In the case of the {\bf non-dominated} column, the percentages are with respect to the {\bf total} (number of classifiers) column; while for the {\bf used} column, the pair of percentages are with respect to the {\bf total} and {\bf non-dominated} columns, respectively. Note the significant reduction in both
the \emph{``effective'' size of $\Hypo$} and the \emph{actual number of decision
stumps selected}. The resulting numbers provided in the table are robust to random variations on the train-test
validation sets of the same size (generated by combining train and
test samples into a single dataset and then randomly splitting into
train and test several times), as the original train-test set size
above.}
\label{tab:eff}
\end{table}

Table~\ref{tab:eff}, just like the plots in
Fig.~\ref{fig:all}, on pg.~\pageref{fig:all}, clearly suggests that Optimal AdaBoost is selecting the same decision stump
many times. While one may argue that this is empirical evidence of ergodicity,
even if not cycling, the fact that the growth on the number of weak-hypothesis
selected appears to be too substantial may counter the original argument.

\subsubsection{The number of uniquely-selected decision stumps grows logarithmically}
\label{S:Cyc}

\begin{figure}
\begin{center}
\begin{tabular}{lccc}
 & {\bf Train \& Test Error} & $\left| \bigcup_{t=1}^T \{ h_t \} \right|$ & $\epsilon_t$\\
\hline
& {\bf $y$-axis range $\approx [0,0.13]$} & {\bf $y$-axis range $\approx [0,300]$} & {\bf $y$-axis range $\approx [0.075,0.4]$}\\
\rotatebox{90}{\bf Breast Cancer} & \includegraphics[width=0.28\textwidth]{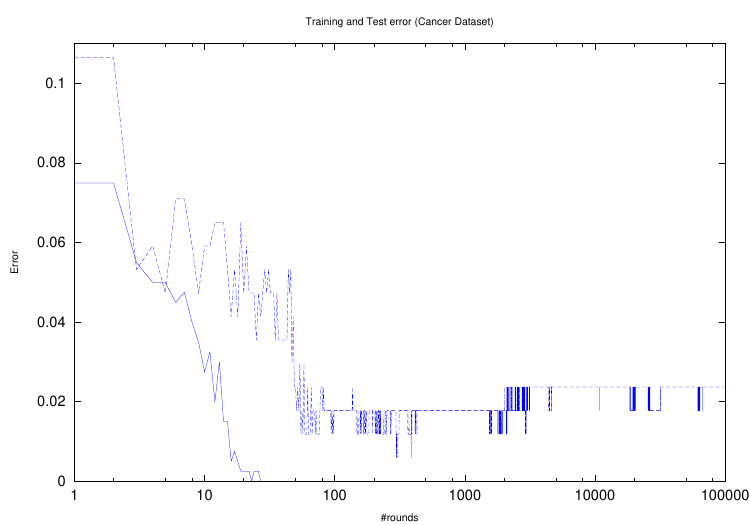} &
\includegraphics[width=0.28\textwidth]{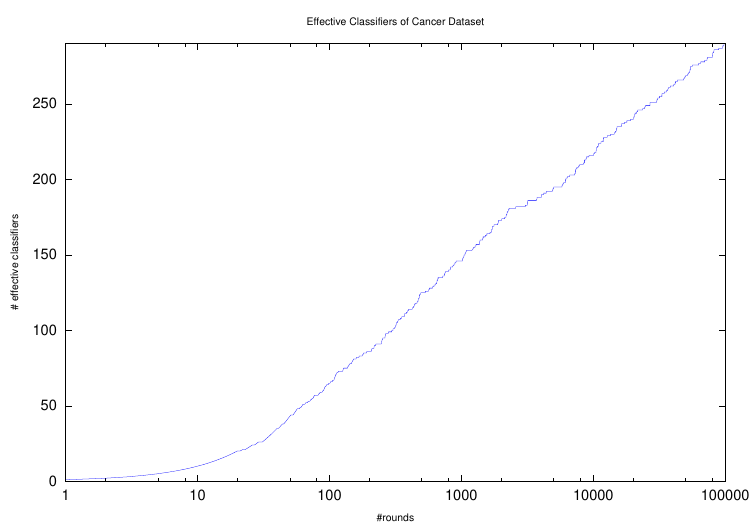} &
\includegraphics[width=0.28\textwidth]{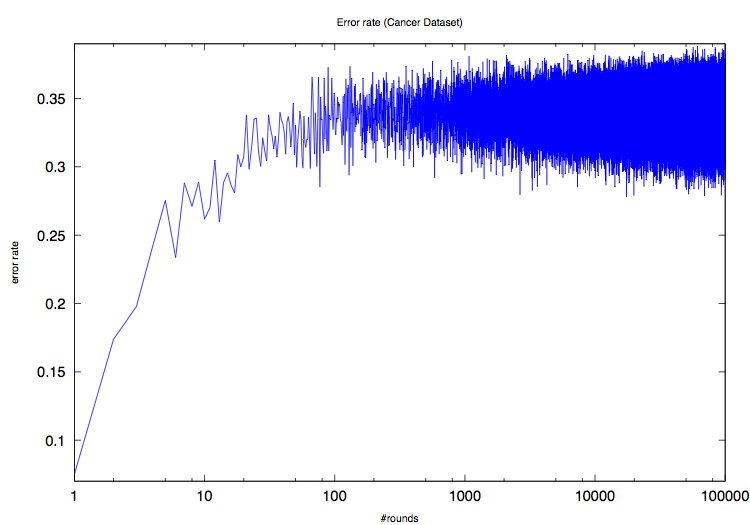} \\
\hline
& {\bf $y$-axis range $\approx [0,0.45]$} & {\bf $y$-axis range $\approx [0,80]$} & {\bf $y$-axis range $\approx [0.04,0.35]$}\\
\rotatebox{90}{\bf Parkinsons} & \includegraphics[width=0.28\textwidth]{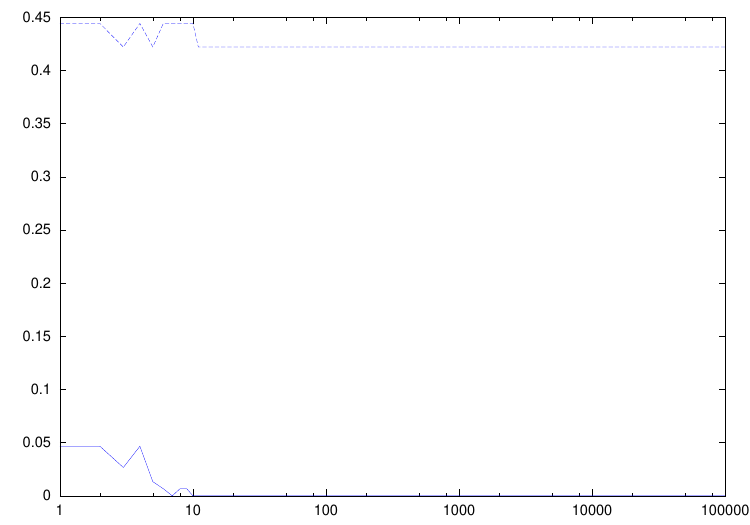} &
\includegraphics[width=0.28\textwidth]{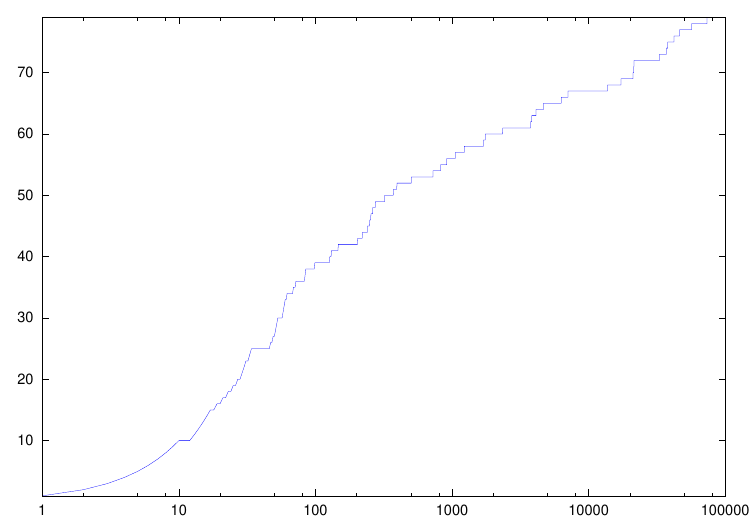} &
\includegraphics[width=0.28\textwidth]{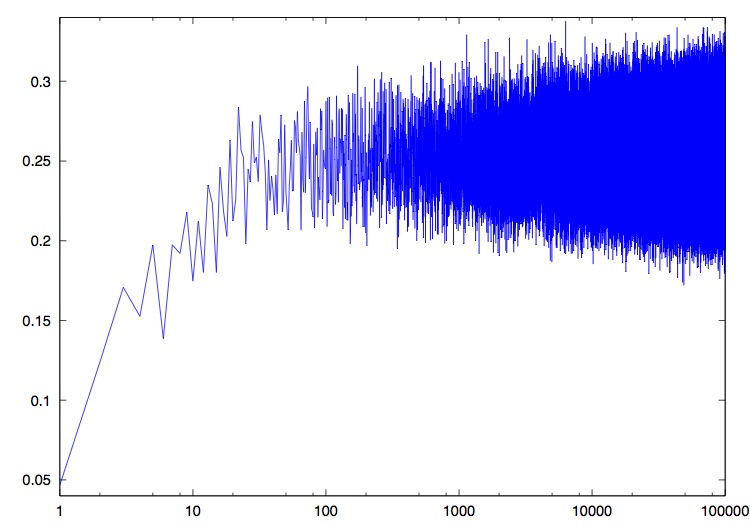} \\
\hline
& {\bf $y$-axis range $\approx [0,0.26]$} & {\bf $y$-axis range $\approx [0,550]$} & {\bf $y$-axis range $\approx [0.25,0.4]$}\\
\rotatebox{90}{\bf Sonar} &\includegraphics[width=0.28\textwidth]{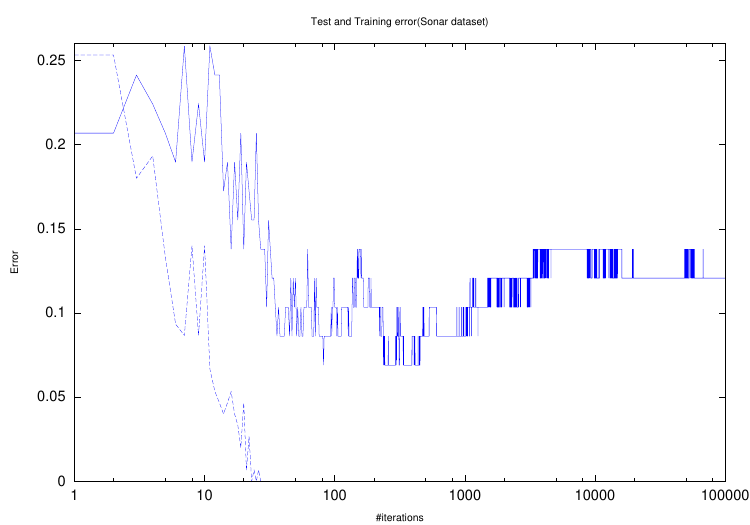} &
\includegraphics[width=0.28\textwidth]{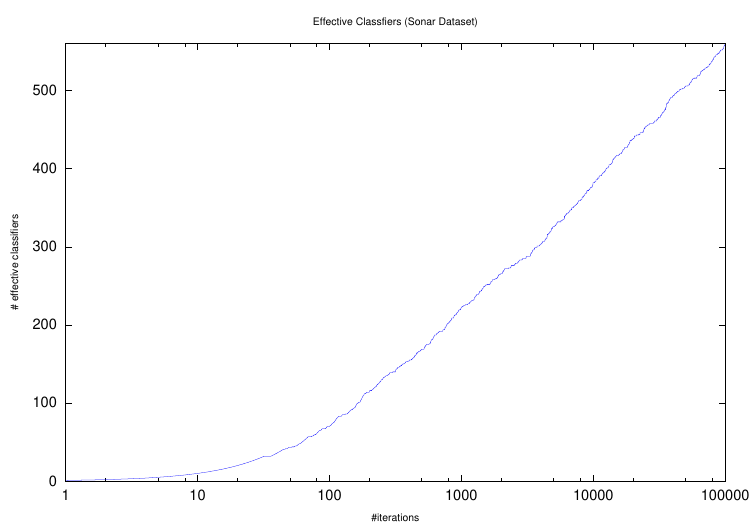} &
\includegraphics[width=0.28\textwidth]{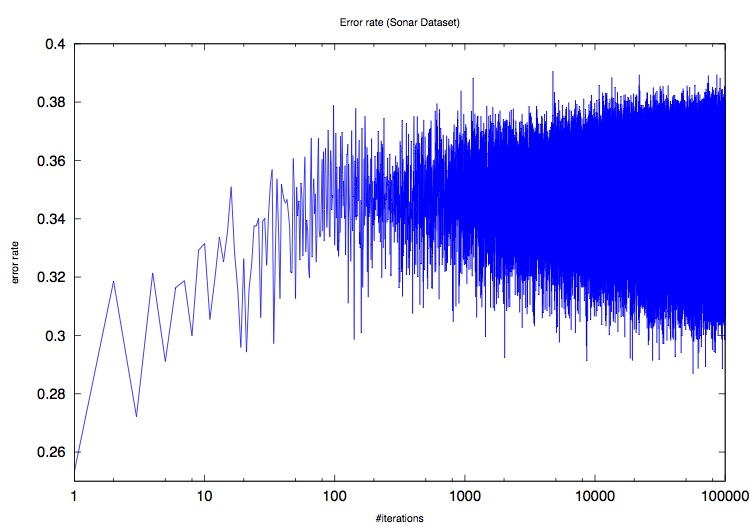}\\
\hline
& {\bf $y$-axis range $\approx [0,0.22]$} & {\bf $y$-axis range $\approx [0,710]$} & {\bf $y$-axis range $\approx [0.2,0.5]$}\\
\rotatebox{90}{\bf Spambase} &\includegraphics[width=0.28\textwidth]{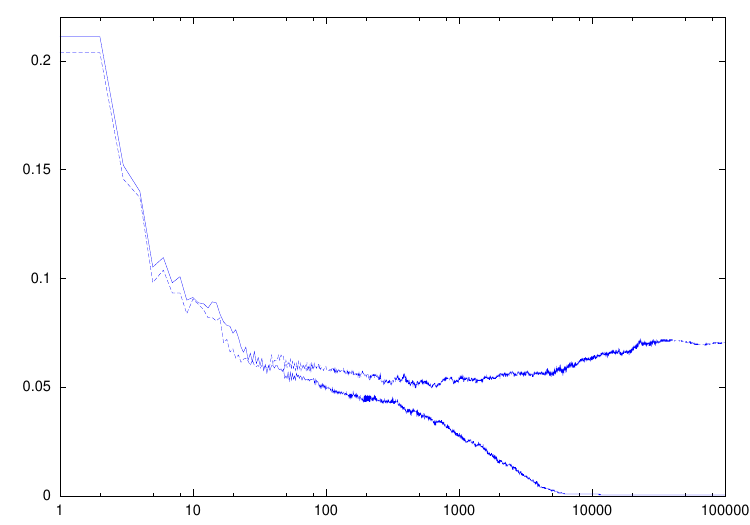} &
\includegraphics[width=0.28\textwidth]{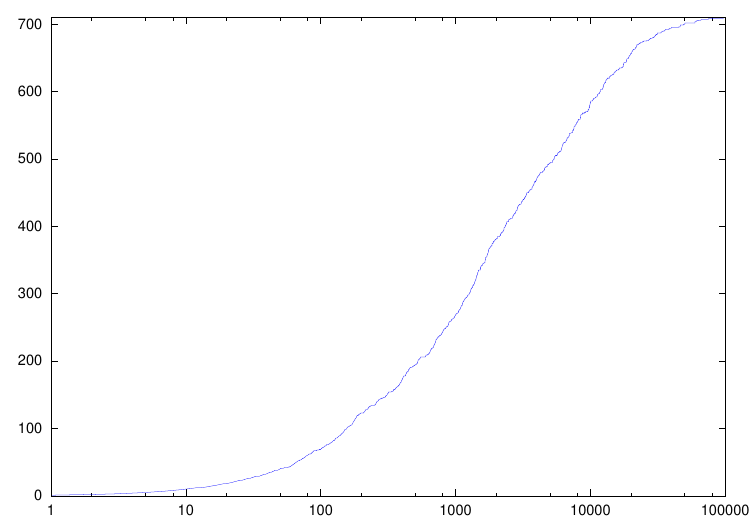} &
\includegraphics[width=0.28\textwidth]{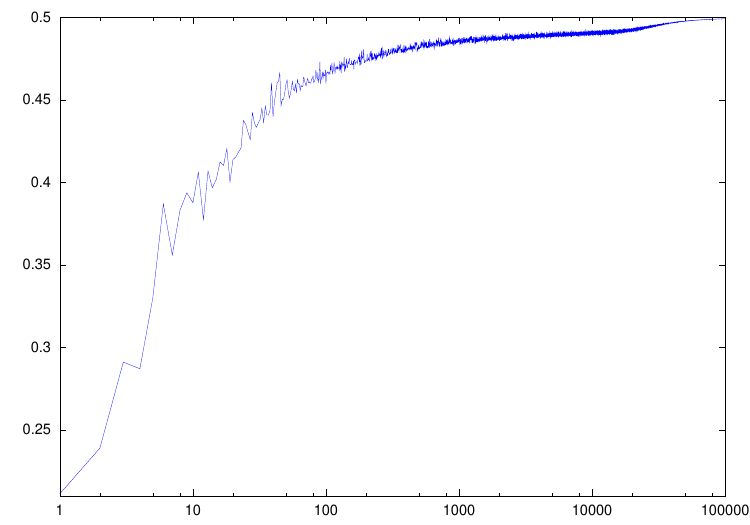}\\
\\
&\multicolumn{3}{c}{\bf $x$-axis, \emph{all} plots: max. number of rounds $T = 1, 2, \ldots, 100K$ (in \emph{log scale})}
\end{tabular}
\end{center}
\caption{{\bf Results for a Single Random Training-Test Data Split of the
  Breast Cancer Wisconsin (Diagnostic), Parkinsons, Sonar, and Spambase
  Datasets (first through fourth labeled rows, respectively).} The figure shows the
  train (dashed-line) and test error (left column), number of unique decision stumps
  selected, with number of rounds shown in log-scale (center column) and the value $\epsilon_t$ of the weighted error of the
  weak/base classifier found in round $t$ (right column) as a function of
  the number of rounds (\emph{in log-scale, up to $100K$ rounds}).
  The reader should place their attention on the general patterns reflected on
  the plots, and not the specific values. We can summarize these empirical results
  as follows. First, if AdaBoost always cycles, then it may take a long time to
  enter or reach a cycle, or the cycle may be very long in
  high-dimensional real-world datasets. Second, the test error always
  becomes stable within the $100K$ rounds. Third, the weighted-error
  $\epsilon_t$ of the weak-hypothesis that our implementation of
  Optimal AdaBoost selects at round $t$, while seemingly chaotic, does
  shows sufficient symmetry. That in turn suggests that its average over the
  number of rounds converges. That is as predicted by the technical results
  in
Section~\ref{sec:convres} in the main body,
under certain conditions which held throughout the execution
  of the algorithm. We
  refer the reader to the main body of this appendix for further discussion.
}
\label{fig:all}
\end{figure}

We found that, in most benchmark datasets used, the number
of {\em unique\/} decision-stumps classifiers that AdaBoost combines grows with the
number of rounds, but only
{\em logarithmically\/}. Figure~\ref{fig:all} illustrates this over
a variety of data sets (see plots in the
center column). It is important to point out that even
though the effective number of decision stumps is relatively small,
the number used/selected by AdaBoost is even smaller, and the logarithmic
growth suggests that it would take a \emph{very} long time before
AdaBoost would have selected all effective classifiers, if ever, or converges
to a cycle or the neighborhood of a cycle, thus stopping any new selections, as out theoretical results show. 

Here are some additional miscellaneous observations. Spambase has a pair of examples with the
  same input but different output; note how $\epsilon_t$ approaches
  $50\%$ error with the number of rounds in that case, thus breaking
  the so-called Weak Learning Assumption. Indeed, one can show that
  if the training data has at least a couple of examples with such
  ``noise,'' the Optimal AdaBoost will always behave this way, for
  almost every $w_1 \in \Delta_m$. Yet, both training and test
  errors appear stable, and there is still logarithmic growth in the number of
  unique stumps selected.

\subsection{Data-dependent bounds on the generalization error of
  Optimal AdaBoost}\label{A:DataBnd}

Using knowledge about the effective number of weak-hypothesis and the log-growth behavior illustrated in
Fig.~\ref{fig:all}, and discussed in Sections~\ref{S:eff} and~\ref{S:Cyc}, we derived two data-dependent
uniform-convergence bounds. 
Both bounds essentially state that, with high probability, the
generalization error grows as
\[
O(\sqrt{(\log{T}) \log\log{T}}),
\]
a
significantly tighter bound than the standard $O(\sqrt{T \log{T}})$ that was previously
known~\citep{Freund97adecision-theoretic}.

We use the following definition of the \emph{VC-dimension} of a
hypothesis class $\Hypo$, which we present within the context of the results
described in this appendix.
\begin{definition}{\bf (VC-dimension)} 
\label{def:vcdim}
We say that hypotheses class $\Hypo$ has \emph{VC-dimension} $\VC{\Hypo} \equiv m^*
\equiv m^*(\Hypo)$ if for every dataset of inputs $S$ of size $m
\leq m^*$ we have $|\Dich(\Hypo,S)| = 2^m$, and 
there exists some dataset $S' \equiv D'(m^*)$ of size $m^*+1$ such that
$|\Dich(\Hypo,S')| < 2^{m^*+1}$. Note that if
$\Hypo$ is finite, and $|\Hypo|$ is its cardinality, then we have $\VC{\Hypo}
\leq \lceil\log_2{|\Hypo|}\rceil$. 
\end{definition}
Note that $|\EHypo| \leq |\Hrep| \leq T^* \equiv T^*(m,\VC{\Hypo}) \equiv 2^{\min(\VC{\Hypo},m)}$ and
$|\DichSets| \leq 2^{T^*}$. 

In what follows, we denote by 
\begin{align}
\label{eqn:hyporep}
\HypoRep \equiv \HypoRep(m,\Hypo) \equiv 
\bigcup_{O \in \DichSets} 2^{\left\{ \bigcup_{o \in O} \{ h^o \} \right\}}
\end{align}
the \emph{set of all subsets of representative hypotheses of 
each set of label dichotomies that $\Hypo$ can induce over any dataset of inputs of size
$m$.} Note that $|\HypoRep| \leq 2^{T^*}$.

\subsubsection{Exploiting the logarithmic rate of the uniquely-selected weak hypotheses}

The following is a data-dependent PAC-style bound.
This uniform-convergence probabilistic bound accounts for the
logarithmic growth on the number of \emph{unique} weak hypothesis $h_t
\in \EHypo \subset \Hypo$ output by $\mathbf{WeakLearn}(D,w_t)$ at
each round $t$ of AdaBoost. Denote the \emph{set of (effective
  representative) hypotheses actually selected by AdaBoost during
  execution} by $\Ucal \equiv \{h_1^*,\ldots,h_{\THat}^* \} \equiv
\bigcup_{t=1}^T \{ h_t \mid h_t = \mathbf{WeakLearn}(D,w_t)\} \subset
\EHypo$, and by $\THat \equiv |\Ucal|$ the \emph{number of (unique)
  hypotheses} AdaBoost actually selects from the set of representative
hypotheses $\Hypo$. Note that $\THat \leq |\EHypo| \leq T^*$.
Note also that the sample $D$
determines 
$\THat$, $\Ucal$, and 
\[ \EHypo_{\textrm{mix}}  \equiv \left\{ \sign{\sum_{t=1}^{\THat} c_t h_t^*(x) } \left| c_t \geq 0, \text{ and } h_t^* \in \Ucal \text{ for all } t=1,\ldots,\THat \right. \right\}\]
(i.e., they
are functions of the training dataset $D$, thus also random variables
with respect to the corresponding probability space
$(\D,\Sigma,P)$). The next theorem only exploits the
empirically-observed logarithmic dependency of $\THat$ on $T$. 
\begin{theorem}
\label{the:databnd}
The following holds with probability $1-\delta$ over the choice of the
training dataset $D$ of i.i.d. samples drawn according to the probability space $(\D,\Sigma,P)$: for all $H \in \EHypo_{\textrm{mix}}$,
\[
\textrm{Err}(H) \leq \widehat{\textrm{Err}}(H) + O\left( \sqrt{ \frac{\left(  (\THat \ln{\THat})
    \VC{\Hypo} \right) \left( 1 + 
    \ln{\frac{m}{(\THat \ln{\THat}) \VC{\Hypo}}} \right) + \ln(1/\delta)}{m}} \right) .
\]
\end{theorem}
The following standard generalization result for AdaBoost is 
useful.
\begin{theorem}{\bf \citep{Freund97adecision-theoretic}}
\label{the:FSgb}
Let \[ \Hypo_{\textrm{mix}} \equiv \left\{ \sign{\sum_{t=1}^T c_t
    h_t^*(x) } \; \mid \,  c_t \geq 0, \text{ and } h_t^* \in \Hypo
  \text{ for all } t=1,\ldots,T \right\}.\] The following holds with probability $1-\delta$ over the choice of the
training dataset $D$ of i.i.d. samples drawn according to the probability space $(\D,\Sigma,P)$: for all $H \in \Hypo_{\textrm{mix}}$,
\[
\textrm{Err}(H) \leq \widehat{\textrm{Err}}(H) + O\left( \sqrt{\frac{\left( (T \ln{T})
    \VC{\Hypo} \right) \left( 1 + \ln{\frac{m}{(T \ln{T})
    \VC{\Hypo}}} \right) + \ln(1/\delta)  }{m}} \right) .
\]
\end{theorem}
We are now ready to prove our data-dependent bound given in Theorem~\ref{the:databnd}. We will first provide a proof sketch, followed by the formal proof.
\begin{proof}
\emph{(Sketch)} Let $\Tmin \equiv \min(T,T^*)$.
The basic idea is to apply the previous theorem
(Theorem~\ref{the:FSgb}) over the number of rounds/base-classifiers $t
=1,\ldots,\Tmin$, using a specific weighting/distribution over $t$.

\noindent \emph{(Formal Proof)} We now present the formal proof. Suppose $\Hypo^{(t)}_{\textrm{mix}} \equiv \left\{ \left. \sum_{s=1}^t c_s \bar{h}_s \right| c_s \geq 0, \bar{h}_s \in \Hypo \right\}$. Using Theorem~\ref{the:FSgb} above, we have, for all $t=1,\ldots,\Tmin$,
\begin{align*}
&\Pbf\left(
  \textrm{Err}(H_t) > \widehat{\textrm{Err}}(H_t) +
\Omega\left( \sqrt{\frac{
    ((t \ln{t}) \VC{\Hypo}) \left( 1 + \ln{\frac{m}{(t \ln{t})
  \VC{\Hypo}}} \right) + \ln{(1/\delta_t)}}{m}} \right), \right. \\ 
& \left. \text{ for some } H_t \in \Hypo^{(t)}_{\textrm{mix}} \right) < \delta_t\; .
\end{align*}
Let $p(t) \equiv \left( \Tmin \right)^{-t} / Z$, where $Z \equiv \sum_{t=1}^{\Tmin} \left(\Tmin \right)^{-t}$ is the normalizing constant. Set $\delta_t = p(t)\, \delta$. Let $K$ be a positive integer and debote by $[K] \equiv \{1,\ldots,K\}$ the set of all positive integers up to and including $K$. 
Applying the Union Bound, substituting the expression for $\delta_t$,
and using some algebra, we obtain
\begin{align*}
& \Pbf\left(\textrm{Err}(H_t) >
  \widehat{\textrm{Err}}(H_t) + \right. \\
& \Omega\left( \sqrt{\frac{
    ((t \ln{t}) \VC{\Hypo}) \left( 1 + \ln{\frac{m}{(t \ln{t})
  \VC{\Hypo}}} \right) + \ln{(1/\delta_t)}}{m}} \right), \\ 
& \left. \text{for some } t \in [\Tmin] \text{ and } H_t \in
  \Hypo^{(t)}_{\textrm{mix}} \right) \\
\leq & \sum_{t=1}^{\Tmin} \Pbf\left( \textrm{Err}(H_t) > \widehat{\textrm{Err}}(H_t) + \Omega\left( \sqrt{\frac{
    ((t \ln{t}) \VC{\Hypo}) \left( 1 + \ln{\frac{m}{(t \ln{t})
       \VC{\Hypo}}} \right) + \ln{(1/\delta_t)}}{m}} \right),
       \right.\\
& \left. \text{
       for some } H_t \in \Hypo^{(t)}_{\textrm{mix}}  \right) \\
< & \sum_{t=1}^{\Tmin} \delta_t 
=  \sum_{t=1}^{\Tmin} p(t) \, \delta 
=  \delta \; .
\end{align*}
Now, turning to the bound on the generalization error, and in particular to the term $\ln{(1/\delta_t)}$, we obtain
\begin{align*}
\ln{(1/\delta_t)} = & \ln{(1/(\delta \, p(t)))}\\
= &  - \ln{p(t)} + \ln{(1/\delta)} \\
= & \ln{Z} + t \; \ln{\Tmin} + \ln{(1/\delta)} \\
\leq & \ln{Z} + t \; \VC{\Hypo} \ln{2} + \ln{(1/\delta)} \\
\leq & \ln{2}  + t \; \VC{\Hypo} \ln{2} + \ln{(1/\delta)} \\
=  & O( t \; \VC{\Hypo} + \ln{(1/\delta)}) \; .
\end{align*}
The result follows by substitution. \qed
\end{proof}

\begin{remark}
For the typical application of Optimal AdaBoost which uses decision stumps as the class of weak/base-classifiers, perhaps the most-commonly used instantiation in practice, the dependence on the number of rounds is reduced to the number of
effective representative
decision stumps \emph{induced} by the data using the \emph{midpoint rule}. 
When $\Hypo$ is the set of half-spaces,
we have 
$|\EHypo| \leq |\Hrep| \leq  2 ( d (m-1) + 1)$, 
where $d$ is the number of features. Lets us denote \emph{the set of
  decision stumps induced by dataset $D$ using the midpoint rule} by
$\EHypo^{\textrm{dstump}}$ and \emph{the set of all $T$,
  non-necessarily unique positively-weighted combination of decision
  stumps in} $\EHypo^{\textrm{dstump}}$ by
$\EHypo_{\textrm{mix}}^{\textrm{dstump}}$. A corollary of
Theorem~\ref{the:databnd} for decision stumps follows by replacing
$\EHypo_{\textrm{mix}}$ and $\VC{\Hypo}$ in the statement of the theorem with
$\EHypo_{\textrm{mix}}^{\textrm{dstump}}$ and $\log_2(d m)$, respectively.

Recall from Table~\ref{tab:eff} that, in practice, $\THat$ could be considerably smaller than $|\EHypo^{\textrm{dstump}}| \subset |\Hypo^{\textrm{dstump}}|$.
Indeed, our empirical
results, represented here in part by the plots in the center column of Figure~\ref{fig:all}, suggest that the expected value of $\THat$ is $\E{\THat} \approx O\left((\log{T})^{3/2}\right)$,
for $T$ ``large enough'' (i.e., after a few initial rounds). Hence the
dependency of the generalization error of AdaBoost on the number of rounds is significantly reduced from $O(\sqrt{T \log{T}})$ to roughly $O\left((\log{T})^{3/4} \sqrt{\log\log{T}} \right)$, a considerable and certainly non-trivial reduction. 

\end{remark}

\subsubsection{Exploiting the number of effective representative weak
  hypotheses too}

We can similarly derive another data-dependent PAC-style bound that tries to exploit the number of \emph{effective} representative classifiers $\EHypo$ too. The statement is slightly more complex.
The uniform-convergence probabilistic bound still accounts for the
logarithmic growth on the number of \emph{unique} weak hypothesis $h_t
\in \EHypo$ output by $\mathbf{WeakLearn}(D,w_t)$ at each round $t$ of
AdaBoost. 
\begin{theorem}
\label{the:databndeff}
The following holds with probability $1-\delta$ over the choice of the
training dataset $D$ of i.i.d. samples drawn according to the
probability space $(\D,\Sigma,P)$: for all $H \in
\EHypo_{\textrm{mix}}$, $\textrm{Err}(H) \leq \widehat{\textrm{Err}}(H) +$
\begin{align*}
O\left( \sqrt{\frac{(\THat
    \ln{|\EHypo|}) \left( 1 + \ln{\THat} \left( 1 + 
    \ln{\frac{m}{(\THat \ln{\THat})
    \ln{|\EHypo|}}} \right)\right) + (|\EHypo| +1) \min(\VC{\Hypo},m) \ln{2} + 
\ln{\frac{1}{\delta}}}{m}} \right) .
\end{align*}
\end{theorem}
\begin{proof}
\emph{(Sketch)}
The basic idea is to apply the previous theorem (Theorem~\ref{the:FSgb}) over a size-induced hierarchy of
base-classifier hypothesis spaces, for sizes $k=1,\ldots,
T^*$, and the number of rounds/base-classifiers $t =1,\ldots,\Tmin$, using a specific weighting/distribution over the hierarchy.

\noindent \emph{(Formal Proof)} We now present the formal proof. Let
$k \in \N$. Denote by $\HypoRep_k \equiv \{ \Hypo_k \in \HypoRep \,
\mid \, |\Hypo_k| = k \}$ the \emph{set of all possible sets of
  exactly $k$
  representative hypotheses}, where $\HypoRep$ is as defined in
Equation~\ref{eqn:hyporep}. Note that $|\HypoRep_k| \leq
{T^* \choose k}$. Consider $\Hypo_k \in \HypoRep_k$. 
Suppose $\Hypo^{(t,k,\Hypo_k)}_{\textrm{mix}} \equiv \left\{ \left. \sum_{s=1}^t c_s \bar{h}_s \right| c_s \geq 0, \bar{h}_s \in \Hypo_k \right\}$. Using Theorem~\ref{the:FSgb} above, we have that for any $H_{t,k,\Hypo_k} \in \Hypo^{(t,k,\Hypo_k)}_{\textrm{mix}}$,
\begin{align*}
& \Pbf\left( \textrm{Err}(H_{t,k,\Hypo_k}) >
  \widehat{\textrm{Err}}(H_{t,k,\Hypo_k}) + \right. \\
& \left. \Omega\left( \sqrt{\frac{
    ((t \ln{t}) \ln{k}) 
\left( 1 + \ln{\frac{m}{(t \ln{t}) \ln{k}}} \right)+ \ln{(1/\delta_{t,k,\Hypo_k})}  }{m}} \right) \right) < \delta_{t,k,\Hypo_k}\; .
\end{align*}
Let 
$p(t,k,\Hypo_k) \equiv k^{-t} |\HypoRep_k|^{-1}/Z$, 
where $Z \equiv \sum_{k =1}^{T^*} \sum_{\Hypo_k \in
       \HypoRep_k} \sum_{t=1}^{\min(k,T)} k^{-t}
|\HypoRep_k|^{-1}$ is the normalizing constant. Set
$\delta_{t,k,\Hypo_k} = p(t,k,\Hypo_k) \, \delta$. 
Applying the Union Bound, substituting the expression for $\delta_{t,k,\Hypo_k}$, and some algebra, we obtain
\begin{align*}
& \Pbf\left( \textrm{Err}(H_{t,k,\Hypo_k}) > \widehat{\textrm{Err}}(H_{t,k,\Hypo_k}) + \Omega\left( \sqrt{\frac{
    ((t \ln{t}) \ln{k}) \left( 1 + \ln{\frac{m}{(t \ln{t}) \ln{k}}} \right) + \ln{(1/\delta_{t,k,\Hypo_k})}  }{m}} \right), \right.\\
& \left. \text{ for some } k \in [T^*], \Hypo_k \in \HypoRep_k, \text{ and } t \in [\min(k,T)]  \right) \\
\leq & \sum_{k =1}^{T^*} \sum_{\Hypo_k \in
       \HypoRep_k} \sum_{t=1}^{\min(k,T)} \Pbf\left( \textrm{Err}(H_{t,k,\Hypo_k}) >
       \widehat{\textrm{Err}}(H_{t,k,\Hypo_k}) + \right. \\ 
& \left. \Omega\left( \sqrt{\frac{
     ((t \ln{t}) \ln{k}) \left( 1 + \ln{\frac{m}{(t \ln{t}) \ln{k}}} \right) + \ln{(1/\delta_{t,k,\Hypo_k})}  }{m}} \right) \right) \\
< & \sum_{k =1}^{T^*} \sum_{\Hypo_k \in
       \HypoRep_k} \sum_{t=1}^{\min(k,T)} \delta_{t,k,\Hypo_k} 
= \sum_{k =1}^{T^*} \sum_{\Hypo_k \in
       \HypoRep_k} \sum_{t=1}^{\min(k,T)} p(t,k,\Hypo_k) \, \delta
= \delta \; .
\end{align*}
Now, turning to the bound on the generalization error, and in particular to the term $\ln{(1/\delta_{t,k,\Hypo_k})}$, we obtain
\begin{align*}
\ln{(1/\delta_{t,k,\Hypo_k})} = & \ln{(1/(\delta \, p(t,k,\Hypo_k)))} \\
= &  - \ln{p(t,k,\Hypo_k)} + \ln{(1/\delta)} \\
= & \ln{Z} + t \ln{k} + \ln{|\HypoRep_k|} + \ln{(1/\delta)} \\
=  & \ln{Z} + t \ln{k} + \ln{T^* \choose k} + \ln{(1/\delta)} \\
\leq  & \ln{Z} + t \ln{k} + k \; \ln{T^*} + \ln{(1/\delta)} \\
=  &\ln{Z} + t \ln{k} + k \; \min(\VC{\Hypo},m) \ln{2} + \ln{(1/\delta)} \; ,
\end{align*}
where
\begin{align*}
Z =  & \sum_{k =1}^{T^*} \sum_{\Hypo_k \in
       \HypoRep_k} \sum_{t=1}^{\min(k,T)} k^{-t} {T^* \choose k}^{-1} \\
=  & \sum_{k =1}^{T^*} \sum_{t=1}^{\min(k,T)} k^{-t} \\
=  & 1 +  \sum_{k =2}^{T^*} \sum_{t=1}^{\min(k,T)} k^{-t} \\
=  & 1 +  \sum_{k =2}^{T^*} \left( \left(\sum_{t=0}^{\min(k,T)} k^{-t}\right) -
     1\right)\\
\leq & 1 +  \sum_{k =2}^{T^*} \left( 2 -
     1\right)\\
= & 1 +  \sum_{k =2}^{T^*} 1\\
= & \sum_{k =1}^{T^*} 1\\
= & T^* \; .
\end{align*}
The result follows by substitution. \qed
\end{proof}

Similarly, in the case of half-spaces/decision stumps, the dependence on the number of rounds decreases to the number of
\emph{effective} representative
decision stumps induced by the data using the midpoint rule. 
A corollary of
Theorem~\ref{the:databndeff} for decision stumps follows by replacing
$\EHypo_{\textrm{mix}}$, $|\EHypo|$, and $\VC{\Hypo}$ in the statement of the theorem with
$\EHypo_{\textrm{mix}}^{\textrm{dstump}}$, $\EHypo^{\textrm{dstump}}$, and $\log_2(d m)$, respectively.
As previously mentioned, recall from Table~\ref{tab:eff} that, in practice, $|\EHypo^{\textrm{dstump}}|$ could be considerably smaller than $\VC{\Hypo}$, and similarly, $\THat$ could be considerably smaller than $|\EHypo^{\textrm{dstump}}|$.

Note that our data-dependent PAC bounds on the generalization error do not increase with $T$ without bound. This is because $|\EHypo| \leq 2^{\min(\VC{\Hypo},m)}$

\end{document}